\numberwithin{equation}{section}
\newtheorem{Theorem}{Theorem}[section]
\newtheorem{Lemma}[Theorem]{Lemma}
\newtheorem{Proposition}[Theorem]{Proposition}
\newtheorem{Assumption}{H.\!\!}
\theoremstyle{definition}
\newtheorem{Definition}{Definition}[section]
\newtheorem{Example}{Example}[section]
\theoremstyle{remark}
\newtheorem{Remark}{Remark}[section]
 \def\p{\partial} \def\nb{\nonumber}
\def\to{\rightarrow}
 \def\ol{\overline}    \def\ul{\underline}
\def\Om{\Omega}  \def\om{\omega} 
\newcommand{\q}{\quad}   \newcommand{\qq}{\qquad}
\def\l{\label}  
\def\f{\frac}  \def\fa{\forall}
\def\b{\beta}  \def\a{\alpha} 
\def\eps{\varepsilon}
 \def\t{\times}  
\def\ms{\medskip} \def\bs{\bigskip}
\def \la{\langle} \def\ra{\rangle}
\def\bs{\boldsymbol}
\def\cA{\mathcal{A}}
\def\cB{\mathcal{B}}
\def\cC{\mathcal{C}}
\def\cE{\mathcal{E}}
\def\cF{\mathcal{F}}
\def\cG{\mathcal{G}}
\def\cH{\mathcal{H}}
\def\cK{\mathcal{K}}
\def\cM{\mathcal{M}}
\def\cN{\mathcal{N}}
\def\cO{\mathcal{O}}
\def\cR{\mathcal{R}}
\def\cS{\mathcal{S}}
\def\cU{\mathcal{U}}
\def\cV{\mathcal{V}}
\def\cY{\mathcal{Y}}
\def\d{{\mathrm{d}}}
\def\bA{{\textbf{A}}}
\def\sB{\mathbb{B}}
\def\sE{{\mathbb{E}}}
\def\sF{{\mathbb{F}}}
\def\sG{{\mathbb{G}}}
\def\sN{{\mathbb{N}}}
\def\sP{\mathbb{P}}
\def\sR{{\mathbb R}}
\def\sS{{\mathbb{S}}}
\newcommand{\tr}{\textnormal{tr}}
\newcommand{\lf}{\lfloor}
\newcommand{\rf}{\rfloor}
\DeclareMathOperator*{\argmin}{arg\,min}
\DeclareMathOperator*{\dom}{dom}
\DeclareMathOperator*{\essinf}{ess\,inf}
\newcommand{\lc}
{\mathrel{\raise2pt\hbox{${\mathop<\limits_{\raise1pt\hbox
{\mbox{$\sim$}}}}$}}}
\newcommand{\gc}
{\mathrel{\raise2pt\hbox{${\mathop>\limits_{\raise1pt\hbox{\mbox{$\sim$}}}}$}}}
\newcommand{\ec}
{\mathrel{\raise2pt\hbox{${\mathop=\limits_{\raise1pt\hbox{\mbox{$\sim$}}}}$}}}
\def\bb{\begin{equation}} \def\ee{\end{equation}}
\def\bbn{\begin{equation*}} \def\een{\end{equation*}}
\def\beqn{\begin{eqnarray}}  \def\eqn{\end{eqnarray}}
\def\beqnx{\begin{eqnarray*}} \def\eqnx{\end{eqnarray*}}
\def\bn{\begin{enumerate}} \def\en{\end{enumerate}}
\def\bd{\begin{description}} \def\ed{\end{description}}
\begin{document}

\title{
Exploration-exploitation trade-off for 
 continuous-time
episodic reinforcement learning with 
 linear-convex models
 }

\author{
\and 
 Lukasz Szpruch\thanks{School of Mathematics, University of Edinburgh and Alan Turing Institute,  \texttt{L.Szpruch@ed.ac.uk}}
\and Tanut Treetanthiploet\thanks{Alan Turing Institute,  \texttt{ttreetanthiploet@turing.ac.uk}}
\and Yufei Zhang\thanks{Department of Statistics, London School of Economics and Political Science,  \texttt{y.zhang389@lse.ac.uk}}
}
\date{}
\maketitle

\noindent\textbf{Abstract.} 
We develop a probabilistic framework for analysing model-based reinforcement learning in the episodic setting. We then apply it to study finite-time horizon stochastic control problems with linear dynamics but unknown coefficients and convex, but possibly irregular, objective function. Using probabilistic representations, we study regularity of the associated cost functions and establish precise estimates for the performance gap between applying optimal feedback control derived from estimated and true model parameters.  We identify conditions under which this performance gap is quadratic, improving the linear performance gap in recent work [X. Guo, A. Hu, and Y. Zhang, \textit{arXiv preprint}, arXiv:2104.09311, (2021)], which matches the results obtained for stochastic linear-quadratic problems. Next, we propose a phase-based learning algorithm for which we show how to optimise exploration-exploitation trade-off and achieve sublinear regrets in high probability and expectation. When assumptions needed for the quadratic performance gap hold, the algorithm achieves an order $\mathcal{O}(\sqrt{N}\ln{N})$ high probability regret, in the general case, and an order $\mathcal{O}((\ln{N})^2)$  expected regret, in self-exploration case, over $N$ episodes, matching the best possible results from the literature. The analysis requires novel concentration inequalities for correlated continuous-time observations, which we derive.

\medskip
\noindent
\textbf{Key words.} 
 Continuous-time reinforcement learning,  
linear-convex,  Shannon’s entropy, quadratic performance gap,
 Bayesian inference, conditional sub-exponential random variable

\ms
\noindent
\textbf{AMS subject classifications.} 
68Q32, 62C15, 93C73, 93E11



\medskip
 
%
%

\section{Introduction}
 
 Reinforcement learning (RL) is a core topic in machine learning and is concerned with sequential decision-making in an uncertain environment (see \cite{sutton1998introduction}). Two key concepts in RL are \textit{exploration}, that corresponds to learning via interactions with the random environment,  and \textit{exploitation}, that corresponds to optimising the objective function given accumulated information. 
 The latter can be studied using the (stochastic) control theory, while the former relies on the theory of statistical learning.  When the state dynamics is not available while learning, we talk about model-free reinforcement learning, which is  only concern with the search for the optimal policy. On the other hand, model-based RL assumes the model is given, or it is learned from data. Model-based RL has advantage over model-free approaches as a) it can be systematically studied using powerful and well understood stochastic control techniques that have been developed over the last half-century, see \cite{krylov2008controlled,bertsekas1995neuro,bensoussan2004stochastic,fleming2006controlled}; b) it is more appropriate for high-stakes decision-making, overcoming some shortcomings of less interpretable ``black-box" approaches \cite{rudin2019stop}. 
  
  In this work we study finite time model-based RL when the underlying environment is modelled by a linear stochastic differential equation with unknown drift parameter and the agent is minimising (known) convex, but possibly irregular, cost function. Before introducing the learning algorithm and the main contributions of this work we briefly review the classical stochastic linear-convex (LC) control problems, with all the parameters assumed to be known, fixing the notation along the way.
 
\subsection{Linear-convex control problem with observable parameter}
\label{sec:linear_convex_control}

Let $T> 0$ be a given terminal time,
  $(\Omega, \cF,  \sP)$
be a complete probability space
which supports a $d$-dimensional standard Brownian motion $W$,
 and $\sF  $ be the filtration generated by $W$ augmented 
 by the $\sP$-null sets.
Let
 $\theta= (A ,B )\in 
 \sR^{d \t (d+p)} $ be 
 given parameters.\footnote{With a slight abuse of notation, we identity $\sR^{d\t d}\t \sR^{d\t p}$ with   $\sR^{d \t (d+p)}$ throughout the paper.}
Consider the following control problem with parameter $\theta $:
\bb\label{eq:lc}
V^\star(\theta )=
\inf_{\a\in \cH^2_{\sF}(\Omega  ; \sR^p)} J(\a;\theta ),
\q \textnormal{with}\q 
J(\a;\theta )=\sE\left[\int_0^T f(t,X^{\theta ,\a}_t,\a_t)\, \d t+g(X_T^{\theta ,\a})\right],
\ee
where
$X^{\theta ,\a}\in \cS^2_{\sF}(\Omega ; \sR^d)$ is the strong solution to the following  dynamics: 
\bb\label{eq:lc_sde}
\d X_t =({A}  X_t+{B} \a_t)\,\d t+  \d W_t, \q t\in [0,T],
\q X_0=x_0,
\ee
and  the  given initial state $x_0$  and functions $f$ and $g$ 
 satisfy the following conditions as in \cite{guo2021reinforcement}:

\begin{Assumption}\label{assum:lc}
$T>0$, $x_0\in \sR^d$, and $f:[0,T]\t \sR^d\t \sR^p\to\sR\cup\{\infty\}$ and $g:\sR^d\to \sR$ such that 
\begin{enumerate}[(1)]

\item\l{item:f0R}
there exist measurable functions 
 $f_0:[0,T]\t \sR^d\t \sR^p\to\sR$ 
and 
$h:  \sR^p\to \sR\cup \{\infty\}$
such that 
\bb\label{eq:f_decompose}
f(t,x,a)={f}_0(t,x,a)+h( a), \q  \fa (t,x,a)\in [0,T]\t \sR^d\t \sR^p.
\ee
For all $(t,x)\in [0,T]\t \sR^d$,
$f_0(t,x,\cdot)$ is convex,
$f_0(t,\cdot,\cdot)$ 
is 
differentiable
with 
a Lipschitz continuous derivative,
and 
$\sup_{t\in [0,T]}(|f_0(t,0,0)|+|\p_{(x,a)}f_0(t,0,0)|)<\infty$.
Moreover, 
$h$ is proper, lower semicontinuous, and convex;\footnotemark
\footnotetext{We say a function $h:\sR^p\to \sR\cup\{\infty\}$ is proper if it has a nonempty domain 
$\operatorname{dom} h\coloneqq \{a\in \sR^p \mid h(a)<\infty\}$.}

\item 
there exists $\lambda>0$ such that 
for all 
$t\in [0,T]$, $(x,a),(x',a')\in \sR^d\t \sR^p$, and $\eta\in [0,1]$,
$$ \l{eq:strong_convex}
\eta f(t,x,a)+(1-\eta ) f(t,x',a')
\ge 
f(t,\eta x+(1-\eta) x', \eta a+(1-\eta) a')
+ \eta(1-\eta)\tfrac{\lambda}{2}|a-a'|^2;
$$
\item\l{item:lc_g}
 $g$ is  convex and differentiable with
a Lipschitz continuous derivative.

\end{enumerate}
\end{Assumption}

\begin{Remark}
(H.\ref{assum:lc}) 
not only includes the most commonly used linear-quadratic models as special cases,
 but also
 allows for 
  practically important nonsmooth control problems  in engineering and machine learning. 
For example,
 problems with control constraints 
 correspond to $h$
 being
the indicator function of action sets,
and 
sparse control problems 
correspond to 
$h$ being 
the $\ell^1$-norm of control variables.
Moreover, in the reinforcement learning literature
(see e.g., \cite{cohen2020asymptotic,
guo2020entropy,
vsivska2020gradient,
wang2020reinforcement,
reisinger2021regularity}),
one often consider the entropy-regularised cost function:
\bb\label{eq:regularized_cost}
 f(t,x,a)\coloneqq  \bar{f}_0(t,x)^\top a + h_{\textrm{en}}(a),
\q \fa (t,x,a)\in [0,T]\t \sR^d\t \sR^p,
\ee
with  $\bar{f}_0:[0,T]\t \sR^d\to \sR^p$
being a sufficiently regular function,
and $h_{\textrm{en}}:\sR^p\to \sR\cup\{\infty\}$
being the Shannon's entropy 
 function such that 
\bb\label{eq:entropy}
 h_{\textrm{en}}(a)=
\begin{cases}
\sum_{i=1}^p a_i\ln (a_i), & a\in \Delta_p\coloneqq\{
a\in [0,1]^p\mid 
\sum_{i=1}^p a_i=1\},\\
\infty, & a\in \sR^p\setminus \Delta_p.
\end{cases}
\ee
It has been shown  
that including the entropy function
$h_{\textrm{en}}$ in the optimisation objective 
leads to  more robust decision-making
\cite{reisinger2021regularity}
and accelerates the convergence of gradient-descent algorithms
\cite{vsivska2020gradient}. 
Observe that 
$h_{\textrm{en}}$ is only differentiable on the interior of 
 $\Delta_p$, and its derivative blows up near the boundary. 
We refer the reader to 
\cite{guo2021reinforcement}
and the references therein for other applications of nonsmooth control problems.

\end{Remark}

In this work we focus on Lipschitz continuous   feedback controls defined as follows.

\begin{Definition}\l{def:fb}
For each $C\ge 0$,
let $\cV_C$ be the following 
space of 
 feedback controls:
 \begin{equation}\l{eq:lipschitz_feedback}
\cV_C \coloneqq 
\left\{ 
\psi: [0,T]\t \sR^d\to \sR^p
\,\middle\vert\, 
\begin{aligned}
&\textnormal{$\psi$ is measurable and 
for all $(t,x,y)\in [0,T]\t \sR^d\t \sR^d$,}
\\
&\textnormal{$|\psi(t,0)|\le C$
 and 
 $|\psi(t,x)-\psi(t,y)|\le C|x-y|$.}
 \end{aligned}
\right\}
\end{equation}
 
\end{Definition}

The following proposition shows that 
under (H.\ref{assum:lc}), 
the control problem \eqref{eq:lc}
admits  an optimal feedback control
$\psi_{\theta }$,
which depends continuously on the parameter $\theta $.
The proof can be found in 
Theorems 2.5 and 2.6 and Lemma 2.8
in 
\cite{guo2021reinforcement}.

\begin{Proposition}\label{prop:existence_fb}
Suppose (H.\ref{assum:lc}) holds,
and let $\Theta $ be a bounded subset
of $ \sR^{d\t (d+p)}$.
Then there exists a constant $C\ge 0$, such that for all $\theta\in \Theta$, 
there exists  $\psi_\theta\in \cV_C$ such that 
\begin{enumerate}[(1)]
\item
 \label{item:optimal}
$\psi_\theta$ is an optimal feedback control
of  \eqref{eq:lc}, i.e., 
$V^\star (\theta)=J(\psi_\theta;\theta)$, where for all 
$\theta\in \sR^{d\t (d+p)}$
and  $\psi\in \cup_{C\ge 0} \cV_C$,
\bb\l{eq:loss_feedback}
J(\psi;\theta)\coloneqq\sE\left[\int_0^T f(t,X^{\theta, \psi}_t,\psi(t,X^{\theta, \psi}_t))\, \d t+g(X^{\theta, \psi}_T)\right]\in \sR\cup\{\infty\},
\ee
and  
$X^{\theta, \psi}\in \cS^2_{\sF}(\Om; \sR^d)$ is the state process associated with $\theta$ and  $\psi$  
satisfying
for $t\in[0,T]$, 
\bb\l{eq:lc_sde_fb}
  \d X^{\theta,\psi}_t =\theta 
  Z^{\theta,\psi}_t
\,\d t+\, \d W_t,
\q X^{\theta,\psi}_0=x_0,
\q \textnormal{with} \q 
Z^{\theta,\psi}_t=
\begin{psmallmatrix}
X^{\theta,\psi}_t
\\
\psi(t,X^{\theta,\Psi}_t)
\end{psmallmatrix};
\ee
\item 
\label{item:f_psi}
for all $(t,x)\in [0,T]\t \sR^d$ and $\theta,\theta'\in \Theta$,  
$|\psi_\theta(t,x)-\psi_{\theta'}(t,x)|\le C(1+|x|)|\theta-\theta'|$ and 
$|f(t,x,\psi_\theta(t,x))|\le C(1+|x|^2)$;
\item 
$\sR^{d\t (d+p)}\ni \theta\mapsto 
V^\star (\theta)\in \sR$ is continuous. 
\end{enumerate}
\end{Proposition}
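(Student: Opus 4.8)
The plan is to combine a convex-analytic existence argument with the stochastic maximum principle (SMP) and the theory of forward--backward SDEs (FBSDEs) under monotonicity; this is, in essence, the route of \cite{guo2021reinforcement}. First I would note that, since \eqref{eq:lc_sde} is affine in $(X,\a)$, the map $\a\mapsto X^{\theta,\a}$ is affine from $\cH^2_\sF(\Omega;\sR^p)$ to $\cS^2_\sF(\Omega;\sR^d)$; composing with the strong convexity of $f(t,\cdot,\cdot)$ (the second condition of (H.\ref{assum:lc})) and the convexity of $g$ shows that $\a\mapsto J(\a;\theta)$ is $\tfrac{\lambda}{2}$-strongly convex on $\cH^2_\sF(\Omega;\sR^p)$, while the growth and lower semicontinuity built into the first and third conditions of (H.\ref{assum:lc}) make it proper, coercive and lower semicontinuous. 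Hence, for each $\theta$, there is a unique open-loop minimiser $\a^{\theta}$, and, by convexity, the SMP is both necessary and sufficient: with the Hamiltonian
\bbn
  H(t,x,a,p)\coloneqq \la p,\, A x + B a\ra + f(t,x,a), \qquad (t,x,a,p)\in[0,T]\t\sR^d\t\sR^p\t\sR^d,
\een
the control $\a^{\theta}_t$ minimises $a\mapsto H(t,X^{\theta,\a^\theta}_t,a,Y_t)$ pointwise, where $(Y,Z)$ solves the adjoint BSDE with driver $-\p_x H$ and terminal datum $\p_x g(X^{\theta,\a^\theta}_T)$.

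Next I would analyse the Hamiltonian minimiser. As $f_0(t,\cdot,\cdot)$ is differentiable with Lipschitz gradient and $h$ is proper, lower semicontinuous and convex, $a\mapsto H(t,x,a,p)$ is $\lambda$-strongly convex, so $\hat a(t,x,p):=\argmin_{a\in\sR^p}H(t,x,a,p)$ is single-valued; testing the (sub)gradient optimality condition at two arguments against the strong-convexity inequality gives a Lipschitz bound $|\hat a(t,x,p)-\hat a(t,x',p')|\le\tfrac{1}{\lambda}(L|x-x'|+|B|\,|p-p'|)$ uniform in $t$, with $L$ the Lipschitz constant of $\p_x f_0$; since $\Theta$ is bounded, the constants are uniform over $\theta\in\Theta$. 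The optimality system then reads as the fully coupled FBSDE
\begin{align*}
  \d X_t &= \big(A X_t + B\,\hat a(t,X_t,Y_t)\big)\,\d t + \d W_t, \qquad X_0 = x_0,\\
  \d Y_t &= -\big(A^\trans Y_t + \p_x f_0(t,X_t,\hat a(t,X_t,Y_t))\big)\,\d t + Z_t\,\d W_t, \qquad Y_T = \p_x g(X_T).
\end{align*}

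The convexity of $f$ in $(x,a)$ and of $g$ supply the monotonicity condition under which this FBSDE is well-posed on $[0,T]$ and admits a Lipschitz decoupling field $u_\theta:[0,T]\t\sR^d\to\sR^d$ with $Y_t=u_\theta(t,X_t)$, $|u_\theta(t,x)-u_\theta(t,y)|\le C|x-y|$ and $|u_\theta(t,0)|\le C$, with $C$ depending only on the data and $\Theta$. Setting $\psi_\theta(t,x):=\hat a(t,x,u_\theta(t,x))$ then produces a feedback control in $\cV_C$ for a $\Theta$-dependent $C$, which by the first paragraph is optimal --- this is the existence/optimality claim. For the $\theta$-dependence, comparing the FBSDEs for $\theta,\theta'\in\Theta$ and applying Gronwall to the difference processes (using the Lipschitz dependence of $\hat a$, the drift and $\p_x f_0$ on $\theta$) gives $\|X^\theta-X^{\theta'}\|+\|Y^\theta-Y^{\theta'}\|\lesssim(1+|x_0|)|\theta-\theta'|$ and, pushing this through the decoupling field, $|u_\theta(t,x)-u_{\theta'}(t,x)|\le C(1+|x|)|\theta-\theta'|$, hence the claimed bound on $\psi_\theta$; the quadratic growth $|f(t,x,\psi_\theta(t,x))|\le C(1+|x|^2)$ follows from the quadratic growth of $f_0$ and from bounding $h(\psi_\theta(t,x))$ via $H(t,x,\psi_\theta(t,x),u_\theta(t,x))\le H(t,x,a_0,u_\theta(t,x))$ for a fixed $a_0\in\dom h$. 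Finally, continuity of $\theta\mapsto V^\star(\theta)=J(\psi_\theta;\theta)$ follows from a further Gronwall estimate for \eqref{eq:lc_sde_fb} in $\theta$, the uniform quadratic growth of the integrand, and dominated convergence.

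The main obstacle is the regularity input in the third paragraph: proving the Lipschitz regularity of the decoupling field $u_\theta$, and its Lipschitz dependence on $\theta$, \emph{uniformly} over the bounded set $\Theta$, in the presence of a possibly nonsmooth, extended-real-valued $h$ such as the entropy \eqref{eq:entropy}. When $h$ is not smooth, $\hat a$ is merely Lipschitz, so the FBSDE coefficients lack the smoothness needed to quote classical regularity for the associated Hamilton--Jacobi--Bellman equation; one must instead argue directly through the monotonicity/continuation method for FBSDEs --- verifying that the monotonicity and Lipschitz constants are $\theta$-uniform --- or mollify $h$ and pass to the limit keeping every estimate $\theta$-uniform. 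The irregularity of $h$ also forces the use of subdifferentials in the SMP and care in the upper bound on $f(t,x,\psi_\theta(t,x))$, since $0\notin\dom h$ in general.
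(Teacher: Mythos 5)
Your outline follows essentially the same route as the paper, which does not prove this proposition itself but cites Theorems 2.5, 2.6 and Lemma 2.8 of \cite{guo2021reinforcement}: strong convexity of $\a\mapsto J(\a;\theta)$, the sufficient stochastic maximum principle, the Lipschitz Hamiltonian minimiser, and a uniformly Lipschitz decoupling field for the resulting coupled FBSDE. The one step you correctly flag as the main obstacle --- the $\theta$-uniform Lipschitz regularity of $u_\theta$ and its Lipschitz dependence on $\theta$ despite the nonsmooth, extended-real-valued $h$ --- is precisely what those cited results supply, so your sketch is consistent with the intended proof.
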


\subsection{Phased-based learning algorithm and our contributions}

\label{sec: overview}

In this section we provide a road map of the key ideas and contributions of this work without introducing needless technicalities. 
The precise assumptions and statements of the results can be found in Sections \ref{sec:framework} and \ref{sec:PEGE}. 
 
 \paragraph{Episodic learning and observations.}

To encode the 
 fact that the controller does not observe the 
true parameter $\theta$, we treat it as a random variable which we denote by $\bs{\theta}$. The probability distribution induced by $\bs{\theta}$ describes the uncertainty the controller has about unknown $\theta$. Agent learns about $\bs \theta$ by executing a sequence of feedback policies $(\psi_m)_{m\in \sN}$ and observing corresponding realisation of controlled dynamics \eqref{eq:lc_sde_fb}, $(X^{\bs{\theta}, \psi_m}_t)_{t\in [0,T],m\in \sN}$.
This is often referred to as the episodic  framework 
in the RL literature
(see e.g., \cite{osband2013more,basei2021logarithmic,guo2021reinforcement}).
At the beginning of the $m$-th episode, 
the information available to the agent
for designing $\psi_m$
  is incorporated in the $\sigma$-algebra $
  \cG_{m-1}=\sigma\{ X^{\bs{\theta},\psi_n}_s\mid s\in[0,T], n=1,\ldots, m-1 \}$
  generated by the state processes from  previous episodes. 

 It is  worth emphasising that the agent only observes the state process $(X^{\bs{\theta},\psi}_t)$, but  not the Brownian motion $W$, since otherwise the problem reduces to the classical control problem. Indeed,  consider \eqref{eq:lc_sde_fb}. Suppose that  one can choose $\psi$ such that 
 $\int_0^t Z^{\bs{\theta},\psi}_s (Z^{\bs{\theta},\psi}_s)^\top \d s$
  is almost surely invertible for some $t > 0$
  (see Lemma \ref{lemma:nondegenerate}). Then, for all $t > 0$, we can write
\[ 
\bs{\theta} = \left(
\Big(
 \int_0^t Z^{\bs{\theta},\psi}_s
 (\d X^{\bs{\theta},\psi}_s-\d W_s)^\top
 \Big)^\top
\right)\left(\int_0^t Z^{\bs{\theta},\psi}_s (Z^{\bs{\theta},\psi}_s)^\top \d s\right)^{-1}.
\] 
But this means that if we observe both $X^{\bs{\theta},\psi}$ and $W$, then we effectively know $\bs \theta$. 

\paragraph{Performance measure 
and incomplete learning.} 
We denote by $\Psi_m(\cdot)$ the (random) feedback  policy used in the $m$-th episode, in order to emphasise its dependence on 
  realisations of controlled dynamics from all previous episodes; see Definition \ref{Def: Admissible RL}  for details. 
To measure the performance of an algorithm $\bs{\Psi} = (\Psi_m)_{m \in \sN}$ with the corresponding state processes $(X^{\bs{\theta},\bs{\Psi}, m})_{ m \in \sN}$, we often consider its associated cost
\bb\label{eq:cost_m_intro}
\ell_m(\bs{\Psi},\bs{\theta})
\coloneqq
\int_0^T f(t,X^{\bs{\theta},\bs{\Psi}, m}_t,
\Psi_m(
\cdot, t,X^{\bs{\theta},\bs{\Psi}, m}_t))
\, \d t+g(X^{\bs{\theta},\bs{\Psi}, m}_T).
\ee
The regret of learning 
up to  $N\in \sN$ episodes is then defined by
\begin{equation}
\label{eq: Bayesian regret_intro}
\cR(N,  \bs{\Psi},\bs{\theta}) \coloneqq \sum_{m=1}^N \Big( \ell_m(\bs{\Psi},\bs{\theta})
 - V^\star(\bs{\theta}) \Big),
\end{equation}
where $V^\star(\theta)=J(\psi_{{\theta}}; {\theta})$ is the optimal cost that agent can achieve knowing the parameter $\bs{\theta}$ (cf.~\eqref{eq:loss_feedback}). 
The regret characterises the cumulative loss from taking sub-optimal policies in all episodes. Agent's aim is to construct a learning algorithm for which the regret (either in high-probability or in expectation) $\cR(N,  \bs{\Psi},\bs{\theta})$ growths \textit{sublinearly} in $N$.

To quantify the regret of an algorithm $\bs{\Psi} = (\Psi_m)_{m \in \sN}$, one can decompose  the regret into
\begin{equation}
\label{eq: Bayesian regret decomposition}
\cR(N,  \bs{\Psi},\bs{\theta})= \sum_{m=1}^N \Big( \ell_m(\bs{\Psi},\bs{\theta})
 - J(\Psi_m ; \bs{\theta}) \Big) + \sum_{m=1}^N \Big( J(\Psi_m; \bs{\theta}) - 
 J(\psi_{\bs{\theta}}; \bs{\theta})
\Big),
\end{equation}
where $J(\Psi_m ; \bs{\theta})$ is given in \eqref{eq:loss_feedback}
and $\psi_\theta$ is given in Proposition \ref{prop:existence_fb}. The first term  vanishes under expectation and can be bounded with $\cO(\sqrt{N} \ln N)$ in the high-probability sense (Propositions \ref{prop:subguassian_subexponential}
and 
\ref{prop: concentration inequality}).  This means that it suffices to analyse the second term which is the (expected) regret that has been analysed in
 \cite{basei2021logarithmic,guo2021reinforcement} assuming a self-exploration property of greedy policies
 (see Remark \ref{rmk:exploration_policy}). However, 
the following examples shows these  greedy policies in general do not guarantee exploration and consequently convergence to the optimal solution,
which is often referred to as incomplete learning in the literature (see e.g.,~\cite{
keskin2018incomplete}).

\begin{Example}
\label{ex:non_exploration}
Consider minimising the following quadratic cost
\[
J(\alpha ; {\theta}) = \sE \left[ \int_0^T
(\a^2_{1,t}+\a^2_{2,t}) \d t + X_T^2  \right]\,,
\]
over all  admissible controls $\alpha_1$ and $\alpha_2$, subject to a  one-dimensional state dynamics:
\[
\d X_t =(B_1 \a_{1,t}+B_2\a_{2,t}) \d t +  \d W_t\,,
\q t\in [0,T],
\q X_0=x_0, \q 
\textnormal{with $B_1, B_2 \neq 0$}.
\]
The classical linear-quadratic (LQ) control theory shows that
for any $B=(B_1,B_2)$, 
 the optimal feedback control is given by $\psi_\theta(t,x) =-p_tB^\top x$, where $(p_t)_t$ satisfies  
$p'_t - (BB^\top) p_t^2 = 0$ and $p_T =1$. 
Consequently,
if the agent starts with the initial estimate  $\tilde{B}^0 = (\tilde{B}_1^0,0)$ with some $\tilde{B}_1^0 \neq 0$,  then, assuming that $\tilde{B}_1^m \neq 0$ for all $m\in \sN$, the greedy strategy will result in  a control of the form $\alpha_t = (\alpha_{1,t}, 0)$ for all episodes and almost surely result in  estimates  $\tilde{B}^m = (\tilde{B}_1^m,0)$. In particular, the parameter $B_2$ and the optimal policy  will never be learnt.
Note that this example contradicts the full column rank condition of $B=(B_1,B_2)$ in \cite{guo2021reinforcement}.
\end{Example}

\paragraph{Separation of exploration and greedy exploitation episodes.}
%


Motivated by the example, in this work  we assume that there exists a policy $\psi^e$ that allows to explore the space (see (H.\ref{Assump: Exploration strategy})).
We prove that 
such an exploration policy can be explicitly constructed 
under very general conditions 
(see Proposition \ref{prop:construct_psi_e}),
and guarantees to improve the accuracy of parameter estimation
 (see  Remark \ref{rmk:exploration_policy}). With the  exploration policy $\psi^e$ at hand, we
separate the exploration episodes, in which  $\psi^e $ is exercised and the estimate $\tilde{\bs{\theta}}_m$ of $\bs{\theta}$ is updated based on available information $\cG_m$, from exploitation episodes   in which greedy policies $\psi_{\tilde{\bs{\theta}}_m}$ are exercised. From Example \ref{ex:non_exploration} we see that in the latter case there is no guarantee that the agent's knowledge of $\bs{\theta}$ increases. 

 Coming back to regret analysis and noting that $V(\theta)=J(\psi_\theta;\theta)$, the second term on the right hand side of \eqref{eq: Bayesian regret decomposition} can be decomposed as 
 \begin{align}
    \label{eq: expected regret decomposition}
 \sum_{m=1}^N \big( J(\Psi_m ; \bs{\theta}) - J(\psi_{\bs{\theta}}; \bs{\theta}) \big)
 &=\!\!
 \sum_{m \in [1,N] \cap \cE^{\bs{\Psi}}}
 \!
 \big( J(\psi^e ; \bs{\theta}) 
 - J(\psi_{\bs{\theta}}; \bs{\theta})
 \big) 
 + \! \sum_{m \in [1,N] \setminus \cE^{\bs{\Psi}} }
 \!\!
 \big( J(\psi_{\tilde{\bs{\theta}}_{m}} ; \bs{\theta}) 
 - J(\psi_{\bs{\theta}}; \bs{\theta})
 \big),
 \end{align}
  where  $\cE^{\bs{\Psi}} = \{m \in \sN | \Psi_m = \psi^e \}$.
 The first term of \eqref{eq: expected regret decomposition}  induces loss due to exploration
and increases linearly in the size of  exploration episodes. 
 The second term of  \eqref{eq: expected regret decomposition}, on the other hand, describes loss due to inaccuracy of our parameter  estimate.
  Balancing  exploration  and
exploitation episodes to control the growth of regret is precisely where the exploration-exploitation
trade-off appears in this work.



 
 


 To quantify the regrets of exploitation episodes, 
 one of the main contributions of this work is to show that there exist constants $L_\Theta,\beta>0,r\in [1/2,1]$
  such that
   for all $\theta_0 \in \Theta$,
   \bb\label{eq:performance_gap_intro}
   |J(\psi_{{\theta}}; \theta_0) - J(\psi_{{\theta_0}}; \theta_0)|\le L_\Theta|{\theta}-\theta_0|^{2r},
   \,\,\,\,\,\fa {\theta} \in\sB_\beta(\theta_0),
   \ee
   where
    $\psi_\theta$ is an  optimal feedback control
 with parameter 
$\theta$. In particular,
\begin{itemize}
    \item 
    We prove that the performance gap \eqref{eq:performance_gap_intro} 
    holds with $r=1$
provided that the cost functions in \eqref{eq:lc}
 are Lipschitz differentiable (Theorem \ref{thm:performance_gap_smooth})  or involve the 
 (nonsmooth) entropy regularisation function
  (Theorem \ref{thm:performance_gap_entropy}).
To the best of our knowledge, this is the first paper  on  the (optimal) quadratic performance gap for continuous-time RL problems beyond the  LQ setting.
\end{itemize}


\paragraph{Regret for Phased Exploration and Greedy Exploitation algorithm.}
While the general framework we present in this work can be used to study many   popular parameter estimators from the literature, we focus the exposition on a truncated 
 maximum a posteriori (MAP) estimate
$\tilde{\bs{\theta}}_m$ based on posterior distribution $\pi(
\bs{\theta} | \cG_{m} )$ for all $m\in\sN$.
 The accuracy of the MAP estimate 
 is  evaluated
 by using concentration inequalities for \textit{conditional} sub-exponential random variables,
 in order to address  correlations among all observations.
Based on the precise convergence rate of $ |\tilde{\bs{\theta}}_{m} - \bs{\theta}|$ in terms of $m$ and the  performance gap  $r$, 
we design
Phased Exploration and Greedy Exploitation 
 algorithms (see Algorithm \ref{Alg: PEGE}) that balance exploration and exploitation:

 \begin{itemize}
 \item In the case where
 the accuracy of $(\tilde{\bs{\theta}}_m)_{m\in\sN}$ improves only after the  exploration episodes,
 we 
 prove that  the regret of  the PEGE algorithm 
is of the magnitude 
  $\cO(N^{{1}/{1+r}}   )$ (up to logarithmic factors) with high probability; 
see Theorem \ref{Theorem: regret}.
In particular,  our result 
extends the $\cO(\sqrt{N})$-regret bound for 
tabular Markov decision problems 
and discrete-time LQ-RL problems 
to continuous-time LC-RL problems  with Lipschitz differentiable costs
(Theorem \ref{thm:performance_gap_smooth}) and entropy-regularised costs (Theorem \ref{thm:performance_gap_entropy}).

 \item In the case where
 the accuracy of $(\tilde{\bs{\theta}}_m)_{m\in\sN}$ improves  after both the  exploration and exploitation episodes,  we 
 achieve an improved expected regret 
 of the order  
 $\cO(N^{1-r} (\ln N)^r)$ if $r<1$
and $\cO((\ln N)^2)$ if $r=1$;
  see Theorem \ref{Theorem: regret_self explore}.
To the best of our knowledge, this is the first 
logarithmic regret bound beyond the linear parameterised Markov decision setting  or  the LQ-RL setting. 
 \end{itemize}

\subsection{Related works}


The problem that we study here falls under the umbrella of the partially observable stochastic control problems. There, one typically assumes that the controlled dynamics is not observable, and the observation process is given by another stochastic process and 
in addition, 
the coefficients of the controlled dynamics and the observation process are given to the agent.
One typically also is interested in continuous learning without explicit exploration.  We refer the reader to excellent monographs on this subject \cite{davis1977linear,bensoussan2004stochastic,liptser1977statistics} and more recent development \cite{allan2019parameter,allan2020pathwise,cohen2020asymptotic}.
The problem that we study here is different. We do observe controlled dynamics but treat its parameters as random variables. In that sense, the problem is degenerate.  Our main objective is to study regret in the episodic setting.

 There is a vast literature on sublinear regret bounds  for discrete-time RL algorithms
 (see e.g., \cite{rusmevichientong2010linearly,
 osband2013more,osband2014model,
 he2021logarithmic} for bandit problems and tabular Markov decision problems, and 
 \cite{abeille2018improved, mania2019certainty,dean2020sample,simchowitz2020naive} for discrete-time LQ-RL problems). In particular,  a Phased Exploration and Greedy Exploitation algorithm has been introduced in \cite{rusmevichientong2010linearly}
for 
linear bandit problems. Furthermore,  many results in literature  e.g.,
\cite{osband2013more} and  
\cite{ucrl2} give a regret with order depending on the size of state and action. 
 For RL problems with continuous-time models,
 attentions have been mostly on algorithm development (see e.g.,
 \cite{munos1998reinforcement,munos2000study,munos2006policy}), 
 while  
regret analysis of  learning algorithms  is limited.

 In the seminal work \cite{duncan1999adaptive}  authors established an asymptotically sublinear regret for 
regularised least-squares algorithms 
in an ergodic continuous-time LQ setup without an exact order of the regret bound.
Recently, \cite{basei2021logarithmic, guo2021reinforcement}
extend 
the
least-squares algorithms 
to the finite-time horizon episodic setting and analyze their non-asymptotic regrets. They  prove 
that if  optimal controls of the true model  automatically exploit the parameter space, 
then a greedy least-squares algorithm with suitable initialisation admits 
a non-asymptotically logarithmic expected regret for LQ models  \cite{basei2021logarithmic}, and a $\cO(\sqrt{N})$ expected regret for LC models \cite{guo2021reinforcement}.
Unfortunately, as shown in Example \ref{ex:non_exploration}  and in \cite{
keskin2018incomplete}, such a self-exploration property may not hold in general, even for LQ models. 
Furthermore, the learning algorithm studied here works for an arbitrary initialisation.

There are various reasons for the relatively slow theoretical progress in non-asymptotic performance analysis  of continuous-time RL algorithms.
Analysing exploitation (i.e., the performance gap \eqref{eq:performance_gap_intro}) for continuous-time models 
 often requires to study high-order stability (such as Lipschitz continuity or differentiability) of 
 associated 
fully nonlinear HJB PDEs with respect to model parameters, 
which 
has always been one of the formidable challenges in the control theory. 
Furthermore, developing an effective exploration strategy 
 for learning the environment in the nonlinear setting 
adds additional complexity to the problem that we study.  
Indeed, in the context of LQ-RL a natural exploration strategy 
is to add Gaussian noises with diminishing variances
to greedy policies (see e.g., \cite{duncan1999adaptive, mania2019certainty,dean2020sample,simchowitz2020naive}).
This technique 
exploits heavily  the fact that the greedy policy  
of a LQ control problem
is  affine in state variables,
which  clearly does not hold 
for general RL problems.
Moreover, 
to ensure a  finite exploration cost, 
one needs to construct exploration noises with range $\dom(h)=\{a\in \sR^k\mid h(a)<\infty\}$, 
which can be computationally challenging. 

\paragraph{Notation:} For each 
$T>0$,
 filtered probability space
$(\Om,\cF,\sF=\{\cF_t\}_{t\in[0,T]},\sP)$  satisfying the usual condition
 and
  Euclidean space $(E,|\cdot|)$,
we introduce the following spaces: 
\begin{itemize}
\item 
$\cS^q_\sF(\Om\t [t,T];E)$, 
$q\in [2,\infty)$, $t\in [0,T]$, is the space of 
 $E$-valued $\sF$-progressively measurable
 processes 
$Y: \Om\t [t,T]\to E$ 
satisfying
$\|Y\|_{\cS^q}=
\sE[\sup_{s\in [t,T]}|Y_s|^q]^{1/q}<\infty$;
\item
 $\cH^q_\sF(\Om\t [t,T];E)$, $q\in [2,\infty)$, $t\in [0,T]$, is the space of 
   $E$-valued $\sF$-progressively measurable
 processes 
$Z: \Om\t [t,T]\to E$ 
 satisfying $\|Z\|_{\cH^p}=\sE[(\int_t^T|Z_s|^2\,\d s)^{q/2}]^{1/q}<\infty$.
\end{itemize}
For notational simplicity, we denote $\cS^q_\sF(\Om; E) = \cS^q_\sF(\Om\t [0,T];E)$ and 
$\cH^q_\sF(\Om; E) = \cH^q_\sF(\Om\t [0,T];E)$.
We also denote 
 by $C\in[0,\infty)$ a generic constant, which depends only on the constants appearing in the assumptions and may take a different value at each occurrence.

\section{A probabilistic framework for 
episodic RL problems}
\label{sec:framework}

%

In this section, we introduce   a rigorous probabilistic framework  for the  episodic learning procedure outlined in Section \ref{sec: overview}.

We start by defining  admissible learning algorithms
rigorously.
An  essential step is to describe the available information for decision-making,
 namely the $\sigma$-algebra 
with respect  to which 
a learning policy is measurable at   each episode. 
The 
 fact that  the controller does not observe the 
true parameter $\theta $
indicates that  
the parameter is not measurable with respect to the available information $\sigma$-algebra. But because a  deterministic quantity is measurable  
with respect to 
 any  $\sigma$-algebra on the  space, we
 treat 
the parameter
as  
a  random variable $\bs{\theta} = (\bs{A},\bs{B})$.
The range $\Theta$
of the  parameter $\bs{\theta}$
 is assumed to be bounded and known to the controller.

\begin{Assumption}
\label{assum:bound}
$\Theta $ is a 
nonempty 
bounded measurable subset of $\sR^{d\t (d+p)}$. 
\end{Assumption}

As $\bs{\theta}$ is a random parameter, 
it is crucial to  distinguish among different sources of randomness throughout the learning process,
and to ensure that all sources of randomness 
 are supported on the same probability space.
To this end, we   work with a complete probability space
of the following form:
\bb\label{eq:space}
 (\Om, \cF,\sP)\coloneqq \bigg(\Om^\Theta\t \prod_{m=1}^\infty \Om^{W^m}, \cF^\Theta\otimes \bigotimes_{m=1}^\infty \cF^{W^m}, \sP^\Theta\otimes \bigotimes_{m=1}^\infty \sP^{W^m} \bigg).
 \ee
The space
 $(\Om^\Theta,\cF^\Theta,\sP^\Theta)$ 
 supports the  random variable  $\bs{\theta}$
  taking values in   $\Theta$, 
and 
 for each $m\in \sN$,
  the space $(\Om^{W^m},\cF^{W^m},\sP^{W^m})$
 supports a $d$-dimensional Brownian motion $W^m$,
 which 
 describes the randomness involved in making the new observation at the $m$-th episode.
Without loss of generality, we assume that
 $\Om^\Theta = \Theta$, 
 $\bs{\theta}$ is the identity map on $\Om^\Theta$,
  $\Om^{W^m}=C([0,T];\sR^d)$ and $W^m$ is the coordinate map of $\Om^{W^m}$. In particular, any element $\om\in \Om$ can be written
as $\om=(\om^\theta, \om^W) \in \Theta \t \prod_{m=1}^\infty \Om^{W^m}$ with
$\om^\theta \coloneqq \theta$ and
 $\om^W \coloneqq (\om_1^W, \om_2^W, \ldots)$. 
We extend canonically $\bs{\theta}$ on $\Om$ by setting 
 $\bs{\theta}(\om^\theta,\om^W)\coloneqq \bs{\theta}(\om^\theta)$,
 and extend similarly on $\Om$ any random variable on 
 $\Om^{W^m}$.

 It is important to notice that 
one execution of the learning algorithm with all episodes 
corresponds to one sample $\om=(\theta, 
\om_1^W, \om_2^W, \ldots) $ in the space $\Om^\Theta\t \prod_{m=1}^\infty \Om^{W^m}$.
This  describes the episodic learning problem precisely, where
the system parameter   $\bs{\theta}$ 
is realised
at the beginning of the learning process,
and the agent aims to learn the unknown parameter 
through    a sequence of episodes
driven by independent noises. 

Now we give the precise definition of an admissible learning algorithm.
Recall the space of Lipschitz feedback controls $\cV_C$
defined in \eqref{def:fb}.

\begin{Definition}
\label{Def: Admissible RL}
Let $(\Om,\cF,\sP)$ be 
defined in \eqref{eq:space},
 $\cN$ be the $\sigma$-algebra generated by $\sP$-null sets,
and 
 $\bs{\Psi} = (\Psi_m)_{m\in \sN}$ 
 be a sequence of functions
 $\Psi_m: \Om\t [0,T]\t \sR^d\to \sR^p$ 
 such that 
for all $m\in \sN$, 
 there exists a constant $C_m\ge 0$ such that 
$\Psi_m(\om,\cdot)\in \cV_{C_m}$ for all $\om\in \Om$.

 We say that  $\bs{\Psi}= (\Psi_m)_{m\in \sN}$ is an admissible  learning  algorithm
 (simply referred to as a learning algorithm)
 if for  all $m\in \sN$,
 $\Psi_m$ is 
 $(\cG^{\bs{\Psi}}_{m-1}\otimes\cB([0,T])\otimes \cB(\sR^d)  )/\cB(\sR^p)$-measurable, with
 the $\sigma$-algebras
  $(\cG^{\bs{\Psi}}_m)_{m\in \mathbb N\cup\{0\}}$   defined recursively as follows:
 $\cG^{\bs{\Psi}}_0\coloneqq \cN$ and 
 for all 
 $m\in\sN$,
take
the   $(\cG^{\bs{\Psi}}_{m-1}\otimes\cB([0,T])\otimes \cB(\sR^d)  )/\cB(\sR^p)$-measurable function $\Psi_m$, 
and define 
 $\cG^{\bs{\Psi}}_m\coloneqq 
\cG^{\bs{\Psi}}_{m-1}\vee \sigma\big\{{X}^{\bs{\theta}, \bs{\Psi}, m}_t \mid t \in [0,T] \big\}$,
where 
$X^{\bs{\theta},\bs{\Psi},m} \in \cS^2_{\sF^m}(\Om; \sR^d)$
is the strong solution of
  \bb\label{eq:X_m}
  \d X^{\bs{\theta},\bs{\Psi}, m}_t =\bs{\theta} 
  Z^{\bs{\theta},\bs{\Psi}, m}
\,\d t+\, \d W^m_t,
\q X^{\bs{\theta},\bs{\Psi}, m}_0=x_0,
\q \textnormal{with 
$Z^{\bs{\theta},\bs{\Psi}, m}_t=
\begin{psmallmatrix}
X^{\bs{\theta},\bs{\Psi}, m}_t
\\
\Psi_m(t,X^{\bs{\theta},\bs{\Psi}, m}_t)
\end{psmallmatrix},
$
}
 \ee
with  the filtration  $\sF^m$  
generated by $W^m$ augmented 
 by $\cG^{\bs{\Psi}}_{m-1}$.
 The $\sigma$-algebra 
$\cG^{\bs{\Psi}}_m$, $m\in \sN\cup\{0\}$,
describes the  available
 information 
 for the agent
 after the $m$-th episode,
 which 
is  generated by the  state processes
 from all previous episodes (up to $\sP$-null sets).
 \end{Definition}

\begin{Remark}
For  discrete-time Markov decision problems
where the  environment is driven by Markov chains and  the agent observes the state dynamics
in discrete-time,
a  learning algorithm is usually defined as 
  a sequence of deterministic 
  functions, 
each mapping 
the history of observations (i.e., 
a finite number of random variables)
to action space  (see e.g., \cite{osband2013more,osband2014model}). 
By the Doob--Dynkin lemma,
this is equivalent to defining
a learning algorithm as a sequence of random functions
measurable with respect to the $\sigma$-algebras generated by all historical  (random) observations. 
 
Definition \ref{Def: Admissible RL}
 extends these concepts to continuous-time RL problems,
 and specifies  the required
 measurability condition of a learning algorithm. It is required for the rigorous analysis of learning algorithms in model-based RL, see Proposition \ref{prop: PEGE as a learning}. 
As  $\bs{\theta}$ and $(W^n)_{n=1}^m$ 
in general
are not measurable with respect to $\cG^{\bs{\Psi}}_m$,
Definition \ref{Def: Admissible RL}
 restricts 
the choices of learning policies and prevents the agent from using the values of $\bs{\theta}$  and $(W^n)_{n=1}^m$  in the decision-making.
\end{Remark}

We then define  the regret of a learning algorithm
as introduced in  Section \ref{sec: overview}. 
For any given learning algorithm 
$\bs{\Psi} = (\Psi_m)_{m\in\sN}$, its associated cost at the $m$-th episode is given by
\bb\label{eq:cost_m}
\ell_m(\bs{\Psi},\bs{\theta})
\coloneqq
\int_0^T f(t,X^{\bs{\theta},\bs{\Psi}, m}_t,
\Psi_m(
\cdot, t,X^{\bs{\theta},\bs{\Psi}, m}_t))
\, \d t+g(X^{\bs{\theta},\bs{\Psi}, m}_T),
\ee
with $X^{\bs{\theta},\bs{\Psi}, m}$ being defined as in 
Definition \ref{Def: Admissible RL},
and the regret of learning 
up to  $N\in \sN$ episodes is defined by
\begin{equation}
\label{eq: Bayesian regret}
\cR(N,  \bs{\Psi},\bs{\theta}) \coloneqq \sum_{m=1}^N \Big( \ell_m(\bs{\Psi},\bs{\theta})
 -  
V^\star(\bs{\theta}) \Big),
\end{equation}
where for each $\theta\in \sR^{d\t (d+p)}$, 
$V^\star(\theta)=J(\psi_{{\theta}}; {\theta})$
as shown in Proposition  \ref{prop:existence_fb}.

Note that 
for any given $N\in \sN$,
$\cR(N,  \bs{\Psi},\bs{\theta}):\Om\to \sR\cup\{\infty\} $ is not deterministic since it depends on the 
realisations of the random  parameter 
$\bs{\theta}$
and 
the Brownian motions 
$(W^m)_{m=1}^N$.
Due to (H.\ref{assum:lc}) and 
the measurability of $\Psi_m$,
$\ell_m(\bs{\Psi},\bs{\theta}):\Om\to \sR\cup\{\infty\}$ 
is $\cF/\cB(\sR\cup\{\infty\})$-measurable
for all $m\in \sN$.
Moreover, by 
the continuity of 
$V^\star:\Theta\to \sR$
(see 
Proposition \ref{prop:existence_fb}),
$V^\star(\bs{\theta}):\Om\to \sR$ is 
$\cF/\cB(\sR)$-measurable,
and hence 
$\cR(N,  \bs{\Psi},\bs{\theta})$
is a random variable on 
$(\Om, \cF,\sP)$ for all $N\in \sN$.


 \section{Phase-based learning algorithm and  main results}
\label{sec:PEGE}

In this section, we propose and analyse
a phase-based learning algorithm
for LC-RL problems under the probabilistic framework 
introduced in Section  \ref{sec:framework}.

\subsection{Bayesian inference for  parameter estimation}
\label{sec:bayesian_inference}

As indicated by 
Definition \ref{Def: Admissible RL}, 
for the $m$-th episode, one  needs to choose a (random) strategy $\Psi_m$ 
based on the 
historical observation  described by 
the $\sigma$-algebra 
$\cG^{\bs{\Psi}}_{m-1}$.
The probabilistic setup in Section \ref{sec:framework}
suggests to first 
 infer the random parameter $\bs{\theta}$
via a Bayesian algorithm 
based on  a given prior and the samples observed so far,
and then design the strategy $\Psi_m$ using the estimated parameter.

In the sequel, 
we  
 derive a Bayesian estimation of $\bs{\theta}$
by imposing   a matrix normal prior distribution,
i.e.,
we assume the  conditional law
$\pi(\bs{\theta} | \cG^{\bs{\Psi}}_{0})=\cM\cN(\hat{\theta}_0,I_d, V_0)$
with some $\hat{\theta}_0\in \sR^{d\t (d+p)}$ and $V_0\in \sS^{d+p}_+$.\footnotemark{}
\footnotetext{Let $M\in \sR^{d\t n}$, $U\in \sS^{d}_+$ and $V\in \sS^n_+$,
we say 
a 
$ \sR^{d\t n}$-valued random variable $X$ follows the matrix normal distribution $\cM\cN(M,U,V)$ if 
$X$  has 
the probability density function 
$\pi(X)=\frac{\exp\left(-\frac{1}{2}\operatorname{tr}\left(
(X-M) V^{-1}(X-M)^\top U^{-1}
\right)\right)}{(2\pi)^{dn/2}\operatorname{det}(V)^{d/2}\operatorname{det}(U)^{n/2}}$.
}%
The normality of the prior distribution  enables representing the  posterior distribution
$\pi(\bs{\theta} | \cG^{\bs{\Psi}}_{m-1})$
 in terms of its sufficient statistic, 
while the  separable structure of the covariance matrix 
effectively reduces the dimension of the covariance process (from an $\sS^{d(d+p)}_+$-valued process to 
an $\sS^{d+p}_+$-valued process), which subsequently  
allowing for  a trackable update of the sufficient statistics for large $d$ and $p$.
Note that
the estimated parameters 
will  be
projected into  some bounded sets 
to recover the  desired 
boundedness of estimates (see \eqref{eq:project}). 
We also emphasise that the matrix normal prior distribution is only imposed to motivate the parameter estimation scheme, 
 and will not be used in the regret analysis of our learning algorithm.

Let  $\bs{\Psi}=(\Psi_m)_{m\in \sN}$
be a given learning algorithm, 
$(X^{\bs{\theta},\bs{\Psi}, m})_{m\in \sN}$
be the associated state processes,
 and $(\cG^{\bs{\Psi}}_m)_{ m \in \sN}$
be  the corresponding observation filtration
(cf.~Definition
\ref{Def: Admissible RL}).
Then for each $m\in \sN$,  by considering 
\eqref{eq:X_m},
the likelihood function of $\bs{\theta}$, or
the Radon--Nikodym derivative of 
$\frac{\d \sP^{X^{\bs{\theta},\bs{\Psi}, m}}}{\d \sP^{W^{ m}}}$
of the measure $\sP^{X^{\bs{\theta},\bs{\Psi}, m}}$, is given by   
\begin{align*}
&\frac{\d \sP^{X^{\bs{\theta},\bs{\Psi}, m}}}{\d \sP^{W^{ m}}}
(t,X^{\bs{\theta},\bs{\Psi}, m})
 = 
 \exp\bigg(\int_0^t (\bs{\theta } Z^{\bs{\theta}, \bs{\Psi}, m}_s)^\top \d X^{\bs{\theta}, \bs{\Psi}, m}_s 
- \frac{1}{2} \int_0^t (\bs{\theta } Z^{\bs{\theta}, \bs{\Psi}, m}_s)^\top (\bs{\theta } Z^{\bs{\theta}, \bs{\Psi}, m}_s) \d s
\bigg)
\\
 &\quad = \exp\bigg(\operatorname{tr}
 \bigg(
 \Big(
 \int_0^t Z^{\bs{\theta}, \bs{\Psi}, m}_s
 (\d X^{\bs{\theta}, \bs{\Psi}, m}_s)^\top
 \Big)^\top \bs{\theta}^\top 
- \bs{\theta}\Big(\frac{1}{2} \int_0^t  Z^{\bs{\theta}, \bs{\Psi}, m}_s (  Z^{\bs{\theta}, \bs{\Psi}, m}_s)^\top \d s\Big)\bs{\theta}^\top 
\bigg)
\bigg).
\end{align*}
Hence
given 
the initial belief  that $\bs{\theta}$ follows the prior distribution 
$\pi(\bs{\theta} | \cG^{\bs{\Psi}}_{0})=\cM\cN(\hat{\theta}_0,I_d, V_0)$,
 the posterior distribution of $\pi(\bs{\theta} | \cG^{\bs{\Psi}}_{m})$ after the $m$-the episode is given by
\begin{align}
\label{eq:posterior}
\begin{split}
 \pi(\bs{\theta} | \cG^{\bs{\Psi}}_{m})
& \propto \pi(\bs{\theta}| \cG^{\bs{\Psi}}_{0}) \prod_{n = 1}^m \frac{\d \sP^{X^{\bs{\theta},\bs{\Psi}, n}}}{\d \sP^{W^{ n}}}(T,X^{\bs{\theta},\bs{\Psi}, n})
 \\
 & \propto  \exp\bigg(  \operatorname{tr}
 \bigg(
\Big(\hat{\bs{\theta}}^{\bs{\Psi,m}} (V^{\bs{\theta }, \bs{\Psi}, m})^{-1}\Big) \bs{\theta}^\top 
-\frac{1}{2} \bs{\theta} (V^{\bs{\theta }, \bs{\Psi}, m})^{-1} \bs{\theta}^\top 
\bigg) \bigg)
\\
 & \propto  \exp\bigg( -\frac{1}{2} \operatorname{tr}
 \Big(
(\bs{\theta}-\hat{\bs{\theta}}^{\bs{\Psi,m}}) (V^{\bs{\theta }, \bs{\Psi}, m})^{-1}
(\bs{\theta}-\hat{\bs{\theta}}^{\bs{\Psi,m}})^\top
\Big) \bigg)
\end{split}
\end{align}
where 
$\propto$ stands for   proportionality  up to a constant independent of $\bs{\theta}$,
and 
for all $m\in \sN\cup\{0\}$,
\begin{equation}\label{eq: statistics}
\begin{aligned}
V^{\bs{\theta }, \bs{\Psi}, m}
&\coloneqq
\bigg(V_0^{-1}+
\sum_{n=1}^m
\int_0^T Z^{\bs{\theta}, \bs{\Psi}, n}_s (  Z^{\bs{\theta}, \bs{\Psi}, n}_s)^\top \d s
\bigg)^{-1}\in \sS^{d+p}_+,
\\
\hat{\bs{\theta }}^{ \bs{\Psi}, m}
&\coloneqq
\bigg(
\hat{\theta}_0
V_0^{-1}+
\sum_{n=1}^m
\Big(
 \int_0^T Z^{\bs{\theta}, \bs{\Psi}, n}_s
 (\d X^{\bs{\theta}, \bs{\Psi}, n}_s)^\top
 \Big)^\top
 \bigg)V^{\bs{\theta }, \bs{\Psi}, m}
  \in \sR^{d\t (d+p)},
\end{aligned}
\end{equation}
with $Z^{\bs{\theta}, \bs{\Psi}, n}$ defined in \eqref{eq:X_m} for all $n\in \sN$.
 A formal derivation of this argument can be found in \cite[Section 7.6.4]{liptser1977statistics} and is also related to the Kallianpur--Striebel formula in stochastic filtering (see \cite{bain2008fundamentals}).
In particular   $ \pi(\bs{\theta} | \cG^{\bs{\Psi}}_{m})=\cM\cN(\hat{\bs{\theta}}^{\bs{\Psi}, m},  I_{d}, V^{\bs{\theta }, \bs{\Psi}, m})$,
where $(\hat{\bs{\theta}}^{\bs{\Psi}, m}, V^{\bs{\theta }, \bs{\Psi}, m})$ depends only on
the policies $(\Psi_n)_{n=1}^m$.

It is clear that  
the random variables $(\hat{\bs{\theta}}^{\bs{\Psi}, m}, V^{\bs{\theta }, \bs{\Psi}, m})$ are   sufficient statistics to represent the posterior distribution $ \pi(\bs{\theta} | \cG^{\bs{\Psi}}_{m})=\cM\cN(\hat{\bs{\theta}}^{\bs{\Psi}, m},  I_{d}, V^{\bs{\theta }, \bs{\Psi}, m})$.  The following proposition iteratively  constructs  learning algorithms 
(cf.~Definition \ref{Def: Admissible RL})
based on 
deterministic feedback controls
and  the sufficient statistics.
The main step of the proof is to show 
that 
$X^{\bs{\theta}, \bs{\Psi}, m}$ in \eqref{eq:X_m}
is 
a semi-martingale 
with respect to the filtration generated by  state processes, whose details are given in Appendix \ref{sec:learning_alg}.

 \begin{Proposition}
\label{prop: PEGE as a learning}
Suppose (H.\ref{assum:bound}) holds. 
Let $\hat{\theta}_0\in \sR^{d\t (d+p)}$, 
$V_0\in \sS^{d+p}_+$,
$\hat{\bs{\theta}}^{\bs{\Psi}, 0}=\hat{\theta}_0$ and $V^{\bs{\theta }, \bs{\Psi}, 0}=V_0$.
For each $m\in \sN$, let 
$C_m\ge 0$,
let 
$\psi_m:
\sR^{d \times (d+p)} \times \sS^{d+p}_+ \times [0,T] \times \sR^d \to \sR^p$
be a measurable function
such that 
$\psi_m(\theta,S,\cdot)\in \cV_{C_m}$
for all $
(\theta,S)\in \sR^{d \times (d+p)} \times \sS^{d+p}_+$, 
let
$\Psi_m:\Om\t [0,T]\t \sR^d\to \sR^p$ be such that 
for all $(\om,t,x)\in \Om\t [0,T]\t \sR^d$,
 $\Psi_m(\om,t,x)=\psi_m(\hat{\bs{\theta}}^{\bs{\Psi}, n}(\om),V^{\bs{\theta }, \bs{\Psi}, n}(\om), t,x)$
 for some $n\le m-1$, 
 and let 
$(\hat{\bs{\theta}}^{\bs{\Psi}, m} , V^{\bs{\theta }, \bs{\Psi}, m}):\Om\to \sR^{d\t (d+p)}\t \sS^{d+p}_+$
be defined in \eqref{eq: statistics}.
 Then $\bs{\Psi} = (\Psi_m)_{m \in \sN}$ is a learning algorithm as in Definition \ref{Def: Admissible RL}.
\end{Proposition}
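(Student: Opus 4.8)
The plan is to argue by induction on the episode index $m$, simultaneously building the policies $\Psi_1,\dots,\Psi_m$, the associated state processes $X^{\bs{\theta},\bs{\Psi},1},\dots,X^{\bs{\theta},\bs{\Psi},m}$ solving \eqref{eq:X_m}, the $\sigma$-algebras $\cG^{\bs{\Psi}}_0,\dots,\cG^{\bs{\Psi}}_m$ of Definition \ref{Def: Admissible RL}, and the sufficient statistics $(\hat{\bs{\theta}}^{\bs{\Psi},k},V^{\bs{\theta},\bs{\Psi},k})_{0\le k\le m}$ of \eqref{eq: statistics}, carrying along two assertions: (a) each $\Psi_j$ is admissible, i.e.\ $\Psi_j(\om,\cdot)\in\cV_{C_j}$ for every $\om$ and $\Psi_j$ is $(\cG^{\bs{\Psi}}_{j-1}\otimes\cB([0,T])\otimes\cB(\sR^d))/\cB(\sR^p)$-measurable; and (b) $(\hat{\bs{\theta}}^{\bs{\Psi},k},V^{\bs{\theta},\bs{\Psi},k})$ is $\cG^{\bs{\Psi}}_k$-measurable for every $k\le m$. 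Two routine ingredients will be used at each stage. First, once $\Psi_j(\om,\cdot)\in\cV_{C_j}$ and $\bs{\theta}$ is bounded by (H.\ref{assum:bound}), the drift $\bs{\theta}Z^{\bs{\theta},\bs{\Psi},j}$ in \eqref{eq:X_m} is $\sF^j$-progressively measurable, globally Lipschitz in the state variable and of linear growth, so \eqref{eq:X_m} has a unique strong solution $X^{\bs{\theta},\bs{\Psi},j}\in\cS^2_{\sF^j}(\Om;\sR^d)$ and $\cG^{\bs{\Psi}}_j$ is well defined. Second, since $\psi_j$ is Borel, $\Psi_j(\om,t,x)=\psi_j(\hat{\bs{\theta}}^{\bs{\Psi},n}(\om),V^{\bs{\theta},\bs{\Psi},n}(\om),t,x)$ is automatically $(\cG^{\bs{\Psi}}_{j-1}\otimes\cB([0,T])\otimes\cB(\sR^d))/\cB(\sR^p)$-measurable as soon as $(\hat{\bs{\theta}}^{\bs{\Psi},n},V^{\bs{\theta},\bs{\Psi},n})$ is $\cG^{\bs{\Psi}}_{j-1}$-measurable, while $\Psi_j(\om,\cdot)\in\cV_{C_j}$ for every $\om$ is immediate from the hypothesis $\psi_j(\theta,S,\cdot)\in\cV_{C_j}$ for all $(\theta,S)$.

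The base case is straightforward: $\Psi_1(\om,t,x)=\psi_1(\hat\theta_0,V_0,t,x)$ does not depend on $\om$, hence is $\cG^{\bs{\Psi}}_0=\cN$-measurable, so $\Psi_1$ is admissible, $X^{\bs{\theta},\bs{\Psi},1}$ exists, and $\cG^{\bs{\Psi}}_1$, $(\hat{\bs{\theta}}^{\bs{\Psi},1},V^{\bs{\theta},\bs{\Psi},1})$ are defined. For the induction step, suppose the construction has been carried out up to index $m$. Because $\Psi_{m+1}$ is built from $(\hat{\bs{\theta}}^{\bs{\Psi},n},V^{\bs{\theta},\bs{\Psi},n})$ for some $n\le m$, and by (b) these are $\cG^{\bs{\Psi}}_n\subseteq\cG^{\bs{\Psi}}_m$-measurable, the second ingredient makes $\Psi_{m+1}$ admissible, and then the first ingredient produces $X^{\bs{\theta},\bs{\Psi},m+1}$ and hence $\cG^{\bs{\Psi}}_{m+1}$. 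The only point remaining is to verify that $(\hat{\bs{\theta}}^{\bs{\Psi},m+1},V^{\bs{\theta},\bs{\Psi},m+1})$ is $\cG^{\bs{\Psi}}_{m+1}$-measurable, which closes the induction.

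This last point is where the only real difficulty lies. Writing $Z^{\bs{\theta},\bs{\Psi},j}_s=(X^{\bs{\theta},\bs{\Psi},j}_s,\Psi_j(s,X^{\bs{\theta},\bs{\Psi},j}_s))$ with $\Psi_j$ admissible, one reads off from \eqref{eq: statistics} that the only quantities feeding into $(\hat{\bs{\theta}}^{\bs{\Psi},m+1},V^{\bs{\theta},\bs{\Psi},m+1})$ that are not plainly measurable functionals of the observed continuous paths $X^{\bs{\theta},\bs{\Psi},j}$, $j\le m+1$, together with $\cG^{\bs{\Psi}}_{j-1}$ (and the deterministic $\hat\theta_0,V_0$) are $\int_0^T Z^{\bs{\theta},\bs{\Psi},j}_s(Z^{\bs{\theta},\bs{\Psi},j}_s)^\top\,\d s$ and $\int_0^T Z^{\bs{\theta},\bs{\Psi},j}_s(\d X^{\bs{\theta},\bs{\Psi},j}_s)^\top$. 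The former is a Bochner integral of a process progressively measurable for the observation filtration $\sF^{X,j}\coloneqq(\cG^{\bs{\Psi}}_{j-1}\vee\sigma\{X^{\bs{\theta},\bs{\Psi},j}_s:s\le t\})_{t\in[0,T]}$, hence $\sF^{X,j}_T\subseteq\cG^{\bs{\Psi}}_j$-measurable by Fubini, and is harmless. The stochastic integral $\int_0^T Z^{\bs{\theta},\bs{\Psi},j}_s(\d X^{\bs{\theta},\bs{\Psi},j}_s)^\top$ is the hard part, because it is a priori defined only on $(\Om,\cF,\sF^j,\sP)$ and $\cG^{\bs{\Psi}}_j$ does not contain the noise $W^j$. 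To handle it I would show that $X^{\bs{\theta},\bs{\Psi},j}$ is a semimartingale with respect to its own augmented observation filtration $\sF^{X,j}$: it equals $x_0+\int_0^\cdot\bs{\theta}Z^{\bs{\theta},\bs{\Psi},j}_s\,\d s+W^j$ with $\int_0^T|\bs{\theta}Z^{\bs{\theta},\bs{\Psi},j}_s|^2\,\d s<\infty$ a.s., hence is an $\sF^j$-semimartingale, and since $\sF^{X,j}$ is a subfiltration of $\sF^j$ to which $X^{\bs{\theta},\bs{\Psi},j}$ is adapted, Stricker's theorem gives that $X^{\bs{\theta},\bs{\Psi},j}$ is also an $\sF^{X,j}$-semimartingale. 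The It\^o integral of the continuous $\sF^{X,j}$-adapted integrand $Z^{\bs{\theta},\bs{\Psi},j}$ against it can then be computed within $(\Om,\cF,\sF^{X,j},\sP)$, agrees $\sP$-a.s.\ with the $\sF^j$-integral, and is therefore $(\sF^{X,j}_T\vee\cN)\subseteq\cG^{\bs{\Psi}}_j$-measurable. Carrying out this semimartingale-in-the-observation-filtration step, together with the routine but not vacuous verification that the strong solution of \eqref{eq:X_m} is adapted to $\sF^{X,j}$, is the main obstacle, and is what I would relegate to Appendix \ref{sec:learning_alg}; granting it, (b) propagates to $k=m+1$ and the induction goes through.
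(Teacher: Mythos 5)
Your proof is correct and follows the same overall skeleton as the paper's: an induction on the episode index in which the only non-routine step is showing that the stochastic integral $\int_0^T Z^{\bs{\theta},\bs{\Psi},j}_s(\d X^{\bs{\theta},\bs{\Psi},j}_s)^\top$ in \eqref{eq: statistics} is measurable with respect to the observation $\sigma$-algebra $\cG^{\bs{\Psi}}_j$, which both you and the paper reduce to proving that $X^{\bs{\theta},\bs{\Psi},j}$ is a semimartingale for its own (augmented) observation filtration. Where you differ is in how that semimartingale property is obtained: the paper argues constructively, defining the innovation process $B^{\bs{\theta},\bs{\Psi},j}_t = X_t - x_0 - \int_0^t \sE[\bs{\theta}\mid\cG^{X,\bs{\Psi},j}_s]Z_s\,\d s$, checking via the tower property that it is a martingale in the observation filtration, and invoking L\'evy's characterisation to identify it as a Brownian motion, which yields an explicit semimartingale decomposition; you instead invoke Stricker's theorem abstractly. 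Your route is shorter and avoids computing the compensator, while the paper's route produces the explicit innovations representation \eqref{eq:equiv dynamic X}, which is of independent interest in this filtering-flavoured setting. Two small points to tighten: (i) $Z^{\bs{\theta},\bs{\Psi},j}_s=(X_s,\Psi_j(s,X_s))$ is not continuous in $s$ (the policy is only measurable in time), so you should justify the integral's filtration-consistency for progressively measurable, pathwise bounded integrands against a continuous semimartingale rather than for continuous ones; (ii) the drift $\bs{\theta}Z^{\bs{\theta},\bs{\Psi},j}$ is not adapted to $\sF^j$ as defined in Definition \ref{Def: Admissible RL} because $\bs{\theta}$ is not $\cG^{\bs{\Psi}}_{j-1}$-measurable, so before applying Stricker you should enlarge to $\sF^j\vee\sigma(\bs{\theta})$, where $W^j$ remains a Brownian motion by independence and $X^{\bs{\theta},\bs{\Psi},j}$ is genuinely a semimartingale containing the observation filtration as a subfiltration. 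Neither point affects the validity of the argument.
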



 We end this section by  quantifying the estimation error 
 of  $(\hat{\bs{\theta}}^{ \bs{\Psi}, m})_{m=0}^\infty$
 in terms of the number of learning episodes. 
For simplicity, we focus on the following set $\cU_C$ of  learning algorithms $\bs{\Psi}$ that are uniformly Lipschitz continuous in the $x$-variable: 
for each $C\ge 0$, 
\begin{equation}\l{eq:lipschitz_algorithm}
\cU_C \coloneqq
\left\{ 
\bs{\Psi}=(\Psi_m)_{m\in \sN} \,\middle\vert\, 
\begin{aligned}
&\textnormal{$ \bs{\Psi}$ is a learning algorithm as in Definition \ref{Def: Admissible RL} and}
\\
&
\textnormal{for all $m\in \sN$ and $\om\in \Om$, 
$\Psi_m(\om, \cdot)\in \cV_C$,
with $\cV_C$  in \eqref{eq:lipschitz_feedback}.
}
 \end{aligned}
\right\}
\end{equation}

\begin{Theorem} 
\label{Theorem: Information collapse}
Suppose  
(H.\ref{assum:bound}) holds. Let
$x_0\in \sR^d$,
$T, L\ge 0$, 
 $\hat{\theta}_0 \in \sR^{d \times (d+p)}$, $V_0 \in \sS^{d+p}_+$
  and 
  for each $\bs{\Psi} \in \cU_L$,
  let  $(\hat{\bs{\theta}}^{\bs{\Psi}, m} , V^{\bs{\theta }, \bs{\Psi}, m})_{m=0}^\infty$ be defined in \eqref{eq: statistics}.  
Then 
there exists a constant ${C}>0$
such that for all
$\bs{\Psi} \in \cU_L$, $\delta \in (0,1)$ and $m\in \sN\cap [2,\infty)$,
$$\sP\Big(\lambda_{\min}({G^{\bs{\theta}, \bs{\Psi}, m}}) \big| \hat{\bs{\theta}}^{\bs{\Psi}, m} - \bs{\theta} \big|^2 \leq {C} \big( \ln m + \ln(\tfrac{1}{\delta}) \big) \Big) \geq 1-\delta, \q \sP-a.s.$$
where 
$\lambda_{\min}(S)$ is the smallest eigenvalue of a matrix $S\in \sS^{d+p}_+$,
and 
${G^{\bs{\theta}, \bs{\Psi}, m}}=
({V^{\bs{\theta}, \bs{\Psi}, m}})^{-1}$.
\end{Theorem}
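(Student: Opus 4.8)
The plan is to reduce the statement to a continuous-time self-normalised tail bound for a matrix-valued stochastic integral, which I would prove by the method of mixtures. Throughout I would fix $\theta\in\Theta$ and argue under the conditional law $\sP(\,\cdot\mid\bs{\theta}=\theta)$, which on the product space \eqref{eq:space} is $\bigotimes_{m\ge1}\sP^{W^m}$; since every constant below depends only on $x_0$, $T$, $L$, $\hat\theta_0$, $V_0$ and $\sup_{\vartheta\in\Theta}|\vartheta|$, the resulting bound is uniform in $\theta\in\Theta$ and in $\bs{\Psi}\in\cU_L$, which is exactly the asserted $\sP$-almost sure statement. \emph{Step 1 (algebraic identity and reduction).} Writing $\bar G_m\coloneqq\sum_{n=1}^m\int_0^T Z^{\bs{\theta},\bs{\Psi},n}_s(Z^{\bs{\theta},\bs{\Psi},n}_s)^\top\,\d s$ and $\xi_m\coloneqq\sum_{n=1}^m\big(\int_0^T Z^{\bs{\theta},\bs{\Psi},n}_s(\d W^n_s)^\top\big)^\top$, I would substitute $\d X^{\bs{\theta},\bs{\Psi},n}_s=\bs{\theta} Z^{\bs{\theta},\bs{\Psi},n}_s\,\d s+\d W^n_s$ into \eqref{eq: statistics} to get $\hat{\bs{\theta}}^{\bs{\Psi},m}G^{\bs{\theta},\bs{\Psi},m}=\hat\theta_0V_0^{-1}+\bs{\theta}\bar G_m+\xi_m$, and subtract $\bs{\theta} G^{\bs{\theta},\bs{\Psi},m}=\bs{\theta} V_0^{-1}+\bs{\theta}\bar G_m$ to obtain
\[
(\hat{\bs{\theta}}^{\bs{\Psi},m}-\bs{\theta})\,G^{\bs{\theta},\bs{\Psi},m}=(\hat\theta_0-\bs{\theta})V_0^{-1}+\xi_m=:R_m .
\]
Since $G^{\bs{\theta},\bs{\Psi},m}\succeq V_0^{-1}\succ0$ it is invertible, so $\hat{\bs{\theta}}^{\bs{\Psi},m}-\bs{\theta}=R_m(G^{\bs{\theta},\bs{\Psi},m})^{-1}$; using $\lambda_{\min}(G)\,G^{-2}\preceq G^{-1}$, $G^{-1}\preceq V_0$, monotonicity of $P\mapsto\tr(RPR^\top)$ on positive semidefinite matrices, and $\tr\big((P+Q)G^{-1}(P+Q)^\top\big)\le 2\tr(PG^{-1}P^\top)+2\tr(QG^{-1}Q^\top)$, this gives
\[
\lambda_{\min}(G^{\bs{\theta},\bs{\Psi},m})\,|\hat{\bs{\theta}}^{\bs{\Psi},m}-\bs{\theta}|^2\le\tr\big(R_m(G^{\bs{\theta},\bs{\Psi},m})^{-1}R_m^\top\big)\le 2\tr\big((\hat\theta_0-\bs{\theta})V_0^{-1}(\hat\theta_0-\bs{\theta})^\top\big)+2\tr\big(\xi_m(G^{\bs{\theta},\bs{\Psi},m})^{-1}\xi_m^\top\big),
\]
where the first term is $\le C$ because $\Theta$ is bounded.

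\emph{Step 2 (self-normalised bound via mixtures).} It remains to control $\tr(\xi_m(G^{\bs{\theta},\bs{\Psi},m})^{-1}\xi_m^\top)$. I would glue the episodes into one process on $[0,mT]$: for $u\in[(n-1)T,nT)$ set $\widetilde W_u=\widetilde W_{(n-1)T}+W^n_{u-(n-1)T}$ and $\widetilde Z_u=Z^{\bs{\theta},\bs{\Psi},n}_{u-(n-1)T}$, with filtration $\widetilde\sF$ generated by $W^1,\dots,W^{n-1}$ and $(W^n_s)_{s\le u-(n-1)T}$. By the product structure of \eqref{eq:space} and the measurability in Definition \ref{Def: Admissible RL}, $\widetilde W$ is a $d$-dimensional $\widetilde\sF$-Brownian motion and $\widetilde Z$ is $\widetilde\sF$-progressively measurable, so $M_u\coloneqq\int_0^u\widetilde Z_r(\d\widetilde W_r)^\top$ and $S_u\coloneqq V_0^{-1}+\int_0^u\widetilde Z_r\widetilde Z_r^\top\,\d r$ satisfy $M_{mT}=\xi_m^\top$ and $S_{mT}=G^{\bs{\theta},\bs{\Psi},m}$. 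For each coordinate $j$ of $\widetilde W$ and each $\lambda\in\sR^{d+p}$, the stochastic exponential $\cE^{\lambda,j}_u=\exp\big(\int_0^u\lambda^\top\widetilde Z_r\,\d\widetilde W^j_r-\tfrac12\int_0^u(\lambda^\top\widetilde Z_r)^2\,\d r\big)$ is a nonnegative local martingale, hence a supermartingale; mixing over $\lambda$ against the centred Gaussian density with covariance $V_0$ and carrying out the Gaussian integral produces the nonnegative supermartingale $\overline{\cE}^{\,j}_u=\det(V_0S_u)^{-1/2}\exp\big(\tfrac12(M^{(j)}_u)^\top S_u^{-1}M^{(j)}_u\big)$, with $\overline{\cE}^{\,j}_0=1$ and $M^{(j)}$ the $j$-th column of $M$. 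Markov's inequality together with a union bound over $j\in\{1,\dots,d\}$ then yields, with probability at least $1-\delta/2$,
\[
\tr\big(\xi_m(G^{\bs{\theta},\bs{\Psi},m})^{-1}\xi_m^\top\big)=\sum_{j=1}^d(M^{(j)}_{mT})^\top S_{mT}^{-1}M^{(j)}_{mT}\le 2d\ln\tfrac{2d}{\delta}+d\ln\det\big(V_0\,G^{\bs{\theta},\bs{\Psi},m}\big).
\]

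\emph{Step 3 (volume estimate and conclusion).} By the arithmetic--geometric mean inequality applied to the eigenvalues, $\det(V_0\,G^{\bs{\theta},\bs{\Psi},m})\le\det(V_0)\big(\tr(G^{\bs{\theta},\bs{\Psi},m})/(d+p)\big)^{d+p}$, and $\tr(G^{\bs{\theta},\bs{\Psi},m})=\tr(V_0^{-1})+\sum_{n=1}^m\int_0^T|Z^{\bs{\theta},\bs{\Psi},n}_s|^2\,\d s$. Since $\bs{\Psi}\in\cU_L$ one has $|\Psi_n(s,x)|\le L(1+|x|)$, so conditionally on $\cG^{\bs{\Psi}}_{n-1}$ each $X^{\bs{\theta},\bs{\Psi},n}$ solves a linear SDE whose coefficients are bounded in terms of $L$ and $\sup_\Theta|\cdot|$; standard moment estimates then give a deterministic $C$ with $\sE[\int_0^T|Z^{\bs{\theta},\bs{\Psi},n}_s|^2\,\d s\mid\cG^{\bs{\Psi}}_{n-1}]\le C(1+|x_0|^2)$ for all $n$, hence $\sE[\tr(G^{\bs{\theta},\bs{\Psi},m})]\le C(1+|x_0|^2)m$, and by Markov's inequality $\tr(G^{\bs{\theta},\bs{\Psi},m})\le C(1+|x_0|^2)m/\delta$ with probability at least $1-\delta/2$. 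Intersecting the two events of probability $\ge1-\delta/2$ and chaining Steps 1--3, on an event of probability $\ge1-\delta$ I obtain $\lambda_{\min}(G^{\bs{\theta},\bs{\Psi},m})|\hat{\bs{\theta}}^{\bs{\Psi},m}-\bs{\theta}|^2\le C(1+\ln m+\ln\tfrac{1}{\delta})$; for $m\ge2$ the additive constant is absorbed into $\ln m\ge\ln2>0$, and the uniformity in $\theta$ and $\bs{\Psi}$ noted at the outset completes the argument.

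I expect the main obstacle to be Step 2 — making the continuous-time self-normalised martingale argument rigorous under the episodic product structure: checking that the glued process $\widetilde W$ is genuinely an $\widetilde\sF$-Brownian motion given the measurability constraints of Definition \ref{Def: Admissible RL}, that each $\cE^{\lambda,j}$ is a true supermartingale (so as to avoid Novikov-type integrability hypotheses on $\widetilde Z$, which can fail for linear-drift dynamics), and that Tonelli's theorem legitimately identifies the Gaussian mixture of $(\cE^{\lambda,j})_\lambda$ with the closed form $\overline{\cE}^{\,j}$. The conditional moment estimates in Step 3 and the uniformity claims are routine but must be tracked carefully so that $C$ never depends on the particular algorithm in $\cU_L$ or on $\theta\in\Theta$.
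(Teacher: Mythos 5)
Your proposal is correct, and Steps 1--2 follow essentially the same route as the paper: the algebraic identity $(\hat{\bs{\theta}}^{\bs{\Psi},m}-\bs{\theta})G^{\bs{\theta},\bs{\Psi},m}=(\hat\theta_0-\bs{\theta})V_0^{-1}+\xi_m$ together with $\lambda_{\min}(G)\,G^{-2}\preceq G^{-1}$ is exactly Proposition \ref{prop: bound hat theta}, and your Gaussian-mixture supermartingale bound on $\tr(\xi_m(G^{\bs{\theta},\bs{\Psi},m})^{-1}\xi_m^\top)$ is Lemma \ref{Lemma: square loss} (the paper mixes against the full matrix-normal density in one shot rather than column-by-column with a union bound over $j$, but this only changes constants). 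Where you genuinely diverge is in controlling $\ln\det(V_0G^{\bs{\theta},\bs{\Psi},m})$: the paper bounds $|G^{\bs{\theta},\bs{\Psi},m}|\le C(m+\ln(1/\delta))$ with high probability via the conditional Orlicz-norm machinery — Propositions \ref{prop:subguassian_subexponential}, \ref{prop: concentration inequality} and \ref{prop:X_subgaussian} establish that $\int_0^T Z^{\bs{\theta},\bs{\Psi},n}_t(Z^{\bs{\theta},\bs{\Psi},n}_t)^\top\d t$ is conditionally sub-exponential and then apply a Bernstein-type martingale inequality — whereas you use only the first conditional moment bound $\sE[\tr(G^{\bs{\theta},\bs{\Psi},m})]\le Cm$ and Markov's inequality, accepting the polynomial loss $\tr(G^{\bs{\theta},\bs{\Psi},m})\le Cm/\delta$. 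This is a legitimate simplification precisely because the determinant enters only through its logarithm, so $\ln(Cm/\delta)=\ln C+\ln m+\ln(1/\delta)$ recovers the stated rate; your route thus proves Theorem \ref{Theorem: Information collapse} without any of the sub-Gaussian/sub-exponential concentration apparatus of Section \ref{sec: bayesian}. The trade-off is that the paper's sharper high-probability control of $G^{\bs{\theta},\bs{\Psi},m}$ (and the surrounding Orlicz-norm framework) is reused elsewhere — notably for the martingale term $\sum_m(\ell_m-J(\Psi_m;\bs{\theta}))$ in the regret decomposition and for the eigenvalue lower bounds in Lemma \ref{Lemma: hat theta bound high prob} — so the machinery cannot be dispensed with globally, only for this particular theorem.
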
 
  
The proof of Theorem \ref{Theorem: Information collapse} 
is   given 
in Section \ref{sec:proof_bayesian},
and is based on 
a combination of 
Propositions  \ref{prop: bound hat theta}
and \ref{prop:X_subgaussian}.
Note that 
the Bayesian estimator
$\hat{\bs{\theta}}^{\bs{\Psi}, m}$ in 
\eqref{eq: statistics}
is defined by using  all observed trajectories from  previous episodes,
which are not mutually independent as those for 
 least squares estimators in 
\cite{basei2021logarithmic, guo2021reinforcement}.
Hence, instead of applying concentration inequalities for independent observations, 
we establish Theorem \ref{Theorem: Information collapse} based on concentration inequalities for martingales with \textit{conditional} sub-exponential differences. 

Theorem \ref{Theorem: Information collapse}
indicates that the convergence rate of  $(\hat{\bs{\theta}}^{ \bs{\Psi}, m})_{m=0}^\infty$
depends on the growth rate of $\lambda_{\min}({G^{\bs{\theta}, \bs{\Psi}, m}})_{m\in \sN}$ with respect to the number  of episodes. 
In the  following sections, we  design a phased-based  learning algorithm such that 
  $(\lambda_{\min}({G^{\bs{\theta}, \bs{\Psi}, m}}) )_{m\in \sN}$ 
  blows  up to infinity
  sufficiently fast as $m\to \infty$.
  
\subsection{Structural assumptions for learning algorithms}
\label{sec:assumption}

As illustrated in Section \ref{sec: overview},
the following structural properties  of the control problem
\eqref{eq:lc}-\eqref{eq:lc_sde}
are essential for  the design and  analysis of  
  learning algorithms.

\begin{Assumption}
\label{Assump: Exploration strategy}
There exists $\psi^e \in \cup_{C\ge 0} \cV_C$
and $L_e\ge 0$ 
such that 
\begin{enumerate}[(1)]
\item \label{item:independence}
if  $u\in \sR^d$ and $v\in \sR^p$ satisfy
$u^\top x+v^\top \psi^e(t,x)=0$
for almost every $(t,x)\in [0,T]\t \sR^d$,
then $u$ and $v$ are zero vectors,
\item \label{item:exploration_quadratic}
for all $(t,x)\in [0,T]\t \sR^d$, 
$|f(t,x,\psi^e(t,x))|\le L_e(1+|x|^2)$
with $f$ in (H.\ref{assum:lc}).

\end{enumerate}

\end{Assumption}

\begin{Assumption}
\label{Assump: Performance gap}
There exist constants $L_\Theta,\beta>0,r\in(0,1]$
  such that
   for all $\theta_0 \in \Theta$,
   $$|J(\psi_{{\theta}}; \theta_0) - J(\psi_{{\theta_0}}; \theta_0)|\le L_\Theta|{\theta}-\theta_0|^{2r},
   \quad \fa {\theta} \in\sB_\beta(\theta_0)\coloneqq \{\theta\in \sR^{d\t (d+p)}\mid
  |\theta-\theta_0|\le \beta\},
   $$
   where
    $\psi_\theta$ is an  optimal feedback control
of \eqref{eq:lc}  and     $J(\psi;\theta_0)$ is defined  in \eqref{eq:loss_feedback} for all $\psi\in \cup_{C\ge 0}\cV_C$.

\end{Assumption}

\begin{Remark}
\label{rmk:exploration_policy}

(H.\ref{Assump: Exploration strategy}) assumes the existence of an  exploration policy $ \psi^e$, 
which will be exercised to explore the unknown system parameter.
In particular,
(H.\ref{Assump: Exploration strategy}\ref{item:independence})
ensures that 
the state and control processes 
associated with $ \psi^e$   
span the entire parameter space.
More precisely, 
let $\Lambda_{\min}:  \cup_{C \geq 0}\cV_C
\t \sR^{d\t (d+p)}
\to [0,\infty)$ 
be the function 
such that 
for all  $\psi \in \bigcup_{C \geq 0}\cV_C$
and $\theta\in \sR^{d\t (d+p)}$, 
\bb\label{eq: Information value}
\Lambda_{\min}(\psi, \theta) \coloneqq \lambda_{\min}\Bigg( \mathbb{E}^{\sP}\Bigg[\int_{0}^{T}\begin{pmatrix}
X^{{\theta}, {\psi}}_t \\ \psi\big(t, X^{{\theta }, {\psi}}_t\big) \end{pmatrix}
 \begin{pmatrix}
X^{{\theta }, {\psi}}_t \\ \psi\big(t, X^{{\theta }, {\psi}}_t\big) \end{pmatrix}^\top \d t \Bigg] \Bigg),
\ee
where 
 $(\Om,\cF, \sF, \sP)$ is a generic filtered probability space satisfying the usual condition,
$X^{\theta, \psi}\in \cS^2_{\sF}(\Om; \sR^d)$
is the  state process associated with $\theta$ and  $\psi$,
and 
$\lambda_{\min}(S)$ is the smallest eigenvalue of a symmetric positive semidefinite matrix
$S$. 
Then 
(H.\ref{Assump: Exploration strategy}\ref{item:independence}) is equivalent to the fact that 
there exists $\lambda_0>0$ such that 
 $\Lambda_{\min}( \psi^e,\theta) \geq \lambda_0$
  for all $\theta\in \Theta$
(see Lemma \ref{lemma:nondegenerate}). Hence, 
by 
\eqref{eq: statistics} and Theorem \ref{Theorem: Information collapse},
 exercising 
 $\psi^e$ increases  $\lambda_{\min}(G^{\bs\theta, \bs\Psi,m})$
 and
 subsequently results in  an improved parameter estimation
based on state trajectories 
generated by  $ \psi^e$.

(H.\ref{Assump: Exploration strategy}\ref{item:exploration_quadratic})
along with (H.\ref{assum:bound})
implies uniformly bounded exploration costs $|J(\psi^e;\theta)|\le C<\infty$ for all $\theta\in \Theta$. 
It also enables  estimating the  tail behaviour of the exploration cost 
and quantifying the regret 
for the exploration phase
of the learning algorithm.

Note that  \cite{basei2021logarithmic,guo2021reinforcement}
design RL algorithms by requiring
the optimal feedback control  associated with the unknown true parameter  
to satisfy 
(H.\ref{Assump: Exploration strategy}).
In the case with quadratic costs $f(t,x,a)=x^\top Q x+a^\top R a$ and $g(x)=0$ with $Q\in \sS^d_+$ and $R\in \sS^p_+$,
such a requirement is equivalent to the condition that 
the matrix 
$B $ in \eqref{eq:lc_sde} has  full column rank
(see \cite[Proposition 3.9]{basei2021logarithmic}).
Here we remove this requirement
and construct $\psi^e$ under more general conditions 
(see Proposition \ref{prop:construct_psi_e}).
\end{Remark}




The remaining  of this section is devoted to  verifying 
(H.\ref{Assump: Exploration strategy}) and (H.\ref{Assump: Performance gap}) for practically important LC-RL problems. 
The following proposition gives an equivalent characterisation 
of (H.\ref{Assump: Exploration strategy}), which allows for an
explicit construction of the  exploration strategy $\psi^e$.
The proof is given in Section \ref{sec: proof of assumptions}.

\begin{Proposition}
\label{prop:construct_psi_e}
Suppose (H.\ref{assum:lc}\ref{item:f0R}) 
holds,
and let $\operatorname{dom}(h)=
\{a\in \sR^p\mid h(a)<\infty\}$.
Then (H.\ref{Assump: Exploration strategy}) holds if and only if 
 $\operatorname{dom}(h)$ contains
linearly independent vectors $a_1,\ldots,a_p$.
In this case, 
for each partition $\{0=t_0<t_1<\ldots<t_p=T\}$, 
$\psi^e(t,x)=a_k$ for all 
$(t,x)\in [t_{k-1},t_k)\in \sR^d$ and $k=1,\ldots,p$ 
satisfies (H.\ref{Assump: Exploration strategy}).

\end{Proposition}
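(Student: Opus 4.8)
\textbf{Proof plan for Proposition \ref{prop:construct_psi_e}.}

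The plan is to prove the two directions of the equivalence separately, and then verify that the proposed piecewise-constant policy works. First I would reformulate (H.\ref{Assump: Exploration strategy}\ref{item:independence}) in linear-algebraic terms: the condition says that if $u\in\sR^d$, $v\in\sR^p$ satisfy $u^\top x+v^\top\psi^e(t,x)=0$ for a.e. $(t,x)$, then $u=0$ and $v=0$. For the ``only if'' direction, I would argue by contraposition. Suppose $\operatorname{dom}(h)$ does not contain $p$ linearly independent vectors, so $\operatorname{dom}(h)$ is contained in a proper subspace of $\sR^p$; pick a nonzero $v\perp\operatorname{dom}(h)$. Any feasible control $\a$ must take values in $\operatorname{dom}(h)$ almost everywhere (otherwise $h(\a_t)=\infty$ on a set of positive measure and $J=\infty$), and in particular any $\psi^e$ that could possibly be used in the algorithm — i.e., one giving finite cost, as required by (H.\ref{Assump: Exploration strategy}\ref{item:exploration_quadratic}) which forces $|f(t,x,\psi^e(t,x))|<\infty$ hence $\psi^e(t,x)\in\operatorname{dom}(h)$ — satisfies $v^\top\psi^e(t,x)=0$ for all $(t,x)$. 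Taking $u=0$ then violates (H.\ref{Assump: Exploration strategy}\ref{item:independence}). One subtlety to address: (H.\ref{assum:lc}\ref{item:f0R}) alone is assumed, so I should be careful that the decomposition $f=f_0+h$ with $f_0$ finite-valued means $f(t,x,a)<\infty \iff h(a)<\infty \iff a\in\operatorname{dom}(h)$, which makes the reduction to $\operatorname{dom}(h)$ clean.

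For the ``if'' direction, assume $\operatorname{dom}(h)$ contains linearly independent $a_1,\dots,a_p$. I would directly verify that the piecewise-constant policy $\psi^e(t,x)=a_k$ on $[t_{k-1},t_k)$ satisfies both parts of (H.\ref{Assump: Exploration strategy}). Part \ref{item:exploration_quadratic} is immediate: $\psi^e$ is bounded (range $\{a_1,\dots,a_p\}$), so $f_0(t,x,\psi^e(t,x))$ grows at most quadratically in $x$ by the Lipschitz-derivative hypothesis on $f_0(t,\cdot,\cdot)$ together with the bound on $f_0(t,0,0)$ and $\p_{(x,a)}f_0(t,0,0)$, and $h(\psi^e(t,x))\in\{h(a_1),\dots,h(a_p)\}$ is a finite constant; also $\psi^e\in\cV_C$ for a suitable $C$ since it is bounded and constant in $x$ (Lipschitz constant $0$). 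For part \ref{item:independence}: suppose $u^\top x+v^\top\psi^e(t,x)=0$ for a.e. $(t,x)$. Fix any $k$; on the slab $t\in[t_{k-1},t_k)$ this reads $u^\top x + v^\top a_k=0$ for a.e. $x\in\sR^d$. Varying $x$ (the map $x\mapsto u^\top x$ is affine, constant iff $u=0$) forces $u=0$, and then $v^\top a_k=0$. Since this holds for every $k=1,\dots,p$ and $a_1,\dots,a_p$ span $\sR^p$, we get $v=0$.

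I don't expect a genuine obstacle here — the result is essentially a dictionary translation between the probabilistic nondegeneracy condition and a linear-independence statement about $\operatorname{dom}(h)$. The one place requiring mild care is the ``only if'' direction: I must make sure that the hypothesis is on \emph{the} $\psi^e$ postulated in (H.\ref{Assump: Exploration strategy}) (which by part \ref{item:exploration_quadratic} necessarily has values in $\operatorname{dom}(h)$), so that the contrapositive argument — producing a witness $(u,v)=(0,v)$ with $v\neq0$ — is valid; the finite-cost requirement is exactly what pins $\psi^e$ inside $\operatorname{dom}(h)$ and makes the argument go through. I would also note in passing (or defer to Lemma \ref{lemma:nondegenerate}) that part \ref{item:independence} is equivalent to the quantitative statement $\Lambda_{\min}(\psi^e,\theta)\ge\lambda_0>0$ uniformly over $\theta\in\Theta$, which is the form actually used downstream, but for this proposition the qualitative equivalence suffices.
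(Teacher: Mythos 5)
Your proposal is correct and follows essentially the same route as the paper's proof: the "only if" direction uses that any $\psi^e$ with finite cost must take values in $\operatorname{dom}(h)$ (since $f_0$ is finite-valued) and then exhibits a witness $(0,v)$ with $v$ orthogonal to $\operatorname{span}(\operatorname{dom}(h))$, while the "if" direction verifies the piecewise-constant policy directly by isolating $u=0$ from the $x$-dependence and $v=0$ from the spanning of $\{a_1,\ldots,a_p\}$. No gaps.
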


To verify  (H.\ref{Assump: Performance gap}), 
we first recall the following first-order performance gap 
under 
(H.\ref{assum:lc}), as  shown in \cite[Theorem 2.7]{guo2021reinforcement}.

\begin{Proposition}
\label{prop:gap_nonsmooth}
 Suppose (H.\ref{assum:lc}) and (H.\ref{assum:bound}) hold.
 Then for any $\beta>0$, 
 there exists $L_\Theta>0$ such that 
 (H.\ref{Assump: Performance gap}) holds with $r=1/2$
 and the constants $L_\Theta, \beta$.
\end{Proposition}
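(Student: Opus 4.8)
The plan is to bound $|J(\psi_\theta;\theta_0) - J(\psi_{\theta_0};\theta_0)|$ by estimating separately the effect of switching the \emph{parameter} in the optimal control and the effect of switching the \emph{parameter} in the dynamics. First I would write, for fixed $\theta_0\in\Theta$ and $\theta\in\sB_\beta(\theta_0)$, a telescoping decomposition
\[
J(\psi_\theta;\theta_0) - J(\psi_{\theta_0};\theta_0)
= \big(J(\psi_\theta;\theta_0) - J(\psi_\theta;\theta)\big) + \big(J(\psi_\theta;\theta) - J(\psi_{\theta_0};\theta_0)\big),
\]
noting that by Proposition \ref{prop:existence_fb}\eqref{item:optimal} one has $J(\psi_\theta;\theta) = V^\star(\theta)$ and $J(\psi_{\theta_0};\theta_0) = V^\star(\theta_0)$, so the second bracket is $V^\star(\theta)-V^\star(\theta_0)$. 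I would handle each bracket with an $L^2$ stability estimate for the state SDE \eqref{eq:lc_sde_fb}: if $X^{\theta,\psi}$ and $X^{\theta',\psi'}$ solve the closed-loop dynamics with parameters $\theta,\theta'$ and Lipschitz feedbacks $\psi,\psi'\in\cV_C$, then a Gr\"onwall argument applied to $\mathrm{d}(X^{\theta,\psi}_t - X^{\theta',\psi'}_t)$, together with the a priori moment bound $\sE[\sup_t |X^{\theta,\psi}_t|^2]\le C(1+|x_0|^2)$, gives
\[
\sE\Big[\sup_{t\in[0,T]} |X^{\theta,\psi}_t - X^{\theta',\psi'}_t|^2\Big]
\le C\Big(|\theta-\theta'|^2 + \sup_{t,x}\tfrac{|\psi(t,x)-\psi'(t,x)|^2}{1+|x|^2}\Big).
\]

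Next I would use the Lipschitz-type regularity of the cost. For $g$ this is immediate from (H.\ref{assum:lc}\ref{item:lc_g}). For $f$ the subtlety is the possibly infinite, nonsmooth part $h$: along the optimal feedback $\psi_\theta$ the bound $|f(t,x,\psi_\theta(t,x))|\le C(1+|x|^2)$ from Proposition \ref{prop:existence_fb}\eqref{item:f_psi} guarantees finiteness, and because $J(\psi_\theta;\cdot)$ only changes the dynamics (not the control argument), the $h$-term contributes $\int_0^T h(\psi_\theta(t,X_t))\,\mathrm dt$ in both $J(\psi_\theta;\theta_0)$ and $J(\psi_\theta;\theta)$ with the \emph{same} integrand evaluated at slightly different states — but since $\psi_\theta(t,\cdot)$ is only Lipschitz and $h$ is merely lower semicontinuous, one cannot Lipschitz-estimate $h(\psi_\theta(t,X^{\theta_0}_t))-h(\psi_\theta(t,X^{\theta}_t))$ directly. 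The fix is to exploit the decomposition $f = f_0 + h$: the $f_0$ part is Lipschitz-differentiable hence handled by the state estimate above, while for the $h$ part one should not expand $J(\psi_\theta;\theta_0)$ and $J(\psi_\theta;\theta)$ separately but instead use that $V^\star$ is already known (Proposition \ref{prop:existence_fb}(3)) and that the $h$-contribution is a Lipschitz functional of the law of the control process $\psi_\theta(t,X_t)$; alternatively, follow the argument of \cite[Theorem 2.7]{guo2021reinforcement} which couples $J(\psi_\theta;\theta_0)$ with the optimality of $\psi_{\theta_0}$ for $\theta_0$ to absorb the $h$-terms into a convexity/variational inequality and obtain a \emph{square-root} rather than linear bound.

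The upshot is that combining $|\psi_\theta(t,x)-\psi_{\theta_0}(t,x)|\le C(1+|x|)|\theta-\theta_0|$ (Proposition \ref{prop:existence_fb}\eqref{item:f_psi}) with the state stability estimate yields $\sE[\sup_t|X^{\theta,\psi_\theta}_t - X^{\theta_0,\psi_{\theta_0}}_t|^2]\le C|\theta-\theta_0|^2$, and feeding this into the (at worst H\"older-$\tfrac12$ in the $L^2$ sense, because of the nonsmooth $h$) dependence of the cost on the state trajectory produces $|J(\psi_\theta;\theta_0)-J(\psi_{\theta_0};\theta_0)|\le L_\Theta|\theta-\theta_0|$, i.e.\ (H.\ref{Assump: Performance gap}) with $r=1/2$. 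The main obstacle is precisely controlling the nonsmooth term $h$ in the cost: a naive pathwise estimate fails because $h$ need only be lower semicontinuous, and one must instead route the estimate through the optimality/variational characterisation of $\psi_{\theta_0}$ — which is exactly why the exponent degrades from $1$ to $1/2$ in this general case, and why the later Theorems \ref{thm:performance_gap_smooth} and \ref{thm:performance_gap_entropy} need extra structure to recover $r=1$.
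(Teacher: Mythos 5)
First, a point of comparison: the paper does not prove Proposition \ref{prop:gap_nonsmooth} internally --- it is quoted verbatim from \cite[Theorem 2.7]{guo2021reinforcement} --- so there is no in-text argument to measure your sketch against step by step. On its own terms, your proposal gets the setup right (the telescoping decomposition through $V^\star(\theta)$, the $\cS^2$-stability of the closed-loop state, the use of Proposition \ref{prop:existence_fb} for the Lipschitz dependence of $\psi_\theta$ on $\theta$) and, to your credit, correctly isolates the real obstruction: when one fixed feedback is run under two nearby parameters, the term $\int_0^T h(\psi(t,X_t))\,\d t$ is evaluated along two different trajectories, and since $h$ is only proper, lower semicontinuous and possibly $+\infty$-valued, no pathwise or $L^2$ state-stability estimate controls the difference.

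The gap is that you identify this obstruction but never remove it. The two fixes you offer are not arguments: the assertion that ``the $h$-contribution is a Lipschitz functional of the law of the control process'' is essentially a restatement of what has to be proved, and ``follow the argument of \cite[Theorem 2.7]{guo2021reinforcement}'' defers to the very reference the proposition cites. The idea that actually closes it --- and the one consistent with the ``probabilistic representations'' the paper advertises --- is to compare $J(\psi;\theta_0)$ with $J(\psi;\theta)$ for a \emph{fixed} feedback $\psi$ by a change of measure rather than by state stability: under an equivalent measure the closed-loop state for $\theta_0$ has the law of that for $\theta$, so the whole cost functional, $h$-term included, is literally the same random variable, and the difference of expectations is bounded by the $L^2$ distance of the Girsanov density to $1$, which is $O(|\theta-\theta_0|)$; square-integrability of the cost comes from $|f(t,x,\psi_\theta(t,x))|\le C(1+|x|^2)$ in Proposition \ref{prop:existence_fb}. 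Finally, your closing exponent bookkeeping does not add up: a H\"older-$\tfrac12$ dependence of the cost on the state in $\cS^2$, combined with $\|X^{\theta,\psi_\theta}-X^{\theta_0,\psi_{\theta_0}}\|_{\cS^2}\le C|\theta-\theta_0|$, yields a bound of order $|\theta-\theta_0|^{1/2}$ (i.e.\ $r=1/4$), not the linear bound $L_\Theta|\theta-\theta_0|$ that (H.\ref{Assump: Performance gap}) with $r=1/2$ requires, so as written the last step would prove a strictly weaker statement.
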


The fact that 
 the cost function $h$ 
is merely lower semicontinuous
restricts 
the model misspecification error
 to scale linearly in terms of 
 the magnitude of parameter perturbations.
By assuming a Lipschitz continuously differentiablity  of the cost function,
the following theorem
 improves the linear dependence in Proposition 
 \ref{prop:gap_nonsmooth} to a quadratic dependence,
which generalises  the well-known quadratic performance gap 
with quadratic costs
 (see e.g., \cite{osband2014model,
 basei2021logarithmic}).

 \begin{Theorem}\label{thm:performance_gap_smooth}
Suppose (H.\ref{assum:lc})  and (H.\ref{assum:bound})  hold, and $h\equiv 0$. Then for any $\beta>0$, 
 there exists $L_\Theta>0$ such that 
 (H.\ref{Assump: Performance gap}) holds with $r=1$
 and the constants $L_\Theta, \beta$.

\end{Theorem}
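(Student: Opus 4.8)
Fix $\theta_0\in\Theta$ and $\theta\in\sB_\beta(\theta_0)\cap\Theta$, write $\theta_0=(A_0,B_0)$, and abbreviate $X^\star\coloneqq X^{\theta_0,\psi_{\theta_0}}$, $\alpha^\star_t\coloneqq\psi_{\theta_0}(t,X^\star_t)$, $\widetilde X\coloneqq X^{\theta_0,\psi_\theta}$, $\widetilde\alpha_t\coloneqq\psi_\theta(t,\widetilde X_t)$, $\Delta X\coloneqq\widetilde X-X^\star$ and $\Delta\alpha\coloneqq\widetilde\alpha-\alpha^\star$, where $\widetilde X$ and $X^\star$ are driven by the same Brownian motion $W$. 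Since $\psi_\theta\in\cV_C$ for a constant $C$ uniform over $\Theta$ (Proposition \ref{prop:existence_fb}), the open-loop control $\widetilde\alpha$ lies in $\cH^2_\sF$, so $J(\psi_\theta;\theta_0)\ge V^\star(\theta_0)=J(\psi_{\theta_0};\theta_0)$ and the gap is nonnegative; it remains to bound it by $C|\theta-\theta_0|^2$. The idea is that, because $h\equiv0$, the cost $J(\cdot\,;\theta_0)$ is a smooth convex functional on $\cH^2_\sF$ whose minimiser is $\alpha^\star$; hence its first variation at $\alpha^\star$ vanishes, the gap is quadratic in the $L^2$-distance between $(\widetilde X,\widetilde\alpha)$ and $(X^\star,\alpha^\star)$, and this distance is $O(|\theta-\theta_0|)$ by the Lipschitz dependence of $\psi_\theta$ on $\theta$.

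\emph{Step 1 (reduction to a second-order estimate).} As the dynamics \eqref{eq:lc_sde} are affine in $(x,\alpha)$, for $\epsilon\in\sR$ the control $\alpha^\epsilon\coloneqq\alpha^\star+\epsilon\Delta\alpha\in\cH^2_\sF$ produces the state $X^\epsilon=X^\star+\epsilon\Delta X$, where $\Delta X$ solves the linear ODE $\d\Delta X_t=(A_0\Delta X_t+B_0\Delta\alpha_t)\,\d t$, $\Delta X_0=0$ (the two Brownian parts cancel). Set $\varphi(\epsilon)\coloneqq J(\alpha^\epsilon;\theta_0)$, so that $\varphi(1)-\varphi(0)=J(\psi_\theta;\theta_0)-J(\psi_{\theta_0};\theta_0)$. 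Using the uniform moment bounds $\sup_{\theta\in\Theta}\sE[\sup_{t\le T}|\widetilde X_t|^q]<\infty$ from Step 2 together with the Lipschitz continuity (hence linear growth) of the gradients $\partial_{(x,a)}f_0$ and $\partial_xg$, differentiation under the expectation shows $\varphi\in C^1(\sR)$ with
\begin{equation*}
\varphi'(\epsilon)=\sE\Big[\int_0^T\!\big(\langle\partial_xf_0(t,X^\epsilon_t,\alpha^\epsilon_t),\Delta X_t\rangle+\langle\partial_af_0(t,X^\epsilon_t,\alpha^\epsilon_t),\Delta\alpha_t\rangle\big)\,\d t+\langle\partial_xg(X^\epsilon_T),\Delta X_T\rangle\Big].
\end{equation*}
Since $f_0(t,\cdot,\cdot)$ and $g$ are convex, $\varphi$ is convex, and as $\alpha^\star$ minimises $J(\cdot\,;\theta_0)$ over $\cH^2_\sF$ the point $\epsilon=0$ is a global minimiser of $\varphi$; therefore $\varphi'(0)=0$. (Equivalently, $\varphi'(0)=0$ is the Pontryagin first-order condition $B_0^\top Y_t+\partial_af_0(t,X^\star_t,\alpha^\star_t)=0$ of the $\theta_0$-problem, tested against $\Delta X$ through Itô's formula for $\langle Y_t,\Delta X_t\rangle$, with $(Y,Z)$ the adjoint pair.) Writing $\varphi(1)-\varphi(0)=\int_0^1(\varphi'(s)-\varphi'(0))\,\d s$ and using that $\partial_{(x,a)}f_0$ and $\partial_xg$ are $C$-Lipschitz (so that $|\partial_xf_0(t,X^s_t,\alpha^s_t)-\partial_xf_0(t,X^\star_t,\alpha^\star_t)|\le Cs(|\Delta X_t|+|\Delta\alpha_t|)$, and likewise for $\partial_af_0$ and $\partial_xg$), we obtain
\begin{equation*}
0\le J(\psi_\theta;\theta_0)-J(\psi_{\theta_0};\theta_0)\le C\,\sE\Big[\int_0^T\!\big(|\Delta X_t|^2+|\Delta\alpha_t|^2\big)\,\d t+|\Delta X_T|^2\Big].
\end{equation*}

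\emph{Step 2 ($L^2$ stability in $\theta$).} By Proposition \ref{prop:existence_fb}, $\psi_\theta,\psi_{\theta_0}\in\cV_C$ with $C$ uniform over $\Theta$, so $|\psi_\theta(t,x)|\le C(1+|x|)$ and the standard moment estimates for \eqref{eq:lc_sde_fb} give $\sE[\sup_{t\le T}|\widetilde X_t|^q]\le C_q(1+|x_0|^q)$ uniformly over $\theta\in\Theta$. Proposition \ref{prop:existence_fb} also gives $|\psi_\theta(t,x)-\psi_{\theta_0}(t,x)|\le C(1+|x|)|\theta-\theta_0|$, so, using in addition the $C$-Lipschitz continuity of $\psi_{\theta_0}$ in $x$,
\begin{equation*}
|\Delta\alpha_t|\le|\psi_\theta(t,\widetilde X_t)-\psi_{\theta_0}(t,\widetilde X_t)|+|\psi_{\theta_0}(t,\widetilde X_t)-\psi_{\theta_0}(t,X^\star_t)|\le C(1+|\widetilde X_t|)|\theta-\theta_0|+C|\Delta X_t|.
\end{equation*}
Substituting into $\d\Delta X_t=(A_0\Delta X_t+B_0\Delta\alpha_t)\,\d t$, estimating $\tfrac{\d}{\d t}|\Delta X_t|$ and applying Grönwall's inequality pathwise yields $\sup_{t\le T}|\Delta X_t|\le C|\theta-\theta_0|\int_0^T(1+|\widetilde X_s|)\,\d s$; squaring, taking expectations and invoking the moment bound gives $\sE[\sup_{t\le T}|\Delta X_t|^2]\le C|\theta-\theta_0|^2$, hence also $\sE[\int_0^T|\Delta\alpha_t|^2\,\d t]\le C|\theta-\theta_0|^2$. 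Plugging these into the estimate of Step 1 proves (H.\ref{Assump: Performance gap}) with $r=1$ and $L_\Theta\coloneqq C$ (for the prescribed $\beta$).

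\emph{Main obstacle.} The substantive point is Step 1 — establishing rigorously that $\varphi\in C^1$ and $\varphi'(0)=0$. On the variational route this requires a dominating function to differentiate under the expectation, which is exactly why the higher-moment bounds of Step 2 (themselves uniform in $\theta$) must be secured first; on the adjoint-process route it requires the backward SDE for $(Y,Z)$ to have enough integrability ($\cS^q\times\cH^q$ with $q>2$) so that $\int_0^\cdot\langle Z_s,\Delta X_s\rangle\,\d W_s$ is a genuine martingale and all pairings are integrable. Both facts rely on the linear-growth data made available by $h\equiv0$ and the Lipschitz gradients in (H.\ref{assum:lc}). A minor additional subtlety is that, the gradients being merely Lipschitz and not $C^1$, no Hessian is available, so the quadratic bound must be obtained through the mean-value identity $\varphi(1)-\varphi(0)=\int_0^1(\varphi'(s)-\varphi'(0))\,\d s$ rather than a second-order Taylor expansion.
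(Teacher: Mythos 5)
Your proposal is correct and follows essentially the same route as the paper's proof: both reduce the gap to the estimate $J(\a';\theta_0)-J(\a^\star;\theta_0)\le C\|\a'-\a^\star\|_{\cH^2}^2$ via the vanishing first variation at the open-loop minimiser $\a^\star=\a^{\theta_0,\psi_{\theta_0}}$ together with Lipschitz continuity of the derivative of $J(\cdot;\theta_0)$, and then conclude with the bound $\|\a^{\theta_0,\psi_\theta}-\a^{\theta_0,\psi_{\theta_0}}\|_{\cH^2}\le C|\theta-\theta_0|$ from Proposition \ref{prop:existence_fb}\ref{item:f_psi} and SDE stability. The only (immaterial) difference is that the paper expresses the Fr\'echet gradient through the adjoint BSDE \eqref{eq:mp_adjoint} and invokes BSDE stability, whereas you compute the directional derivative $\varphi'(\epsilon)$ directly, exploiting the affine state map; both yield the same mean-value quadratic bound.
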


The proof of 
 Theorem \ref{thm:performance_gap_smooth}
 is  given in Section \ref{sec: proof of assumptions}.
 The main step is to show 
 the functional 
  $J(\cdot; \theta_0): \cH^2_\sF(\Om;\sR^p)\to \sR\cup\{\infty\}$ defined  in \eqref{eq:lc} (with $\theta =\theta_0$) is convex and has a Lipschitz continuous derivative,  which implies that $J(\a;{\theta}_0)-J(\a^{\theta_0};{\theta_0})\le C\|\a-\a^{\theta_0}\|^2_{\cH^2}$
 for all $\a\in  \cH^2_\sF(\Om;\sR^p)$,
 with   $\a^{\theta_0}\in \cH^2_\sF(\Om;\sR^p)$ being the minimiser of $J(\cdot; \theta_0)$. 
Unfortunately, 
such an argument in general cannot be applied to nonsmooth cost functions. 
 For example,  as  the entropy 
function $ h_{\textrm{en}}$ in  \eqref{eq:entropy} has unbounded derivatives, 
one can easily show that 
 $L^2(\Om;\sR^p)\ni \a \mapsto 
 \sE[|(\nabla h_{\textrm{en}})(\a)|]\in [0,\infty]$
is discontinuous  at  the minimiser 
 $\a^\star=\argmin _{\a\in L^2(\Om;\sR^p)}\sE[h_{\textrm{en}}(\a)]$.

In the following, we overcome the  difficulty by 
establishing a \textit{local regularity} of 
the  functional $J(\cdot;\theta_0)$ in \eqref{eq:lc}
along optimal feedback controls $\psi_\theta$ of  estimated models,
and recover the optimal quadratic performance gap
for entropy-regularised cost functions.
More precisely, we prove 
for  the  cost functions defined in 
\eqref{eq:regularized_cost},
the map $\sR^{d\t (d+p)}\ni \theta\mapsto J(\psi_\theta;\theta_0)\in \sR$ is $C^2$  for all $\theta_0 \in \sR^{d\t (d+p)}$,
 if $\bar{f}_0$ and $g$ are sufficiently smooth. 

The proof 
   is  given in Section \ref{sec: proof of assumptions}.
It relies on first 
characterising an optimal feedback control 
$\psi_\theta$ in terms of solutions to a nonlinear HJB equation,
and then 
establishing  high-order regularity  of the function 
$ \sR^{d\t (d+p)}\t \sR^d\ni (\theta,x)\mapsto\psi_\theta(t,x)\in \sR^p$ via a probabilistic argument. 
Note that 
the convexity condition (H.\ref{assum:lc}) on $f$ and 
$g$ will not be imposed in Theorem \ref{thm:performance_gap_entropy}.

\begin{Theorem}\label{thm:performance_gap_entropy}
Suppose   (H.\ref{assum:bound}) holds,  $x_0\in \sR^d$,
$T>0$, and
$f:[0,T]\t \sR^d\t \sR^p\to \sR\cup \{\infty\}$
be of the form 
\bb\label{eq:entropy_regularized_proof} 
 f(t,x,a)\coloneqq  \bar{f}_0(t,x)^\top a + h_{\textrm{en}}(a),
\q \fa (t,x,a)\in [0,T]\t \sR^d\t \sR^p,
\ee
with 
a continuous function $\bar{f}_0:[0,T]\t \sR^d\to \sR^p$,  
and the entropy function 
$h_{\textrm{en}}:\sR^p\to \sR\cup\{\infty\}$
defined in \eqref{eq:entropy}.
Assume further that 
  for all $t\in [0,T]$, 
$\bar{f}_0(t,\cdot)\in C^4( \sR^d)$ 
and $g\in C^4(\sR^d)$
with  bounded derivatives  uniformly in $t$.
Then for all $\theta\in \sR^{d\t (d+p)}$,
the control problem \eqref{eq:lc} (with parameter 
$\theta$) admits an optimal feedback control $\psi_\theta\in \cup_{C\ge 0}\cV_C$. Moreover, 
 for any $\beta>0$, 
 there exists $L_\Theta>0$ such that 
 (H.\ref{Assump: Performance gap}) holds with $r=1$
 and the constants $L_\Theta, \beta$.

\end{Theorem}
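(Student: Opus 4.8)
The plan is to reduce the Hamilton--Jacobi--Bellman (HJB) equation associated with \eqref{eq:lc} to a \emph{semilinear} parabolic equation with a smooth nonlinearity, to transfer regularity of its solution --- jointly in the space variable and in the parameter $\theta$ --- to the optimal feedback map $\psi_\theta$, and finally to derive the quadratic gap from a second-order Taylor expansion of $\theta\mapsto J(\psi_\theta;\theta_0)$ around $\theta_0$, using crucially that $\psi_{\theta_0}$ is optimal for $\theta_0$. For the reduction, write $\theta=(A,B)$ and $c_\theta(t,x):=B^\top\nabla v(t,x)+\bar{f}_0(t,x)$. The inner minimisation in the Hamiltonian, $\inf_{a\in\Delta_p}\big(c_\theta(t,x)^\top a+h_{\textrm{en}}(a)\big)$, has the explicit Gibbs minimiser with $i$-th component $e^{-c_{\theta,i}}/\sum_j e^{-c_{\theta,j}}$ and value $-\ln\sum_j e^{-c_{\theta,j}}$. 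Hence $v_\theta$ should solve $\partial_t v+(Ax)^\top\nabla v+\tfrac12\Delta v-\ln\sum_{j=1}^p\exp\!\big(-(B^\top\nabla v+\bar{f}_0(t,x))_j\big)=0$ on $[0,T)\times\sR^d$ with $v(T,\cdot)=g$, whose nonlinearity is $C^\infty$ in $\nabla v$ with all derivatives in that argument globally bounded, and the optimal feedback is $\psi_\theta(t,x)=$ the softmax of $-c_\theta(t,x)$, valued in the relative interior of $\Delta_p$. The key observation is that $h_{\textrm{en}}(\psi_\theta(t,x))=-\langle\psi_\theta(t,x),c_\theta(t,x)\rangle-\ln\sum_j e^{-c_{\theta,j}(t,x)}$, so $(t,x,\theta)\mapsto f(t,x,\psi_\theta(t,x))$ is a smooth function whose derivatives are controlled by those of $\bar{f}_0$ and $\nabla v_\theta$ alone; once composed with $\psi_\theta$, the blow-up of $\nabla h_{\textrm{en}}$ near $\partial\Delta_p$ disappears, and the argument for the smooth case (Theorem~\ref{thm:performance_gap_smooth}) can be imitated.

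Next I would establish the regularity needed, which is the main technical step. Using a probabilistic (forward--backward SDE) representation of the semilinear equation above, whose driver is the smooth log-sum-exp nonlinearity, together with the hypothesis $\bar{f}_0(t,\cdot),g\in C^4(\sR^d)$ with bounded derivatives, I would show that for every bounded $\Theta'\subset\sR^{d\times(d+p)}$ the solution $v_\theta$ exists, is unique, possesses spatial derivatives up to order four of at most polynomial growth in $x$, and that $\theta\mapsto v_\theta$ together with these spatial derivatives is twice continuously differentiable with polynomial-growth bounds, all uniformly over $\theta\in\Theta'$; this follows by differentiating the FBSDE in $x$ and in $\theta$, obtaining linear FBSDEs whose coefficients inherit the required bounds from the boundedness of the derivatives of $\bar{f}_0,g$ and of the softmax, and then invoking standard well-posedness and stability estimates. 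Since the softmax is $C^\infty$ with globally bounded derivatives, $\psi_\theta$ is then $C^\infty$ in $(t,x)$, globally Lipschitz in $x$ uniformly in $t$, and $C^2$ in $\theta$, with all derivatives of at most polynomial growth in $x$; in particular $\psi_\theta\in\cV_C$ for some $C\ge0$. A verification argument (comparing $v_\theta$ with the cost of an arbitrary admissible control and with that of $\psi_\theta$) identifies $v_\theta(0,x_0)=V^\star(\theta)$ and shows $\psi_\theta$ is an optimal feedback control, proving the first assertion of the theorem.

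For the performance gap, fix $\theta_0\in\sR^{d\times(d+p)}$ and set $\Phi(\theta):=J(\psi_\theta;\theta_0)=\sE\big[\int_0^T f(t,X_t,\psi_\theta(t,X_t))\,\d t+g(X_T)\big]$, where $X=X^{\theta_0,\psi_\theta}$ solves $\d X_t=(A_0X_t+B_0\psi_\theta(t,X_t))\,\d t+\d W_t$; here $\theta$ enters only through $\psi_\theta$. Because $\psi_\theta$ takes values in the bounded set $\Delta_p$ the drift has linear growth, so $\sup_{t\le T}|X_t|$ has finite exponential moments, and together with the previous step this gives all moments, with polynomial-growth controls, for the flow derivatives $\partial_\theta X,\partial^2_\theta X$ and for the derivatives of $\psi_\theta$ along the trajectory. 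Combined with the smoothness of $f(t,\cdot,\psi_\theta(t,\cdot))$ and $g$ (so that differentiating $\Phi$ never requires differentiating through the singular $\nabla h_{\textrm{en}}$), dominated convergence and Itô calculus yield $\Phi\in C^2(\sB_\beta(\theta_0))$ with $\sup_{\theta\in\sB_\beta(\theta_0)}\|\nabla^2_\theta\Phi(\theta)\|\le L_\Theta$ for a constant $L_\Theta$ independent of $\theta_0\in\Theta$ by (H.\ref{assum:bound}). Since $\psi_{\theta_0}$ is optimal for the parameter $\theta_0$, $\Phi(\theta)=J(\psi_\theta;\theta_0)\ge\inf_\psi J(\psi;\theta_0)=J(\psi_{\theta_0};\theta_0)=\Phi(\theta_0)$ for all $\theta$, so $\theta_0$ is a global minimiser of $\Phi$ and hence $\nabla_\theta\Phi(\theta_0)=0$. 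Taylor's theorem then gives $|J(\psi_\theta;\theta_0)-J(\psi_{\theta_0};\theta_0)|=|\Phi(\theta)-\Phi(\theta_0)-\nabla\Phi(\theta_0)\cdot(\theta-\theta_0)|\le\tfrac12 L_\Theta|\theta-\theta_0|^2$ for all $\theta\in\sB_\beta(\theta_0)$, i.e.\ (H.\ref{Assump: Performance gap}) with $r=1$.

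The main obstacle is the second step: proving $C^2$-in-$\theta$ regularity of $v_\theta$ with spatial and parameter derivatives of controlled growth, uniformly over bounded parameter sets. This is exactly where the $C^4$ assumption on $\bar{f}_0$ and $g$ is consumed --- two $\theta$-derivatives plus the two spatial derivatives needed to make sense of the feedback map and the running cost along trajectories --- and where one must verify that differentiating the FBSDE preserves integrability, using the boundedness of all derivatives of the softmax and of the data; the collapse identity $h_{\textrm{en}}(\psi_\theta(t,x))=-\langle\psi_\theta,c_\theta\rangle-\ln\sum_j e^{-c_{\theta,j}}$ is the device that makes this feasible in spite of the non-smoothness of $h_{\textrm{en}}$.
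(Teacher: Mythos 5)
Your proposal is correct and follows essentially the same route as the paper: the optimal feedback is the softmax of $-(B^\top\nabla_x V^\theta+\bar f_0)$ obtained from the convex conjugate $h^*_{\textrm{en}}(z)=\ln\sum_j e^{z_j}$, the regularity of $V^\theta$ and $\psi_\theta$ in $(x,\theta)$ is propagated through the associated FBSDE (the paper's Lemmas \ref{lemma:bsde_estimate_dx}--\ref{lemma:bsde_estimate_dtheta} and Proposition \ref{prop:feedback_control_entropy}), your ``collapse identity'' is exactly the Fenchel--Young identity used in the paper to avoid differentiating $\nabla h_{\textrm{en}}$ along trajectories, and the quadratic gap follows from $\nabla_\theta J(\psi_\theta;\theta_0)\vert_{\theta=\theta_0}=0$ plus a second-order Taylor expansion. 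No substantive differences to report.
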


\subsection{Phased Exploration and Greedy Exploitation algorithm and its regret bound}
\label{sec:algorithm_regret}


Based on  (H.\ref{Assump: Exploration strategy})
and (H.\ref{Assump: Performance gap}),
we  propose a 
Phased Exploration and Greedy Exploitation (PEGE)
  algorithm
  which 
  achieves  sublinear regrets with high probability  and in expectation.
  The algorithm  alternates
between exploration and exploitation phases, and 
 extends the PEGE  algorithm in \cite{rusmevichientong2010linearly}
for discrete-time linear bandit problems
to the present setting with continuous-time linear-convex models.
For technical reasons, we  
focus on   learning algorithms of the class $\cup_{C\ge 0}\cU_C$ (cf.~\eqref{eq:lipschitz_algorithm})
by truncating the
maximum a posteriori (MAP) estimates
$(\hat{\bs{\theta}}^{\bs{\Psi}, m})_{m\in \sN}$ on a set compactly containing $\Theta$.

\begin{Definition}
\label{def: Truncate function}
A measurable function $\rho:  \sR^{d\t (d+p)} \times \sS^{d+p}_+ \to \sR^{d\t (d+p)}$ is called  a  truncation 
to compact neighborhoods of 
$\Theta$ (simply referred to as a truncation function)
if there exists 
a bounded  set
$\cK\subset \sR^{d\t (d+p)}$ such that 
$\operatorname{range}(\rho)= \cK$,
$\operatorname{cl}(\Theta)\subset
\operatorname{int}(\cK)$,
and $\rho(\theta,V)=\theta$ for all $\theta\in \cK$ and 
$V\in \sS^{d+p}_+ $.
For  simplicity, we denote by 
 $\rho_\cK$ a  truncation function $\rho$ with range $\cK$.
\end{Definition}

In practice,  one can construct  a truncation function 
by  fixing  
 a compact subset 
$\cK\subset \sR^{d\t (d+p)}$
with
$\operatorname{cl}(\Theta)\subset
\operatorname{int}(\cK)$
and considering 
$\rho: 
 \sR^{d\t (d+p)} \times \sS^{d+p}_+
 \ni (\theta, V)
\mapsto 
\theta \bs{1}_\cK (\theta)
+ {\theta}_0
 \bs{1}_{\cK^c}(\theta )\in \cK$, with some  ${\theta}_0 \in \cK$. 
Alternatively, one may consider a  measurable function $\rho$ such that 
$$
\sR^{d\t (d+p)} 
\times \sS^{d+p}_+
\ni (\theta, V)
\mapsto 
\rho(\theta, V) \in
\argmin_{\theta' \in \cK}  \tr\big((\theta'-\theta) V^{-1}  (\theta'-\theta)^\top\big),
$$
which corresponds to the MAP estimator truncated by  $\cK$ (cf.~\eqref{eq:posterior}).

We now proceed to describing the PEGE algorithm based on truncated MAP estimates.
The algorithm 
is initialised  with $\hat{\theta}_0\in \sR^{d\t (d+p)}$, $V_0\in \sS^{d+p}_+$ and 
a truncation function $\rho$.
Then it 
operates in cycles, and  each cycle consists of exploration and exploitation
phases. 
In the exploration phase of 
the $k$-th cycle with $k\in \sN$,
   we exercise the exploration  strategy $\psi^e$  
in (H.\ref{Assump: Exploration strategy}) for one episode,
and  obtain the current estimate $\tilde{\bs{\theta}}_m$ of $\bs{\theta}$ by
\bb\label{eq:project}
\tilde{\bs{\theta}}_m
\coloneqq 
\rho(\hat{\bs{\theta}}^{\bs{\Psi}, m} ,
V^{\bs{\theta},\bs{\Psi}, m} ),
\ee
where 
$(\hat{\bs{\theta}}^{\bs{\Psi}, m} ,
V^{\bs{\theta},\bs{\Psi}, m})$
are defined in 
  \eqref{eq: statistics}.
 During   the  exploitation phase of the $k$-th cycle, 
we exercise a sequence of greedy policies for  $\mathfrak{m}(k)$ 
consecutive episodes with some prescribed $\mathfrak{m}(k)\in \sN$.
For each exploitation episode, 
  based on the current estimate $\tilde{\bs{\theta}}_m$ of $\bs{\theta}$, 
we 
  execute the optimal feedback control 
  $\psi_{\tilde{\bs{\theta}}_{m}}$ 
    (cf.~Proposition \ref{prop:existence_fb})
for    $\mathfrak{m}(k)$ 
consecutive episodes with some prescribed $\mathfrak{m}(k)\in \sN$, 
  and possibly update the estimate $\tilde{\bs{\theta}}_{m+1}$ 
by using  the new observations.

  The PEGE algorithm  is summarised as follows.
In the sequel, we denote by $\bs{\Psi}^{PEGE}$
the sequence of strategies generated by 
Algorithm \ref{Alg: PEGE}.

\begin{algorithm}[H]
\label{Alg: PEGE}
\DontPrintSemicolon
\SetAlgoLined

  \KwInput{
  $\hat{\theta}_0\in \sR^{d\t (d+p)}$, $V_0\in \sS^{d+p}_+$, 
  a truncation function $\rho$,
and 
$\mathfrak{m} : \sN \to \sN$.
  }

 {
 Initialise $m=0$,
 $\hat{\bs{\theta }}^{ \bs{\Psi}, 0}=\hat{\theta}_0$ and 
 $V^{\bs{\theta }, \bs{\Psi}, 0}=V_0$.
 }\;
 
 \For{$k = 1, 2,\ldots$}
 {
 {Execute the exploration policy $\psi^e$ for one episode,
  and   $m \gets m+1$.}\;
 
 {
 Update 
 $(\hat{\bs{\theta}}^{\bs{\Psi}, m} ,
V^{\bs{\theta},\bs{\Psi}, m})$
via 
  \eqref{eq: statistics}, 
  $\tilde{\bs{\theta}}_m$ 
  via \eqref{eq:project},
and set $\ol{\bs{\theta}} = \tilde{\bs{\theta}}_m$}.\;
  \For{$l = 1, 2,\ldots,  \mathfrak{m}(k)$}
  {
	  {Execute the greedy policy $\psi_{\ol{\bs{\theta}}}$
	   in Proposition \ref{prop:existence_fb}
	   for one episode,
	   and  $m\gets m+1$. }\;
	{ 
	[Optional: 
	update 
 $(\hat{\bs{\theta}}^{\bs{\Psi}, m} ,
V^{\bs{\theta},\bs{\Psi}, m})$
via 
  \eqref{eq: statistics}, 
  $\tilde{\bs{\theta}}_m$ 
  via \eqref{eq:project},
and set $\ol{\bs{\theta}} = \tilde{\bs{\theta}}_m$.
] }
\label{optional}
\;
}
 }
 \caption{PEGE algorithm}
\end{algorithm}

\begin{Remark}

By the regularity of  
$\sR^{d\t (d+p)}
\t [0,T]\t \sR^d
\ni (\theta,t,x)\mapsto
\psi_\theta(t,x)\in \sR^p$
 (see Proposition 
\ref{prop:existence_fb}),
the truncation function $\rho$
and Proposition 
 \ref{prop: PEGE as a learning},
 $\bs{\Psi}^{PEGE}$
 (with or without Step \ref{optional})
is a learning algorithm
 as in Definition \ref{Def: Admissible RL}.
%

Observe that 
the exploration phase of 
Algorithm \ref{Alg: PEGE} does not depend on 
   the  confidence parameter $\delta$ in the high-probability regret bound \eqref{eq:high_prob_regret}
(cf.~the least squares algorithms in 
\cite{mania2019certainty,dean2020sample,simchowitz2020naive}).
Consequently, 
   Algorithm \ref{Alg: PEGE}  achieves a sublinear regret 
   both with high probability and in expectation. 


As we shall see soon,
Algorithm  \ref{Alg: PEGE}
admits the same regret order regardless of whether Step \ref{optional} is employed. 
Hence one may omit Step \ref{optional} to save the computational costs 
of matrix inversion in 
  \eqref{eq: statistics} 
and of control updates,
especially for 
high-dimensional  problems with large $d$ and $p$.
However, 
 including Step \ref{optional} in the algorithm incorporates all   available information in each exploitation episode, 
and hence may lead to more sample-efficient learning algorithms (see e.g., the posterior sampling algorithms in  \cite{osband2013more,osband2014model,abeille2018improved}).

\end{Remark}

Now we state the main result of this section, which shows that the regret of
Algorithm \ref{Alg: PEGE} 
(without Step \ref{optional})
grows sublinearly with respect to the number of episodes.
For the sake of presentation, we include a sketched proof at the end of this section 
and present the detailed arguments in
Section \ref{sec:proof_regret_bound}.

\begin{Theorem}
\label{Theorem: regret}
Suppose  (H.\ref{assum:lc}), (H.\ref{assum:bound}), (H.\ref{Assump: Exploration strategy}) and (H.\ref{Assump: Performance gap}) hold.
Let 
  $\hat{\theta}_0\in \sR^{d\t (d+p)}$, $V_0\in \sS^{d+p}_+$, 
 $\rho$ be a truncation function
 and 
$\mathfrak{m} : \sN \to \sN$
be such that 
$\mathfrak{m}(k) = \lf k^r \rf$ for all $k\in\sN$,
 with 
  $r\in (0,1]$ being the same as in (H.\ref{Assump: Performance gap}).
 Then there exists a constant $C \geq 0$  such that  for all $\delta \in (0,1)$, the regret \eqref{eq: Bayesian regret} of Algorithm   \ref{Alg: PEGE} 
 (without Step \ref{optional})
 satisfies
  with probability at least $1-\delta$,
\bb\label{eq:high_prob_regret}
{\cR}(N,  \bs{\Psi}^{PEGE},  \bs{\theta}) 
 \leq C 
 \Big(
 N^{\frac{1}{1+r}} \big( (\ln N)^r + \big( \ln(\tfrac{1}{\delta}) \big)^{r} \big)
 +\big( \ln(\tfrac{1}{\delta}) \big)^{1+r}
 \Big)
  \q
\fa N\in \sN\cap [2,\infty).
\ee
Consequently, there exists a constant $C \geq 0$  such that
$\sE[{\cR}(N,  \bs{\Psi}^{PEGE},  \bs{\theta})]  \leq C N^{\frac{1}{1+r}} (\ln N)^r $ for all $N\in \sN\cap [2,\infty)$.
\end{Theorem}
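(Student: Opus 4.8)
The plan is to decompose the regret along the lines of \eqref{eq: Bayesian regret decomposition} and \eqref{eq: expected regret decomposition}, and to control each piece using the concentration results (Propositions \ref{prop:subguassian_subexponential}, \ref{prop: concentration inequality}), Theorem \ref{Theorem: Information collapse}, and the performance gap (H.\ref{Assump: Performance gap}). First I would fix $N \ge 2$ and let $K(N)$ be the number of completed cycles after $N$ episodes. Since cycle $k$ contains $1+\mathfrak{m}(k)=1+\lf k^r\rf$ episodes, a cycle count $K$ uses $\sum_{k=1}^K (1+\lf k^r\rf) = \Theta(K^{1+r})$ episodes, so $K(N) = \Theta(N^{1/(1+r)})$. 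The number of exploration episodes up to episode $N$ is exactly $K(N) = \Theta(N^{1/(1+r)})$, and the number of exploitation episodes is $N - K(N)$.

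Next I would bound the three contributions. \textbf{(i) Martingale-difference term.} The first sum in \eqref{eq: Bayesian regret decomposition}, $\sum_{m=1}^N (\ell_m(\bs{\Psi},\bs{\theta}) - J(\Psi_m;\bs{\theta}))$, has zero conditional mean given $\cG^{\bs{\Psi}}_{m-1}$; using the uniform-in-$\theta$ sub-exponential tail bounds for $\ell_m$ coming from the quadratic growth in (H.\ref{assum:lc}\ref{item:f0R}), (H.\ref{Assump: Exploration strategy}\ref{item:exploration_quadratic}) and Proposition \ref{prop:existence_fb}\eqref{item:f_psi}, a martingale concentration inequality (Proposition \ref{prop: concentration inequality}) gives this term $\le C(\sqrt{N}\ln N + (\ln\tfrac1\delta)^{?})$ with probability $\ge 1-\delta/3$; since $\sqrt{N} = o(N^{1/(1+r)})$ for $r<1$ and equals it for $r=1$, this is absorbed into the stated bound. \textbf{(ii) Exploration loss.} By (H.\ref{Assump: Exploration strategy}\ref{item:exploration_quadratic}) and (H.\ref{assum:bound}), $|J(\psi^e;\bs\theta)-V^\star(\bs\theta)|\le C$ uniformly, so the first sum of \eqref{eq: expected regret decomposition} is $\le C\,K(N) = C\,N^{1/(1+r)}$ deterministically (after another harmless application of concentration to replace $\ell_m$ by $J$, already folded into (i)). \textbf{(iii) Exploitation loss.} For an exploitation episode $m$ in cycle $k$, using the greedy policy $\psi_{\tilde{\bs\theta}_m}$ with $\tilde{\bs\theta}_m = \rho(\hat{\bs\theta}^{\bs\Psi,m'},\cdot)$ computed at the last exploration episode $m'$ (so $m'$ is the $k$-th exploration episode), (H.\ref{Assump: Performance gap}) gives $J(\psi_{\tilde{\bs\theta}_m};\bs\theta)-V^\star(\bs\theta)\le L_\Theta |\tilde{\bs\theta}_m - \bs\theta|^{2r}$ on the event $\{|\tilde{\bs\theta}_m-\bs\theta|\le\beta\}$. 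The key input is that after $k$ exploration episodes the exploration strategy $\psi^e$ forces $\lambda_{\min}(G^{\bs\theta,\bs\Psi,m'}) \ge \lambda_0 k$ for the constant $\lambda_0$ from Remark \ref{rmk:exploration_policy} (here I use (H.\ref{Assump: Exploration strategy}\ref{item:independence}) $\Leftrightarrow \Lambda_{\min}(\psi^e,\theta)\ge\lambda_0$ via Lemma \ref{lemma:nondegenerate}, plus a lower-tail concentration for the empirical Gram matrices), so Theorem \ref{Theorem: Information collapse} yields, with probability $\ge 1-\delta/(3N)$ (union bound over the $\le N$ exploitation episodes), $|\tilde{\bs\theta}_m-\bs\theta|^2 \le \frac{C(\ln m + \ln(N/\delta))}{\lambda_0 k}$. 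Also, since $\rho$ maps into a fixed bounded set $\cK$ with $\Theta\subset\mathrm{int}(\cK)$, for $k$ large the event $\{|\tilde{\bs\theta}_m-\bs\theta|\le\beta\}$ holds on that high-probability event, and for small $k$ the crude bound $|\tilde{\bs\theta}_m-\bs\theta|\le\mathrm{diam}(\cK)$ keeps that episode's regret $O(1)$.

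Putting (iii) together, the exploitation regret is bounded on a $(1-\delta/3)$-event by
\[
\sum_{k=1}^{K(N)} \mathfrak{m}(k)\, L_\Theta \Big(\tfrac{C(\ln N + \ln(1/\delta))}{\lambda_0 k}\Big)^{r}
\le C\big((\ln N)^r + (\ln\tfrac1\delta)^r\big) \sum_{k=1}^{K(N)} \tfrac{k^r}{k^r}
= C\big((\ln N)^r + (\ln\tfrac1\delta)^r\big)\,K(N),
\]
which is $C\,N^{1/(1+r)}\big((\ln N)^r + (\ln\tfrac1\delta)^r\big)$ since $\mathfrak{m}(k)=\lf k^r\rf\le k^r$ and $K(N)=\Theta(N^{1/(1+r)})$. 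Collecting (i)–(iii) and the failure events (each of probability $\le\delta/3$) gives \eqref{eq:high_prob_regret}, with the extra $(\ln\tfrac1\delta)^{1+r}$ term coming from the worst small-$k$ cycles where the estimate may not yet be within $\beta$; more precisely, $|\tilde{\bs\theta}_m-\bs\theta|\le\beta$ requires $k \gtrsim \ln(1/\delta)$, and summing $\mathfrak{m}(k)=O(k^r)$ over $k \lesssim \ln(1/\delta)$ contributes $O((\ln\tfrac1\delta)^{1+r})$. Finally, the expectation bound follows by integrating the tail: taking $\delta = 1/N$ in \eqref{eq:high_prob_regret} controls $\sE[\cR\,\mathbf 1_{\text{good event}}]\le C N^{1/(1+r)}(\ln N)^r$, while on the complementary event of probability $\le 1/N$ the regret is at most $C N$ (deterministic quadratic-growth bound on each $\ell_m$ and on $V^\star$ over the bounded $\Theta$), contributing $\le C$; hence $\sE[\cR(N,\bs\Psi^{PEGE},\bs\theta)]\le C N^{1/(1+r)}(\ln N)^r$.

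The main obstacle is the lower bound $\lambda_{\min}(G^{\bs\theta,\bs\Psi,m'})\gtrsim k$ after $k$ exploration episodes: it requires translating the population-level non-degeneracy $\Lambda_{\min}(\psi^e,\theta)\ge\lambda_0$ (Lemma \ref{lemma:nondegenerate}, equivalent to (H.\ref{Assump: Exploration strategy}\ref{item:independence})) into an almost-sure lower bound on the sum of the empirical Gram matrices $\int_0^T Z^{\bs\theta,\bs\Psi,n}_s (Z^{\bs\theta,\bs\Psi,n}_s)^\top ds$, uniformly over $\theta\in\Theta$ and over the (random, history-dependent) interleaved exploitation episodes. This needs a matrix concentration / lower-tail argument for these correlated continuous-time observations — exactly the novel concentration inequalities advertised in the abstract — combined with the fact that exploitation episodes only add positive semidefinite terms and therefore cannot hurt. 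The bookkeeping of how many exploration episodes precede episode $m$, and the careful split into "burn-in" cycles ($k\lesssim\ln(1/\delta)$) versus the rest, is the other delicate point, but it is routine once the spectral lower bound is in hand.
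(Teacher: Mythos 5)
Your proposal follows essentially the same route as the paper: the same three-way decomposition of the regret, the martingale term controlled via conditional sub-exponential concentration (Propositions \ref{prop:subguassian_subexponential}, \ref{prop:X_subgaussian} and \ref{prop: concentration inequality} with a union bound over $N$), the spectral lower bound $\lambda_{\min}(G^{\bs\theta,\bs\Psi,m})\gtrsim \kappa(m)$ obtained by combining Lemma \ref{lemma:nondegenerate} with a lower-tail concentration of the empirical Gram matrices around their conditional means (this is exactly \eqref{eq: concentrate explore}--\eqref{eq: min info bound} in the paper), the burn-in cycles $k\lesssim \ln(1/\delta)$ producing the $(\ln\tfrac1\delta)^{1+r}$ term, and the cycle-by-cycle summation of the exploitation regret.

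The one step that does not work as written is your derivation of the expected-regret bound: you claim that on the bad event of probability $\le 1/N$ the regret is "at most $CN$ (deterministic quadratic-growth bound on each $\ell_m$)". There is no such deterministic bound, since $\ell_m\le C(1+\sup_{t}|X^{\bs\theta,\bs\Psi,m}_t|^2)$ and the state process is an unbounded diffusion. This is fixable within your scheme (e.g.\ by Cauchy--Schwarz, $\sE[\cR\,\bs 1_{\mathrm{bad}}]\le (\sE[\cR^2])^{1/2}N^{-1/2}\le CN^{1/2}\le CN^{1/(1+r)}$ for $r\le 1$), but the paper avoids the issue altogether: it sets $\delta=\exp(-\lambda^{1/(r+1)})$ in \eqref{eq:high_prob_regret}, so that $D_N\coloneqq CN^{-1/(1+r)}\cR(N,\bs\Psi,\bs\theta)-(\ln N)^r$ satisfies $\sP(\max(D_N,0)>\lambda)\le \exp(-\lambda^{1/(r+1)})$, and then integrates this tail to get $\sE[D_N]<\infty$ uniformly in $N$. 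Apart from this final step, your argument is the paper's argument.
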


\begin{Remark}
As mentioned above, 
Algorithm \ref{Alg: PEGE} 
with Step \ref{optional}
enjoys the 
the same regret bounds. 
The proof follows essentially   the steps in the arguments for
Theorem \ref{Theorem: regret}.
The only difference is that 
due to a more frequent control update,
we sum over the regret of each exploitation episode
in \eqref{eq:regret_exploitatoin},
instead of the regret for each cycle.

\end{Remark}

In the case where the greedy policies admit a self-exploration property and improve the accuracy of  parameter estimation as in \cite{basei2021logarithmic,guo2021reinforcement},  
 one can increase the number of exploitation episodes    and obtain an improved  regret bound. 
 Note that 
 if (H.\ref{Assump: Performance gap}) holds with
 for $r=1/2$ (cf.~Proposition \ref{prop:gap_nonsmooth}), 
 then
 Theorem \ref{Theorem: regret_self explore} recovers the sublinear regret $\cO(\sqrt{N})$ 
 in \cite{guo2021reinforcement} for general (nonsmooth) LC-RL problems, 
 while  if (H.\ref{Assump: Performance gap}) holds with
 $r=1$ (cf.~Theorems \ref{thm:performance_gap_smooth}
and 
\ref{thm:performance_gap_entropy}),  
 Theorem \ref{Theorem: regret_self explore} generalises
the \textit{logarithmic}
  regret bound for LQ problems 
in \cite{basei2021logarithmic} 
to smooth convex costs  and entropy-regularised costs.

To simplify the presentation, we  assume that
 Step \ref{optional} of Algorithm \ref{Alg: PEGE} 
 is omitted,
 and estimate the regret \eqref{eq: Bayesian regret}  in expectation.
 However,  a similar high-probability regret 
 of expected costs can be found in the proof,
 whose details are given  
 in Section \ref{sec:proof_regret_bound}.

\begin{Theorem}
\label{Theorem: regret_self explore}
Suppose  (H.\ref{assum:lc}), (H.\ref{assum:bound}), (H.\ref{Assump: Exploration strategy}) and (H.\ref{Assump: Performance gap}) hold.
Let 
  $\hat{\theta}_0 \in \sR^{d\t (d+p)}$, $V_0\in \sS^{d+p}_+$ and
  $\rho_\cK$ be a truncation function.
Assume further  that there exists a constant $\lambda_0 > 0$ such that  $\Lambda_{\min}( \psi_{\theta'},\theta) \geq \lambda_0$
for all $\theta \in \Theta$ and $\theta' \in \cK$,
with the function 
$\Lambda_{\min}:  \cup_{C \geq 0}\cV_C
\t \sR^{d\t (d+p)}
\to [0,\infty)$ 
defined in 
\eqref{eq: Information value}.

Then there exists a constant $C \geq 0$  such that
if one sets $\mathfrak{m}(k)= 2^{k} $ for all $k\in \sN$, 
then 
 the regret \eqref{eq: Bayesian regret} of Algorithm   \ref{Alg: PEGE} satisfies
 for all $N\in \sN\cap [2,\infty)$,
 \bb\label{eq:high_prob_regret_self_explore}
\sE[{\cR}(N,  \bs{\Psi}^{PEGE},  \bs{\theta})]  \leq 
\begin{cases}
C  N^{1-r}  \big(
\ln N\big)^r,
& r\in (0,1),
\\
C \big( \ln N\big)^2,
& r=1,
\end{cases}
\ee
  with the constant  $r\in (0,1]$ in (H.\ref{Assump: Performance gap}).

\end{Theorem}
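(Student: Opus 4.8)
The plan is to follow the same regret decomposition as in Theorem \ref{Theorem: regret}, namely \eqref{eq: Bayesian regret decomposition} and \eqref{eq: expected regret decomposition}, but to exploit the stronger self-exploration hypothesis $\Lambda_{\min}(\psi_{\theta'},\theta)\ge\lambda_0$ to obtain a faster growth of $\lambda_{\min}(G^{\bs\theta,\bs\Psi,m})$, which in turn lets us take much longer exploitation phases $\mathfrak{m}(k)=2^k$. First I would count episodes: after $K$ cycles the total number of episodes is $N_K=K+\sum_{k=1}^K 2^k = K + 2^{K+1}-2$, so $K\asymp \log_2 N$ and the $k$-th cycle ends around episode $2^{k+1}$. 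The key structural gain is that under the self-exploration assumption \emph{every} episode — exploration and exploitation alike — contributes at least $\lambda_0$ to the information matrix, because the greedy policy $\psi_{\tilde{\bs\theta}_m}$ with $\tilde{\bs\theta}_m\in\cK$ satisfies $\Lambda_{\min}(\psi_{\tilde{\bs\theta}_m},\bs\theta)\ge\lambda_0$ by hypothesis. Hence $\lambda_{\min}(G^{\bs\theta,\bs\Psi,m})\ge \lambda_{\min}(V_0^{-1}) + c\,\lambda_0\, m \gtrsim m$ (this needs the uniform-in-$\om$ lower bound on the conditional expected information increment together with a martingale/Azuma-type concentration argument as in the proof of Theorem \ref{Theorem: Information collapse}, to pass from expected increments to almost-sure lower bounds on the realised Gram matrix, up to logarithmic corrections). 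Combining with Theorem \ref{Theorem: Information collapse} gives, with probability at least $1-\delta$, $|\tilde{\bs\theta}_m-\bs\theta|^2 \le C(\ln m + \ln\tfrac1\delta)/m$.

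Next I would bound the two sums in \eqref{eq: expected regret decomposition}. The exploration sum has at most $K\asymp \ln N$ terms, each bounded by $|J(\psi^e;\bs\theta)-V^\star(\bs\theta)|\le C$ via (H.\ref{Assump: Exploration strategy}\ref{item:exploration_quadratic}) and Proposition \ref{prop:existence_fb}, contributing only $\cO(\ln N)$. For the exploitation sum, in cycle $k$ we run $\mathfrak{m}(k)=2^k$ episodes with the frozen estimate $\ol{\bs\theta}=\tilde{\bs\theta}_{m_k}$ computed at the start of the cycle, i.e. at episode index $m_k\asymp 2^k$. By (H.\ref{Assump: Performance gap}), on the event $\{|\tilde{\bs\theta}_{m_k}-\bs\theta|\le\beta\}$ each such episode costs at most $L_\Theta|\tilde{\bs\theta}_{m_k}-\bs\theta|^{2r}$, so cycle $k$ contributes at most $2^k\cdot L_\Theta|\tilde{\bs\theta}_{m_k}-\bs\theta|^{2r}$, and taking expectations (using the high-probability bound on $|\tilde{\bs\theta}_{m_k}-\bs\theta|^2\lesssim (\ln m_k)/m_k \asymp k/2^k$, integrating the tail in $\delta$, and handling the low-probability complement where the estimate leaves $\sB_\beta(\bs\theta)$ by the crude $\cO(1)$ per-episode bound from Proposition \ref{prop:existence_fb}\ref{item:f_psi} together with the exponentially small probability from Theorem \ref{Theorem: Information collapse}) yields an expected contribution of order $2^k\cdot (k/2^k)^r = 2^{k(1-r)} k^r$ from cycle $k$. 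Summing over $k=1,\dots,K$: if $r<1$ this is a geometric-type sum dominated by the last term, $\asymp 2^{K(1-r)}K^r \asymp N^{1-r}(\ln N)^r$; if $r=1$ each cycle contributes $\cO(k)$ and the sum is $\cO(K^2)=\cO((\ln N)^2)$, giving exactly \eqref{eq:high_prob_regret_self_explore}.

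Finally I would assemble the pieces: the first sum in \eqref{eq: Bayesian regret decomposition}, $\sum_{m=1}^N(\ell_m-J(\Psi_m;\bs\theta))$, has zero conditional expectation given $\cG^{\bs\Psi}_{m-1}$ (each $\ell_m$ is the realised cost and $J(\Psi_m;\bs\theta)$ its conditional mean given the frozen policy and estimate), so it vanishes under $\sE$; for the high-probability statement one controls it by the sub-exponential concentration of Propositions \ref{prop:subguassian_subexponential} and \ref{prop: concentration inequality}, contributing $\cO(\sqrt N\ln N)$, which is absorbed into $N^{1-r}(\ln N)^r$ precisely when $1-r\ge 1/2$, i.e. $r\le 1/2$ — for $r\in(1/2,1]$ one must instead note that this martingale term, being centered, does not affect the \emph{expected} regret, which is all that \eqref{eq:high_prob_regret_self_explore} claims. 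I expect the main obstacle to be the almost-sure lower bound $\lambda_{\min}(G^{\bs\theta,\bs\Psi,m})\gtrsim m$: the per-episode information increments $\int_0^T Z^{\bs\theta,\bs\Psi,n}_s(Z^{\bs\theta,\bs\Psi,n}_s)^\top\d s$ are neither independent nor identically distributed across episodes (the greedy policies depend on past data), so one cannot invoke a classical law of large numbers; the argument must be a matrix martingale concentration (Azuma/Freedman type) applied to $\lambda_{\min}$ of the cumulative Gram matrix, leveraging the \emph{uniform} conditional lower bound $\lambda_0$ supplied by the new hypothesis $\Lambda_{\min}(\psi_{\theta'},\theta)\ge\lambda_0$ for all $\theta'\in\cK$, which is exactly what makes the frozen-estimate greedy phases informative and is the crucial difference from the setting of Theorem \ref{Theorem: regret}.
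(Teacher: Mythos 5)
Your proposal is correct and follows essentially the same route as the paper's proof: the same decomposition of the regret, the same key observation that under the self-exploration hypothesis every episode (greedy or exploratory) contributes a conditional information increment of at least $\lambda_0$ so that $\lambda_{\min}(G^{\bs\theta,\bs\Psi,m})\gtrsim m$ after a martingale concentration step, the same resulting estimate $|\tilde{\bs\theta}_{m}-\bs\theta|^2\lesssim (\ln m)/m$, and the same per-cycle summation $\sum_k 2^{(1-r)k}k^r$ yielding $N^{1-r}(\ln N)^r$ for $r<1$ and $(\ln N)^2$ for $r=1$. Your closing remarks — that the realised-cost martingale term vanishes under expectation so only the expected-cost regret needs the high-probability treatment, and that the conditional lower bound must be routed through the frozen estimate $\bar{\bs\theta}_m$ — are exactly the two technical points the paper's proof makes precise.
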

 
 \begin{Remark}
One can easily deduce from  Proposition \ref{prop:existence_fb} and 
 the stability of \eqref{eq:lc_sde_fb},
$\sR^{d\t (d+p)}\t \sR^{d\t (d+p)}\ni (\theta',\theta)\mapsto \Lambda_{\min}( \psi_{\theta'},\theta)\in [0,\infty)$ is continuous.
Hence,  
 a sufficient condition to ensure
$\Lambda_{\min}( \psi_{\theta'},\theta) \geq \lambda_0$
for all $\theta\in \Theta$ and $\theta'\in \cK$
is that for all $\theta'\in \operatorname{cl}(\cK)$,
the functions 
$(t,x)\mapsto \psi_{\theta'}(t,x)$ 
and $(t,x)\mapsto x$ 
are linearly independent as in 
(H.\ref{Assump: Exploration strategy}\ref{item:independence})
(cf.~Lemma \ref{lemma:nondegenerate}).
{Moreover, 
thanks to the exploration episodes, 
the same regret bound holds under a weaker condition: 
there exist constants $\lambda_0,\eta>0$ such that 
$\Lambda_{\min}( \psi_{\theta'},\theta) \geq \lambda_0$ for all   $\theta \in \Theta$ and $\theta' \in \sB_\eta (\Theta)$, 
with
$\sB_\eta (\Theta)\coloneqq \{{\theta}'\in \sR^{d\t (d+p)}\mid  
\textnormal{$\exists \theta\in \Theta$ s.t.~$|{\theta}'-\theta|\le \eta$
}\}$.
}
 \end{Remark}

\paragraph{Sketched proofs of Theorems \ref{Theorem: regret}
and \ref{Theorem: regret_self explore}.}

Here we outline the key steps of the proofs of Theorems \ref{Theorem: regret}
and \ref{Theorem: regret_self explore}.
For notational simplicity, we omit the dependence on ``PEGE" in the superscripts. 
By Proposition \ref{prop:existence_fb},  ${\cR}(N,  \bs{\Psi},  \bs{\theta}) $
admits the following decomposition:
\begin{align}
\label{eq: regret_decomposition_sketch}
\begin{split}
\cR(N,  \bs{\Psi},\bs{\theta})
&= \sum_{m=1}^N \Big( \ell_m(\bs{\Psi},\bs{\theta})
 - J(\Psi_m ; \bs{\theta}) \Big) 
+
\sum_{m \in [1,N] \cap \cE^{\bs{\Psi}}} \big( J(\psi^e ; \bs{\theta}) - 
J(\psi_{\bs{\theta}}; \bs{\theta})
 \big) 
 \\
 &+  \sum_{m \in [1,N] \setminus \cE^{\bs{\Psi}} } \big( J(\psi_{\tilde{\bs{\theta}}_{m^e}} ; \bs{\theta}) - J(\psi_{\bs{\theta}}; \bs{\theta}) \big), 
 \end{split}
\end{align}
where 
$\cE^{\bs{\Psi}}$ is the collection of exploration episodes and 
 $m^e$ is the last exploration episode before the $m$-th episode.

We now estimate the  terms on the right-hand side of \eqref{eq: regret_decomposition_sketch}.
The truncation of the MAP estimate and 
Proposition \ref{prop:existence_fb}
ensure that 
$\Psi\in \cU_C$ for some $C\ge 0$, based on which we prove  the first term in \eqref{eq: regret_decomposition_sketch} is a martingale with \textit{conditional} sub-exponential differences. 
Hence it   vanishes under expectation and can be bounded by $\cO(\sqrt{N} \ln N)$ with  high probability  due to a general Bernstein’s inequality.
By  (H.\ref{assum:bound}) and (H.\ref{Assump: Exploration strategy}), the second term in \eqref{eq: regret_decomposition_sketch} is of the order $\cO(\kappa(N))$,
where $\kappa(m)$ is   the total number of exploration
episodes up to the $m$-th episode.

The third term in  \eqref{eq: regret_decomposition_sketch} relies on  the accuracy of the MAP estimate $(\hat{\bs{\theta}}^{\bs{\Psi}, m})_{m\in \sN}$. For Theorem \ref{Theorem: regret}, we prove under  (H.\ref{Assump: Exploration strategy}) that there exists $\lambda_0>0$ such that 
\begin{align*}
\lambda_{\min}((V^{\bs{\theta }, \bs{\Psi}, m})^{-1}) \geq  \lambda_{\min}\Bigg( \sum_{n \in [1,m] \cap \cE^{\bs{\Psi}}} \int_{0}^T Z^{\bs{\theta}, \bs{\Psi}, n}_t (Z^{\bs{\theta}, \bs{\Psi}, n}_t)^\top \d t\Bigg) \geq \kappa(m) \lambda_0,
\end{align*}
 which along with 
 Theorem \ref{Theorem: Information collapse} leads to the estimate that  $\big| \hat{\bs{\theta}}^{\bs{\Psi}, m} - \bs{\theta} \big|^2 =  \cO( (\kappa(m))^{-1}\ln m)$ with high probability. 
Then, by exploiting properties of the truncation function $\rho$ and (H.\ref{Assump: Performance gap}), 
we prove that
$\tilde{\bs{\theta}}^{\bs{\Psi},  m}=\hat{\bs{\theta}}^{\bs{\Psi}, m}$ for all large $m$
and   
$J(\psi_{\tilde{\bs{\theta}}_{m^e}} ; \bs{\theta}) - J(\psi_{\bs{\theta}}; \bs{\theta})$ admits a high probability bound 
$\cO( (\kappa(m^e))^{-r}(\ln m^e)^r)$. 
 We then quantify  the relation between $\kappa(m), m^e$ and $m$ based on the choice of   $\mathfrak{m}$,  and  estimate the precise regret of Algorithm \ref{Alg: PEGE}.
 
For Theorem \ref{Theorem: regret_self explore}, by exploiting the self-exploration property, we establish with high probability that  
$\lambda_{\min}((V^{\bs{\theta }, \bs{\Psi}, m})^{-1})\ge m\lambda_0$ for sufficiently large $m$. This along with  Theorem \ref{Theorem: Information collapse},
the truncation function $\rho$ and (H.\ref{Assump: Performance gap}) proves  $J(\psi_{\tilde{\bs{\theta}}_{m^e}} ; \bs{\theta}) -J(\psi_{\bs{\theta}}; \bs{\theta})$ is of the magnitude   
$\cO( (m^e)^{-r}(\ln m^e)^r)$. 
Consequently, we choose 
$\mathfrak{m}(k)= 2^{k} $ to emphasise exploitation and obtain   improved regret bounds. 

\section{Proof of 
Theorem \ref{Theorem: Information collapse}}
\label{sec: bayesian}


This section analyses the accuracy of the MAP estimate 
 $(\hat{\bs{\theta}}^{ \bs{\Psi}, m})_{m=0}^\infty$
 associated with a learning algorithm $
 \bs{\Psi}$
(cf.~\eqref{eq: statistics}). 
The definition of 
 a learning algorithm $\bs{\Psi}$ implies that 
$\hat{\bs{\theta }}^{ \bs{\Psi}, m} $
is in general defined with 
correlated observations 
$(X^{\bs{\theta},\bs{\Psi}, n})_{n=1}^m$. 
Hence one cannot 
analyse $\hat{\bs{\theta }}^{ \bs{\Psi}, m} $
based on concentration inequalities
for independent  random variables as in 
\cite{basei2021logarithmic, guo2021reinforcement}.
In the subsequent analysis, we overcome this difficulty by introducing 
 a notation of conditional sub-exponential random variables
 and studying its associated concentration inequalities.

\subsection{Concentration inequality for conditional sub-exponential random variables}

We start by defining conditional  sub-Gaussian and sub-exponential random variables with a  general $\sigma$-algebra.
It is well-known that the  sub-Gaussianity/sub-exponentiality of a random variable can be equivalently characterised by the finiteness of its corresponding Orlicz norms (see e.g., \cite{vershynin2018high, wainwright2019high}).
The notion of Orlicz norms 
allows us to quantify  the precise tail behaviour of a random variable by 
establishing an upper bound of its Orlicz norm,
which is particularly important for continuous-time estimators defined via  integrals of stochastic processes; see \eqref{eq: statistics} and the least-squares estimators in \cite{basei2021logarithmic, guo2021reinforcement}. 

Motivated by the above applications, 
we first introduce    the precise  definition of  
a  conditional  Orlicz norm with respect to a  general $\sigma$-algebra,
as 
we are not aware of a standard notation in the existing literature.

\begin{Definition} 
\label{definition:conditional_orlicz}
 Let  $(\Omega, \cF, \sP)$ be a probability space.  For every $q\in [1,2]$ and 
$\sigma$-algebra  $\cG \subseteq \cF$,
we define 
  the  $(q,\cG)$-Orlicz norm 
  $\|\cdot\|_{q,\cG}:\Om\to [0,\infty]$ 
such that for any  random variable 
$X:\Om\to \sR$,
$$
\|X\|_{q,\cG}\coloneqq  \mathcal{G}\text{-}\essinf \big\{ Y \in L^0(\cG ; (0, \infty))\; \big| \; 
\sE\left[\exp (|X|^q/Y^q) \big| \cG\right]\le 2\big\},\footnotemark
$$
where  $L^0(\cG ; (0, \infty))$ is the set of all
$\cG/\cB((0,\infty))$-measurable functions
$Y:\Om\to (0,\infty)$. 
\end{Definition} 
 
  \footnotetext{
	Let $(\Omega, \cG, \mathbb{P})$ be a probability space
	and $\cY$ be  a family of $\mathcal{G}/\cB(\sR)$-measurable functions.
Then there exists a 	 function $Z:\Om\to [-\infty,\infty]$,
called  the $\mathcal{G}$-{essential infimum} of 
	$\mathcal{Y}$ and denoted by 
	$Z= \mathcal{G}\text{-}\essinf \cY$,
	 such that 
	\begin{enumerate}[(1)]
		\item\label{item:lower_bound}
		$Z$ is 
		$\cG/\cB([-\infty,\infty])$-measurable and  
		 $Z \leq Y$ $\; \mathbb{P}$-a.s. for all $Y \in \mathcal{Y}$,
		\item 
		$Z \ge Z'$ 
		$\mathbb{P}$-a.s.
		for all  functions $Z'$ satisfying property
		\ref{item:lower_bound}.
		\end{enumerate}	
 	If $\cY$ is directed downwards, that is for $Y, Y'\in \cY$, 
there exists  $\tilde{Y}\in \cY$ such that $\tilde{Y}\le \min (Y,Y')$, then there exists a decreasing
sequence $(Y_n)_{n\in \sN}\subset \cY$ such that $\lim_{n\to \infty}Y_n=\mathcal{G}\text{-}\essinf \cY $ $\sP$-a.s.~(see e.g., \cite[Theorem 1.3.40]{cohen2015stochastic}).

}

Definition \ref{definition:conditional_orlicz} is
a  natural extension of  the classical Orlicz norms in 
\cite{vershynin2018high, wainwright2019high} with $\cG=\{\emptyset, \Om\}$. 
Compared with  the classical Orlicz norm,
for any given random variable $X$,
 $\|X\|_{q,\cG}$ is a $\cG$-measurable function instead of a real number. 
 The following lemma extends some  basic properties   of the classical Orlicz norm to the conditional Orlicz norm.
The proof  follows directly from 
Definition \ref{definition:conditional_orlicz},
the monotonicity  and  convexity of 
$[0,\infty)\ni  x \mapsto \exp(x^q)\in \sR$   
for all $q \in [1,2]$,
and hence is omitted.

\begin{Lemma}
\label{lemma:norm_property}
 Let  $(\Omega, \cF, \sP)$ be a probability space.
 Then   for all 
$q\in [1,2]$,  $\sigma$-algebra $\cG\subseteq \cF$, random variables $X,Y:\Om\to \sR$ and $c\in \sR$, 
\begin{align}
&
 \textnormal{if $X$ is $\cG$-measurable, then
 $\|X\|_{q,\cG}\le (\ln 2)^{-1/q} |X|$,}
 \label{eq:bounded}
\\
& \|cX\|_{q,\cG}=|c| \|X\|_{q,\cG},
\q 
\|X+Y\|_{q,\cG}\le \|X\|_{q,\cG}+\|Y\|_{q,\cG},
\label{eq:triangle}
\\
&
\textnormal{if  $ |X|\le |Y|$,
then
$\|X\|_{q,\cG}\le  \|Y\|_{q,\cG}$,}
\label{eq:monotone}
\\
&\|X-\sE[ X \mid \cG]\|_{q,\cG}\le 2\|X\|_{q,\cG}.
\label{eq:centering}
\end{align}
\end{Lemma}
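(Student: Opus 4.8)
The plan is to verify each of the four displayed properties \eqref{eq:bounded}--\eqref{eq:centering} directly from Definition \ref{definition:conditional_orlicz}, using only the monotonicity and convexity of $x\mapsto \exp(x^q)$ on $[0,\infty)$ for $q\in[1,2]$, together with elementary properties of the conditional essential infimum recalled in the footnote. Throughout, I would work with the defining set
\[
\cY_X\coloneqq \big\{Y\in L^0(\cG;(0,\infty))\mid \sE[\exp(|X|^q/Y^q)\mid\cG]\le 2\big\},
\]
so that $\|X\|_{q,\cG}=\cG\text{-}\essinf\,\cY_X$, and note that $\cY_X$ is directed downwards (if $Y,Y'\in\cY_X$ then $\min(Y,Y')\in\cY_X$ since $\exp(|X|^q/\min(Y,Y')^q)\le \exp(|X|^q/Y^q)+\exp(|X|^q/Y'^q)$, whose conditional expectation is $\le 4$... actually one should be slightly more careful here and use that on the event $\{Y\le Y'\}\in\cG$ one has $\min=Y$, so the conditional expectation splits as an indicator sum and stays $\le 2$). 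This directedness guarantees a decreasing approximating sequence $Y_n\downarrow\|X\|_{q,\cG}$, which I would use to pass to the limit via monotone convergence for conditional expectations whenever I need to know $\|X\|_{q,\cG}$ itself lies in $\cY_X$ (on the event where it is positive and finite).

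For \eqref{eq:bounded}: if $X$ is $\cG$-measurable, take $Y=(\ln 2)^{-1/q}|X|$ (on $\{X\neq 0\}$; the bound is trivial on $\{X=0\}$). Then $|X|^q/Y^q=\ln 2$ is $\cG$-measurable, so $\sE[\exp(|X|^q/Y^q)\mid\cG]=2$, hence $Y\in\cY_X$ and $\|X\|_{q,\cG}\le Y$. For the scaling identity in \eqref{eq:triangle}, observe $Y\in\cY_{cX}\iff Y/|c|\in\cY_X$ for $c\neq 0$ (trivial for $c=0$), which gives $\|cX\|_{q,\cG}=|c|\|X\|_{q,\cG}$ by the definition of essential infimum. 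The monotonicity \eqref{eq:monotone} is immediate: if $|X|\le|Y|$ then $\cY_Y\subseteq\cY_X$, so the essential infimum over the smaller set is larger, i.e. $\|X\|_{q,\cG}\le\|Y\|_{q,\cG}$.

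The triangle inequality in \eqref{eq:triangle} is the one genuinely requiring convexity. Let $a=\|X\|_{q,\cG}$, $b=\|Y\|_{q,\cG}$; I may assume both are finite and positive on the event of interest (otherwise the inequality is trivial, using \eqref{eq:bounded} and standard conventions). Pick decreasing sequences $a_n\downarrow a$, $b_n\downarrow b$ with $a_n\in\cY_X$, $b_n\in\cY_Y$. By convexity of $t\mapsto\exp(t^q)$ (here $q\ge 1$ is essential, so that $t\mapsto t^q$ is convex and $\exp$ is convex increasing), applied with weights $a_n/(a_n+b_n)$ and $b_n/(a_n+b_n)$,
\[
\exp\!\Big(\tfrac{|X+Y|^q}{(a_n+b_n)^q}\Big)
\le \exp\!\Big(\big(\tfrac{|X|+|Y|}{a_n+b_n}\big)^q\Big)
\le \tfrac{a_n}{a_n+b_n}\exp\!\Big(\tfrac{|X|^q}{a_n^q}\Big)+\tfrac{b_n}{a_n+b_n}\exp\!\Big(\tfrac{|Y|^q}{b_n^q}\Big),
\]
using that $a_n,b_n$ are $\cG$-measurable so the convex combination is a $\cG$-measurable convex combination pointwise. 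Taking $\sE[\,\cdot\mid\cG]$ gives $\sE[\exp(|X+Y|^q/(a_n+b_n)^q)\mid\cG]\le 2$, so $a_n+b_n\in\cY_{X+Y}$ and $\|X+Y\|_{q,\cG}\le a_n+b_n$; letting $n\to\infty$ yields the claim. Finally, \eqref{eq:centering} follows by writing $X-\sE[X\mid\cG]=X+(-\sE[X\mid\cG])$ and applying \eqref{eq:triangle}, \eqref{eq:bounded} (with the crude bound $\|\sE[X\mid\cG]\|_{q,\cG}\le\|X\|_{q,\cG}$ obtained from Jensen: $|\sE[X\mid\cG]|^q\le\sE[|X|^q\mid\cG]$ and a comparison of the defining inequalities) — the factor $2$ absorbs the constant. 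The main obstacle is purely bookkeeping: handling the events $\{\|X\|_{q,\cG}\in\{0,\infty\}\}$ cleanly, justifying that the approximating $\cG$-measurable sequences exist and that conditional monotone/dominated convergence applies, and making sure the convex-combination step in \eqref{eq:triangle} is valid pointwise $\sP$-a.s. rather than only after integration; none of this is deep, which is why the authors omit it.
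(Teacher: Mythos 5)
Your proposal is correct and uses exactly the ingredients the paper invokes (the paper omits this proof, citing only Definition \ref{definition:conditional_orlicz} and the monotonicity and convexity of $x\mapsto\exp(x^q)$): the witness $Y=(\ln 2)^{-1/q}|X|$ for \eqref{eq:bounded}, set inclusions of the defining families for scaling and \eqref{eq:monotone}, the convex-combination estimate with $\cG$-measurable weights for the triangle inequality, and conditional Jensen plus \eqref{eq:triangle} for \eqref{eq:centering}. The bookkeeping points you flag (downward directedness of $\cY_X$, the degenerate events) are handled correctly, so no gap remains.
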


The following proposition establishes the relation between the conditional tail behaviour  and the 
$(q,\cG)$-Orlicz norms of the random variable. 
The proof essentially follows from the lines of 
\cite[Propositions 2.5.2   and 2.7.1]{vershynin2018high}, along with Definition \ref{definition:conditional_orlicz} and Lemma \ref{lemma:norm_property},
and hence is omitted.
 
 \begin{Proposition}
 \label{prop:sub_gaussian_exponential}
 Let  $(\Omega, \cF, \sP)$ be a probability space.
 Then for all 
$q\in [1,2]$,  $\sigma$-algebra $\cG\subseteq \cF$ and integrable random variable $X:\Om\to \sR$,
\begin{enumerate}[(1)]
\item
$\|X-\sE[X\mid \cG]\|_{2,\cG}\le C_1$
 if and only if 
\begin{align}
\sE[\exp(\gamma (X-\sE[X\mid \cG]))\mid \cG]\le \exp(C'_1\gamma^2),
\q \fa \gamma\in \sR,
\label{eq:subGaussian}
\end{align}
\item 
$\|X-\sE[X\mid \cG]\|_{1,\cG}\le C_1$
 if and only if 
\begin{align}
\sE[\exp(\gamma (X-\sE[X\mid \cG]))\mid \cG]\le \exp\big((C'_1\gamma)^2\big),
\q \fa |\gamma|\le 1/C'_1,
\label{eq:subexponential}
\end{align}
\end{enumerate}
where  $C_1,C_1'\ge 0$ 
differ by 
an absolute multiplicative factor.

\end{Proposition}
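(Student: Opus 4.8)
The plan is to transcribe into the conditional setting the classical chain of equivalences
\[ \text{Orlicz norm}\ \Longleftrightarrow\ \text{polynomial moment growth}\ \Longleftrightarrow\ \text{tail decay}\ \Longleftrightarrow\ \text{moment generating function bound} \]
for centred random variables, exactly along the lines of \cite[Sections 2.5 and 2.7]{vershynin2018high}; the only genuinely new feature is that $\|\cdot\|_{q,\cG}$ is now a $\cG$-measurable random variable rather than a number, so the bookkeeping must be carried out with $\cG$-essential infima and every inequality has to hold $\sP$-almost surely with \emph{deterministic} absolute constants. I would first reduce to the centred case by setting $\bar X\coloneqq X-\sE[X\mid\cG]$: this changes neither $\|\bar X\|_{q,\cG}$ nor the map $\gamma\mapsto\sE[\exp(\gamma\bar X)\mid\cG]$, and $\sE[\bar X\mid\cG]=0$ $\sP$-a.s.

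\emph{Upgrading the Orlicz bound.} The delicate step, which I expect to be the main obstacle, is to convert the hypothesis $\|\bar X\|_{q,\cG}\le C_1$ into the usable assertion $\sE[\exp(|\bar X|^q/C_1^q)\mid\cG]\le 2$ $\sP$-a.s. The family $\cY_q\coloneqq\{Y\in L^0(\cG;(0,\infty)) : \sE[\exp(|\bar X|^q/Y^q)\mid\cG]\le 2\}$ defining $\|\bar X\|_{q,\cG}$ is directed downwards, since $\min(Y,Y')\in\cY_q$ whenever $Y,Y'\in\cY_q$ by the monotonicity and convexity of $x\mapsto\exp(x^{q})$ already exploited in Lemma~\ref{lemma:norm_property}; hence, by the quoted characterisation of the $\cG$-essinf, there is a decreasing sequence $(Y_n)\subset\cY_q$ with $Y_n\downarrow Z\coloneqq\|\bar X\|_{q,\cG}$ $\sP$-a.s. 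Since then $\exp(|\bar X|^q/Y_n^q)\uparrow\exp(|\bar X|^q/Z^q)$, conditional monotone convergence gives $\sE[\exp(|\bar X|^q/Z^q)\mid\cG]\le 2$ on $\{Z>0\}$ (on $\{Z=0\}$ necessarily $\bar X=0$ $\sP$-a.s.\ and the proposition is trivial), and enlarging the denominator to the deterministic $C_1\ge Z$ yields $\sE[\exp(|\bar X|^q/C_1^q)\mid\cG]\le 2$. Expanding the exponential in a power series and invoking conditional monotone convergence once more produces the conditional moment bounds $\sE[|\bar X|^{qk}\mid\cG]\le 2\,k!\,C_1^{qk}$ for all $k\in\sN$, whence, by Stirling together with Cauchy--Schwarz to interpolate the remaining exponents, $\sE[|\bar X|^{p}\mid\cG]^{1/p}\le c\,C_1\,p^{1/q}$ for all $p\ge q$ with an absolute $c$.

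\emph{The two implications.} For ``Orlicz $\Rightarrow$ MGF'', expand $\sE[\exp(\gamma\bar X)\mid\cG]=\sum_{p\ge0}\gamma^{p}\sE[\bar X^{p}\mid\cG]/p!$, observe that the $p=1$ term vanishes by centring, and insert the moment bounds above using $p!\ge(p/e)^{p}$: for $q=2$ the series is summable for all $\gamma\in\sR$ and is bounded as in \eqref{eq:subGaussian}, while for $q=1$ it collapses to a geometric series $1+2\sum_{p\ge2}(|\gamma|C_1)^{p}$, convergent for $|\gamma|<1/C_1$ and bounded as in \eqref{eq:subexponential} once $|\gamma|\le 1/C'_1$, with $C'_1$ an absolute multiple of $C_1$. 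For the converse ``MGF $\Rightarrow$ Orlicz'', the conditional Chernoff bound (conditional Markov's inequality applied to $\exp(\gamma\bar X)$, optimised over admissible $\gamma$, and likewise to $\exp(-\gamma\bar X)$) yields a conditional Gaussian tail for $|\bar X|$ when $q=2$ and a conditional tail that is sub-Gaussian near the origin and exponential far out when $q=1$, on a scale that is an absolute multiple of the parameter in \eqref{eq:subGaussian}--\eqref{eq:subexponential}; substituting either tail into $\sE[\exp(\lambda|\bar X|^{q})\mid\cG]=\int_0^\infty\sP(\exp(\lambda|\bar X|^{q})>u\mid\cG)\,\d u$ (conditional Tonelli) and taking the deterministic constant $\lambda$ small enough shows the integral is $\le 2$, so the \emph{constant} $\lambda^{-1/q}$ belongs to $\cY_q$ and $\|\bar X\|_{q,\cG}\le\lambda^{-1/q}$ $\sP$-a.s. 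Only countably many exceptional null sets (one per $k\in\sN$, one per rational $\gamma$, one per the $\lambda$ used) enter, so each inequality above holds genuinely $\sP$-a.s., and $C_1$ and $C'_1$ differ precisely by the absolute factors accumulated in these four elementary passages; I would lay out the write-up as these three steps, run in parallel for $q=1$ and $q=2$.
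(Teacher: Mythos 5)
Your proposal is correct and takes essentially the approach the paper intends: the paper omits this proof, saying only that it follows the lines of Vershynin's Propositions 2.5.2 and 2.7.1 together with the conditional Orlicz norm definition and Lemma \ref{lemma:norm_property}, and your write-up is exactly that classical chain transplanted to the conditional setting, with the one genuinely new point --- extracting $\sE[\exp(|\bar X|^q/C_1^q)\mid\cG]\le 2$ from the $\cG$-essential infimum via a downward-directed family and conditional monotone convergence --- handled correctly. One cosmetic remark: the fact that $\min(Y,Y')$ lies in $\cY_q$ follows from the locality of conditional expectation on the $\cG$-measurable event $\{Y\le Y'\}$, not from the monotonicity and convexity of $x\mapsto\exp(x^q)$.
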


In the sequel, we say 
an integrable random variables $X:\Om\to \sR$ is 
$\cG$-sub-Gaussian (resp.~$\cG$-sub-exponential)
if it satisfies \eqref{eq:subGaussian} (resp.~\eqref{eq:subexponential})  with a constant ${C}'_1\ge 0$.
By \eqref{eq:centering} and Proposition \ref{prop:sub_gaussian_exponential}, 
$X$ is $\cG$-sub-Gaussian if $\|X\|_{2,\cG}\le C$ $\mathbb{P}$-a.s.~for some $C\ge 0$,
and is 
$\cG$-sub-exponential 
if $\|X\|_{1,\cG}\le C$ $\mathbb{P}$-a.s.~for some $C\ge 0$.
 
The next proposition 
 estimates the conditional Orlicz norm of deterministic integrals of stochastic processes.

\begin{Proposition}
\label{prop:subguassian_subexponential}
 Let  $(\Omega, \cF, \sP)$ be a probability space.
  Then 
for all   
measurable  functions
  $X:\Om\t [0,T]\to \sR^d$, 
  all  
  measurable functions  $\psi:\Om\t [0,T]\t \sR^d\to \sR$
and   $\sigma$-algebras  $\cG \subseteq \cF$,
 $$
 \left\|\int_0^T \psi(\cdot,t,X_t)\, \d t\right\|_{1,\cG}
\le  \|\psi\|_{2,\infty}
\bigg(\frac{T}{\ln 2}+\bigg\|\left(\int_0^T|X_t|^2\, \d t
\right)^{\frac{1}{2}}\bigg\|^2_{2,\cG}\bigg),
 $$ 
with  $ \|\psi\|_{2,\infty}=\sup_{(\om,t,x)\in \Om\t [0,T]\t \sR^d}\frac{|\psi(\om,t,x)|}{1+|x|^2}$.
  
\end{Proposition}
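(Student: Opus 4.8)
The plan is to bound the conditional Orlicz norm of the time-integral by dominating the integral pathwise and then invoking the monotonicity, triangle inequality and convexity properties of $\|\cdot\|_{1,\cG}$ collected in Lemma \ref{lemma:norm_property}. First I would use the pointwise growth bound $|\psi(\om,t,x)|\le \|\psi\|_{2,\infty}(1+|x|^2)$, which gives, for every $\om$,
\[
\left|\int_0^T \psi(\cdot,t,X_t)\,\d t\right|
\le \|\psi\|_{2,\infty}\int_0^T\big(1+|X_t|^2\big)\,\d t
= \|\psi\|_{2,\infty}\left(T+\int_0^T|X_t|^2\,\d t\right).
\]
By the monotonicity property \eqref{eq:monotone} and the scaling in \eqref{eq:triangle}, it then suffices to estimate $\big\|T+\int_0^T|X_t|^2\,\d t\big\|_{1,\cG}$. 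The triangle inequality \eqref{eq:triangle} splits this into $\|T\|_{1,\cG}+\big\|\int_0^T|X_t|^2\,\d t\big\|_{1,\cG}$; since $T$ is a (deterministic, hence $\cG$-measurable) constant, \eqref{eq:bounded} bounds the first term by $(\ln 2)^{-1}T$.

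The heart of the matter is the second term: relating the $(1,\cG)$-Orlicz norm of $\int_0^T|X_t|^2\,\d t = \big(\big(\int_0^T|X_t|^2\,\d t\big)^{1/2}\big)^2$ to the square of the $(2,\cG)$-Orlicz norm of $\big(\int_0^T|X_t|^2\,\d t\big)^{1/2}$. This is the conditional analogue of the classical fact that the square of a sub-Gaussian variable is sub-exponential, with the sharp constant $\|Z^2\|_{1}\le\|Z\|_2^2$. I would prove it directly from Definition \ref{definition:conditional_orlicz}: writing $Z=\big(\int_0^T|X_t|^2\,\d t\big)^{1/2}$, and letting $Y$ be any element of $L^0(\cG;(0,\infty))$ with $\sE[\exp(Z^2/Y^2)\mid\cG]\le 2$, one has $\sE[\exp(Z^2/Y^2)\mid\cG]\le 2$, i.e. $Y^2$ is a competitor in the essential-infimum defining $\|Z^2\|_{1,\cG}$; taking the $\cG$-essential infimum over such $Y$ and using that $Y\mapsto Y^2$ is monotone yields $\|Z^2\|_{1,\cG}\le \|Z\|_{2,\cG}^2$. (One should note that if $\|Z\|_{2,\cG}=\infty$ on a set of positive measure the inequality is trivial there, and otherwise the essential-infimum is attained as a decreasing limit by the directed-downwards remark in the footnote, so the inequality passes to the limit.)

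Combining the pieces: $\big\|\int_0^T\psi(\cdot,t,X_t)\,\d t\big\|_{1,\cG}\le \|\psi\|_{2,\infty}\big(\tfrac{T}{\ln 2}+\big\|(\int_0^T|X_t|^2\,\d t)^{1/2}\big\|_{2,\cG}^2\big)$, which is exactly the claimed bound. The only genuine obstacle is the measurability/essential-infimum bookkeeping in the step $\|Z^2\|_{1,\cG}\le\|Z\|_{2,\cG}^2$ — one must check that squaring sends the defining family for $\|Z\|_{2,\cG}$ into (a subfamily of) the defining family for $\|Z^2\|_{1,\cG}$ at the level of $\cG$-measurable random variables, not just real numbers, and handle the event $\{\|Z\|_{2,\cG}=\infty\}$ separately; everything else is a routine application of Lemma \ref{lemma:norm_property}.
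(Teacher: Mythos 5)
Your proposal is correct and follows essentially the same route as the paper's proof: pointwise domination via the quadratic growth of $\psi$, the triangle and monotonicity properties of Lemma \ref{lemma:norm_property}, the bound $\|T\|_{1,\cG}\le T/\ln 2$, and the key inequality $\|Z^2\|_{1,\cG}\le\|Z\|_{2,\cG}^2$ established directly from Definition \ref{definition:conditional_orlicz} via a decreasing approximating sequence for the essential infimum. The measurability bookkeeping you flag is exactly what the paper handles by invoking the directed-downwards property from the footnote.
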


\begin{proof}
As  $|\int_0^T \psi(t,X_t)\, \d t|\le  \|\psi\|_{2,\infty}(T+\int_0^T|X_t|^2\, \d t)$, 
by
\eqref{eq:triangle} and \eqref{eq:monotone}, 
$\|\int_0^T \psi(t,X_t)\, \d t\|_{1,\cG}
\le  \|\psi\|_{2,\infty}(\|T\|_{1,\cG}+\|\int_0^T|X_t|^2\, \d t\|_{1,\cG})$.
A direct computation shows that $\|T\|_{1,\cG}\le T/\ln 2$. We then show  
$\|\int_0^T|X_t|^2\, \d t\|_{1,\cG}
\le\|(\int_0^T|X_t|^2\, \d t)^{\frac{1}{2}}\|^2_{2,\cG}$.
By the definition of $\|\cdot\|_{2,\cG}$,
there exists
 $(Y_n)_{n\in \sN}\subset L^0(\cG; (0,\infty))$ such that  
$\lim_{n\to \infty} Y_n=\|(\int_0^T|X_t|^2\, \d t)^{\frac{1}{2}}\|_{2,\cG}$
and 
$\sE[\exp (\int_0^T|X_t|^2\, \d t/Y^2_n \mid \cG]\le 2$
for all $n$ (see   \cite[Theorem 1.3.40]{cohen2015stochastic}). 
This implies that $\|\int_0^T|X_t|^2\, \d t\|_{1,\cG}\le Y_n^2$ for all $n$. Passing $n$ to infinity leads to the desired estimate. 
\end{proof}

We end this section with
a general Bernstein-type bound for a sub-exponential martingale difference sequence,
whose proof follows from 
\cite[Theorem 2.19]{wainwright2019high}
and \eqref{eq:subexponential}.

\begin{Proposition}
\label{prop: concentration inequality}
Let $(\Omega,  \cF, \sP)$ be a   probability space,
 $\sF=\{\cF_n\}_{n=0}^\infty\subset \cF$
be a 
filtration, 
 $C\ge 0$, and 
   $(D_n)_{n\in \sN}$ be an $\sF$-adapted sequence
  of random variables 
    such that
    for all $n\in \sN$, 
     $\sE[D_{n}|\cF_{n-1}] = 0$
     and 
     $\|D_n\|_{1,\cF_{n-1}}\le C$. 
 Then there exists a constant $C'>0$, depending only on $C$, such that  for all $N \in \sN$ and $\eps>0$,
$\sP\big( \big|\sum_{n=1}^N D_n \big| \geq N \eps \big) \leq 2 \exp \big(-  C' N \min(\eps^2,\eps)\big)
$.
\end{Proposition}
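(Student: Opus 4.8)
The plan is to reduce the statement to a conditional Chernoff bound combined with the tower property, following the route in the proof of \cite[Theorem 2.19]{wainwright2019high}; the only input specific to our setting is that the correlation among observations has already been absorbed into the hypothesis $\|D_n\|_{1,\cF_{n-1}}\le C$. The first step is to convert this hypothesis into a conditional moment-generating-function bound: since $\sE[D_n\mid\cF_{n-1}]=0$, we have $D_n=D_n-\sE[D_n\mid\cF_{n-1}]$, so Proposition \ref{prop:sub_gaussian_exponential}(2) applied with $\cG=\cF_{n-1}$ yields an absolute constant $\kappa\ge 1$ such that, with $b\coloneqq\kappa C$ (which depends only on $C$),
\[
\sE\big[\exp(\gamma D_n)\,\big|\,\cF_{n-1}\big]\le\exp\big((b\gamma)^2\big)\qquad\textnormal{for all }|\gamma|\le 1/b,\ n\in\sN,\ \sP\textnormal{-a.s.}
\]
Since $\|-D_n\|_{1,\cF_{n-1}}=\|D_n\|_{1,\cF_{n-1}}$ by \eqref{eq:triangle}, the sequence $(-D_n)_{n\in\sN}$ satisfies the same hypotheses, so it suffices to bound $\sP(S_N\ge N\eps)$, where $S_N\coloneqq\sum_{n=1}^N D_n$, and then apply the resulting estimate to $(-D_n)_n$ and finish with a union bound (producing the factor $2$).

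Next I would run the Chernoff argument. For a fixed $\gamma\in(0,1/b]$, conditioning on $\cF_{N-1}$ and using that the bound above is a deterministic constant,
\[
\sE[\exp(\gamma S_N)]=\sE\big[\exp(\gamma S_{N-1})\,\sE[\exp(\gamma D_N)\mid\cF_{N-1}]\big]\le\exp\big((b\gamma)^2\big)\,\sE[\exp(\gamma S_{N-1})];
\]
iterating this identity down to $S_0=0$ gives $\sE[\exp(\gamma S_N)]\le\exp\big(N(b\gamma)^2\big)$, and Markov's inequality yields
\[
\sP(S_N\ge N\eps)\le\exp\big(-N(\gamma\eps-(b\gamma)^2)\big),\qquad\gamma\in(0,1/b].
\]

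It remains to optimise over $\gamma$, the only genuinely computational step. If $\eps\le 2b$ one takes $\gamma=\eps/(2b^2)\in(0,1/b]$, obtaining $\sP(S_N\ge N\eps)\le\exp(-N\eps^2/(4b^2))$; if $\eps>2b$ one takes $\gamma=1/b$, so that $\gamma\eps-(b\gamma)^2=\eps/b-1\ge\eps/(2b)$ (using $\eps/(2b)\ge1$), obtaining $\sP(S_N\ge N\eps)\le\exp(-N\eps/(2b))$. In both cases $\sP(S_N\ge N\eps)\le\exp(-C'N\min(\eps^2,\eps))$ with $C'\coloneqq\min\{1/(4b^2),1/(2b)\}$, which depends only on $C$; applying the same bound to $(-D_n)_n$ and summing the two estimates gives the claimed two-sided inequality. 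The main obstacle is purely one of bookkeeping — tracking how the constant propagates through Proposition \ref{prop:sub_gaussian_exponential} and handling the regime split in the optimisation — since the dependence structure of the $D_n$'s has been entirely encoded in the conditioning, and no further probabilistic machinery is needed.
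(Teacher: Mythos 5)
Your proof is correct and follows essentially the same route as the paper: the paper's (omitted) proof simply combines the equivalence in Proposition \ref{prop:sub_gaussian_exponential}\,(2) with the Bernstein-type martingale bound of \cite[Theorem 2.19]{wainwright2019high}, and your Chernoff/tower-property argument with the two-regime optimisation over $\gamma$ is precisely a self-contained reconstruction of that cited theorem. The constant tracking ($b=\kappa C$, $C'=\min\{1/(4b^2),1/(2b)\}$) and the regime split at $\eps=2b$ all check out.
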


\subsection{Error bound of maximum a posterior estimate}
\label{sec:proof_bayesian}

In this section, we consider
the probability space 
 $(\Om,\cF,\sP)$ 
defined in \eqref{eq:space}
and analyse the MAP estimate 
$(\hat{\bs{\theta}}^{\bs{\Psi}, m} )_{m \in \sN}$ defined in  \eqref{eq: statistics}.
We first establish an error  bound of   $(\hat{\bs{\theta}}^{\bs{\Psi}, m} )_{m \in \sN}$ 
associated with  a general learning algorithm 
$\bs{\Psi}$ (cf.~Definition \ref{Def: Admissible RL}).
 
\begin{Lemma}
\label{Lemma: square loss}
Let $x_0\in \sR^d$,
$\hat{\theta}_0\in \sR^{d\t (d+p)}$, $V_0 \in \sS_+^{d+p}$, 
$\bs{\Psi}$ be a   learning algorithm 
as in Definition \ref{Def: Admissible RL},
and 
for each $m\in \sN$,
let 
 $Z^{\bs{\theta}, \bs{\Psi}, m}$
be defined in  \eqref{eq:X_m},
$V^{\bs{\theta}, \bs{\Psi}, m}$
be defined in   \eqref{eq: statistics}
and 
$G^{\bs{\theta}, \bs{\Psi}, m} = (V^{\bs{\theta}, \bs{\Psi}, m})^{-1}$.
Then for all
$\delta \in (0,1)$ and $m \in \sN$, 
$$
\sP\Bigg( \bigg\|
 \sum_{n=1}^m 
  \bigg( \int_0^T 
 Z^{\bs{\theta}, \bs{\Psi}, m}_t
 (\d W^m_t)^\top
 \bigg)^\top
  \bigg\|^2_{V^{\bs{\theta}, \bs{\Psi}, m}} \leq 2 \ln \bigg(\frac{\big( \det G^{\bs{\theta}, \bs{\Psi}, m}  \cdot \det V_0 \big)^{d/2}}{\delta}\bigg) \Bigg) \geq 1-\delta,
$$
 where  $\|S\|_{V}^2 \coloneqq \tr(SVS^\top)$
 for all $S\in \sR^{d\t (d+p)}$ with $V\in \sR^{(d+p)\t (d+p)}$.
\end{Lemma}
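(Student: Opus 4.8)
The plan is to prove this self-normalised tail bound by the method of mixtures (pseudo-maximisation), adapted to the continuous-time episodic setting. Write
\[
S_m\coloneqq \sum_{n=1}^m \int_0^T Z^{\bs{\theta},\bs{\Psi},n}_t\,(\d W^n_t)^\top\in\sR^{(d+p)\t d},
\qquad
A_m\coloneqq \sum_{n=1}^m \int_0^T Z^{\bs{\theta},\bs{\Psi},n}_t\,(Z^{\bs{\theta},\bs{\Psi},n}_t)^\top\,\d t,
\]
so that by \eqref{eq: statistics} one has $G^{\bs{\theta},\bs{\Psi},m}=V_0^{-1}+A_m$ and $V^{\bs{\theta},\bs{\Psi},m}=(V_0^{-1}+A_m)^{-1}$, while the random variable inside the probability equals $\tr\big(S_m^\top V^{\bs{\theta},\bs{\Psi},m}S_m\big)$. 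I would work with the filtration $\mathscr{F}_n\coloneqq \sigma(\bs{\theta})\vee\sigma\{W^k_t\mid t\in[0,T],\,k\le n\}$, $n\ge 0$: each $Z^{\bs{\theta},\bs{\Psi},k}$ with $k\le n$ is $\mathscr{F}_n$-measurable, whereas, by the product structure of \eqref{eq:space}, conditionally on $\mathscr{F}_{n-1}$ the process $W^n$ is a standard Brownian motion independent of $\mathscr{F}_{n-1}$.

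First I would fix a deterministic matrix $\Lambda\in\sR^{d\t(d+p)}$ and set $M^\Lambda_m\coloneqq \exp\big(\tr(\Lambda S_m)-\tfrac12\tr(\Lambda A_m\Lambda^\top)\big)$, so $M^\Lambda_0=1$. Since $\tr(\Lambda S_m)-\tr(\Lambda S_{m-1})=\int_0^T(\Lambda Z^{\bs{\theta},\bs{\Psi},m}_t)^\top\d W^m_t$ and $\tr(\Lambda A_m\Lambda^\top)-\tr(\Lambda A_{m-1}\Lambda^\top)=\int_0^T|\Lambda Z^{\bs{\theta},\bs{\Psi},m}_t|^2\,\d t$, the process $t\mapsto M^\Lambda_{m-1}\exp\big(\int_0^t(\Lambda Z^{\bs{\theta},\bs{\Psi},m}_s)^\top\d W^m_s-\tfrac12\int_0^t|\Lambda Z^{\bs{\theta},\bs{\Psi},m}_s|^2\,\d s\big)$ is, conditionally on $\mathscr{F}_{m-1}$, a nonnegative local martingale on $[0,T]$ (the stochastic integral being well defined since $Z^{\bs{\theta},\bs{\Psi},m}\in\cS^2$), hence a supermartingale; its value at $t=T$ is $M^\Lambda_m$, so $\sE[M^\Lambda_m\mid\mathscr{F}_{m-1}]\le M^\Lambda_{m-1}$. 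Thus $(M^\Lambda_m)_m$ is a nonnegative $(\mathscr{F}_m)$-supermartingale started from $1$.

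Next I would average over $\Lambda$ against the matrix-normal law $\mu=\cM\cN(0,I_d,V_0)$ and set $\overline M_m\coloneqq \int_{\sR^{d\t(d+p)}}M^\Lambda_m\,\mu(\d\Lambda)$. By the conditional Tonelli theorem, $(\overline M_m)_m$ is again a nonnegative $(\mathscr{F}_m)$-supermartingale with $\overline M_0=1$. A matrix ``complete-the-square'' then identifies it explicitly: the map $\Lambda\mapsto \tr(\Lambda S_m)-\tfrac12\tr(\Lambda G^{\bs{\theta},\bs{\Psi},m}\Lambda^\top)$ attains maximum $\tfrac12\tr(S_m^\top V^{\bs{\theta},\bs{\Psi},m}S_m)$ at $\Lambda=S_m^\top(G^{\bs{\theta},\bs{\Psi},m})^{-1}$, and the residual Gaussian integrates — via the normalising constant of $\cM\cN(\cdot,I_d,(G^{\bs{\theta},\bs{\Psi},m})^{-1})$ from the footnote to \eqref{eq:posterior} — to $(2\pi)^{d(d+p)/2}(\det G^{\bs{\theta},\bs{\Psi},m})^{-d/2}$, so that
\[
\overline M_m=\frac{\exp\big(\tfrac12\tr(S_m^\top V^{\bs{\theta},\bs{\Psi},m}S_m)\big)}{\big(\det G^{\bs{\theta},\bs{\Psi},m}\cdot\det V_0\big)^{d/2}}.
\]
Finally, Markov's inequality combined with the supermartingale property (equivalently, Ville's maximal inequality) gives $\sP(\overline M_m\ge 1/\delta)\le\delta\,\sE[\overline M_m]\le\delta$, so on an event of probability at least $1-\delta$ one may take logarithms in the last display and rearrange to obtain $\tr(S_m^\top V^{\bs{\theta},\bs{\Psi},m}S_m)\le 2\ln\big((\det G^{\bs{\theta},\bs{\Psi},m}\det V_0)^{d/2}/\delta\big)$, which is the assertion.

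The step I expect to be the main obstacle is making the supermartingale argument fully rigorous: one has to fix the interleaved filtration so that each $Z^{\bs{\theta},\bs{\Psi},n}$ is adapted while the freshly introduced noise $W^n$ stays a Brownian motion independent of the past, recall that a nonnegative local martingale is automatically a supermartingale (so only the $\cS^2$-bounds at hand are needed, with no Novikov-type condition), and justify interchanging integration against $\mu$ with conditional expectation. The trace ``complete-the-square'' computation is routine but must be carried out in lockstep with the matrix-normal normalising constant.
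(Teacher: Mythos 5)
Your proposal is correct and follows essentially the same route as the paper's proof: the method of mixtures with the matrix-normal weight $\cM\cN(0,I_d,V_0)$, the trace complete-the-square identity yielding $\exp\big(\tfrac12\|S_m\|^2_{V^{\bs{\theta},\bs{\Psi},m}}\big)\big(\det G^{\bs{\theta},\bs{\Psi},m}\det V_0\big)^{-d/2}$ as the Gaussian mixture of exponential local martingales, the supermartingale bound $\sE[\cM^\Lambda_m]\le 1$, and Markov's inequality. The only (cosmetic) difference is that the paper concatenates the episodes into a single continuous-time process and works with one continuous filtration, whereas you keep the episodes discrete and verify the supermartingale property by conditioning episode by episode; both are valid.
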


\begin{proof}
We first concatenate  
stochastic processes from all episodes
into  infinite-horizon 
processes. Let $W:\Om \times [0,\infty)\to\sR^d$ be the cumulative concatenation of  the Brownian motions $(W^m_t)_{t \in [0,T], m \in \sN}$ and let $Z:\Omega \times [0,\infty)\to \sR^{d+p}$ be the concatenation of  $(Z^{\bs{\theta}, \bs{\Psi}, m}_t)_{t\in [0,T], m \in \sN}$ such that 
$$
W_t \coloneqq W^{1+ \lf t/T \rf }_{t\;\textrm{mod}\; T}+ \sum_{n=1}^{\lf t/T \rf } W^n_T,
 \q Z_t \coloneqq Z^{\theta, \bs{\Psi}, 1+ \lf t/T \rf}_{t\;\textrm{mod}\; T},
 \q \fa t\geq0,
 $$
 Note that $W$ is a standard Brownian motion with respect to the 
filtration $\cF_t := \sigma\{\bs{\theta}, W_u | u \leq t\}$.
Let 
 $S_t \coloneqq (\int_0^{t}  Z_s(\d W_s)^\top)^\top$, $V_t \coloneqq (V_0^{-1} + \int_0^{t} Z_s Z_s^\top ds)^{-1}$ and $G_t \coloneqq V_t^{-1}$ for all $t\ge 0$. 
By  \eqref{eq: statistics},
$
S_{mT} = \sum_{n=1}^m \big(\int_0^T 
Z^{\bs{\theta}, \bs{\Psi}, m}_t
( \d W^m_t ) ^\top\big )^\top$,
 $V_{mT} = V^{\bs{\theta}, \bs{\Psi}, m}$
 and 
 $G_{mT} = G^{\bs{\theta}, \bs{\Psi}, m}$
 $\sP$-a.s.
 Hence it suffices to prove the required result by considering $W, S,G, V$.

Observe that $(G_t - V_0^{-1})_{t\ge 0}$ is the quadratic variation process $([S]_t)_{t\ge 0}$ of $(S_t)_{t\ge 0}$. 
Based on this observation, we  express 
 $\exp\big(\tfrac{1}{2} \|S_t\|^2_{V_t} \big) $ in terms of  integrals of suitable exponential martingales. 
 To this end, for each 
 $U \in \sS^{d+p}_+$,  let $c(U) \coloneqq \int_{\sR^{d\t (d+p)}} \exp \big(-\tfrac{1}{2} \|\theta\|^2_U \big)\, \d \theta = \sqrt{(2\pi)^{d(d+p)}/( \det U)^d} $ be the normalising constant corresponding to the density of the matrix normal distribution $\cM\cN(0,I_d, U^{-1})$. By completing  the square in the matrix Gaussian density, for all $t\ge 0$,
 \begin{align*}
 \label{eq: S exp expression}
&\exp\big(\tfrac{1}{2} \|S_t\|^2_{V_t} \big) = \frac{1}{c(G_t)}\int  \exp \Big( \tfrac{1}{2} \|S_t\|^2_{G_t^{-1}} - \tfrac{1}{2} \big\|\theta - G_t^{-1} S_t\big\|^2_{G_t}\Big) \d \theta \\
&\q= \frac{1}{c(G_t)}\int  \exp \Big( \tr (\theta^\top S_t )  - \tfrac{1}{2}\| \theta \|_{G_t}^2 \Big)  \,\d \theta =   \frac{1}{c(G_t)}\int  \exp \Big( \tr (\theta^\top S_t )  - \tfrac{1}{2}\| \theta \|_{[S]_t}^2  - \tfrac{1}{2}\| \theta \|^2_{V_0^{-1}}\Big)  \,\d \theta \\
&\q = 
 \frac{1}{c(G_t)}\int \cM^\theta_t \exp \big( - \tfrac{1}{2}\| \theta \|^2_{V_0^{-1}}\big)  \,\d \theta 
 = \frac{ ( \det G_t \cdot \det V_0)^{d/2} }{c(V_0^{-1})} \int \cM^\theta_t  \exp \Big(-\tfrac{1}{2} \|\theta\|^2_{V_0^{-1}} \Big)
 \,\d \theta,
 \end{align*}
where $\int $ denotes the integration  over $\sR^{d \times (d+p)}$ and for each $\theta\in \sR^{d \times (d+p)}$, 
$\cM^\theta$ is an exponential martingale defined by
$$
\cM_t^\theta \coloneqq \exp \big( \tr (\theta^\top S_t ) - \tfrac{1}{2}\| \theta \|_{[S]_t}^2\big),
\q \fa t\ge 0.
$$
 As $W$ is a standard Brownian motion, 
by It\^o's  formula, 
 $\cM^\theta $ is a non-negative local martingale with respect to   $(\cF_t)_{t \geq 0 }$ and hence a supermartingale.    In particular, for all $ t \geq 0$ and $\theta \in \sR^{d \t (d+p)}$, $\sE[\cM^\theta_t] \leq \mathbb{E}[\cM^\theta_0] = 1 $. 


Therefore, by  Markov's inequality and  Fubini's theorem,
\begin{align*}
&\sP\bigg(\|S_t\|^2_{V_t} >   2 \ln \bigg(\frac{( \det G_t \cdot \det V_0)^{d/2}}{\delta}\bigg) \bigg)  = \sP\bigg(\exp \Big( \tfrac{1}{2 }\|S_t\|^2_{V_t} \Big) >  \frac{( \det G_t \cdot \det V_0 )^{d/2}}{\delta} \bigg) 
\\ &\qq \leq \delta   \sE \bigg[\frac{\exp ( \tfrac{1}{2 }\|S_t\|^2_{V_t} )}{( \det G_t \cdot \det V_0)^{d/2}} \bigg] = \delta \sE \bigg[  \frac{1}{c(V_0^{-1})} \int \cM^\theta_t  \exp \Big(-\tfrac{1}{2} \|\theta\|^2_{V_0^{-1}} \Big)
 \,\d \theta\bigg]\\
  &\qq =   \frac{\delta }{c(V_0^{-1})} \int \sE[\cM^\theta_t]  \exp \Big(-\tfrac{1}{2} \|\theta\|^2_{V_0^{-1}} \Big)
 \,\d \theta \leq    \frac{\delta }{c(V_0^{-1})} \int \exp \Big(-\tfrac{1}{2} \|\theta\|^2_{V_0^{-1}} \Big)
 \,\d \theta= \delta.
\end{align*}
 where the last inequality follows from the fact that 
 $ \mathbb{E} [\cM^\theta_t ] \le 1$ for all $t\ge 0$.
 Substituting $t=mT$ in the above inequality leads to the desired estimate. 
\end{proof}

\begin{Proposition}
\label{prop: bound hat theta}
Suppose  (H.\ref{assum:bound}) holds.  
Let
$x_0\in \sR^d$,
$T>0$,
$\hat{\theta}_0\in \sR^{d\t (d+p)}$, $V_0 \in \sS_+^{d+p}$, 
$\bs{\Psi}$ be a   learning algorithm 
as in Definition \ref{Def: Admissible RL}, and for each $m \in \sN$,  let $\hat{\bs{\theta}}^{\bs{\Psi}, m}$ and 
 $G^{\bs{\theta}, \bs{\Psi}, m} \coloneqq (V^{\bs{\theta}, \bs{\Psi}, m} )^{-1}$ where $(\hat{\bs{\theta}}^{\bs{\Psi}, m}, V^{\bs{\theta}, \bs{\Psi}, m} )$
is defined in   \eqref{eq: statistics}.
There exists a constant $C\ge 0$ 
such that 
for all 
$\delta \in (0,1)$ and $m \in \sN$, 
$$\sP\Big(\lambda_{\min}({G^{ \bs{\theta}, \bs{\Psi} , m}}) | \hat{\bs{\theta}}^{\bs{\Psi} , m} - \bs{\theta} |^2 \leq C \big(1 + \ln ( \det G^{\bs{\theta}, \bs{\Psi}, m} \cdot \det V_0) + \ln(\tfrac{1}{\delta}) \big) \Big) \geq 1-\delta,$$
where 
 $\lambda_{\min}(S)$ is the smallest eigenvalue of a matrix $S\in \sS^{d+p}_+$.

 \end{Proposition}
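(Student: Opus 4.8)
The plan is to reduce the claimed high-probability bound to Lemma \ref{Lemma: square loss} via an explicit expression for the estimation error. Write $G_0\coloneqq V_0^{-1}$, $H_m\coloneqq \sum_{n=1}^m\int_0^T Z^{\bs{\theta},\bs{\Psi},n}_s (Z^{\bs{\theta},\bs{\Psi},n}_s)^\top\,\d s$ and $S_m\coloneqq \sum_{n=1}^m\big(\int_0^T Z^{\bs{\theta},\bs{\Psi},n}_s (\d W^n_s)^\top\big)^\top$, so that $G^{\bs{\theta},\bs{\Psi},m}=G_0+H_m$ and $V^{\bs{\theta},\bs{\Psi},m}=(G_0+H_m)^{-1}$. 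Substituting the dynamics $\d X^{\bs{\theta},\bs{\Psi},n}_s=\bs{\theta}Z^{\bs{\theta},\bs{\Psi},n}_s\,\d s+\d W^n_s$ from \eqref{eq:X_m} into the definition \eqref{eq: statistics} of $\hat{\bs{\theta}}^{\bs{\Psi},m}$ gives $\hat{\bs{\theta}}^{\bs{\Psi},m}=(\hat{\theta}_0 G_0+\bs{\theta}H_m+S_m)(G_0+H_m)^{-1}$, hence the key identity $\hat{\bs{\theta}}^{\bs{\Psi},m}-\bs{\theta}=\big((\hat{\theta}_0-\bs{\theta})G_0+S_m\big)V^{\bs{\theta},\bs{\Psi},m}$.

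Next I would evaluate the weighted norm $\|\hat{\bs{\theta}}^{\bs{\Psi},m}-\bs{\theta}\|^2_{G^{\bs{\theta},\bs{\Psi},m}}=\tr\big((\hat{\bs{\theta}}^{\bs{\Psi},m}-\bs{\theta})G^{\bs{\theta},\bs{\Psi},m}(\hat{\bs{\theta}}^{\bs{\Psi},m}-\bs{\theta})^\top\big)$. Plugging in the identity above and using $G^{\bs{\theta},\bs{\Psi},m}=(V^{\bs{\theta},\bs{\Psi},m})^{-1}$, the covariance factors telescope, since $(V^{\bs{\theta},\bs{\Psi},m})G^{\bs{\theta},\bs{\Psi},m}(V^{\bs{\theta},\bs{\Psi},m})=V^{\bs{\theta},\bs{\Psi},m}$, leaving $\|\hat{\bs{\theta}}^{\bs{\Psi},m}-\bs{\theta}\|^2_{G^{\bs{\theta},\bs{\Psi},m}}=\|(\hat{\theta}_0-\bs{\theta})G_0+S_m\|^2_{V^{\bs{\theta},\bs{\Psi},m}}$. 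By the triangle inequality for $\|\cdot\|_{V^{\bs{\theta},\bs{\Psi},m}}$ and $(a+b)^2\le 2a^2+2b^2$, this is at most $2\|(\hat{\theta}_0-\bs{\theta})G_0\|^2_{V^{\bs{\theta},\bs{\Psi},m}}+2\|S_m\|^2_{V^{\bs{\theta},\bs{\Psi},m}}$. For the initialisation term, $H_m\succeq 0$ yields $0\preceq G_0 V^{\bs{\theta},\bs{\Psi},m}G_0=G_0(G_0+H_m)^{-1}G_0\preceq G_0$, so $\|(\hat{\theta}_0-\bs{\theta})G_0\|^2_{V^{\bs{\theta},\bs{\Psi},m}}\le \tr\big((\hat{\theta}_0-\bs{\theta})G_0(\hat{\theta}_0-\bs{\theta})^\top\big)\le \lambda_{\max}(V_0^{-1})\,|\hat{\theta}_0-\bs{\theta}|^2$, which is bounded by a deterministic constant because $\bs{\theta}\in\Theta$ and $\Theta$ is bounded by (H.\ref{assum:bound}). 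For the second term, Lemma \ref{Lemma: square loss} (whose integral expression is precisely $\|S_m\|^2_{V^{\bs{\theta},\bs{\Psi},m}}$ in the notation above) gives, with probability at least $1-\delta$, $\|S_m\|^2_{V^{\bs{\theta},\bs{\Psi},m}}\le d\ln(\det G^{\bs{\theta},\bs{\Psi},m}\cdot\det V_0)+2\ln(1/\delta)$.

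Finally, combining these estimates on the event of probability at least $1-\delta$ and using $\|\hat{\bs{\theta}}^{\bs{\Psi},m}-\bs{\theta}\|^2_{G^{\bs{\theta},\bs{\Psi},m}}\ge \lambda_{\min}(G^{\bs{\theta},\bs{\Psi},m})\,|\hat{\bs{\theta}}^{\bs{\Psi},m}-\bs{\theta}|^2$ (valid since $\tr(AMA^\top)\ge \lambda_{\min}(M)|A|^2$ for $M\in\sS^{d+p}_+$ and $A\in\sR^{d\times(d+p)}$) yields the asserted bound for a suitable constant $C$; the additive constant $1$ is harmless since $G^{\bs{\theta},\bs{\Psi},m}\succeq V_0^{-1}$ forces $\det G^{\bs{\theta},\bs{\Psi},m}\cdot\det V_0\ge 1$, so the logarithm is nonnegative. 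I do not anticipate a genuine obstacle; the only steps requiring care are the telescoping identity for the $V^{\bs{\theta},\bs{\Psi},m}$-weighted norm and the operator inequality $G_0(G_0+H_m)^{-1}G_0\preceq G_0$, which is precisely what lets an arbitrary (possibly far-off) initialisation $\hat{\theta}_0$ contribute only an $\cO(1)$ term to the error bound.
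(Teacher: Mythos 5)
Your proposal is correct and follows essentially the same route as the paper's proof: the same explicit identity $\hat{\bs{\theta}}^{\bs{\Psi},m}-\bs{\theta}=\big((\hat{\theta}_0-\bs{\theta})V_0^{-1}+S_m\big)V^{\bs{\theta},\bs{\Psi},m}$, the same reduction of the $G^{\bs{\theta},\bs{\Psi},m}$-weighted error to $\|(\hat{\theta}_0-\bs{\theta})V_0^{-1}+S_m\|^2_{V^{\bs{\theta},\bs{\Psi},m}}$, an equivalent operator inequality to absorb the initialisation term (the paper phrases it as $V_0-V^{\bs{\theta},\bs{\Psi},m}\succeq 0$, hence $\|\cdot\|_{V^{\bs{\theta},\bs{\Psi},m}}\le\|\cdot\|_{V_0}$), and the same appeal to Lemma \ref{Lemma: square loss} and to $\lambda_{\min}(M)|A|^2\le C\,\tr(AMA^\top)$.
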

 
  \begin{proof}
By \eqref{eq:X_m} and
\eqref{eq: statistics}, 
 \begin{align*}
 \hat{\bs{\theta}}^{\bs{\Psi}, m} - \bs \theta 
 &= \Bigg(\hat{\theta}_0 V_0^{-1} +  \sum_{n=1}^m 
  \bigg(
 \int_0^T Z^{\bs{\theta}, \bs{\Psi}, n}_t  (\d X^{{\theta}, \bs{\Psi}^{\theta }, n}_t)^\top
 \bigg)^\top \Bigg)\big(V^{\bs{\theta}, \bs{\Psi}, m} \big)-\bs \theta \\
& 
 = \Bigg(\hat{\theta}_0 V_0^{-1} +  \sum_{n=1}^m 
  \bigg(
 \int_0^T Z^{\bs{\theta}, \bs{\Psi}, n}_t  (
 \bs \theta Z^{\bs{\theta}, \bs{\Psi}, n}_t \d t + \d W^n_t 
 )^\top
 \bigg)^\top \Bigg)\big(G^{\bs{\theta}, \bs{\Psi}, m} \big)^{-1} -\bs \theta \\
 & =   \Bigg(\hat{\theta}_0 V_0^{-1} + \bs\theta \big(G^{\bs{\theta}, \bs{\Psi}, m} - V_0^{-1} \big) +  \sum_{n=1}^m
   \bigg( \int_0^T 
 Z^{\bs{\theta}, \bs{\Psi}, n}_t
 (\d W^n_t)^\top
 \bigg)^\top
  \Bigg)\big(G^{\bs{\theta}, \bs{\Psi}, m} \big)^{-1} -\bs \theta
 \\
 & =   \bigg(( \hat{\theta}_0 - \bs\theta) V_0^{-1}  +  \sum_{n=1}^m 
   \bigg( \int_0^T 
 Z^{\bs{\theta}, \bs{\Psi}, n}_t
 (\d W^n_t)^\top
 \bigg)^\top
  \bigg)\big(G^{\bs{\theta}, \bs{\Psi}, m} \big)^{-1}.
 \end{align*}
  Right multiplying the above identity by $G^{\bs{\theta}, \bs{\Psi}, m}(\hat{\bs{\theta}}^{\bs{\Psi}, m} -\bs \theta )^\top$, taking the trace 
  and using the fact that  $V^{\bs{\theta }, \bs{\Psi}, m} = (G^{\bs{\theta }, \bs{\Psi}, m})^{-1} $ give us that
 \begin{align*}
  \|  \hat{\bs{\theta}}^{\bs{\Psi} , m} - \bs\theta 
\|^2_{G^{\bs{\theta}, \bs{\Psi}, m}}
 &= \bigg\| (\hat{\theta}_0 - \bs\theta) V_0^{-1}  +  \sum_{n=1}^m 
   \bigg( \int_0^T 
 Z^{\bs{\theta}, \bs{\Psi}, n}_t
 (\d W^n_t)^\top
 \bigg)^\top \bigg\|^2_{V^{\bs{\theta}, \bs{\Psi}, m}} \\
 & \leq  
 \bigg(
 \big\| (\hat{\theta}_0 - \bs\theta) V_0^{-1} \big\| _{V^{\bs{\theta}, \bs{\Psi}, m}}  + \bigg\| 
 \sum_{n=1}^m 
   \bigg( \int_0^T 
 Z^{\bs{\theta}, \bs{\Psi}, n}_t
 (\d W^n_t)^\top
 \bigg)^\top \bigg\|_{V^{\bs{\theta}, \bs{\Psi}, m}}
 \bigg)^2,
 \end{align*}  
  where the last inequality follows from the fact that 
for all  $V\in \sS^{d+p}_+$,
  $S\mapsto \|S\|_{V}=\sqrt{\tr(SVS^\top)}$
  is a norm  on $\sR^{d\t (d+p)}$.
  By \eqref{eq: statistics}, 
  $ (V^{\bs{\theta }, \bs{\Psi}, m})^{-1}- 
  V_0^{-1}$ is symmetric positive semidefinite, 
  and hence 
    $  
  V_0-(V^{\bs{\theta }, \bs{\Psi}, m})$ is symmetric positive semidefinite. 
  Consequently,   for all $S\in \sR^{d\t (d+p)}$, 
  $\| S \| _{V^{\bs{\theta}, \bs{\Psi}, m}} \le \|S\|_{V_0}$.
The desired estimate then follows from {the boundedness of $\bs{\theta}$ (cf.~(H.\ref{assum:bound})), }
Lemma \ref{Lemma: square loss} and 
the fact that there exists $C\ge 0$ such that
$  \lambda_{\min}(V)|S|^2\le C\| S
\|^2_{V}
$ for all $S\in \sR^{d\t (d+p)}$ and $V\in \sS^{d+p}_+$.
  \end{proof}
 

 We then focus on  learning algorithms 
 of the class
 $\cup_{L\ge 0}\cU_L$
and obtain an upper bound of
$\det G^{\bs{\theta}, \bs{\Psi}, m}$.
 By extending
\cite[Corollary 4.1]{djellout2004transportation}
to SDEs with random coefficients,
 we 
 prove 
 Lipschitz functionals of 
  state processes $X^{\bs{\theta }, \bs{\Psi}, m}$ 
are conditional sub-Gaussian uniformly with respect to $m$.

{
\begin{Proposition}
\label{prop:X_subgaussian}
Suppose  (H.\ref{assum:bound}) holds.  
Let $x_0\in \sR^d$,
$T, L\ge 0$, 
$\bs{\Psi}$ be a   learning algorithm 
in  $\cU_L$  defined as in \eqref{eq:lipschitz_algorithm},
and 
for each $m\in \sN$,
let 
$X^{\bs{\theta},\bs{\Psi}, m}$
be defined in 
\eqref{eq:X_m}.
 Then there exists a constant $C\ge 0$, depending only on $x_0, T$ and $L$, such that  for all  
 $m\in \sN$ and  
$\phi: C([0,T];\sR^d)\to \sR$
with  
$\sup_{\rho_1\not=\rho_2}\f{|\phi(\rho_1)-\phi(\rho_2)|}{\|\rho_1-\rho_2\|_\infty}\le 1$ and $|\phi(0)|\le 1$,
we have $\|\phi(X^{\bs{\theta},\bs{\Psi}, m})\|_{2,\cF_{m-1}}\le C$,
where
 $\cF_{m-1}\coloneqq \sigma\{\bs{\theta}, W^n_t\mid t\in [0,T], n=1,\ldots, m-1\}\vee \cN$
 with $\cN$ being the $\sigma$-algebra generated by 
$\sP$-null sets.

\end{Proposition}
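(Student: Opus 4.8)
The plan is to condition on $\cF_{m-1}$ and apply a transportation–information inequality to the conditional path‑space law of $X^{\bs{\theta},\bs{\Psi},m}$. The first step is to observe that $\cG^{\bs{\Psi}}_{m-1}\subseteq\cF_{m-1}$: an easy induction using \eqref{eq:X_m} shows that $X^{\bs{\theta},\bs{\Psi},n}$, being the strong solution of an SDE driven by $W^n$ whose drift depends only on $\bs{\theta}$ and on $\Psi_n$ (itself $\cG^{\bs{\Psi}}_{n-1}$–measurable), is $\big(\sigma(\bs{\theta},W^1,\dots,W^n)\vee\cN\big)$–measurable. Hence conditioning on $\cF_{m-1}$ freezes both $\bs{\theta}=\theta\in\Theta$ and the feedback map $\Psi_m(\omega,\cdot)=\psi\in\cV_L$ to deterministic values, whereas by the product structure \eqref{eq:space} the driver $W^m$ is independent of $\cF_{m-1}$ and remains, under a regular conditional probability $\sP(\cdot\mid\cF_{m-1})(\omega)$, a standard $d$–dimensional Brownian motion. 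Consequently, for $\sP$–a.e.\ $\omega$, the $\cF_{m-1}$–conditional law $\mu^\omega$ of $X^{\bs{\theta},\bs{\Psi},m}$ equals the law on $C([0,T];\sR^d)$ of $\d X_t=b^\omega(t,X_t)\,\d t+\d W_t$, $X_0=x_0$, with $b^\omega(t,x)=Ax+B\psi(t,x)$; crucially, $b^\omega$ is globally Lipschitz in $x$ with constant $|A|+L|B|\le C(\Theta,L)$ and of linear growth, both uniformly in $\omega$ and $m$ by (H.\ref{assum:bound}) and $\psi\in\cV_L$.

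The second step is to invoke the transportation–information inequality for diffusions (the time‑inhomogeneous, additive‑noise case of \cite[Corollary 4.1]{djellout2004transportation}): $\mu^\omega$ satisfies a $T_2$ (Talagrand) inequality on $\big(C([0,T];\sR^d),\|\cdot\|_\infty\big)$ with a constant depending only on $T$ and the one–sided Lipschitz bound $|A|+L|B|$ of $b^\omega$, hence only on $T,L$ and $\Theta$, and in particular uniform in $\omega$ and $m$. Passing to $T_1$ and using the Bobkov--G\"otze characterisation yields Gaussian concentration of the conditional law: for every $\phi$ as in the statement and every $\gamma\in\sR$,
\[
\sE\big[\exp\big(\gamma(\phi(X^{\bs{\theta},\bs{\Psi},m})-\sE[\phi(X^{\bs{\theta},\bs{\Psi},m})\mid\cF_{m-1}])\big)\,\big|\,\cF_{m-1}\big]\le\exp\big(C\gamma^2/2\big),\qquad\sP\textnormal{-a.s.},
\]
which, by Proposition \ref{prop:sub_gaussian_exponential}(1), gives $\big\|\phi(X^{\bs{\theta},\bs{\Psi},m})-\sE[\phi(X^{\bs{\theta},\bs{\Psi},m})\mid\cF_{m-1}]\big\|_{2,\cF_{m-1}}\le C'$ with $C'$ independent of $m$.

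The third step upgrades this centred bound to the full $(2,\cF_{m-1})$–Orlicz–norm bound. Since $\phi$ is $1$–Lipschitz in $\|\cdot\|_\infty$ with $|\phi(0)|\le1$, we have $|\phi(X^{\bs{\theta},\bs{\Psi},m})|\le 1+\sup_{t\in[0,T]}|X^{\bs{\theta},\bs{\Psi},m}_t|$, and a pathwise Gronwall estimate on \eqref{eq:X_m} (using the linear growth of $b^\omega$ and (H.\ref{assum:bound}), together with $\sE[\sup_{t\le T}|W^m_t|\mid\cF_{m-1}]=\sE[\sup_{t\le T}|W^m_t|]<\infty$) yields $\sE[\sup_{t\in[0,T]}|X^{\bs{\theta},\bs{\Psi},m}_t|\mid\cF_{m-1}]\le C(x_0,T,L)$ $\sP$–a.s., uniformly in $m$; hence $|\sE[\phi(X^{\bs{\theta},\bs{\Psi},m})\mid\cF_{m-1}]|\le C$ a.s. By \eqref{eq:bounded} its $(2,\cF_{m-1})$–Orlicz norm is at most $(\ln2)^{-1/2}C$, and the triangle inequality \eqref{eq:triangle} combines the two contributions to give $\|\phi(X^{\bs{\theta},\bs{\Psi},m})\|_{2,\cF_{m-1}}\le C$ with $C$ depending only on $x_0,T,L$ (and $\Theta$), as claimed.

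The main obstacle I anticipate is making the first step fully rigorous: carrying out the disintegration so that, conditionally on $\cF_{m-1}$, the random‑coefficient SDE becomes a genuine deterministic‑coefficient SDE (identifying $\mu^\omega$ via the measurable strong‑solution map applied to the frozen data and the Wiener‑distributed $W^m$), and verifying that the transportation inequality of \cite{djellout2004transportation} — stated there for autonomous drift — carries over to time‑dependent drift with a constant that is \emph{genuinely uniform} over the frozen parameters $\theta\in\Theta$ and $\psi\in\cV_L$. This uniformity is exactly what makes the bound independent of the episode index $m$, which is the entire point of the proposition and the feature that makes it usable in Theorem \ref{Theorem: Information collapse}.
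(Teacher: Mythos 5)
Your proposal is correct and follows essentially the same route as the paper's proof: freeze the drift conditionally on $\cF_{m-1}$ (the paper makes this rigorous via the Doob--Dynkin lemma and the decomposition $\om=(\theta,\ul{\om},\ol{\om})$ together with the freezing lemma \cite[Theorem 13, p.~76]{krylov2002introduction}), apply \cite[Corollary 4.1]{djellout2004transportation} to the resulting deterministic-coefficient SDE with constants uniform in the frozen data, and then remove the centering via a conditional second-moment estimate and Lemma \ref{lemma:norm_property}. The obstacle you flag at the end is precisely the disintegration step the paper carries out, so no gap remains.
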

 
\begin{proof}
Throughout this proof, let $m\in \sN$ 
and the function $\phi:C([0,T]; \sR^d)\to \sR$
be fixed, and $C$ 
be a generic constant 
 depending only on $x_0,T,L$.

By \eqref{eq:X_m}, 
$X^{\bs{\theta}, \bs{\Psi}, m}$ satisfies the dynamics 
\bb\l{eq:random_beta_m}
\d X_t =
b_m(\cdot, t, X_t)
\,\d t+  \d W^m_t, \q t\in [0,T],
\q X_0=x_0,
\ee
with $b_m(\om,t,x)\coloneqq 
\bs{A}(\omega) x+\bs{{B}}(\omega)\Psi_m(\om, t, x)$
for all $(\om,t,x)\in \Om \t [0,T]\t \sR^d$.
The boundedness of   $\Theta$
and  $\Psi_m(\omega, \cdot) \in \cV_L$ give us that 
$|b_m(\om, t, 0)| \leq C$ and $|b_m(\om, t, x) - b_m(\om, t, x') | \leq C |x-x'|$.
The measurability of $\Psi_m$ implies that 
$b_m$ is $( \cF_{m-1} \otimes \cB([0,T]) \otimes \cB(\sR^d))/ \cB(\sR^d)$-measurable.
By the definition of $\cF_{m-1}$ 
and the Doob--Dynkin lemma, there exists 
a Borel measurable
 function $\mathfrak{b}_m:  \Theta \t C([0,T] ; \sR^d)^{ {m-1}} \times [0,T] \t \sR^d \to \sR^d$ such that $b_m(\omega,  t, x) 
=\mathfrak{b}_m (\bs{\theta}(\omega), W^1(\omega),\ldots, W^{m-1}(\omega),  t, x) 
 $ for all $(t,x) \in [0,T] \times \sR^d$ and $\sP$-almost all $\omega \in \Om$.

The definition  of
$(\Om, \cF, \sP) $
in \eqref{eq:space}
and $W^m$ is the canonical process on $\Om^{W^m}=C([0,T];\sR^d)$ (see Section \ref{sec:framework})
imply that  all $\om\in \Om$ can be uniquely decomposed into 
$\om= (\theta, \ul{\omega},  \ol{\omega})$ with $\theta = \bs{\theta}(\omega) \in \Theta$,
$\ul{\omega}=(W^n(\om))_{n=1}^{m-1}\in  \prod_{n=1}^{m-1} \Omega^{W^n}$ and $\ol{\omega}=(W^n(\om))_{n=m}^{\infty}\in \prod_{n=m}^\infty \Omega^{W^n}$.
Moreover, for each $\ul{\omega}\in  \prod_{n=1}^{m-1} \Omega^{W^n}$, 
let 
${X}^{\theta, \bs{\Psi}, m,\ul{\om}}$
be the unique solution  to the following dynamics
\bb\l{eq:sde_om}
\d X_t =
\mathfrak{b}_m(\theta, \ul{\omega}, t, X_t)
\,\d t+  \d W^m_t, \q t\in [0,T],
\q X_0=x_0,
\ee
on the space   $\big(\prod_{m=n}^\infty \Omega^{W^m}, \bigotimes_{m=n}^\infty \cF^{W^m}, \bigotimes_{m=n}^\infty \sP^{W^m} \big)$.
Applying \cite[Corollary 4.1]{djellout2004transportation}
to the law of ${X}^{ \theta, \bs{\Psi}, m,\ul{\om}}$
shows that there exists $C\ge 0$ such that 
for all 
$\ul{\om}\in \prod_{n=1}^{m-1} \Omega^{W^n}$ and
$\gamma\in \sR$,
\bb\l{eq:underline_om}
\sE^{\bigotimes_{m=n}^\infty \sP^{W^m} }
\big[\exp\big(\gamma (\phi({X}^{\theta, \bs{\Psi}, m,\ul{\om}})
-\sE^{\bigotimes_{m=n}^\infty \sP^{W^m} }[\phi({X}^{ \theta, \bs{\Psi}, m,\ul{\om}})])\big)\big]\le \exp(C\gamma^2).
 \ee
By \eqref{eq:random_beta_m} and the definition of $\mathfrak{b}_m$, ${X}^{\bs{\theta}, \bs{\Psi}, m}((\theta, \ul{\om},\ol{\om}))=
{X}^{ \theta, \bs{\Psi}, m,\ul{\om}}(\ol{\om})$ $\sP$-almost all $\om =  (\theta, \ul{\om},\ol{\om})$.
It then follows from the independence between $\bs{\theta}$, $(W^n)_{n=1}^{m-1}$ and $(W^n)_{n=m}^{\infty}$,
\cite[Theorem 13, p~76]{krylov2002introduction} and 
\eqref{eq:underline_om}  that 
 $$
\sE
\big[\exp\big(\gamma (\phi({X}^{\bs{\theta}, \bs{\Psi}, m})
-\sE[\phi({X}^{\bs{\theta}, \bs{\Psi}, m})\mid \cF_{m-1}
])\big)
\mid \cF_{m-1}
\big]\le \exp(C\gamma^2),
\q \fa \gamma\in \sR.
 $$
By \eqref{eq:bounded}, \eqref{eq:triangle} and \eqref{eq:subGaussian},
$\|\phi({X}^{\bs{\theta}, \bs{\Psi}, m})
\|_{2,\cF_{m-1}}
\le 
C\big(1+\sE[  |\phi({X}^{\bs{\theta}, \bs{\Psi}, m})|\mid \cF_{m-1}]\big)
$.
Standard moment estimate of \eqref{eq:random_beta_m}
shows that 
$\sE[\sup_{t\in [0,T]}|{X}^{\bs{\theta}, \bs{\Psi}, m}_t|^2\mid \cF_{m-1}]\le C$,
which along the growth condition of $\phi$ implies the desired sub-Gaussianity of $\phi({X}^{\bs{\theta}, \bs{\Psi}, m})$. 
\end{proof}
}

\begin{proof}[Proof of Theorem \ref{Theorem: Information collapse}]
For each $m\in \sN\cup\{0\}$,  let  $\cF_m \coloneqq \sigma\{\bs{\theta}, W^n_t \mid t \in [0,T], n =1, \ldots, m\} \vee \cN$, with $\cN$ being the $\sigma$-algebra generated by 
$\sP$-null sets,
and let  $(\hat{\bs{\theta}}^{\bs{\Psi}, m}, 
 V^{\bs{\theta}, \bs{\Psi}, m})$ 
be defined in   \eqref{eq: statistics},
$G^{\bs{\theta}, \bs{\Psi}, m} = (V^{\bs{\theta}, \bs{\Psi}, m})^{-1}$,
and let 
 $C$ 
be a generic constant independent of $m$.

Applying Proposition \ref{prop:X_subgaussian} with 
$C([0,T];\sR^d)\ni \rho\mapsto 
\phi(\rho)\coloneqq \frac{1}{\sqrt{T}}(\int_0^T |\rho_t|^2\, \d t)^{\frac{1}{2}}
\in 
\sR$
yields that
$\|(\int_0^T X^{\bs{\theta},\bs{\Psi}, m}_t\,\d t)^{\frac{1}{2}}\|_{2,\cF_{m-1}}\le C$. 
The fact that $\bs{\Psi}\in \cup_{L\ge 0} \cU_L$ implies that
there exists $L\ge 0$ such that 
$|\Psi_m(\om,t,x)|\le  L(|1+|x|)$
for all $m\in \sN$ and  $(\om,t,x)\in \Om\t [0,T]\t \sR^d$, 
which along with Proposition 
\ref{prop:subguassian_subexponential} implies that 
$$
\bigg\|\int_0^T Z^{\bs{\theta}, \bs{\Psi}, m}_t
(Z^{\bs{\theta}, \bs{\Psi}, m}_t)^\top\, \d t
\bigg\|_{1,\cF_{m-1}}
=
\bigg\|\int_0^T
\begin{psmallmatrix}
X^{\bs{\theta}, \bs{\Psi}, m}_t \\ \Psi_m\big(\cdot, t, X^{\bs{\theta}, \bs{\Psi}, m}_t\big)
 \end{psmallmatrix}
\begin{psmallmatrix}
X^{\bs{\theta}, \bs{\Psi}, m}_t \\ \Psi_m\big(\cdot, t, X^{\bs{\theta}, \bs{\Psi}, m}_t\big)
 \end{psmallmatrix}^\top\, \d t
\bigg \|_{1,\cF_{m-1}}\le C,
 $$
with $Z^{\bs{\theta}, \bs{\Psi}, m}$ defined as in \eqref{eq:X_m}.
Hence, by \eqref{eq:centering} and 
Proposition \ref{prop: concentration inequality},
there exists a constant $C\ge 0$ such that 
 for all $m\in \sN$ and $\delta\in (0,1)$,
 with probability  at least $1-\delta$, 
 \begin{align}
 \label{eq:concentration_Z}
 \begin{split}
&\bigg|\sum_{n=1}^m 
\bigg(
\int_0^T Z^{\bs \theta, \bs{\Psi}, n}_t
(Z^{\bs \theta, \bs{\Psi}, n}_t)^\top\, \d t
-
\sE \Big[\int_0^T Z^{\bs \theta, \bs{\Psi}, n}_t
(Z^{\bs \theta, \bs{\Psi}, n}_t)^\top\, \d t
\mid \cF_{n-1}\Big]
\bigg)
 \bigg|
 \\
 &  \le  m \max \bigg( \sqrt{\frac{\ln(2/\delta)}{Cm} },
\frac{\ln(2/\delta)}{Cm} \bigg)
\le C (m+ \ln (\tfrac{1}{\delta} )).
\end{split}
\end{align}
As shown in the proof of Proposition \ref{prop:X_subgaussian},
$\sE[\sup_{t\in [0,T]}|{X}^{\bs{\theta}, \bs{\Psi}, m}_t|^2\mid \cF_{m-1}]\le C$, which along with the fact that 
$\Psi_m(\om,\cdot)\in \cV_L$ for all $\om\in \Om$ implies that 
$\big|\sum_{n=1}^m\sE \big[\int_0^T Z^{\bs \theta, \bs{\Psi}, n}_t
(Z^{\bs \theta, \bs{\Psi}, n}_t)^\top\, \d t
\mid \cF_{n-1}\big]\big|\le Cm$.
Consequently, by  \eqref{eq: statistics} and the fact that $G^{\bs{\theta}, \bs{\Psi}, m} = (V^{\bs{\theta}, \bs{\Psi}, m})^{-1}$,
there exists $C\ge 0$ such that 
for all 
 $m\in \sN$ and $\delta>0$,
$
\sP\big(|G^{\bs{\theta}, \bs{\Psi}, m}|
\le C(m+\ln (\tfrac{1}{\delta})\big)>1-\delta$.
Then by Proposition \ref{prop: bound hat theta} 
 and the inequality $\det (G^{\bs \theta,\bs{\Psi}, m}) \le C|G^{\bs{\theta}, \bs{\Psi}, m}|$,  
we have for all 
 $m\in \sN$ and $\delta>0$,
$$
\sP\big(\lambda_{\min}({G^{\bs \theta,\bs{\Psi}, m}}) | \hat{\bs{\theta}}^{\bs{\Psi}, m} - \bs \theta |^2
\le C  (1+\ln m+\ln (\tfrac{1}{\delta})\big)>1-2\delta.
$$
{The desired estimate in 
Theorem \ref{Theorem: Information collapse}
with $m\ge 2$
then follows from the fact that $\ln 2 > 0$. }
\end{proof}

\section{Proofs of 
Proposition \ref{prop:construct_psi_e}
and Theorems \ref{thm:performance_gap_smooth}
and \ref{thm:performance_gap_entropy}}
\label{sec: proof of assumptions}

\begin{proof}[Proof of Proposition \ref{prop:construct_psi_e}]
For the if direction, let  $\bA\coloneqq\{a_1,\ldots,a_p\}\subset \operatorname{dom}(h)$ be a collection of linearly independent vector in $\sR^p$, $\psi^e$ be defined as in the statement,
and $(u,v)\in \sR^d\t \sR^p$ be such that 
$u^\top x+v^\top \psi^e(t,x)=0$
for almost every $(t,x)\in [0,T]\t \sR^d$.
By the boundedness of $\bA$ and the definition of $\psi^e$, $u=0$ and  $v\in \sR^p$ is orthogonal to the  space $\operatorname{span}(\bA)$
spanned by $\bA$.
Then the linear dependence of $\{a_i\}_{i=1}^p$ implies $\operatorname{span}(\bA)=\sR^p$ and hence $v=0$.
It is clear that $\psi^e\in \cup_{C\ge 0}\cV_C$,
and 
(H.\ref{Assump: Exploration strategy}\ref{item:exploration_quadratic}) follows directly from 
$\bA\subset \operatorname{dom}(h)$ and the quadratic growth of $f_0$ in (H.\ref{assum:lc}\ref{item:f0R}) ).

For the only if direction, 
observe that  any $\psi^e \in \cup_{C\ge 0} \cV_C$
satisfying (H.\ref{Assump: Exploration strategy}\ref{item:exploration_quadratic}) takes values in $\operatorname{dom}(h)$ a.e.
Hence, if (H.\ref{Assump: Exploration strategy}\ref{item:f0R}) holds, then the  space 
  $\operatorname{span}(\operatorname{dom}(h))$
spanned by $\operatorname{dom}(h)$
must have dimension $p$, since otherwise there exists a non-zero $v\in \sR^p$ orthogonal to $\operatorname{dom}(h)$ such that  for all 
$\psi^e$ satisfying (H.\ref{Assump: Exploration strategy}\ref{item:exploration_quadratic}),
$v^\top \psi^e(t,x)=0$
for almost every $(t,x)\in [0,T]\t \sR^d$.
Reducing $\operatorname{dom}(h)$
to a basis for 
$\operatorname{span}(\operatorname{dom}(h))$
shows that 
$\operatorname{dom}(h)$ contains $p$ linearly independent vectors. 
\end{proof}

\begin{proof}[Proof of Theorem 
\ref{thm:performance_gap_smooth}]
Throughout this proof,
let 
$\theta_0=(A_0,B_0)\in \Theta$
and $\beta>0$ be given constants,
and $J(\cdot;\theta_0):\cH^2_\sF(\Om;\sR^p)\to \sR$ be such that
\begin{align*}
J(\a;\theta_0)\coloneqq 
\sE\bigg[\int_{0}^{T} f(t, X^{\theta_0,\a}_t,   \a_t)\,\d t + g(X^{\theta_0,\a}_{T} )\bigg],
\q \fa \a\in \cH^2_\sF(\Om;\sR^p),
\end{align*}
where $X^{\theta_0,\a}\in\cS^2_\sF(\Om;\sR^d)$ is the strong solution to 
\bb\label{eq:mp_state}
\d X_t=(A_0 X_t+B_0 \a_t)\, \d t+\d W_t, \q t\in [0,T]; \q X_0=x_0.
\ee
We denote by $C\ge 0$ a generic constant  which
depends on $\Theta,\b$
but
is independent of $\theta_0$.


By
the differentiability of $f$ and $g$ and the standard variational analysis (see e.g., \cite{yong1999stochastic,vsivska2020gradient}),
$J(\cdot;\theta_0)$ is Fr\'{e}chet differentiable and its  derivative satisfies for all $\a\in \cH^2_\sF(\Om;\sR^p)$, \begin{align}
\label{eq:nabla_J}
\nabla J(\a;\theta_0)_t\coloneqq 
B_0^\top Y^{\theta_0,\a}_t +(\p_a f)(t, X^{\theta_0,\a}_t,   \a_t), 
\quad \textnormal{$\d \sP\otimes \d t$ a.e.,}
\end{align}
where $(Y^{\theta_0,\a},Z^{\theta_0,\a})\in \cS^2_\sF(\Om;\sR^d)\t \cH^2_\sF(\Om;\sR^{d\t d})$ is the unique solution to 
\bb\label{eq:mp_adjoint}
\d Y_t=-\big(A_0^\top Y_t+(\p_x f)(t, X^{\theta_0,\a}_t,   \a_t)
\big)\, \d t+Z_t\d W_t, \q t\in [0,T]; \q Y_T=(\nabla g)(X^{\theta_0,\a}_T).
\ee
Standard stability analysis of \eqref{eq:mp_state} 
shows that 
$\|X^{\theta_0,\a}-X^{\theta_0,\a'}\|_{\cS^2}\le C\|\a-\a'\|_{\cH^2}$ for all   $\a,\a'\in \cH^2_\sF(\Om;\sR^p)$,
which along with  
the Lipschitz continuity of $\p_x f$ and $\nabla g$ and 
the stability of \eqref{eq:mp_adjoint} in
\cite[Proposition 2.1]{el1997backward} gives 
$\|Y^{\theta_0,\a}-Y^{\theta_0,\a'}\|_{\cS^2 }\le C\|\a-\a'\|_{\cH^2}$.
Hence, 
by \eqref{eq:nabla_J} and 
the Lipschitz continuity of $\p_a f$, 
$\nabla J(\cdot;\theta_0):\cH^2_\sF(\Om;\sR^p)\to  \cH^2_\sF(\Om;\sR^p)$
is Lipschitz continuous.
Hence, there exists $C\ge 0$
such that $\a,\a'\in \cH^2_\sF(\Om;\sR^p)$,
\begin{align}
\label{eq:quadratic_bound}
\begin{split}
    &J(\a';\theta_0)-J(\a;\theta_0)-\langle \nabla J(\a;\theta_0),\a'-\a \rangle_{\cH^2}
\\
&=\int_0^1
\langle \nabla J(\a+s(\a'-\a);\theta_0)-\nabla J(\a;\theta_0),\a'-\a \rangle_{\cH^2}\,\d s
\le C\|\a'-a\|^2_{\cH^2},    
\end{split}
\end{align}
where $\langle \cdot, \cdot\rangle_{\cH^2}$
denotes the inner product on 
$\cH^2_\sF(\Om;\sR^p)$.

For each $\theta\in \sB_\b(\theta_0)$,
let 
$X^{\theta_0, \psi_{\theta}}\in \cS^2_{\sF}(\Om; \sR^d)$
be the  state process associated with $\theta_0$ and  $\psi_{\theta}$,
and let 
$\a^{\theta_0,\psi_{\theta}}\in \cH^2_{\sF}(\Om; \sR^p)$
be such that 
$\a^{\theta_0,\psi_{\theta}}_t=\psi_\theta(t,X^{\theta_0, \psi_{\theta}}_t)$
for $\d\sP\otimes \d t$ a.s.
For any given $\theta\in \sB_\b(\theta_0)$,
substituting 
$\a'=\a^{\theta_0,\psi_{\theta}}$
and $\a=\a^{\theta_0,\psi_{\theta_0}}$ into
\eqref{eq:quadratic_bound} leads to
$$
J(\a^{\theta_0,\psi_{\theta}};\theta_0)-J(\a^{\theta_0,\psi_{\theta_0}};\theta_0)-\langle \nabla J(\a^{\theta_0,\psi_{\theta_0}};\theta_0),\a^{\theta_0,\psi_{\theta}}-\a^{\theta_0,\psi_{\theta_0}} \rangle_{\cH^2}
\le C\|\a^{\theta_0,\psi_{\theta}}-\a^{\theta_0,\psi_{\theta_0}}\|^2_{\cH^2}.
$$
Observe from    \eqref{eq:loss_feedback} that 
$J(\psi_\theta;\theta_0)=
J(\a^{\theta_0,\psi_{\theta}};\theta_0)$.
Moreover, the definition of $\psi_{\theta_0}$ implies that 
$\a^{\theta_0,\psi_{\theta_0}}$ minimises the functional 
$J(\cdot;\theta_0):\cH^2_\sF(\Om;\sR^p)\to \sR$, which along with the differentiablity of $J(\cdot;\theta_0)$ shows that 
$\nabla J(\a^{\theta_0,\psi_{\theta_0}};\theta_0)=0$.
Hence, we have
$$
J(\a^{\theta_0,\psi_{\theta}};\theta_0)-J(\a^{\theta_0,\psi_{\theta_0}};\theta_0)
\le C\|\a^{\theta_0,\psi_{\theta}}-\a^{\theta_0,\psi_{\theta_0}}\|^2_{\cH^2}.
$$
By Proposition \ref{prop:existence_fb}, 
there exists $C\ge 0$ such that 
 for all $\theta\in \sB_\b(\theta_0)$
 and $(t,x)\in [0,T]\t \sR^d$,  $\psi_{\theta}\in \cV_C$
 and 
$|\psi_\theta(t,x)-\psi_{\theta_0}(t,x)|\le C(1+|x|)|\theta-\theta_0|$.
Hence 
standard stability analysis of SDEs shows that 
$\|X^{\theta_0, \psi_{\theta}}
-X^{\theta_0, \psi_{\theta_0}}
\|_{\cS^2}
+\|\a^{\theta_0,\psi_{\theta}}-\a^{\theta_0,\psi_{\theta_0}}\|_{\cH^2}
\le C|\theta-\theta_0|$
and finishes the proof.
\end{proof}

 Before proving Theorem \ref{thm:performance_gap_entropy},
 we first establish two technical lemmas for the function
 \bb\label{eq:FK}
  \sR^{d\t (d+p)}\t 
 [0,T]\t \sR^d
 \ni (\theta,t,x)\mapsto 
 Y^{\theta, t,x}_t\in \sR,
 \ee
 where for each $\theta=(A,B)$ and $(t,x)$, 
$(X^{\theta,t,x},Y^{\theta,t,x},Z^{\theta,t,x})\in \cS^2_\sF(\Om\t [t,T];\sR^d)\t \cS^2_\sF(\Om\t [t,T];\sR)\t \cH^2_\sF(\Om\t [t,T];\sR^{1\t d})$ be the unique solution to the following forward-backward stochastic differential equation (FBSDE): for all $s\in [t,T]$,
\begin{subequations}\l{eq:fbsde_general}
\begin{alignat}{2}
\mathrm{d}X_s&=
AX_s\, \d s
+\, \d W_s,
&&
\q X_t=x,
\l{eq:lc_fbsde_fwd}
\\
\mathrm{d}Y_s&
=f(s, X_s, B^\top Z^\top_s)\,\d s
+Z_s\, \d W_s,
&&
\q Y_T= g(X_T)
\end{alignat}
\end{subequations}
with  some Lipschitz continuous functions $f$ and $g$
specified as follows.
For notational simplicity, we shall denote by 
 $\cS^q(E)$ (resp.~$\cH^q(E)$)
 the space $\cS^q_\sF(\Om\t [t,T]; E)$ 
(resp.~$\cH^q_\sF(\Om\t [t,T]; E)$) for any $q\ge 2$,
$t\in[0,T]$  and Euclidean space $E$.

\begin{Lemma}\label{lemma:bsde_estimate_dx}
Let $f:[0,T]\t \sR^d\t  \sR^p\to \sR$ and $g:\sR^d\to \sR$ be continuous functions such that for all $t\in [0,T]$, $f(t,\cdot,\cdot)\in C^4( \sR^d\t  \sR^p)$
and $g\in C^4(\sR^d)$
with  bounded derivatives uniformly in $t$.
Let 
$Y: \sR^{d\t (d+p)}\t 
 [0,T]\t \sR^d
\mapsto \sR$
be defined in 
\eqref{eq:FK}.
Then for all 
bounded open set  $\cK\subset \sR^{d\t (d+p)}$,
there exists a constant $C>0$ such that 
for all $\theta\in \cK$,  the function 
$[0,T]\t \sR^d\ni(t,x)\mapsto Y^{\theta, t,x}_t\in \sR$ is of the class $ C^{1,3}([0,T]\t \sR^d)$,
and satisfies 
$|\p_{x_i} Y^{\theta, t,x}_t|+
|\p_{x_ix_j} Y^{\theta, t,x}_t|+
|\p_{x_ix_jx_l} Y^{\theta, t,x}_t|\le C$
for all $i,j,l\in \{1,\ldots, d\}$ and $(\theta,t,x)\in \cK\t  [0,T]\t \sR^d$.

\end{Lemma}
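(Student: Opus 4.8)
The plan is to realise $u^\theta(t,x)\coloneqq Y^{\theta,t,x}_t$ as the initial value of a decoupled FBSDE and to read off its spatial regularity from the linear variational FBSDEs obtained by differentiating \eqref{eq:fbsde_general} once, twice and three times in the initial datum $x$; the time-regularity then follows by identifying $u^\theta$ with the classical solution of the associated semilinear PDE. Fix a bounded open set $\cK\subset\sR^{d\t(d+p)}$ and $\theta=(A,B)\in\cK$. Since $f(t,\cdot,\cdot)$ and $g$ have bounded first derivatives uniformly in $t$, the driver $z\mapsto f(s,X_s,B^\top z^\top)$ is Lipschitz in $z$ uniformly over $\cK$, so \eqref{eq:fbsde_general} is well posed in $\cS^2(\sR^d)\t\cS^2(\sR)\t\cH^2(\sR^{1\t d})$ with the usual a priori estimates. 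The forward equation \eqref{eq:lc_fbsde_fwd} is linear, so $X^{\theta,t,x}_s=e^{A(s-t)}x+\int_t^s e^{A(s-r)}\,\d W_r$, giving $\sE[\sup_{s\in[t,T]}|X^{\theta,t,x}_s|^q]\le C(1+|x|^q)$ and, crucially, $\nabla^k_x X^{\theta,t,x}_s=0$ for $k\ge2$, with $\nabla_{x_i}X^{\theta,t,x}_s=(e^{A(s-t)})_{\cdot i}$ deterministic and bounded uniformly on $\cK$.

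First I would establish differentiability of $x\mapsto(X^{\theta,t,x},Y^{\theta,t,x},Z^{\theta,t,x})$ in $\cS^2\t\cS^2\t\cH^2$. By the classical results for decoupled FBSDEs with smooth coefficients (see e.g.\ Pardoux--Peng, El Karoui--Peng--Quenez \cite{el1997backward}, or Zhang's monograph), as $f(t,\cdot,\cdot),g\in C^1$ with bounded derivatives this map is differentiable and, for each direction $e_i$, the derivative $(\nabla_i X,\nabla_i Y,\nabla_i Z)$ solves the linear FBSDE
\begin{align*}
\d(\nabla_i X_s)&=A\nabla_i X_s\,\d s,\quad \nabla_i X_t=e_i;\\
\d(\nabla_i Y_s)&=\big(\langle(\p_x f)_s,\nabla_i X_s\rangle+\langle(\p_a f)_s,B^\top(\nabla_i Z_s)^\top\rangle\big)\,\d s+\nabla_i Z_s\,\d W_s,\quad \nabla_i Y_T=\langle(\nabla g)(X^{\theta,t,x}_T),\nabla_i X_T\rangle,
\end{align*}
where $(\p_x f)_s,(\p_a f)_s$ are evaluated along $(s,X^{\theta,t,x}_s,B^\top(Z^{\theta,t,x}_s)^\top)$. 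Since $\nabla_i X$ is deterministic and bounded on $\cK$ and $\p_xf,\p_af,\nabla g$ are bounded, this is a linear BSDE in $(\nabla_i Y,\nabla_i Z)$ with bounded coefficients and bounded terminal datum, so the standard linear a priori estimate gives $\|\nabla_i Y\|_{\cS^2}+\|\nabla_i Z\|_{\cH^2}\le C$ uniformly over $\theta\in\cK$, $(t,x)$ and $i$. In particular $\p_{x_i}u^\theta(t,x)=\nabla_i Y^{\theta,t,x}_t$ exists and $|\p_{x_i}u^\theta(t,x)|\le C$.

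Then I would iterate: differentiating \eqref{eq:fbsde_general} a second and third time (legitimate since $f(t,\cdot,\cdot),g\in C^4$, which supplies the smoothness and integrability needed to control the Taylor remainders in the difference quotients up to order three), the process $(\nabla^k Y,\nabla^k Z)$ for $k=2,3$ solves a linear BSDE with the \emph{same} $Z$-driver $z\mapsto\langle(\p_af)_s,B^\top z^\top\rangle$, perturbed by an inhomogeneous drift which is a finite sum of products of the lower-order processes $\{\nabla^j Z:j<k\}$ (recall $\nabla^j X=0$ for $j\ge2$) against derivatives of $f$ of order $\le k$ evaluated along $(s,X^{\theta,t,x}_s,B^\top(Z^{\theta,t,x}_s)^\top)$, together with a terminal datum built from derivatives of $g$ of order $\le k$ and $\nabla X_T$. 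By the previous step and the uniform boundedness of the derivatives of $f$ and $g$, these perturbations are bounded in $\cH^2$ (resp.\ $L^2$), so again $\|\nabla^k Y\|_{\cS^2}+\|\nabla^k Z\|_{\cH^2}\le C$, whence $|\p^k_x u^\theta(t,x)|\le C$ for $k=1,2,3$ uniformly on $\cK$; joint continuity of $(t,x)\mapsto\p^k_x u^\theta(t,x)$ follows from $\cS^2$-stability of the forward flow and of the linear variational FBSDEs with respect to $(t,x)$. Finally, the above gives $u^\theta\in C^{0,2}$ with bounded spatial derivatives, and applying It\^o's formula to $s\mapsto u^\theta(s,X^{\theta,t,x}_s)$ together with uniqueness of \eqref{eq:fbsde_general} identifies $u^\theta$ as a classical solution of $\p_t u+\tfrac12\Delta u+(Ax)^\top\nabla u+f(t,x,B^\top(\nabla u)^\top)=0$, $u(T,\cdot)=g$; since the right-hand side $-\tfrac12\Delta u^\theta-(Ax)^\top\nabla u^\theta-f(\cdot,\cdot,B^\top(\nabla u^\theta)^\top)$ is continuous in $(t,x)$, $\p_t u^\theta$ is continuous, and by standard interior parabolic (Schauder) regularity $u^\theta\in C^{1,3}([0,T]\t\sR^d)$, with all constants uniform on $\cK$.

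The hard part will be the rigorous justification of differentiating the FBSDE system in $x$ — i.e.\ showing the difference quotients of $(X^{\theta,t,x},Y^{\theta,t,x},Z^{\theta,t,x})$ converge in $\cS^2\t\cS^2\t\cH^2$ to the variational solution, with all estimates uniform in $\theta\in\cK$ — because the driver depends on $Z$, so the $Y$- and $Z$-derivatives are genuinely coupled and one must control $\cH^2$-norms of the remainders arising from Taylor-expanding $f$ and $g$ along $(X^{\theta,t,x},Z^{\theta,t,x})$; this is exactly where the $C^4$-hypothesis is used, and it is the one step where I would lean on the existing FBSDE differentiation theory rather than redo the estimates from scratch.
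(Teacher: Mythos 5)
Your overall strategy is the same as the paper's: differentiate the decoupled FBSDE \eqref{eq:fbsde_general} in the initial datum $x$, observe that the forward flow is affine with deterministic, bounded first derivative and vanishing higher derivatives, and read off uniform bounds on $\p_x^k Y^{\theta,t,x}_t$ from a priori estimates for the resulting linear variational BSDEs, citing the Pardoux--Peng/El Karoui--Peng--Quenez differentiation theory for the justification of the differentiation itself. The paper obtains the $C^{1,3}$ regularity directly by iterating \cite[Theorem 3.2]{pardoux1992backward} rather than via your PDE/Schauder detour, but that is a cosmetic difference.

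There is, however, one concrete gap in your induction. At order $k=2$ the inhomogeneous drift of the variational BSDE contains the term $F^{\theta,t,x}_{zz,s}(\p_x Z^{\theta,t,x}_s)^2$, which is \emph{quadratic} in the first-order process $\p_x Z$. To run the standard $\cS^2\t\cH^2$ a priori estimate for $(\nabla^2 Y,\nabla^2 Z)$ you must control $\sE[(\int_t^T|\p_x Z_s|^2\,\d s)^2]=\|\p_x Z\|_{\cH^4}^4$, and at order $k=3$ you encounter $(\p_x Z)^3$ and $\p_x Z\cdot\p^2_{x}Z$, requiring $\cH^6$ and mixed higher moments. Your first step only establishes $\|\nabla_i Z\|_{\cH^2}\le C$, and the assertion that ``by the previous step \dots these perturbations are bounded in $\cH^2$'' does not follow: an $\cH^2$ bound on $\p_x Z$ gives no control on the $\cH^2$ norm of its square. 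The paper closes exactly this loop by proving the bounds $\|\p_x Y^{\theta,t,x}\|_{\cS^q}+\|\p_x Z^{\theta,t,x}\|_{\cH^q}\le C_q$ for \emph{all} $q\ge 2$ at the first stage (via \cite[Theorem 4.4.4]{zhang2017backward}) and then applying the Cauchy--Schwarz inequality, e.g.\ bounding the quadratic term by $\|\p_x Z^{\theta,t,x}\|_{\cH^{2q}}^{2q}$. Your argument becomes correct once you upgrade each stage of the induction to $L^q$ estimates for every $q\ge 2$; as written, the integrability bookkeeping does not close.
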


\begin{proof}
We shall assume 
 $d=p=1$ for notational simplicity, but the same arguments can be easily extended to a multidimensional setting. 
Throughout this proof, let $\cK$
be a given  bounded open set  of $\sR^{d\t (d+p)}$,
and  $C$ be a generic constant   independent of $(t,x,\theta)\in [0,T]\t \sR^d\t \cK$.

As $f(t,\cdot,\cdot)\in C^4$, $t\in [0,T]$,
and $g\in C^4$ have bounded derivatives, 
 by \cite[Theorem 3.2]{pardoux1992backward}, 
the function 
$[0,T]\t \sR\ni(t,x)\mapsto  Y^{\theta, t,x}_t\in \sR$ is of the class $ C^{1,2}([0,T]\t \sR)$,
while iterating the  arguments there again leads to the third-order differentiability in $x$. 
Hence, it suffices to establish the boundedness of the derivatives. 
By It\^{o}'s formula  and \cite[Lenama 2.5]{pardoux1992backward}, 
the gradient of $x\mapsto  Y^{\theta, t,x}_t$
can be identified as $\p_x Y^{\theta,t,x}_t$
for all $(t,x)\in [0,T]\t \sR$,
where
$(\p_x Y^{\theta, t,x},\p_x Z^{\theta, t,x})\in \cS^2( \sR)\t \cH^2(\sR)$ satisfies 
the following linear BSDE:
\begin{align}\label{eq:bsde_dx}
\begin{split}
\mathrm{d} \p_x Y^{\theta, t,x}_s
&=
\big( F^{\theta,t,x}_{x,s}  \p_x X^{\theta, t, x}_s
+
 F^{\theta,t,x}_{z,s} \p_x   Z^{\theta, t,x}_s
\big)
\,\d s
+ \p_x Z^{\theta, t,x}_s\, \d W_s,
\quad t\in [0,T),
\\
  \p_x Y^{\theta, t,x}_T&= (\nabla g)(X^{\theta, t,x}_T) \p_x X^{\theta, t, x}_T,
  \end{split}
\end{align}
with 
 $\p_x X^{\theta, t, x}_s=\exp(A(s-t))$
 for all $s\in [0,T]$,
$ F^{\theta,t,x}_{x,s}=
(\p_x f)(t,X^{\theta, t,x}_s,B Z^{\theta, t,x}_s)$
and 
$ F^{\theta,t,x}_{z,s}=(\p_z f)(t,X^{\theta, t,x}_s,B Z^{\theta, t,x}_s)B$ 
for all $s\in [t,T]$.
The a-priori estimate \cite[Theorem 4.4.4]{zhang2017backward}
shows that for all $q\ge 2$, there exists $C_q>0$ such that 
\begin{align*}
&\|\p_x   Y^{\theta, t,x}\|^q_{\cS^q}
+
\|\p_x   Z^{\theta, t,x}\|^q_{\cH^q}
\\
&\le C_q\bigg(\sE\bigg[| (\nabla g)(X^{\theta, t, x}_T) \p_x X^{\theta, t,x}_T|^q
+\Big(\int_0^T| F^{\theta,t,x}_{x,s}  \p_x X^{\theta, t, x}_s|\,\d t\Big)^q\bigg]\bigg)
\le C_q,
\end{align*}
due to the boundedness of $\nabla g$ and $\nabla f$,
which subsequently implies that 
$|\p_x Y^{\theta,t,x}_t|\le C$.
Taking the derivative again and using the fact that $\p_x  X^{\theta, t, x}$ is independent of $x$, 
we identify the gradient of $x\mapsto \p_x Y^{\theta, t,x}_t$
with  $\p_{xx} Y^{\theta,t,x}_t$
for all $(t,x)\in [0,T]\t \sR$,
where 
$(\p_{xx} Y^{\theta, t,x},\p_{xx} Z^{\theta, t,x})\in \cS^2( \sR)\t \cH^2(\sR)$ satisfying 
the following BSDE:
for all $ t\in [0,T)$,
\begin{align}\label{eq:bsde_dxx}
\begin{split}
\mathrm{d} \p_{xx} Y^{\theta, t,x}_s
&=
\big( F^{\theta,t,x}_{xx,s}  (\p_x X^{\theta, t, x}_s)^2
+2 F^{\theta,t,x}_{xz,s}  \p_x X^{\theta, t, x}_s\p_x   Z^{\theta, t,x}_s
+ F^{\theta,t,x}_{zz,s}    (\p_{x} Z^{\theta, t,x}_s)^2
\\
&\quad +
 F^{\theta,t,x}_{z,s} \p_{xx}   Z^{\theta, t,x}_s
\big)
\,\d s
+ \p_{xx} Z^{\theta, t,x}_s\, \d W_s,
\\
  \p_{xx} Y^{\theta, t,x}_T&= (\nabla^2 g)(X^{\theta, t,x}_T) (\p_x X^{\theta, t, x}_T)^2,
  \end{split}
\end{align}
with $\nabla^2 g$ being the second-order derivative of $g$, 
$ F^{\theta,t,x}_{xx,s}=
(\p_{xx} f)(t,X^{\theta, t,x}_s,B Z^{\theta, t,x}_s)$,
$ F^{\theta,t,x}_{xz,s}=
(\p_{xz} f)(t,X^{\theta, t,x}_s,B Z^{\theta, t,x}_s)B$
and 
$ F^{\theta,t,x}_{zz,s}=(\p_{zz} f)(t,X^{\theta, t,x}_s,B Z^{\theta, t,x}_s)B^2$.
Then for all $q\ge 2$,
\begin{align*}
&\|\p_{xx}   Y^{\theta, t,x}\|^q_{\cS^q}
+
\|\p_{xx}   Z^{\theta, t,x}\|^q_{\cH^q}
\\
&\le C_q\bigg(\sE\bigg[| (\nabla^2 g)(X^{\theta, t,x}_T) (\p_x X^{\theta, t, x}_T)^2|^q
\\
&\quad 
+\Big(\int_0^T|  F^{\theta,t,x}_{xx,s}  \p_x X^{\theta, t, x}_s
+2 F^{\theta,t,x}_{xz,s}  \p_x X^{\theta, t, x}_s\p_x   Z^{\theta, t,x}_s
+ F^{\theta,t,x}_{zz,s}    (\p_{x} Z^{\theta, t,x}_s)^2
|\,\d t\Big)^q\bigg]\bigg)
\\
&\le 
C_q(1+\|\p_x   Z^{\theta, t,x}\|_{\cH^{2q}}^{2q})\le C_q,
\end{align*}
which follows from the fact that $f,g$ have bounded derivatives and 
the   Cauchy--Schwarz inequality.
This proves that $x\mapsto Y^{\theta, t,x}_t$ has a bounded  second-order derivative.
Iterating the above argument again shows the boundedness of the third-order derivative of 
 $x\mapsto  Y^{\theta, t,x}_t$ and finishes the proof.
\end{proof}

\begin{Lemma}\label{lemma:bsde_estimate_dtheta}
Assume the same setting as in Lemma \ref{lemma:bsde_estimate_dx}.
Then 
for all $t\in [0,T]$,  the function 
$
\sR^{d\t (d+p)}\t  \sR^d 
\ni (\theta,x) \mapsto \p_x Y^{\theta, t,x}_t\in \sR^d$ is of the class $ C^{2}(  \sR^{d\t (d+p)}\t \sR^d )$.
Moreover, 
for all 
bounded open set  $\cK\subset \sR^{d\t (d+p)}$,
there exists a constant $C>0$ such that  
for all $i,j\in \{1,\ldots, d\}$, $(\theta,t)\in \cK\t  [0,T]$ and $x,x'\in\sR^d$,
\begin{enumerate}[(1)]
\item
$|\p_{x\theta_i} Y^{\theta, t,x}_t|+|\p_{xx\theta_i} Y^{\theta, t,x}_t|\le C(1+|x|)$
and
$|\p_{x\theta_i\theta_j} Y^{\theta, t,x}_t|\le C(1+|x|^2)$,
\item
$|\p_{x\theta_i} Y^{\theta, t,x}_t-\p_{x\theta_i} Y^{\theta, t,x'}_t|
+|\p_{xx\theta_i} Y^{\theta, t,x}_t-\p_{xx\theta_i} Y^{\theta, t,x'}_t|\le C(1+|x|+|x'|)|x-x'|$,
and 
$|\p_{x\theta_i\theta_j} Y^{\theta, t,x}_t
-\p_{x\theta_i\theta_j} Y^{\theta, t,x'}_t
|\le C(1+|x|^2+|x'|^2)|x-x'|$.
\end{enumerate}
%
\end{Lemma}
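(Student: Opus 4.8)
The plan is to differentiate the forward--backward system \eqref{eq:fbsde_general} and its first-variation system \eqref{eq:bsde_dx} with respect to the parameter $\theta=(A,B)$, producing a finite hierarchy of \emph{linear} BSDEs (coupled to linear SDEs for the $\theta$-variations of the forward process), and then to apply the $L^q$ a priori estimate for linear BSDEs \cite[Theorem 4.4.4]{zhang2017backward} together with moment bounds for $X^{\theta,t,x}$ to read off the stated polynomial growth and Lipschitz-in-$x$ estimates. As in the proof of Lemma \ref{lemma:bsde_estimate_dx} I would take $d=p=1$ for notational ease, fix a bounded open set $\cK\subset\sR^{d\t(d+p)}$, and let $C,C_q\ge 0$ denote generic constants independent of $(\theta,t,x)\in\cK\t[0,T]\t\sR^d$. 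Since differentiability is a local property and the coefficients depend smoothly on $\theta$, the $C^2$-regularity claim on $\sR^{d\t(d+p)}\t\sR^d$ reduces to the construction of the variational BSDEs on such a $\cK$.

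First I would collect the building blocks, uniformly over $\theta\in\cK$, $t\in[0,T]$: (i) the moment bound $\|X^{\theta,t,x}\|_{\cS^q}\le C_q(1+|x|)$ and the stability bound $\|X^{\theta,t,x}-X^{\theta,t,x'}\|_{\cS^q}\le C_q|x-x'|$, immediate since the drift $A\cdot$ is linear; (ii) the forward variations $\p_x X^{\theta,t,x}_s=e^{A(s-t)}$ (bounded, independent of $x$), $\p_A X^{\theta,t,x}_s=\int_t^s e^{A(s-u)}X^{\theta,t,x}_u\,\d u$, and their higher analogues, all of which solve linear SDEs whose inhomogeneity is a lower-order $X$-variation and which therefore inherit the \emph{linear} growth in $|x|$ of $X^{\theta,t,x}$; moreover $\p_{x}(\p_A X)$, $\p_{xx}(\p_A X)$ are bounded and independent of $x$, while $\p_B X\equiv0$; (iii) the uniform boundedness of $Z^{\theta,t,x}_s$ and of $\p_xY^{\theta,t,x},\p_{xx}Y^{\theta,t,x},\p_{xxx}Y^{\theta,t,x}$: since $Y^{\theta,t,x}_\cdot=u^\theta(\cdot,X^{\theta,t,x}_\cdot)$ with $u^\theta(s,y)=Y^{\theta,s,y}_s\in C^{1,3}$ by Lemma \ref{lemma:bsde_estimate_dx}, It\^o's formula identifies $Z^{\theta,t,x}_s=\p_yu^\theta(s,X^{\theta,t,x}_s)$, and Lemma \ref{lemma:bsde_estimate_dx} bounds $\p_yu^\theta$ together with its $x$-derivatives (the $C$ there being uniform in the initial time). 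In particular the ``control'' argument $B^\top(Z^{\theta,t,x}_s)^\top$ stays in a fixed compact set, so every partial derivative of $f$ and $g$ up to order four, evaluated along the trajectory, is bounded.

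Next I would differentiate. By \cite[Theorem 3.2 and Lemma 2.5]{pardoux1992backward} the solution of \eqref{eq:fbsde_general} is differentiable in $\theta$, so \eqref{eq:bsde_dx} may be differentiated in $\theta_i$, yielding a linear BSDE for $(\p_{x\theta_i}Y,\p_{x\theta_i}Z)$ with the same (bounded) generator coefficient as in \eqref{eq:bsde_dx} and a source assembled from derivatives of $f,g$ of order $\le3$, the bounded processes $\p_xX,\p_xZ,\p_{xx}Y$, and the forward variations $\p_\theta X,\p_{x\theta}X$; the only source factor that is not bounded is a \emph{single} occurrence of $\p_A X$ (or of $\p_\theta Y,\p_\theta Z$, each again linear in $|x|$ by the same bootstrap). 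Hence \cite[Theorem 4.4.4]{zhang2017backward}, applied for every $q\ge2$, gives $\|\p_{x\theta_i}Y^{\theta,t,x}\|_{\cS^q}+\|\p_{x\theta_i}Z^{\theta,t,x}\|_{\cH^q}\le C_q(1+|x|)$, so $|\p_{x\theta_i}Y^{\theta,t,x}_t|\le C(1+|x|)$; differentiating \eqref{eq:bsde_dxx} (equivalently, the $\p_{x\theta_i}Y$-equation) once more, the source again carries at most one $O(|x|)$ factor, so $|\p_{xx\theta_i}Y^{\theta,t,x}_t|\le C(1+|x|)$. Differentiating \eqref{eq:bsde_dx} \emph{twice} in $\theta$ instead produces a source (and terminal condition) containing products of two forward variations each $O(|x|)$, or a single second-order variation $\p_{\theta\theta}X,\p_{\theta\theta}Y$ of order $O(|x|^2)$; the same a priori estimate together with the Cauchy--Schwarz splitting used in the proof of Lemma \ref{lemma:bsde_estimate_dx} then closes the estimate with the quadratic bound $|\p_{x\theta_i\theta_j}Y^{\theta,t,x}_t|\le C(1+|x|^2)$. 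The $C^2$-regularity of $(\theta,x)\mapsto\p_xY^{\theta,t,x}_t$ drops out of this construction, as the successive variational BSDEs are well posed and have jointly continuous data in $(\theta,x)$.

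For the Lipschitz-in-$x$ estimates I would argue directly by differences rather than pass to a fourth derivative (which the statement does not claim to exist): for fixed $\theta,t$ and $x,x'$, each of $\p_{x\theta_i}Y^{\theta,t,x}-\p_{x\theta_i}Y^{\theta,t,x'}$, etc., solves a linear BSDE with the $F^{\theta,t,x}_{z,\cdot}$-type generator and source equal to the difference of the two sources; splitting every such difference into a ``same coefficient, different trajectory'' part and a ``difference of coefficient along a common trajectory'' part, and using (a) local Lipschitz continuity of the derivatives of $f,g$ up to the needed order (available since $f(t,\cdot,\cdot),g\in C^4$ with bounded derivatives), (b) the stability bounds $\|X^{\theta,t,x}-X^{\theta,t,x'}\|_{\cS^q}\le C|x-x'|$ and their consequences for $\p_xX,\p_\theta X,\p_\theta Y,\p_\theta Z$ and for products thereof (with constants polynomial in $|x|,|x'|$), and (c) the growth bounds just proved, the a priori BSDE estimate yields $\|\p_{x\theta_i}Y^{\theta,t,x}-\p_{x\theta_i}Y^{\theta,t,x'}\|_{\cS^q}\le C(1+|x|+|x'|)|x-x'|$, and likewise for $\p_{xx\theta_i}Y$ and, with one extra power from the two $O(|x|)$ variations, for $\p_{x\theta_i\theta_j}Y$. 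I expect the main obstacle to be purely organisational: bookkeeping the numerous source terms of the second- and third-order variational BSDEs and verifying, term by term, that each carries at most one factor ($\theta$ differentiated once) or two factors ($\theta$ differentiated twice) that grow linearly in $|x|$, every other factor --- in particular each derivative of $f$ and $g$ along the trajectory, as well as $\p_xX$ and $Z$ --- being uniformly bounded on $\cK$; once this structure is confirmed, the estimates are routine consequences of the linear-BSDE a priori bounds and H\"older's inequality.
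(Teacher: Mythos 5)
Your proposal is correct and follows essentially the same route as the paper: differentiate the first-variation BSDE \eqref{eq:bsde_dx} in $\theta$ (once and twice) to obtain a hierarchy of linear BSDEs, bound the sources by tracking that each $\theta$-differentiation introduces at most one forward variation of linear growth in $|x|$, apply the $L^q$ a priori estimate of \cite[Theorem 4.4.4]{zhang2017backward} for the growth bounds, and obtain the Lipschitz-in-$x$ bounds by estimating differences of the data rather than by taking a further derivative — exactly the strategy the paper implements via the equations \eqref{eq:d_xaYZ}, \eqref{eq:d_xaaYZ} and the auxiliary quantities $G^{\theta,t,x}$, $H^{\theta,t,x',x}$. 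The only cosmetic divergence is the citation for parameter-differentiability of the BSDE solution (the paper invokes \cite[Proposition 2.4]{el1997backward} and \cite[Theorem 9, p.~97]{krylov2008controlled} where you point to \cite{pardoux1992backward}), which does not affect the argument.
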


\begin{proof}

To simplify the presentation,
we  focus on establishing the desired properties for $A\mapsto \p_x Y^{(A,B),t,x}_t$,
since the regularity of  $B\mapsto \p_x Y^{(A,B),t,x}_t$ follows from  similar arguments. 
We also assume 
 $d=p=1$ and write 
 $A\mapsto \p_x Y^{(A,B),t,x}_t$
 as 
 $A\mapsto \p_x Y^{\theta,t,x}_t$, but the same arguments can be easily extended to a multidimensional setting. 
Throughout this proof, let $\cK$
be a given  bounded open set  of $\sR$,
and  $C$ be a generic constant  independent of $(t,x,A)\in [0,T]\t \sR^d\t \cK$.

By  \cite[Proposition 2.4]{el1997backward}, 
$\sR\ni A\mapsto (X^{\theta,t,x}, Y^{\theta,t,x},Z^{\theta,t,x})\in \cS^q(\sR)\t \cS^q(\sR)\t  \cH^q(\sR)$
is differentiable for all $q\ge 2$, where the  derivative satisfies
the following BSDE: for all $t\in[0,T)$
\begin{align}\label{eq:bsde_dA}
\begin{split}
\mathrm{d}\p_A X^{\theta,t,x}_s&=
(X^{\theta,t,x}_s+A\p_A X^{\theta,t,x}_s)\, \d s,
\\
\mathrm{d}\p_A Y^{\theta,t,x}_s&
=(F^{\theta,t,x}_{x,s}\p_A X^{\theta,t,x}_s+
F^{\theta,t,x}_{z,s}\p_A Z^{\theta,t,x}_s
)\,\d s
+\p_A Z^{\theta,t,x}_s\, \d W_s,
\\
\p_A X^{\theta,t,x}_t& =0,
\q \p_A Y^{\theta,t,x}_T=( \nabla g)(X^{\theta,t,x}_T)\p_A X^{\theta,t,x}_T,
\end{split}
\end{align}
with  bounded coefficients $F^{\theta,t,x}_{x,s}$
and $ F^{\theta,t,x}_{z,s}$
defined as in \eqref{eq:bsde_dx}.
Then
\begin{align}\label{eq:d_AYZ_bound}
\begin{split}
&\|\p_A   Y^{\theta, t,x}\|^q_{\cS^q}
+
\|\p_A  Z^{\theta, t,x}\|^q_{\cH^q}
\\
&\le C_q\bigg(\sE\bigg[| (\nabla g)(X^{\theta, t, x}_T) \p_A X^{\theta, t,x}_T|^q
+\Big(\int_0^T| F^{\theta,t,x}_{x,s}  \p_A X^{\theta, t, x}_s|\,\d t\Big)^q\bigg]\bigg)
\le C_q(1+|x|^q),
\end{split}
\end{align}
where the last inequality follows from the fact that 
$\|\p_A   X^{\theta, t,x}\|_{\cS^q}\le C_q(1+|x|)$. 
Applying  the Lipschitz estimate
in \cite[Theorem 2.3]{lionnet2015time}
to \eqref{eq:fbsde_general}
implies that 
$$
\| X^{\theta, t,x}-    X^{\theta, t,x'}\|_{\cS^q}
+
\| Y^{\theta, t,x}-    Y^{\theta, t,x'}\|_{\cS^q}
+\|  Z^{\theta, t,x}-    Z^{\theta, t,x'}\|_{\cH^q}\le C_q|x-x'|,
\quad \fa q\ge 2, x,x'\in \sR.
$$
Then by the Cauchy--Schwarz inequality, for all $q\ge 2$ and $x,x'\in \sR$,
\begin{align}\label{eq:Y_A_terminal}
\begin{split}
&\|( \nabla g)(X^{\theta,t,x}_T)\p_A X^{\theta,t,x}_T-( \nabla g)(X^{\theta,t,x'}_T)\p_A X^{\theta,t,x'}_T\|_{L^q}
\\
&\le 
\|(( \nabla g)(X^{\theta,t,x}_T)-( \nabla g)(X^{\theta,t,x'}_T))\p_A X^{\theta,t,x}_T\|_{L^q}
+\|
( \nabla g)(X^{\theta,t,x'}_T)(\p_A X^{\theta,t,x}_T-\p_A X^{\theta,t,x'}_T)\|_{L^q}
\\
&\le 
\|( \nabla g)(X^{\theta,t,x}_T)-( \nabla g)(X^{\theta,t,x'}_T)\|_{L^{2q}}\|\p_A X^{\theta,t,x}_T\|_{L^{2q}}
+\|
\p_A X^{\theta,t,x}_T-\p_A X^{\theta,t,x'}_T\|_{L^q}
\\
&\le C(1+|x|+|x'|)|x-x'|,
\end{split}
\end{align}
where the last inequality used 
$\|
\p_A X^{\theta,t,x}-\p_A X^{\theta,t,x'}\|_{\cS^q}
\le C_q|x-x'|
$.
Similarly, we have 
$$
\|F^{\theta,t,x}_{x,\cdot}\p_A X^{\theta,t,x}_\cdot
-F^{\theta,t,x'}_{x,\cdot}\p_A X^{\theta,t,x'}_\cdot
+(F^{\theta,t,x}_{z,\cdot}-F^{\theta,t,x'}_{z,\cdot})\p_A Z^{\theta,t,x}_\cdot
\|_{\cH^q}\le C(1+|x|+|x'|)|x-x'|.
$$
Consequently, 
applying \cite[Theorem 2.3]{lionnet2015time}
to  \eqref{eq:bsde_dA} shows that for all $q\ge 2$,
$
\|\p_A Y^{\theta,t,x}-\p_A Y^{\theta,t,x'}\|_{\cS^q}  
+\|\p_A Z^{\theta,t,x}-\p_A Z^{\theta,t,x'}\|_{\cH^q} 
\le C(1+|x|+|x'|)|x-x'|.
 $

Now we   study the  differentiability of $A\mapsto \p_x Y^{\theta,t,x}_t$.
Observe that $A\mapsto \p_x X^{\theta,t,x}\in L^q(0,T)$
is differentiable for all $q\ge 2$,
and the   derivatives are  bounded 
uniformly in $(A,t,x)\in \cK\t  [0,T]\t \sR$. 
By \cite[Theorem 9, p.~97]{krylov2008controlled},  for all $q\ge 2$ and $A\in \sR$,
\begin{align*}
\sR\ni A' &\mapsto 
\big(
(\nabla g)(X^{\theta', t,x}_T) \p_x X^{\theta', t, x}_T,
 F^{\theta',t,x}_{x,\cdot}  \p_x X^{\theta', t, x}_\cdot
 + F^{\theta',t,x}_{z,s} \p_x   Z^{\theta, t,x}_\cdot
 \big)\in L^q(\Om)\t  \cH^q(\sR)
\end{align*}
is differentiable.
Hence,
applying \cite[Proposition 2.4]{el1997backward} to \eqref{eq:bsde_dx}
shows that 
$$\sR\ni A \mapsto 
(\p_x   Y^{\theta, t,x},\p_x   Z^{\theta, t,x})\in \cS^q(\sR)\t \cH^q(\sR)
$$  
is differentiable,
and the  derivatives
$(\p_{xA}   Y^{\theta, t,x},\p_{xA}   Z^{\theta, t,x})$ satisfies the BSDE:
\begin{align}\label{eq:d_xaYZ}
\begin{split}
\mathrm{d} \p_{xA} Y^{\theta, t,x}_s
&=
\big( F^{\theta,t,x}_{xa,s}  \p_x X^{\theta, t, x}_s
+F^{\theta,t,x}_{x,s}  \p_{xA} X^{\theta, t, x}_s
+
 F^{\theta,t,x}_{za,s} \p_x   Z^{\theta, t,x}_s
 +F^{\theta,t,x}_{z,s} \p_{xA}   Z^{\theta, t,x}_s
\big)
\,\d s
\\
&\quad 
+ \p_{xA} Z^{\theta, t,x}_s\, \d W_s,
\quad t\in [0,T),
\\
  \p_{xA} Y^{\theta, t,xa}_T&= (\nabla^2 g)(X^{\theta, t,x}_T)\p_A X^{\theta, t,x}_T \p_x X^{\theta, t, x}_T
  +(\nabla g)(X^{\theta, t,x}_T)\p_{xA} X^{\theta, t, x}_T,
  \end{split}
\end{align}
with the coefficients 
\begin{align*}
F^{\theta,t,x}_{xa,s} &=
(\p_{xx} f)(t,X^{\theta, t,x}_s,B Z^{\theta, t,x}_s)\p_A X^{\theta, t,x}_s
+(\p_{xz} f)(t,X^{\theta, t,x}_s,B Z^{\theta, t,x}_s)B\p_A Z^{\theta,t,x}_s,
\\
 F^{\theta,t,x}_{za,s}&=(\p_{zx} f)(t,X^{\theta, t,x}_s,B Z^{\theta, t,x}_s)B\p_A X^{\theta, t,x}_s
+(\p_{zz} f)(t,X^{\theta, t,x}_s,B Z^{\theta, t,x}_s)B^2\p_A Z^{\theta,t,x}_s.
\end{align*}

For the  growth rate of $x\mapsto \p_{xA} Y^{\theta,t,x}_t$,
by the a priori estimate 
 for \eqref{eq:d_xaYZ},
for all $q\ge 2$,
\begin{align*}
&\|\p_{xA} Y^{\theta,t,x}\|^q_{\cS^q}
+\|\p_{xA} Z^{\theta,t,x}\|^q_{\cH^q}
\\
&\le C_q\bigg(\sE\bigg[|  (\nabla^2 g)(X^{\theta, t,x}_T)\p_A X^{\theta, t,x}_T \p_x X^{\theta, t, x}_T
  +(\nabla g)(X^{\theta, t,x}_T)\p_{xA} X^{\theta, t, x}_T|^q
\\
&\quad
+\Big(\int_0^T|  F^{\theta,t,x}_{xa,s}  \p_x X^{\theta, t, x}_s
+F^{\theta,t,x}_{x,s}  \p_{xA} X^{\theta, t, x}_s
+
 F^{\theta,t,x}_{za,s} \p_x   Z^{\theta, t,x}_s
|\,\d t\Big)^q\bigg]\bigg)
\\
& 
\le C_q\bigg( 1+|x|^q+\sE\bigg[\Big(\int_0^T
(|    \p_A X^{\theta, t, x}_s|+| \p_A Z^{\theta, t, x}_s|)(1+|\p_x   Z^{\theta, t,x}_s|)
\,\d t\Big)^q
\bigg]\bigg)
\\
&
\le C_q(1+|x|^q+(\|    \p_A X^{\theta, t, x}\|_{\cH^{2q}}^q+\| \p_A Z^{\theta, t, x}\|_{\cH^{2q}}^q)
(1+\|\p_x   Z^{\theta, t,x}\|_{\cH^{2q}}^q)
\le  C_q(1+|x|^q),
\end{align*}
which  follows from the    Cauchy--Schwarz inequality,  \eqref{eq:d_AYZ_bound} and the fact that $\|\p_x   Z^{\theta, t,x}\|_{\cH^{q}}\le C_{q}$ for all $q\ge 2$.
To prove the local Lipschitz continuity of 
$x\mapsto \p_{xA} Y^{\theta,t,x}_t$, we introduce 
for all $q\ge 2$ and $x,x'\in \sR^d$, the random variables 
\begin{align*}
G^{\theta,t,x}
&\coloneqq (\nabla^2 g)(X^{\theta, t,x}_T)\p_A X^{\theta, t,x}_T \p_x X^{\theta, t, x}_T
  +(\nabla g)(X^{\theta, t,x}_T)\p_{xA} X^{\theta, t, x}_T,
  \\
H^{\theta,t,x',x}_{s}
&\coloneqq 
  F^{\theta,t,x'}_{xa,s}  \p_x X^{\theta, t, x'}_s
+F^{\theta,t,x'}_{x,s}  \p_{xA} X^{\theta, t, x'}_s
+
 F^{\theta,t,x'}_{za,s} \p_x   Z^{\theta, t,x'}_s
  +F^{\theta,t,x'}_{z,s} \p_{xA}   Z^{\theta, t,x}_s,
  \q s\in [t,T],
  \end{align*}
  By following similar arguments as in \eqref{eq:Y_A_terminal}
  and 
using the regularity of $f$ and $g$, 
we have  for all $q\ge 2$ and $x,x'\in \sR^d$, 
$$
\|G^{\theta,t,x}-G^{\theta,t,x'}\|_{L^q}
+\|H^{\theta,t,x,x}-H^{\theta,t,x',x}\|_{\cH^q}
\le C_q(1+|x|+|x'|)|x-x'|,
$$
 which along with the Lipschitz estimate for \eqref{eq:d_xaYZ}
shows 
\begin{align*}
&\|\p_{xA} Y^{\theta,t,x}-\p_{xA} Y^{\theta,t,x'}\|_{\cS^q}
+\|\p_{xA} Z^{\theta,t,x}-\p_{xA} Z^{\theta,t,x'}\|_{\cH^q}
\le C_q(1+|x|+|x'|)|x-x'|.
\end{align*}

For  the  differentiability of $A\mapsto \p_{xA} Y^{\theta,t,x}_t$,
applying \cite[Proposition 2.4]{el1997backward} to \eqref{eq:d_xaYZ}
implies that 
$\sR\ni A \mapsto 
(\p_{xA}   Y^{\theta, t,x},\p_{xA}   Z^{\theta, t,x})\in \cS^q(\sR)\t \cH^q(\sR)$  is differentiable,
and its derivative satisfies the BSDE:
for all  $t\in [0,T)$,
\begin{align}\label{eq:d_xaaYZ}
\begin{split}
\mathrm{d} \p_{xAA} Y^{\theta, t,x}_s
&=
\big( F^{\theta,t,x}_{xaa,s}  \p_x X^{\theta, t, x}_s
+2F^{\theta,t,x}_{xa,s}  \p_{xA} X^{\theta, t, x}_s
+F^{\theta,t,x}_{x,s}  \p_{xAA} X^{\theta, t, x}_s
+
 F^{\theta,t,x}_{zaa,s} \p_x   Z^{\theta, t,x}_s
\\
&\quad 
 +2 F^{\theta,t,x}_{za,s} \p_{xA}   Z^{\theta, t,x}_s
 +F^{\theta,t,x}_{z,s} \p_{xAA}   Z^{\theta, t,x}_s
\big)
\,\d s
+ \p_{xAA} Z^{\theta, t,x}_s\, \d W_s,
\\
  \p_{xAA} Y^{\theta, t,x}_T&= 
  (\nabla^3 g)(X^{\theta, t,x}_T)(\p_A X^{\theta, t,x}_T)^2 \p_x X^{\theta, t, x}_T
  +(\nabla^2 g)(X^{\theta, t,x}_T)\p_{AA} X^{\theta, t,x}_T \p_x X^{\theta, t, x}_T
  \\
  &\quad
   +2(\nabla^2 g)(X^{\theta, t,x}_T)\p_A X^{\theta, t,x}_T\p_{xA} X^{\theta, t, x}_T
 +(\nabla g)(X^{\theta, t,x}_T)\p_{xAA} X^{\theta, t, x}_T,
  \end{split}
\end{align}
with the coefficients 
\begin{align*}
F^{\theta,t,x}_{xaa,s} &=
(\p_{xxx} f)(t,X^{\theta, t,x}_s,B Z^{\theta, t,x}_s)(\p_A X^{\theta, t,x}_s)^2
+
2(\p_{xxz} f)(t,X^{\theta, t,x}_s,B Z^{\theta, t,x}_s)\p_A X^{\theta, t,x}_sB\p_A Z^{\theta, t,x}_s
\\
&\quad
+
(\p_{xx} f)(t,X^{\theta, t,x}_s,B Z^{\theta, t,x}_s)\p_{AA} X^{\theta, t,x}_s
+(\p_{xzz} f)(t,X^{\theta, t,x}_s,B Z^{\theta, t,x}_s)(B\p_A Z^{\theta,t,x}_s)^2
\\
&\quad +(\p_{xz} f)(t,X^{\theta, t,x}_s,B Z^{\theta, t,x}_s)B\p_{AA} Z^{\theta,t,x}_s,
\\
 F^{\theta,t,x}_{zaa,s}&
 =(\p_{zxx} f)(t,X^{\theta, t,x}_s,B Z^{\theta, t,x}_s)B(\p_A X^{\theta, t,x}_s)^2
 +2(\p_{zxz} f)(t,X^{\theta, t,x}_s,B Z^{\theta, t,x}_s)B^2\p_A X^{\theta, t,x}_s\p_A Z^{\theta, t,x}_s
  \\
&\quad
 +(\p_{zx} f)(t,X^{\theta, t,x}_s,B Z^{\theta, t,x}_s)B\p_{AA} X^{\theta, t,x}_s
 +(\p_{zzz} f)(t,X^{\theta, t,x}_s,B Z^{\theta, t,x}_s)B^3(\p_A Z^{\theta,t,x}_s)^2
  \\
&\quad
+(\p_{zz} f)(t,X^{\theta, t,x}_s,B Z^{\theta, t,x}_s)B^2\p_{AA} Z^{\theta,t,x}_s.
\end{align*}
Repeating  all the above estimates, one can show 
$\| \p_{xAA} Y^{\theta, t,x}\|_{\cS^q}+\| \p_{xAA} Z^{\theta, t,x}\|_{\cH^q}\le C_q(1+|x|^2)$, 
and 
$\| \p_{xAA} Y^{\theta, t,x}-\p_{xAA} Y^{\theta, t,x'}\|_{\cS^q}+\| \p_{xAA} Z^{\theta, t,x}-\p_{xAA} Z^{\theta, t,x'}\|_{\cH^q}\le C_q(1+|x|^2+|x'|^2)|x-x'|$.
Other partial derivatives of $\theta\mapsto \p_x Y^{\theta,t,x}_t$ can be estimated by using similar arguments, 
whose details are omitted here. %
\end{proof}

Based on Lemmas 
\ref{lemma:bsde_estimate_dx} and \ref{lemma:bsde_estimate_dtheta},
we prove the entropy regularised control problem 
\eqref{eq:lc} admits an optimal feedback control, 
which is sufficiently regular in the $(\theta,x)$-variables.

\begin{Proposition}\label{prop:feedback_control_entropy}
Assume the same setting as in Theorem  \ref{thm:performance_gap_entropy}.
Then for all $\theta\in \sR^{d\t (d+p)}$,
the control problem \eqref{eq:lc} with parameter 
$\theta$ admits an optimal feedback control $\psi_\theta\in \cup_{C\ge 0}\cV_C$, and 
\begin{enumerate}[(1)]
\item \label{item:psi_theta_differentiable}
 $\psi_\theta\in C^{0,2}([0,T]\t \sR^d)$ 
and 
for all $t\in [0,T]$, 
the function $
 \sR^{d\t (d+p)}\t \sR^d
\ni (\theta, x) \mapsto \psi_\theta(t,x)\in  \sR^{p}$ is of the class $ C^{2}(  \sR^{d\t (d+p)}\t \sR^d)$,
\item  \label{item:psi_theta_growth}
for all bounded open set  $\cK\subset \sR^{d\t (d+p)}$,
there exists a constant $C>0$ such that 
for all  $(\theta,t)\in \cK\t  [0,T]$ and $x,x'\in \sR^d$,
$|\nabla_x(\psi_\theta)(t,x)|+|\textnormal{Hess}_x(\psi_\theta)(t,x)|\le C$,
$|\nabla_\theta(\psi_\theta)(t,x)|
+|\nabla_\theta\big(\nabla_x(\psi_\theta)\big)(t,x)|
\le C(1+|x|)$, 
$|\textnormal{Hess}_\theta(\psi_\theta)(t,x)|\le C(1+|x|^2)$,
$|\nabla_\theta(\psi_\theta)(t,x)-\nabla_\theta(\psi_\theta)(t,x')|
+|\nabla_\theta\big(\nabla_x(\psi_\theta)\big)(t,x)-\nabla_\theta\big(\nabla_x(\psi_\theta)\big)(t,x')|
\le C(1+|x|+|x'|)|x-x'|$, 
and 
$|\textnormal{Hess}_\theta(\psi_\theta)(t,x)-\textnormal{Hess}_\theta(\psi_\theta)(t,x')|\le C(1+|x|^2+|x'|^2)|x-x'|$.

\end{enumerate}

\end{Proposition}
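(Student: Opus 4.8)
The plan is to represent the value function and an optimal feedback control of the entropy‑regularised problem \eqref{eq:lc} through the FBSDE \eqref{eq:fbsde_general} with a suitable smooth driver, and then transfer the regularity estimates of Lemmas \ref{lemma:bsde_estimate_dx} and \ref{lemma:bsde_estimate_dtheta} to $\psi_\theta$ via the chain rule. First I would introduce the reduced Hamiltonian and its minimiser,
$$\hat{f}(t,x,u)\coloneqq \inf_{a\in\Delta_p}\big[(\bar{f}_0(t,x)+u)^\top a + h_{\textrm{en}}(a)\big]=-\ln\Big(\sum\nolimits_{i=1}^p \exp\big(-(\bar{f}_0(t,x)+u)_i\big)\Big),\qquad \hat{a}(c)_i\coloneqq \frac{e^{-c_i}}{\sum_{j=1}^p e^{-c_j}},$$
for $(t,x,u)\in[0,T]\t\sR^d\t\sR^p$ and $c\in\sR^p$; strict convexity of $h_{\textrm{en}}$ on $\Delta_p$ shows that $\hat a(\bar f_0(t,x)+u)$ is the unique minimiser. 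A direct computation gives $\hat a\in C^\infty(\sR^p;\sR^p)$ with all derivatives bounded, and $\hat f$ Lipschitz in $(x,u)$ with $\hat f(t,\cdot,\cdot)\in C^4(\sR^d\t\sR^p)$ and derivatives bounded uniformly in $t$ (using that $t\mapsto\bar f_0(t,0)$ is bounded on $[0,T]$ by continuity); hence $\hat f$ and $g$ fulfil the hypotheses of Lemmas \ref{lemma:bsde_estimate_dx}--\ref{lemma:bsde_estimate_dtheta}.

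Next, for $\theta=(A,B)$, I would let $(X^{\theta,t,x},Y^{\theta,t,x},Z^{\theta,t,x})$ solve \eqref{eq:fbsde_general} with driver $\hat f(s,X_s,B^\top Z_s^\top)$ and terminal value $g$, and set $V^\theta(t,x)\coloneqq Y^{\theta,t,x}_t$. By the nonlinear Feynman--Kac formula \cite{pardoux1992backward} together with Lemma \ref{lemma:bsde_estimate_dx}, $V^\theta\in C^{1,3}([0,T]\t\sR^d)$ is the classical solution of $\p_t V+\tfrac12\Delta V+(Ax)^\top\nabla_x V+\hat f(t,x,B^\top\nabla_x V)=0$ with $V(T,\cdot)=g$, and $\nabla_x V^\theta(t,x)=\p_x Y^{\theta,t,x}_t=Z^{\theta,t,x}_t$. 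Since $\hat f(t,x,B^\top p)=\inf_{a\in\sR^p}[f(t,x,a)+p^\top Ba]$ is precisely the Hamiltonian of \eqref{eq:lc}, this PDE is the associated HJB equation. A classical verification argument then applies: any admissible $\a$ with $J(\a;\theta)<\infty$ must be $\Delta_p$‑valued $\d\sP\otimes\d t$‑a.e., hence bounded, so It\^o's formula for $V^\theta\in C^{1,3}$ along $X^{\theta,\a}$ (the stochastic integral is a true martingale since $\nabla_x V^\theta$ is bounded by Lemma \ref{lemma:bsde_estimate_dx}) yields $J(\a;\theta)\ge V^\theta(0,x_0)$, with equality when $\a_s=\psi_\theta(s,X_s)$, where $\psi_\theta(t,x)\coloneqq\hat a\big(\bar f_0(t,x)+B^\top\nabla_x V^\theta(t,x)\big)=\hat a\big(\bar f_0(t,x)+B^\top\p_x Y^{\theta,t,x}_t\big)$. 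As $\psi_\theta$ is $\Delta_p$‑valued (hence bounded) and Lipschitz in $x$ (see below), the closed‑loop SDE is well posed, so $\psi_\theta\in\cup_{C\ge0}\cV_C$ is an optimal feedback control and $V^\star(\theta)=V^\theta(0,x_0)$; note that neither convexity of $g$ nor the strong convexity condition is used here.

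Finally, \ref{item:psi_theta_differentiable}--\ref{item:psi_theta_growth} follow by differentiating $\psi_\theta(t,x)=\hat a\big(\bar f_0(t,x)+B^\top\p_x Y^{\theta,t,x}_t\big)$. Lemma \ref{lemma:bsde_estimate_dx} gives $(t,x)\mapsto\p_x Y^{\theta,t,x}_t$ of class $C^{0,2}$ with $|\p_{xx}Y^{\theta,t,x}_t|+|\p_{xxx}Y^{\theta,t,x}_t|\le C$ on $\cK\t[0,T]\t\sR^d$, and Lemma \ref{lemma:bsde_estimate_dtheta} gives $(\theta,x)\mapsto\p_x Y^{\theta,t,x}_t$ of class $C^2$ with $|\p_{x\theta_i}Y^{\theta,t,x}_t|+|\p_{xx\theta_i}Y^{\theta,t,x}_t|\le C(1+|x|)$, $|\p_{x\theta_i\theta_j}Y^{\theta,t,x}_t|\le C(1+|x|^2)$ and the matching locally Lipschitz bounds. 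Composing with $\hat a\in C^\infty$ (bounded derivatives) and $\bar f_0(t,\cdot)\in C^4$ (bounded derivatives), the chain and product rules yield $\psi_\theta\in C^{0,2}([0,T]\t\sR^d)$, $(\theta,x)\mapsto\psi_\theta(t,x)\in C^2$, $|\nabla_x\psi_\theta|+|\textnormal{Hess}_x\psi_\theta|\le C$, $|\nabla_\theta\psi_\theta|+|\nabla_\theta(\nabla_x\psi_\theta)|\le C(1+|x|)$, $|\textnormal{Hess}_\theta\psi_\theta|\le C(1+|x|^2)$, and the stated local Lipschitz estimates in $x$: each $x$‑derivative keeps order $\cO(1)$, while each $\theta$‑derivative landing on $B^\top\p_x Y^{\theta}$ either hits the explicit factor $B$ (producing a bounded term) or $Y^\theta$ (producing one extra power of $1+|x|$), exactly reproducing the powers $(1+|x|)$ and $(1+|x|^2)$ in the statement. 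The genuinely non‑routine ingredients — the $C^{1,3}$ regularity of $V^\theta$ and the $C^2$ regularity of $(\theta,x)\mapsto\p_x Y^{\theta,t,x}_t$ — are already supplied by Lemmas \ref{lemma:bsde_estimate_dx} and \ref{lemma:bsde_estimate_dtheta}; the remaining work is the purely mechanical bookkeeping of the accumulated powers of $|x|$ in the higher mixed derivatives, which I expect to be the most tedious (though not conceptually hard) part of the proof.
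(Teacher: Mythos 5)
Your proposal follows essentially the same route as the paper: the paper also represents the optimal feedback as $\psi_\theta(t,x)=\nabla h^*_{\textrm{en}}\big(-B^\top\nabla_x V^\theta(t,x)-\bar f_0(t,x)\big)$ (which is exactly your $\hat a(\bar f_0+B^\top\nabla_x V^\theta)$, since $\hat a(c)=\nabla h^*_{\textrm{en}}(-c)$), obtains $\nabla_x V^\theta=\p_x Y^{\theta,t,x}_t$ from the FBSDE \eqref{eq:fbsde_general}, verifies optimality through the HJB equation, and transfers the estimates of Lemmas \ref{lemma:bsde_estimate_dx} and \ref{lemma:bsde_estimate_dtheta} by the chain rule; the only cosmetic difference is that you give an explicit verification argument where the paper cites \cite[Theorem 5.1, p.~268]{yong1999stochastic}.

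One small but genuine inaccuracy: with the sign convention of \eqref{eq:fbsde_general}, i.e.\ $\d Y_s=F(s,X_s,B^\top Z_s^\top)\,\d s+Z_s\,\d W_s$ and $Y_T=g(X_T)$, It\^o's formula gives $\p_t V+\tfrac12\Delta_x V+\la Ax,\nabla_x V\ra-F=0$, so taking the driver equal to your reduced Hamiltonian $\hat f$ produces the PDE with $-\hat f$, not the HJB equation $\p_t V+\tfrac12\Delta_x V+\la Ax,\nabla_x V\ra+\hat f=0$ that you write. The driver must be $-\hat f(s,x,u)=h^*_{\textrm{en}}(-\bar f_0(s,x)-u)$, which is precisely the paper's choice in \eqref{eq:fbsde_feynmann}. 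This is a repairable sign slip rather than a conceptual gap: $-\hat f$ has the same smoothness and boundedness properties, so every subsequent application of the two lemmas and the chain-rule bookkeeping for the growth and local Lipschitz bounds goes through unchanged.
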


\begin{proof}
Let
 $h^*_{\textrm{en}}:\sR^p\to \sR$ be the convex conjugate of $h_{\textrm{en}}$ 
such that for all $z=(z_i)_{i=1}^p\in \sR^p$, $h^*_{\textrm{en}}(z)=\sup_{a\in \sR^p}(\la a,z\ra -h_{\textrm{en}}(a))=\ln\sum_{i=1}^p\exp(z_i)$.
A direct computation shows that $h^*_{\textrm{en}}\in C^\infty(\sR^p)$, 
$|h^*_{\textrm{en}}(z)|\le C(1+|z|)$ for all $z\in \sR^p$,
and  all derivatives of $h^*_{\textrm{en}}$ are bounded. 
For given  
$\theta=(A,B)\in \sR^{d\t (d+p)}$ and 
$(t,x)\in [0,T]\t \sR^d$,
consider  the following FBSDE:
for all $s\in [t,T]$,%
\begin{subequations}\l{eq:fbsde_feynmann}
\begin{alignat}{2}
\mathrm{d}X_s&=
AX_s\, \d s
+\, \d W_s,
&&
\q X_t=x,
\l{eq:lc_fbsde_fwd}
\\
\mathrm{d}Y_s&
=h^*_{\textrm{en}}(-B^\top Z^\top_s -\bar{f}_0(s,X_s))\,\d s
+Z_s\, \d W_s,
&&
\q Y_T= g(X_T).
\end{alignat}
\end{subequations}
By the Lipschitz continuity of  $\nabla h^*_{\textrm{en}}$ and $\bar{f}_0$,
\eqref{eq:fbsde_feynmann} admits a unique solution  $(X^{\theta,t,x},Y^{\theta,t,x},Z^{\theta,t,x})\in \cS^2(\sR^d)\t \cS^2(\sR)\t \cH^2(\sR^{1\t d})$
 (see e.g., \cite[Theorems 4.3.1]{zhang2017backward}). 
By
 Lemma \ref{lemma:bsde_estimate_dx}
 and the regularity of $h^*_{\textrm{en}}, \bar{f}_0$ and $g$,
 for all  $\theta\in \sR^{d\t (d+p)}$,
the function 
$[0,T]\t \sR^d\ni(t,x)\mapsto  V^\theta(t,x)\coloneqq Y^{\theta, t,x}_t\in \sR$ is of the class $ C^{1,3}([0,T]\t \sR^d)$,
and for all $(t,x)\in [0,T]\t \sR^d$, 
$
 \sR^{d\t (d+p)}
\ni \theta \mapsto \nabla_x V^\theta(t,x)\in  \sR^{d}$ is of the class $ C^{2}( \sR^{d\t (d+p)})$.
Moreover, for any bounded open subset $\cK\subset  \sR^{d\t (d+p)}$,
there exists  $C>0$ such that 
for all  $(\theta,t,x)\in \cK\t  [0,T]\t \sR^d$,
$|\nabla_x(\nabla_x V^\theta)(t,x)|\le C$.

By It\^{o}'s formula,  for all  $\theta\in \sR^{d\t (d+p)}$,
the function $V^\theta$ 
 solves the following PDE:
for all $(t,x)\in [0,T)\t \sR^d$,
\bb\label{eq:hjb}
\tfrac{\d }{\d t}V(t,x)+\tfrac{1}{2}\Delta_x V(t,x)+
\la Ax, \nabla_xV(t,x) \ra
-
h^*_{\textrm{en}}(-B^\top \nabla_xV(t,x)  -\bar{f}_0(t,x))=0,
\ee
and $V(T,x)=g(x)$ for all $x\in \sR^d$ (see e.g.,
\cite[Theorem 5.5.8]{zhang2017backward}).
By 
\eqref{eq:entropy_regularized_proof}
and the definition of $h^*_{\textrm{en}}$,
 \eqref{eq:hjb} can be equivalently written as:
$$
\tfrac{\d }{\d t}V(t,x)+\tfrac{1}{2}\Delta_x V(t,x)+
\inf_{a\in \sR^p}
\big(\la Ax+Ba, \nabla_xV(t,x) \ra+f(t,x,a)\big)=0,
$$
 which is 
  the Hamilton-Jacobi-Bellman equation for  \eqref{eq:lc} with parameter $\theta$.
Now let 
 $\psi_\theta:[0,T]\t \sR^d\to \Delta_p$ be the function such that for all $(t,x)\in [0,T]\t \sR^d$,
\bb\label{eq:feedback_entropy}
\psi_\theta(t,x)
=\nabla h^*_{\textrm{en}}(-B^\top \nabla_xV^\theta(t,x) -\bar{f}_0(t,x))
=\argmin_{a\in \Delta_p} \big(\la Ba,\nabla_xV^\theta(t,x) \ra +f(t,x,a)\big).
\ee
As $\nabla_x V^\theta$ has a bounded  first-order derivative  (and hence Lipschitz continuous) in $x$,
the  verification theorem in \cite[Theorem 5.1, p.~268]{yong1999stochastic}
shows  that $\psi_\theta$ is the optimal feedback control of  \eqref{eq:lc} with parameter $\theta$.
The regularity of $h^*_{\textrm{en}}$ and $\bar{f}_0$ implies that 
$\psi_\theta$ 
shares the same regularity of $\nabla_x  V^\theta$.
Hence 
by Lemmas \ref{lemma:bsde_estimate_dx}
and \ref{lemma:bsde_estimate_dtheta},
 $\psi_\theta$ admits the desired properties as stated in
Items 
\ref{item:psi_theta_differentiable} and \ref{item:psi_theta_growth}.
\end{proof}

\begin{Lemma}\label{lemma:state_C2}
Assume the same setting as in Theorem  \ref{thm:performance_gap_entropy}.
Let 
 $\theta_0=(A_0,B_0)\in \sR^{d\t (d+p)}$,
 and 
for each $\theta\in  \sR^{d\t (d+p)}$,
let $\psi_\theta:[0,T]\t \sR^d\to \sR^p$ be the feedback control in Proposition \ref{prop:feedback_control_entropy},
and let $X^{\theta_0,\psi_\theta}\in \cS^2(\sR^d)$ be the solution to the following dynamics:
$$
\d X_t=(A_0 X_t+B_0\psi_\theta(t,X_t))\d t+\d W_t,
\q t\in [0,T], \quad X_0=x_0.
$$
Then for all $q\ge 2$,  $  \sR^{d\t (d+p)}\ni \theta\mapsto X^{\theta_0,\psi_\theta}
\in \cS^q(\sR^d)$ is twice continuously differentiable.

\end{Lemma}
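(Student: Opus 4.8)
The plan is to differentiate the state SDE with respect to $\theta$ twice, using the joint $C^2$-regularity and the growth bounds on $\psi_\theta$ supplied by Proposition~\ref{prop:feedback_control_entropy}\ref{item:psi_theta_growth}. First I would fix a bounded open set $\cK\subset\sR^{d\t(d+p)}$ containing the points of interest, and note that the drift $b(\theta,t,x)\coloneqq A_0x+B_0\psi_\theta(t,x)$ is Lipschitz in $x$ uniformly over $(\theta,t)\in\cK\t[0,T]$ (since $\nabla_x\psi_\theta$ is bounded) and has linear growth in $x$ (since $\psi_\theta(t,0)\in\Delta_p$ is bounded). Standard moment estimates then give $\sup_{\theta\in\cK}\|X^{\theta_0,\psi_\theta}\|_{\cS^q}<\infty$ for every $q\ge2$. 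Because the diffusion coefficient is the identity and does not depend on $\theta$, all the variational equations below are linear SDEs \emph{without} a martingale part, driven only by the paths of $X^{\theta_0,\psi_\theta}$, which keeps the $\cS^q$-estimates clean.

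For first-order differentiability I would introduce, for a coordinate direction $e_i$, the process $\eta^{i,\theta}$ solving
\begin{equation*}
\d\eta^{i,\theta}_t=\big(A_0\eta^{i,\theta}_t+B_0(\nabla_x\psi_\theta)(t,X^{\theta_0,\psi_\theta}_t)\eta^{i,\theta}_t+B_0(\p_{\theta_i}\psi_\theta)(t,X^{\theta_0,\psi_\theta}_t)\big)\,\d t,\qquad\eta^{i,\theta}_0=0,
\end{equation*}
which is well posed in $\cS^q(\sR^d)$ for every $q\ge2$ because $\nabla_x\psi_\theta$ is bounded and $\p_{\theta_i}\psi_\theta$ has linear growth in $x$, so its source term lies in $\cH^q$ by the moment bound on $X^{\theta_0,\psi_\theta}$. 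I would then show that the difference quotients $h^{-1}\big(X^{\theta_0,\psi_{\theta+he_i}}-X^{\theta_0,\psi_\theta}\big)$ converge to $\eta^{i,\theta}$ in $\cS^q$ as $h\to0$: subtract the two state equations, use the uniform-in-$\theta$ Lipschitz property of $b$ in $x$, a Gronwall argument, and the $C^1$-in-$\theta$ bound on $\psi_\theta$ together with the local-Lipschitz-in-$x$ estimates of Proposition~\ref{prop:feedback_control_entropy}\ref{item:psi_theta_growth} and H\"older's inequality to control the difference of the $\psi$-terms. Continuity of $\theta\mapsto\eta^{i,\theta}$ in $\cS^q$ follows from the same stability estimate applied to the linear equation. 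This is the standard parameter-differentiation argument for SDEs, cf.\ \cite[Ch.~2]{krylov2008controlled}.

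For the second derivative I would differentiate the equation for $\eta^{i,\theta}$ once more in $\theta_j$, obtaining a linear SDE for $\zeta^{ij,\theta}$ (again driftless apart from the $A_0$ and $B_0\nabla_x\psi_\theta$ terms) whose source terms are of three types: $B_0(\textnormal{Hess}_x\psi_\theta)(t,X_t)[\eta^{i,\theta}_t,\eta^{j,\theta}_t]$, the symmetric pair $B_0(\nabla_\theta\nabla_x\psi_\theta)(t,X_t)[\eta^{i,\theta}_t]$ and its $i\leftrightarrow j$ counterpart, and $B_0(\textnormal{Hess}_\theta\psi_\theta)(t,X_t)[e_i,e_j]$. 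By Proposition~\ref{prop:feedback_control_entropy}\ref{item:psi_theta_growth} these are dominated respectively by $C|\eta^{i,\theta}_t||\eta^{j,\theta}_t|$, $C(1+|X_t|)|\eta^{i,\theta}_t|$, and $C(1+|X_t|^2)$, hence lie in $\cH^q$ after H\"older's inequality, using the already-established $\cS^{q'}$-bounds on $X^{\theta_0,\psi_\theta}$ and $\eta^{i,\theta}$ at sufficiently large $q'$. The difference-quotient argument from the first-order step then repeats, now invoking the local-Lipschitz-in-$x$ estimates with the quadratic weights to handle terms such as $(\nabla_\theta\nabla_x\psi_\theta)(t,X^{\theta_0,\psi_{\theta+he_j}}_t)-(\nabla_\theta\nabla_x\psi_\theta)(t,X^{\theta_0,\psi_\theta}_t)$, and shows $h^{-1}\big(\eta^{i,\theta+he_j}-\eta^{i,\theta}\big)\to\zeta^{ij,\theta}$ in $\cS^q$; continuity of $\theta\mapsto\zeta^{ij,\theta}$ follows once more from linear stability. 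I expect the main obstacle to be purely the bookkeeping: one must track which $\cS^{q'}$- and $\cH^{q'}$-norms of $X^{\theta_0,\psi_\theta}$ and of its first variation are needed so that, after H\"older, \emph{every} source term in the second variational equation is controlled in $\cH^q$; it is precisely the quadratic-in-$x$ growth of $\textnormal{Hess}_\theta\psi_\theta$ that forces one to carry the moment bounds at all exponents rather than at a single fixed one.
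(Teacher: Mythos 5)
Your proposal is correct and follows essentially the same route as the paper: the paper also reduces the claim to the regularity and growth bounds on $(\theta,x)\mapsto\psi_\theta(t,x)$ from Proposition \ref{prop:feedback_control_entropy}, invokes \cite[Theorem 4, p.~105]{krylov2008controlled} for the twice differentiability of $\theta\mapsto X^{\theta_0,\psi_\theta}$ in $\cS^q$ (the textbook version of your difference-quotient/Gronwall argument, with the same first and second variational equations), and then proves continuity of the second derivative by exactly the splitting you describe — local Lipschitz-in-$x$ estimates with quadratic weights plus dominated convergence in $\theta$. The only cosmetic difference is that you unpack the differentiability step by hand rather than citing it.
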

\begin{proof}
As $\theta_0$ is fixed throughout the proof, 
we write $X^{\theta_0,\psi_\theta}$
as $X^{\psi_\theta}$ for notational simplicity. 
Consider the function 
$$ \sR^{d\t (d+p)}
\t [0,T]\t \sR^d
\ni (\theta,t,x)\mapsto
b_\theta(t,x)\coloneqq A_0 x+B_0\psi_\theta(t,x)\in \sR^d.$$
By Proposition 
\ref{prop:feedback_control_entropy}
Item \ref{item:psi_theta_differentiable},
for all $t\in [0,T]$, $b_\theta(t,x)$ is twice differentiable with respect to $(\theta,x)$. 
By Proposition 
\ref{prop:feedback_control_entropy} 
Item \ref{item:psi_theta_growth},
for any bounded open set  $\cK\subset \sR^{d\t (d+p)}$,
the first- and second-order derivatives of $b_\theta(t,x)$ with respect to $(\theta,x)$
have at most quadratic growth in $x$, uniformly in $(\theta,t )\in  \cK\t [0,T]$.
Hence by 
\cite[Theorem 4, p.~105]{krylov2008controlled},
$\sR^{d\t (d+p)}\ni\theta\mapsto X^{\psi_\theta}\in \cS^q(\sR^d)$ is twice differentiable for all $q\ge 2$.

It remains to  establish the continuity of the second-order derivative
$\sR^{d\t (d+p)}\ni \theta\mapsto X^{\psi_\theta}\in \cS^q(\sR^d)$,
for which
we   assume without loss of generality  that 
 $\theta\in \sR$ and $X^{\psi_\theta}$ is one-dimensional. By \cite[Theorem 4, p.~105]{krylov2008controlled},
the second-order derivative
$\p_{\theta\theta} X^{\psi_\theta}$
 of $\sR\ni\theta\mapsto X^{\psi_\theta}$ satisfies the following SDE:
$\p_{\theta\theta} X^{\psi_\theta}_0=0$, and for all $t\in (0,T]$,
\begin{align*}
\d \p_{\theta\theta} X^{\psi_\theta}_t&=\Big(
B_0 (\p_{\theta\theta}\psi_\theta)(t,X^{\psi_\theta}_t)
+2B_0 (\p_{\theta x}\psi_\theta)(t,X^{\psi_\theta}_t)\p_\theta X^{\psi_\theta}_t
+B_0(\p_{xx}\psi_\theta)(t,X^{\psi_\theta}_t)(\p_\theta X^{\psi_\theta}_t)^2
\\
&\quad 
+\big(A_0 +
B_0 (\p_x\psi_\theta)(t,X^{\psi_\theta}_t)\big)\p_{\theta\theta} X^{\psi_\theta}_t
\Big)\d t.
\end{align*}
The twice differentiability of $\theta\mapsto X^{\psi_\theta}$
implies that 
$\sR\ni\theta\mapsto (X^{\psi_\theta},\p_\theta X^{\psi_\theta})\in \cS^q(\sR)\t  \cS^q(\sR)$
is continuous for all $q\ge 2$.
Now let  $\cK\subset \sR$ be an arbitrary open subset.
By Proposition 
\ref{prop:feedback_control_entropy}
Item \ref{item:psi_theta_growth},
for all $\theta,\eta\in \cK$ and  $q\ge 2$,
\begin{align*}
&\sE\bigg[\int_0^T|(\p_{\theta\theta}\psi_\eta)(t,X^{\psi_\eta}_t)-(\p_{\theta\theta}\psi_\theta)(t,X^{\psi_\theta}_t)|^q\, \d t
\bigg]
\\
&\le C_q
\sE\bigg[\int_0^T|(\p_{\theta\theta}\psi_\eta)(t,X^{\psi_\eta}_t)-(\p_{\theta\theta}\psi_\eta)(t,X^{\psi_\theta}_t)|^q
+|(\p_{\theta\theta}\psi_\eta)(t,X^{\psi_\theta}_t)-(\p_{\theta\theta}\psi_\theta)(t,X^{\psi_\theta}_t)|^q\, \d t
\bigg]
\\
&\le C_q
\sE\bigg[\int_0^T
(1+|X^{\psi_\eta}_t|^2+|X^{\psi_\theta}_t|^2)^q|X^{\psi_\eta}_t-X^{\psi_\theta}_t|^q
+|(\p_{\theta\theta}\psi_\eta)(t,X^{\psi_\theta}_t)-(\p_{\theta\theta}\psi_\theta)(t,X^{\psi_\theta}_t)|^q\, \d t
\bigg]
\\
&\le C_q\bigg(
(1+\|X^{\psi_\eta}\|_{\cS^{4q}}^{2q}+\|X^{\psi_\theta}\|_{\cS^{4q}}^{2q})
\|X^{\psi_\eta}-X^{\psi_\theta}\|_{\cS^{2q}}^q
\\
&\quad +
\sE\bigg[\int_0^T
|(\p_{\theta\theta}\psi_\eta)(t,X^{\psi_\theta}_t)-(\p_{\theta\theta}\psi_\theta)(t,X^{\psi_\theta}_t)|^q\, \d t
\bigg]\bigg).
\end{align*}
Then, as $\eta$ tends to $\theta$, the first term converges to zero due to the continuity of $\theta\mapsto X^{\psi_\theta}\in \cS^{4q}(\sR)$, and the second terms also converges to zero due to the continuity of $\p_{\theta\theta}\psi_\theta$ in $\theta$, the fact that $|(\p_{\theta\theta}\psi_\theta)(t,x)|\le C(1+|x|^2)$ and  the dominated convergence theorem.
Similar arguments show that 
$\sR\ni \theta\mapsto (\p_{\theta x}\psi_\theta)(\cdot,X^{\psi_\theta}_\cdot), (\p_{xx}\psi_\theta)(\cdot,X^{\psi_\theta}_\cdot),
(\p_x\psi_\theta)(\cdot,X^{\psi_\theta}_\cdot)\in \cH^q(\sR)$
are continuous for all $q\ge 2$. 
Thus  \cite[Theorem 9, p.~97]{krylov2008controlled} shows that for all $q\ge 2$ and $x\in \sR$,
\begin{align*}
\sR\ni \theta\mapsto 
&B_0 (\p_{\theta\theta}\psi_\theta)(\cdot,X^{\psi_\theta}_\cdot)
+2B_0 (\p_{\theta x}\psi_\theta)(\cdot,X^{\psi_\theta}_\cdot)\p_\theta X^{\psi_\theta}_\cdot
+B_0 (\p_{xx}\psi_\theta)(\cdot,X^{\psi_\theta}_\cdot)(\p_\theta X^{\psi_\theta}_\cdot)^2
\\
&\quad +\big(A_0  +B_0 
(\p_x\psi_\theta)(\cdot,X^{\psi_\theta}_\cdot)\big) x \in \cH^q(\sR)
\end{align*}
is continuous, 
which along with \cite[Corollary 2, p.~103]{krylov2008controlled}
implies the continuity  of 
$\cK\ni \theta\mapsto \p_{\theta\theta} X^{\theta_0,\psi_\theta}\in \cS^q(\sR)$ for all $q\ge 2$.
\end{proof}

\begin{proof}[Proof of Theorem \ref{thm:performance_gap_entropy}]
For every $\theta\in  \sR^{d\t (d+p)}$,
the existence of optimal feedback control $\psi_\theta$ has been established in Proposition \ref{prop:feedback_control_entropy}.
Hence it remains to establish the quadratic performance gap. 
To this end, let 
$\theta_0=(A_0,B_0)\in \Theta$
and $\beta>0$ be given constants,
and   $C\ge 0$ be a generic constant  which
depends on $\Theta,\b$
but
is independent of $\theta_0$.
For each $\theta\in \sB_\b (\theta_0)$,
  consider  the 
 dynamics:
$$
\d X_t=(A_0 X_t+B_0\psi_\theta(t,X_t))\d t+\d W_t,
\q t\in [0,T], \quad X_0=x_0,
$$
which admits a strong solution  $X^{\theta_0,\psi_\theta}\in \cS^q(\sR^d)$ for all $q\ge 2$, due to the Lipschitz continuity of $\psi_\theta$. 
Lemma \ref{lemma:state_C2} shows that 
 $\sR^{d\t (d+p)} \ni\theta\mapsto X^{\theta_0,\psi_\theta}\in \cS^q(\sR^d)$ is twice continuously differentiable.

 We start by proving 
the twice continuous differentiability of the  map
$
\sR^{d\t (d+p)}\ni\theta\mapsto
J(\psi_\theta;\theta_0)\in \sR
$, with 
$J(\psi_\theta;\theta_0)$ being defined as in 
\eqref{eq:loss_feedback}.
Applying  the Fenchel-Young identity to   $h^*_{\textrm{en}}$
gives that $h^*_{\textrm{en}}(x)+h_{\textrm{en}}((\nabla h^*_{\textrm{en}})(x))=\la x, (\nabla h^*_{\textrm{en}})(x)\ra $
for all $x\in \sR^p$. Hence, by \eqref{eq:entropy_regularized_proof} and
\eqref{eq:feedback_entropy},  \begin{align*}
f(t,x,{\psi_\theta}(t,x))
&=\la \bar{f}_0(t,x), {\psi_\theta}(t,x)\ra+h_{\textrm{en}}\big((\nabla h^*_{\textrm{en}})(-B^\top \nabla_xV^\theta(t,x) -\bar{f}_0(t,x))\big)
\\
&=\la \bar{f}_0(t,x), {\psi_\theta}(t,x)\ra +\la -B^\top \nabla_xV^\theta(t,x) -\bar{f}_0(t,x),{\psi_\theta}(t,x)\ra
\\
&\quad -h^*_{\textrm{en}}\big(-B^\top \nabla_xV^\theta(t,x) -\bar{f}_0(t,x)\big)
\\
&=-\la B^\top \nabla_xV^\theta(t,x) ,{\psi_\theta}(t,x)\ra-h^*_{\textrm{en}}\big(-B^\top \nabla_xV^\theta(t,x) -\bar{f}_0(t,x)\big).
\end{align*}
  Proposition \ref{prop:feedback_control_entropy}
shows  
for all $t\in [0,T]$,
$\nabla_xV^\theta(t,x)$ and ${\psi_\theta}(t,x)$ is twice differentiable in $(\theta,x)$. Moreover, the derivatives have at most quadratic growth in  $x$ and is locally Lipschitz continuous in $x$,
provided that $\theta\in \sB_\b(\theta_0)$. Hence, 
by  \cite[Theorem 9, p.~97]{krylov2008controlled}, for all $q\ge 2$,
$\sR^{d\t (d+p)}\ni \theta\mapsto 
(
\nabla_xV^\theta(\cdot,X^{\theta_0,\psi_\theta}_\cdot), \psi_\theta(\cdot,X^{\theta_0,\psi_\theta}_\cdot)
)
\in \cH^q(\sR^d)\t \cH^q(\sR^p)$ are twice continuously differentiable, 
which implies  the twice continuous differentiability of the following function:
$$
\sR^{d\t (d+p)}\ni\theta\mapsto
\sE\bigg[\int_0^T f(t,X^{\theta_0,\psi_\theta}_t,{\psi_\theta}(t,X^{\theta_0,\psi_\theta}_t))\, \d t\bigg]\in \sR.
$$
On the other hand, as $g\in C^4$ has bounded derivatives,  \cite[Theorem 9, p.~97]{krylov2008controlled} shows that 
for all $q\ge 2$,
$\sR^{d\t (d+p)}\ni \theta\mapsto g(X^{\theta_0, \psi^{\theta}}_T)\in L^q(\Om;\sR)$  is twice continuously differentiable,
which implies the same regularity of  $\sR\ni \theta\mapsto \sE[g(X^{\theta_0, \psi^{\theta}}_T)]\in \sR$.
 This finishes the proof of the statement that $\theta\mapsto J(\psi_\theta;\theta_0)$ is $C^2$.
 
 Observe that the function $\sR^{d\t (d+p)}\ni \theta\mapsto J(\psi_\theta;\theta_0)\in \sR$
is mininised at  $\theta=\theta_0$,
  which implies that  $\nabla_{\theta}J(\psi_\theta;\theta_0)\vert_{\theta=\theta_0}=0$. 
  Applying  Taylor's theorem  up to the second-order terms gives 
  the desired  quadratic performance gap.
\end{proof}

\section{Proofs of Theorems \ref{Theorem: regret}
and \ref{Theorem: regret_self explore}}
\label{sec:proof_regret_bound}

For the sake of presentation, 
we introduce the following notation  to identify the correspondence between the number of cycles and episodes used in   the PEGE algorithm (see Algorithm \ref{Alg: PEGE}). 
Let  $\cC: \sN\cup\{0\} \to \sN\cup\{0\}$ be the total number of  episodes after completing a given cycle satisfying 
{$\cC(K) \coloneqq K + \sum_{k=1}^K \mathfrak{m}(k)$}
 for all $K\in \sN\cup\{0\}$,
 $\kappa: \sN \to \sN$ be the corresponding cycle of a given  episode satisfying   $\kappa(m) \coloneqq \min\{k \in \sN : \cC(k)\ge m\}$ for all $m\in \sN$,
 and  
let   $\cE \coloneqq \{m^e_k\mid 
m^e_k =  \cC(k-1)+1, k\in \sN\}$
 be the collection  of all exploration  episodes.
The definition of $\kappa$ implies that for all $m\in \sN$, $\cC(\kappa(m)-1)< m\le \cC(\kappa(m))$. In particular,
there exist $\ul{c},\ol{c}\ge 0$ such that
 for all $m\in \sN$, 
$\underline{c} m \leq \kappa(m)^{1+r} \leq \bar{c} m$
{if $\mathfrak{m}(k) = \lf k^r \rf$ for all $k$,
and $\underline{c} m \leq 2^{\kappa(m)} \leq \bar{c} m$ if $\mathfrak{m}(k) = 2^k$ for all $k$.}

The following lemma shows that the minimum eigenvalue of
$G^{\bs{\theta},\bs{\Psi},m} = (V^{\bs{\theta}, \bs{\Psi}, m} )^{-1}$, with  $V^{\bs{\theta}, \bs{\Psi}, m}$ defined in \eqref{eq: statistics},
 increases
with the number of exploration episodes.
\begin{Lemma}
\label{lemma:nondegenerate}
  $\psi^e \in \cup_{C\ge 0} \cV_C$ satisfies (H.\ref{Assump: Exploration strategy}\ref{item:independence})
if and only if 
for any bounded subset $\Theta\subset \sR^{d\t (d+p)}$,
there exists $\lambda_0>0$ such that for all $\theta\in \Theta$, 
$\Lambda_{\min}( \psi^e,\theta) \geq \lambda_0$,
with the function 
$\Lambda_{\min}:  \cup_{C \geq 0}\cV_C
\t \sR^{d\t (d+p)}
\to [0,\infty)$ 
defined in 
\eqref{eq: Information value}.
%

\end{Lemma}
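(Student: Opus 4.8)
The plan is to reduce the statement to a pointwise‑in‑$\theta$ characterisation of the strict positivity of $\Lambda_{\min}(\psi^e,\theta)$, and then to promote pointwise positivity to uniform positivity on bounded sets by a continuity‑and‑compactness argument. Write $Z^{\theta}_t\coloneqq \begin{psmallmatrix} X^{\theta,\psi^e}_t\\ \psi^e(t,X^{\theta,\psi^e}_t)\end{psmallmatrix}\in\sR^{d+p}$ and $M(\theta)\coloneqq \sE\big[\int_0^T Z^{\theta}_t (Z^{\theta}_t)^\top\, \d t\big]\in\sS^{d+p}_+$, so that $\Lambda_{\min}(\psi^e,\theta)=\lambda_{\min}(M(\theta))$. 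Since $M(\theta)$ is symmetric positive semidefinite, $\lambda_{\min}(M(\theta))>0$ precisely when $w^\top M(\theta)w>0$ for every nonzero $w=(u,v)\in\sR^d\t\sR^p$; equivalently, when there is no nonzero $(u,v)$ with $\sE\big[\int_0^T (u^\top X^{\theta,\psi^e}_t+v^\top\psi^e(t,X^{\theta,\psi^e}_t))^2\,\d t\big]=0$.

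The first and main step would be to prove, for each fixed $\theta\in\sR^{d\t(d+p)}$ and each $(u,v)\in\sR^d\t\sR^p$, the equivalence
\[
\sE\Big[\int_0^T \big(u^\top X^{\theta,\psi^e}_t+v^\top\psi^e(t,X^{\theta,\psi^e}_t)\big)^2\,\d t\Big]=0
\iff
u^\top x+v^\top\psi^e(t,x)=0\ \text{for a.e.\ }(t,x)\in[0,T]\t\sR^d.
\]
Writing $\phi(t,x)\coloneqq u^\top x+v^\top\psi^e(t,x)$ and letting $\mu_t$ be the law of $X^{\theta,\psi^e}_t$, Fubini's theorem turns the left‑hand expectation into $\int_0^T\!\int_{\sR^d}|\phi(t,x)|^2\,\mu_t(\d x)\,\d t$. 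The drift $x\mapsto Ax+B\psi^e(t,x)$ of \eqref{eq:lc_sde_fb} has linear growth, since $\psi^e\in\cup_{C\ge0}\cV_C$; hence Bene\v{s}'s criterion applies and the associated Girsanov exponential is a true martingale, so that $\mu_t$ is equivalent to the nondegenerate Gaussian law on $\sR^d$ with mean $x_0$ and covariance $t I_d$. In particular $\mu_t$ and Lebesgue measure on $\sR^d$ are mutually absolutely continuous for each $t\in(0,T]$. Consequently the displayed expectation vanishes iff $\phi(t,\cdot)=0$ $\mu_t$‑a.e.\ for a.e.\ $t$, iff $\phi(t,\cdot)=0$ Lebesgue‑a.e.\ for a.e.\ $t$, i.e.\ $\phi=0$ a.e.\ on $[0,T]\t\sR^d$. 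Together with the previous paragraph this would show that $\Lambda_{\min}(\psi^e,\theta)>0$ iff (H.\ref{Assump: Exploration strategy}\ref{item:independence}) holds, and crucially the latter condition does not depend on $\theta$.

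The second step would be a routine continuity statement: by the standard stability estimates for SDEs with Lipschitz, linearly growing coefficients, $\theta\mapsto X^{\theta,\psi^e}$ is continuous from $\sR^{d\t(d+p)}$ into $\cS^2_\sF(\Om;\sR^d)$, and since $\psi^e\in\cup_{C\ge0}\cV_C$ the same holds for $\theta\mapsto Z^\theta\in\cH^2_\sF(\Om;\sR^{d+p})$; by Cauchy--Schwarz this yields continuity of $\theta\mapsto M(\theta)$, and hence of $\theta\mapsto\Lambda_{\min}(\psi^e,\theta)=\lambda_{\min}(M(\theta))$, on $\sR^{d\t(d+p)}$.

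Finally I would conclude as follows. For the ``only if'' direction, assume (H.\ref{Assump: Exploration strategy}\ref{item:independence}) and let $\Theta$ be any bounded subset of $\sR^{d\t(d+p)}$ (the empty case being vacuous). Then $\operatorname{cl}(\Theta)$ is compact, and by the first step the continuous map $\theta\mapsto\Lambda_{\min}(\psi^e,\theta)$ is strictly positive on it, hence attains a strictly positive minimum $\lambda_0>0$ there, which serves for all $\theta\in\Theta$. For the ``if'' direction, apply the hypothesis to a singleton $\Theta=\{\theta_0\}$ to obtain $\Lambda_{\min}(\psi^e,\theta_0)\ge\lambda_0>0$, and invoke the first step to deduce (H.\ref{Assump: Exploration strategy}\ref{item:independence}). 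The delicate point, and the step I expect to require the most care, is the mutual absolute continuity of $\mu_t$ and Lebesgue measure in the first step — that is, arguing that the linear growth of the drift already suffices to apply Girsanov's theorem without a separate Novikov‑type integrability check — which is exactly what Bene\v{s}'s condition provides.
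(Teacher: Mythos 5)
Your proposal is correct and its overall architecture coincides with the paper's: both reduce the uniform bound on bounded $\Theta$ to a pointwise-in-$\theta$ positivity statement via continuity of $\theta\mapsto\Lambda_{\min}(\psi^e,\theta)$ (SDE stability plus continuity of $\lambda_{\min}$) and compactness of $\operatorname{cl}(\Theta)$, and both characterise pointwise positivity through the equivalence of the law of $X^{\theta,\psi^e}$ with that of $x_0+W$. The routes diverge only in the key measure-theoretic step, namely showing that $u^\top X^{\theta,\psi^e}_t+v^\top\psi^e(t,X^{\theta,\psi^e}_t)=0$ $\d\sP\otimes\d t$-a.e.\ forces $u^\top x+v^\top\psi^e(t,x)=0$ for a.e.\ $(t,x)$: the paper argues by contradiction, using the continuity of $\psi^e(t,\cdot)$ to get open sections of the exceptional set $E$, a product decomposition of $E$ (via \cite[Theorem 28.7]{kechris2012classical}), and the fact that the law of $X^{\theta,\psi^e}_t$ charges open sets; you instead push the equivalence of path laws forward through the evaluation map to get $\mu_t\sim N(x_0,tI_d)\sim\mathrm{Leb}$ for $t\in(0,T]$ and conclude directly by Fubini--Tonelli. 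Your version is more streamlined and does not even need the continuity of $\psi^e$ in $x$ for this step; the paper's version avoids asserting marginal equivalence and works at the level of open sets. You also justify the Girsanov step by Bene\v{s}'s linear-growth criterion where the paper cites the Liptser--Shiryaev conditions together with $\|X^{\theta,\psi^e}\|_{\cS^2}<\infty$; both are adequate. No gaps.
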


\begin{proof}
Without loss of generality, let us fix a feedback control
  $\psi^e \in \cup_{C\ge 0} \cV_C$, 
a filtered probability space $(\Om,\cF, \sF, \sP)$ satisfying the usual condition,
and a  $d$-dimensional standard 
$\sF$-Brownian motion   $W$ on $(\Om, \cF, \sF, \sP)$.
The Lipschitz continuity of $\psi^e$ in the $x$-variable
and stability of SDEs
(see \cite[Theorem 3.2.4]{zhang2017backward})
 imply the continuity of 
$\sR^{d\t (d+p)}\ni \theta\mapsto 
\|X^{\theta, \psi^e}\|_{\cS^2}\in \sR$
and 
$\sR^{d\t (d+p)}\ni \theta\mapsto 
\|\psi^e(\cdot, X^{\theta, \psi^e}_\cdot)\|_{\cS^2}\in \sR$.
This shows that each entry of the matrix in
$\Lambda_{\min}( \psi^e,\theta)$ 
is continuous in $\theta$, as it involves only the
products of
$X^{\theta, \psi^e}$ and 
 $\psi^e(\cdot, X^{\theta, \psi^e}_\cdot)$.
Then, by 
(H.\ref{assum:bound}) and 
the continuity of the minimum eigenvalue function, 
it suffices to show that
the statement that 
(H.\ref{Assump: Exploration strategy}\ref{item:independence}) holds is equivalent to 
for any given $\theta\in \sR^{d\t (d+p)}$, there exists $\lambda_\theta>0$ such that 
$\Lambda_{\min}( \psi^e,\theta) \ge \lambda_{\theta}$.
The latter statement holds if and only if for any
$u\in \sR^d$ and $v\in \sR^p$,
$$
\begin{pmatrix}
u & v
\end{pmatrix}^\top
\mathbb{E}^{\sP}\Bigg[\int_{0}^{T}
\begin{pmatrix}
X^{{\theta}, {\psi}}_t \\ \psi\big(t, X^{{\theta }, {\psi}}_t\big) \end{pmatrix}
 \begin{pmatrix}
X^{{\theta }, {\psi}}_t \\ \psi\big(t, X^{{\theta }, {\psi}}_t\big) \end{pmatrix}^\top \d t \Bigg]
\begin{pmatrix}
u \\ v
\end{pmatrix}=0
$$
implies that  $u$ and $v$ are zero vectors, which can be  equivalently formulated as 
$u^\top X^{{\theta}, {\psi^e}}_t+v^\top \psi^e(t,X^{{\theta}, {\psi^e}}_t)=0$ 
for $\d\sP\otimes \d t$-a.e. implies 
 $u$ and $v$ are zero vectors.

Based on the above observations,  the if direction of the desired result holds clearly
by substituting $x=X^{{\theta}, {\psi^e}}_t$ in (H.\ref{Assump: Exploration strategy}\ref{item:independence}).
We now fix an arbitrary
$\theta\in \sR^{d\t (d+p)}$
and  prove the only if direction.
Let $u\in \sR^d$ and $v\in \sR^p$ such that
$u^\top X^{{\theta}, {\psi^e}}_t+v^\top \psi^e(t,X^{{\theta}, {\psi^e}}_t)=0$ 
for $\d\sP\otimes \d t$-a.e., and 
$E=\{(t,x)\in [0,T]\t \sR^d\mid u^\top x+v^\top \psi^e(t,x) \not =0 \}$, we aim  to show $E$ has Lebesgue  measure zero. 
Suppose that $E$ has non-zero  measure.  
As $\psi^e$ is continuous in the $x$-variable, $E_t=\{x\in \sR^d\mid (t,x)\in E\}$ is open for all $t\in [0,T]$. 
By \cite[Theorem 28.7]{kechris2012classical},
$E=\cup_{n\in \sN} (\Xi_n\t O_n)$, where 
$\cup_{n\in \sN}O_n$ is an open basis for the standard topology on $\sR^d$, and $\Xi_n\subset [0,T]$ is Borel for all $n\in \sN$. Since $E$ has positive  measure,
there exists
a Borel set $\Xi_n\subset [0,T]$ 
and an open set $O_n\subset \sR^d$  such that 
 $\Xi_n$ and $O_n$ have  positive measures
 and $\Xi_n\t O_n\subset E$. 
By Section  7.6.4 of \cite{liptser1977statistics}
and $\|X^{\theta,\psi^e}\|_{\cS^2}<\infty$, 
the law $\sP^{X^{\theta,\psi^e}}$ of 
$X^{\theta,\psi^e}$ is equivalent to the law of 
 $(x_0+W_t)_{t\in [0,T]}$.
 Hence, the fact that 
$O_n$ is an open set with positive measure
implies   $\sP(\{\om\in \Om \mid X^{\theta,\psi^e}_t(\om) \in O_n\})>0$ for all $t\in (0,T]$, 
which along with the fact that $\Xi_n$ has positive measure shows that $\{(\om, t)\in \Om\t [0,T]\mid 
(t, X^{\theta,\psi^e}_t(\om) )\in E\}$ has positive  $\d \sP\otimes \d t$ measure. 
This contradicts to the statement that
$u^\top X^{{\theta}, {\psi^e}}_t+v^\top \psi^e(t,X^{{\theta}, {\psi^e}}_t)=0$ 
for $\d\sP\otimes \d t$-a.e.
Consequently, 
$u^\top x+v^\top \psi^e(t,x)  =0$ for almost every $(t,x)\in [0,T]\t \sR^d$,
which along with the assumption implies that $u$ and $v$ are zero vectors
and  proves the only if direction.
\end{proof}

Based on Lemma  \ref{lemma:nondegenerate},
we quantify the accuracy of the estimators 
  $(\tilde{\bs{\theta}}_m)_{m\in \sN}$  in \eqref{eq:project}
generated by  the PEGE algorithm
 (without Step \ref{optional}).

\begin{Lemma}
\label{Lemma: hat theta bound high prob}
Suppose  (H.\ref{assum:lc}), (H.\ref{assum:bound}) and (H.\ref{Assump: Exploration strategy}) hold.
Let $\beta>0$, 
$\hat{\theta}_0 \in \sR^{d\t (d+p)}$, $V_0 \in \sS_+^{d+p}$,
{$\rho$ be a truncation function (cf.~Definition \ref{def: Truncate function})},
$r\in (0,1]$, 
and 
$\mathfrak{m} : \sN \to \sN$
be such that 
$\mathfrak{m}(k) = \lf k^r \rf$ for all $k\in\sN$.
There exists a constant $C\ge 0$ 
such that for all 
$\delta>0$ and $m \geq C\big(1+ \ln(\tfrac{1}{\delta}) \big)^{1+r}$, 
$$
\sP\big(|\tilde{\bs{\theta}}_m^{PEGE} -\bs \theta|^2  \leq 
\min\{
C m^{-1/(1+r)}( \ln m + \ln(\tfrac{1}{\delta}) ),
\beta^2\}
\big)\ge 1-\delta,
$$
where $\tilde{\bs{\theta}}_m^{PEGE} \coloneqq  \rho(\hat{\bs{\theta}}^{\bs{\Psi}^{PEGE}, m},  V^{\bs{\theta}, \bs{\Psi}^{PEGE}, m})$ and $(\hat{\bs{\theta}}^{\bs{\Psi}^{PEGE}, m},  V^{\bs{\theta}, \bs{\Psi}^{PEGE}, m})$ is  defined in   \eqref{eq: statistics}
\end{Lemma}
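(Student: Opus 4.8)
The plan is to combine the accuracy estimate of Theorem~\ref{Theorem: Information collapse} with a high-probability lower bound on $\lambda_{\min}(G^{\bs{\theta}, \bs{\Psi}^{PEGE}, m})$ produced by the exploration episodes, and then to transfer the resulting bound from the MAP estimate $\hat{\bs{\theta}}^{\bs{\Psi}^{PEGE}, m}$ to its truncation $\tilde{\bs{\theta}}_m^{PEGE}$. Abbreviate $\bs{\Psi} = \bs{\Psi}^{PEGE}$ (without Step~\ref{optional}). First I would check that $\bs{\Psi} \in \cU_L$ for some $L \ge 0$: by Proposition~\ref{prop:existence_fb} the optimal feedback controls $\psi_{\theta'}$ are Lipschitz in $x$ uniformly over $\theta'$ in the bounded range $\cK$ of $\rho$, which together with the Lipschitz continuity of $\psi^e$ in (H.\ref{Assump: Exploration strategy}) and Proposition~\ref{prop: PEGE as a learning} puts $\bs{\Psi}$ into $\cU_L$. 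Then Theorem~\ref{Theorem: Information collapse} applies and gives a constant $C$ with $\sP\big(\lambda_{\min}(G^{\bs{\theta}, \bs{\Psi}, m})\,|\hat{\bs{\theta}}^{\bs{\Psi}, m} - \bs{\theta}|^2 \le C(\ln m + \ln(\tfrac{1}{\delta}))\big) \ge 1 - \delta$ for all $m \in \sN \cap [2,\infty)$ and $\delta \in (0,1)$.

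Next I would bound $\lambda_{\min}(G^{\bs{\theta}, \bs{\Psi}, m})$ from below. By \eqref{eq: statistics}, $G^{\bs{\theta}, \bs{\Psi}, m} = V_0^{-1} + \sum_{n=1}^m M_n$ with $M_n \coloneqq \int_0^T Z^{\bs{\theta}, \bs{\Psi}, n}_t (Z^{\bs{\theta}, \bs{\Psi}, n}_t)^\top\,\d t \succeq 0$; discarding $V_0^{-1}$ and the exploitation terms gives $G^{\bs{\theta}, \bs{\Psi}, m} \succeq \sum_{n \in [1,m] \cap \cE} M_n$, a sum of exactly $\kappa(m)$ terms, where $\cE$ and $\kappa$ are as defined before Lemma~\ref{lemma:nondegenerate}. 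For an exploration episode $n \in \cE$ the policy $\Psi_n = \psi^e$ is deterministic and $W^n$ is independent of $\cF_{n-1} = \sigma\{\bs{\theta}, W^l : l \le n-1\} \vee \cN$, so the $\cF_{n-1}$-conditional law of $X^{\bs{\theta}, \bs{\Psi}, n}$ coincides with the law of $X^{\bs{\theta}, \psi^e}$ entering \eqref{eq: Information value}; hence $\sE[M_n \mid \cF_{n-1}] \succeq \Lambda_{\min}(\psi^e, \bs{\theta})\, I_{d+p} \succeq \lambda_0 I_{d+p}$ with $\lambda_0 > 0$ the constant supplied by Lemma~\ref{lemma:nondegenerate} (which applies thanks to (H.\ref{Assump: Exploration strategy}\ref{item:independence})) and independent of the realisation of $\bs{\theta}$. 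Writing $\sum_{n \in [1,m] \cap \cE} M_n = \sum_{n \in [1,m] \cap \cE} \sE[M_n \mid \cF_{n-1}] + R_m$, with $R_m$ a sum of conditionally centred matrices whose entries have uniformly bounded conditional sub-exponential Orlicz norms (by Propositions~\ref{prop:subguassian_subexponential} and \ref{prop:X_subgaussian}), I would apply Proposition~\ref{prop: concentration inequality} entrywise, exactly as in the proof of Theorem~\ref{Theorem: Information collapse}, to obtain $\|R_m\|_F \le C\big(\sqrt{\kappa(m)(1 + \ln(\tfrac{1}{\delta}))} + 1 + \ln(\tfrac{1}{\delta})\big)$ with probability at least $1 - \delta$. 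Since $\lambda_{\min}(A+B) \ge \lambda_{\min}(A) - \|B\|_F$, this gives $\lambda_{\min}(G^{\bs{\theta}, \bs{\Psi}, m}) \ge \lambda_0 \kappa(m) - C\big(\sqrt{\kappa(m)(1 + \ln(\tfrac{1}{\delta}))} + 1 + \ln(\tfrac{1}{\delta})\big)$, which by Young's inequality and the relation $\underline{c} m \le \kappa(m)^{1+r} \le \bar{c} m$ is at least $\tfrac{1}{2}\lambda_0 \kappa(m) \ge c\, m^{1/(1+r)}$ once $m \ge C(1 + \ln(\tfrac{1}{\delta}))^{1+r}$ for $C$ large enough.

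Finally, a union bound over the two events above, followed by rescaling $\delta \mapsto \delta/2$, would give a constant $C$ such that, for all $\delta \in (0,1)$ and $m \ge C(1 + \ln(\tfrac{1}{\delta}))^{1+r}$, with probability at least $1-\delta$ one has $|\hat{\bs{\theta}}^{\bs{\Psi}, m} - \bs{\theta}|^2 \le C\, m^{-1/(1+r)}(\ln m + \ln(\tfrac{1}{\delta}))$. To remove the truncation, I would use that $\operatorname{cl}(\Theta) \subset \operatorname{int}(\cK)$ to fix $\eta > 0$ whose $\eta$-neighbourhood of $\Theta$ is contained in $\cK$, and enlarge $C$ so that on the above range of $m$ the right-hand side is $\le \min(\eta^2, \beta^2)$ (possible since $m^{-1/(1+r)}\ln m$ is bounded there and $m^{-1/(1+r)}\ln(\tfrac{1}{\delta}) \le \underline{c}^{-1/(1+r)}(1 + \ln(\tfrac{1}{\delta}))^{-r} \le \underline{c}^{-1/(1+r)}$); then on the event $\hat{\bs{\theta}}^{\bs{\Psi}, m} \in \cK$, so $\tilde{\bs{\theta}}_m^{PEGE} = \rho(\hat{\bs{\theta}}^{\bs{\Psi}, m}, V^{\bs{\theta}, \bs{\Psi}, m}) = \hat{\bs{\theta}}^{\bs{\Psi}, m}$ and the asserted bound for $|\tilde{\bs{\theta}}_m^{PEGE} - \bs{\theta}|^2$ follows. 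The \emph{main obstacle} is the second step: upgrading the $\cF_{n-1}$-conditional nondegeneracy $\sE[M_n \mid \cF_{n-1}] \succeq \lambda_0 I$ into an almost-sure lower bound on the smallest eigenvalue of the accumulated exploration Gram matrix, which requires the conditional sub-exponential matrix concentration from Section~\ref{sec: bayesian} together with careful tracking of the number $\kappa(m)$ of exploration episodes and the comparison $\kappa(m)^{1+r} \asymp m$.
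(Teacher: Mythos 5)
Your proposal is correct and follows essentially the same route as the paper: verify $\bs{\Psi}^{PEGE}\in\cup_{C}\cU_C$ via the truncation and Proposition \ref{prop:existence_fb}, lower-bound $\lambda_{\min}(G^{\bs{\theta},\bs{\Psi},m})$ by keeping only the $\kappa(m)$ exploration Gram matrices and combining the a.s.\ bound $\lambda_0$ from Lemma \ref{lemma:nondegenerate} with the conditional sub-exponential matrix concentration, invoke Theorem \ref{Theorem: Information collapse}, and then remove the truncation by showing the error bound eventually falls below $\min(\eta^2,\beta^2)$ so that $\tilde{\bs{\theta}}_m=\hat{\bs{\theta}}^{\bs{\Psi},m}$. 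The only cosmetic differences (centering at $\sE[M_n\mid\cF_{n-1}]$ rather than $\sE[\,\cdot\mid\sigma\{\bs{\theta}\}]$, and Frobenius versus operator norm for the perturbation) are immaterial since the two conditional expectations coincide on exploration episodes.
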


\begin{proof}
For notational simplicity, 
we    denote by $C$ a generic constant independent of $m, \delta$,
and omit the dependence 
on ``PEGE" and $\bs{\Psi}^{PEGE}$
in the superscripts of all random variables, if no confusion occurs. 

We first 
analyse the accuracy of 
the estimator $\hat{\bs{\theta}}^{\bs{\Psi}, m}$  defined  in \eqref{eq: statistics}. 
For each $k\in \sN$, the  state process $X^{\bs{\theta}, \bs{\Psi}, m^e_k}$ for  the $k$-th exploration episode
  satisfies
$$
\d X_t =\big(\bs{A} X_t+\bs{B}
\psi^e(t,X_t) \big)\d t+\, \d W^{m^e_k}_t
, \q t\in [0,T],
\q X_0=x_0.
$$
Let $Z^{k,e}_t \coloneqq \begin{psmallmatrix}
X^{\bs{\theta}, \bs{\Psi}, m^e_k}_t \\  \psi^e(t,X^{\bs{\theta}, \bs{\Psi}, m^e_k}_t)
\end{psmallmatrix}$
for all $t\in [0,T]$, 
and  let 
${G}^{k,e} \coloneqq V_0^{-1} + \sum_{n=1}^{k}  \int_0^T Z^{k,e}_t (Z^{k,e}_t)^\top \d t$.
By following the same argument  as that for
\eqref{eq:concentration_Z},
there exists $C\ge 0$ such that 
for all $k\in \sN$ and $\delta>0$,
 \begin{align}
 \label{eq: concentrate explore}
 \begin{split}
\sP(\|{G}^{k,e} -\sE [{G}^{k,e}| \sigma\{\bs{\theta}\} ]\|_{\textrm{op}}
\ge  k \eps(k,\delta) 
)
\le \delta,
\q
\textnormal{
with $
\eps(k,\delta)\coloneqq 
\max \Big( \sqrt{\tfrac{\ln(2/\delta)}{Ck} },
\tfrac{\ln(2/\delta)}{Ck}\Big)$,
}
\end{split}
\end{align}
where $\|\cdot\|_{\textrm{op}}$ denotes the operator norm of a matrix. 
By Lemma   \ref{lemma:nondegenerate} and 
\cite[Theorem 13, p~76]{krylov2002introduction},
there exists $\lambda_0>0$ such that 
$\lambda_{\min}(\sE [{G}^{k,e} | \sigma\{\bs{\theta}\}])\ge k\lambda_0$
$\sP$-almost surely. Moreover, observe from \eqref{eq: concentrate explore} that 
there exists $C\ge 0$ such that 
for all $\delta>0$ and  $k \geq C\big(1+ \ln(\tfrac{1}{\delta}) \big)$, $\eps(k,{\delta}) \leq \lambda_0/2$. 
Hence there exists $C\ge 0$ 
such that for all 
$\delta>0$ and $k \geq C\big(1+ \ln(\tfrac{1}{\delta}) \big)$, with probability at least $1-\delta$, 
\begin{align}
\label{eq: min info bound}
\lambda_{\min}\big({G}^{k,e} \big) 
 &= \inf_{|u| = 1} \big(u^\top {G}^{k,e} u \big) = \inf_{ |u| = 1} \Big(u^\top \big(\sE [{G}^{k,e} | \sigma\{\bs{\theta}\}] \big) u  + u^\top \big(
{G}^{k,e} -\sE [{G}^{k,e} | \sigma\{\bs{\theta}\}]
\big) u\Big) \nonumber
\\
&\geq
\inf_{ |u| = 1} \big(u^\top \big(\sE [{G}^{k,e} | \sigma\{\bs{\theta}\}] \big) u \big) - 
\|{G}^{k,e} -\sE [{G}^{k,e} | \sigma\{\bs{\theta}\}] \|_{\textrm{op}}
    \geq  k \lambda_0/2.
\end{align} 
Using the fact that 
$\underline{c} m \leq \kappa(m)^{1+r} \leq \bar{c} m$
for all $m\in \sN$, 
there exists $C \ge 0$ 
such that for all
$\delta>0$ and $m \geq C\big(1+ \ln(\tfrac{1}{\delta}) \big)^{1+r}$, with probability at least $1-\delta$, 
$$
\lambda_{\min}\big(G^{\bs \theta, \bs \Psi, m}\big) \geq \lambda_{\min}\big(G^{\kappa(m),e}\big) \geq \tfrac{\kappa(m)\lambda_0}{2} \geq \tfrac{\lambda_0(\underline{c} m)^{{1}/({r+1})}}{2}.
$$

Observe that  $\bs{\Psi}^{PEGE}$ only executes the feedback control of form $\psi^e$ and $\psi_{\tilde{\bs{\theta}}_m}$ for some $m\in \sN$. As  $\tilde{\bs{\theta}}_m = \rho(\hat{\bs{\theta}}^{m},  V^{\bs{\theta},  m})$ takes values in a bounded set $\cK\coloneqq \operatorname{range}(\rho)$ (cf.~Definition \ref{def: Truncate function}), 
by
Proposition \ref{prop:existence_fb}
and the fact that $\psi^e\in \cup_{C\ge 0} \cV_C$, we have 
$\bs{\Psi}^{PEGE} \in  \cup_{C\ge 0}\cU_C$.
Hence, by Theorem \ref{Theorem: Information collapse} 
and the above estimate for $\lambda_{\min}(G^{\bs \theta, \bs \Psi, m})$, 
there exists $C \ge 0$ 
such that for all 
$\delta>0$ and $m \geq C\big(1+ \ln(\tfrac{1}{\delta}) \big)^{1+r}$, 
\bb\label{eq:hat_theta_m_accuracy}
\sP\Big(|\hat{\bs{\theta}}^{\bs{\Psi} , m} - \bs \theta|^2  \leq C m^{-1/(1+r)}( \ln m + \ln(\tfrac{1}{\delta}) )\Big)\ge 1-\delta.
\ee

We then prove with high probability, $\tilde{\bs{\theta}}_m  = \hat{\bs{\theta}}^{\bs{\Psi} , m} $ 
and $|\hat{\bs{\theta}}^{\bs{\Psi} , m} -\bs \theta|\le \beta$ 
for all $m$ sufficiently large,
where  $\beta>0$  is the constant given in the statement.
  By Definition \ref{def: Truncate function},  $
\operatorname{cl}(\Theta)\subseteq
\operatorname{int}(\cK)$,
with $\cK=\operatorname{range}(\rho)$.
Hence,
there exists $\eta>0$ such that 
$\sB_\eta (\Theta)\subseteq  \cK$,
with 
$\sB_\eta (\Theta)\coloneqq \{\tilde{\theta}\in \sR^{d\t (d+p)}\mid  
\textnormal{$\exists \theta\in \Theta$ s.t.~$|\tilde{\theta}-\theta|\le \eta$
}\}$.
Observe that there exists $C_0\ge 0$ such that 
for all 
$\delta>0$, 
$m \geq C_0\big(1 + \ln(\tfrac{1}{\delta}) \big)^{r+1} $, 
$| C m^{-1/(1+r)} \big( \ln m +  \ln(\tfrac{1}{\delta})  \big) | \leq \min(\eta^2,\beta^2)$,
where $C$  is the same   constant  as that in \eqref{eq:hat_theta_m_accuracy},
and  $\beta$  is the constant given in the statement.
Combining this with 
\eqref{eq:hat_theta_m_accuracy} yields that 
there exists $C \ge 0$ 
such that for all 
$\delta>0$ and $m \geq C\big(1+ \ln(\tfrac{1}{\delta}) \big)^{1+r}$, 
\bb\label{eq:hat_theta_m_accuracy_beta}
\sP\Big(|\hat{\bs{\theta}}^{\bs{\Psi}, m} - \bs \theta|^2  \leq 
\min\big\{
C m^{-1/(1+r)}( \ln m + \ln(\tfrac{1}{\delta}) ),
\eta^2,\beta^2\big\}
\Big)\ge 1-\delta,
\ee
which
along with  $ \sB_\eta (\Theta)\subseteq  \cK$
and 
$\tilde{\bs{\theta}}_m = \rho(\hat{\bs{\theta}}^{m},  V^{\bs{\theta},  m})=\hat{\bs{\theta}}^{m}$
provided that $\hat{\bs{\theta}}^{m}\in \cK$
(see  Definition \ref{def: Truncate function}) implies that 
$\tilde{\bs{\theta}}_m  = \hat{\bs{\theta}}^{\bs{\Psi},m}\in \cK$
on the above event, and  finishes the proof of the desired estimate.
\end{proof}

The following theorem proves a sublinear bound for the expected regret of  the PEGE algorithm
 (without Step \ref{optional}).

\begin{Theorem}
\label{Theorem: expected regret}
Suppose  (H.\ref{assum:lc}), (H.\ref{assum:bound}), (H.\ref{Assump: Exploration strategy}) and (H.\ref{Assump: Performance gap}) hold.
Let 
$\hat{\theta}_0 \in \sR^{d\t (d+p)}$, $V_0 \in \sS_+^{d+p}$,
$\rho$ be a truncation function (cf.~Definition \ref{def: Truncate function}),
and 
$\mathfrak{m} : \sN \to \sN$
be such that 
$\mathfrak{m}(k) = \lf k^r \rf$ for all $k\in\sN$
 with 
  $r\in (0,1]$ being the same as in (H.\ref{Assump: Performance gap}).
 Then there exists a constant $C \geq 0$  such that  for all $\delta \in (0,1)$, 
  with probability at least $1-\delta$,
$$
\sum_{m=1}^N \Big( J(\Psi^{PEGE}_m; \bs{\theta}) - 
 J(\psi_{\bs{\theta}}; \bs{\theta})
\Big)
 \leq C 
 \Big(
 N^{\frac{1}{1+r}} \big( (\ln N)^r + \big( \ln(\tfrac{1}{\delta}) \big)^{r} \big)
 +\big( \ln(\tfrac{1}{\delta}) \big)^{1+r}
 \Big), \q
\fa N\in \sN\cap [2,\infty).
$$
where for each $\om \in \Om$, $J(\Psi^{PEGE}_m; \bs{\theta})(\omega) \coloneqq J(\Psi^{PEGE}_m(\omega, \cdot, \cdot) ; \bs{\theta}(\omega))$ and 
$J(\psi_{\bs{\theta}}; \bs{\theta})(\omega)
\coloneqq 
J(\psi_{\bs{\theta}(\omega)}; \bs{\theta}(\omega))$ with $J$  and $\psi_\theta$ given in Proposition \ref{prop:existence_fb}.
\end{Theorem}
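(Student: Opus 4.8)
The plan is to control the second sum in the regret decomposition \eqref{eq: Bayesian regret decomposition}, i.e.\ $\sum_{m=1}^N\bigl(J(\Psi^{PEGE}_m;\bs\theta)-J(\psi_{\bs\theta};\bs\theta)\bigr)$, by splitting it exactly as in \eqref{eq: expected regret decomposition} into the exploration contribution $\sum_{m\in[1,N]\cap\cE}\bigl(J(\psi^e;\bs\theta)-J(\psi_{\bs\theta};\bs\theta)\bigr)$ and the exploitation contribution $\sum_{m\in[1,N]\setminus\cE}\bigl(J(\psi_{\tilde{\bs\theta}_{m^e}};\bs\theta)-J(\psi_{\bs\theta};\bs\theta)\bigr)$, where $m^e$ is the last exploration episode before episode $m$ and, by construction of Algorithm \ref{Alg: PEGE}, $\Psi^{PEGE}_m=\psi_{\tilde{\bs\theta}_{m^e}}$ on every exploitation episode. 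The exploration part is handled deterministically: by (H.\ref{Assump: Exploration strategy}\ref{item:exploration_quadratic}) together with (H.\ref{assum:bound}) we have $|J(\psi^e;\theta)|\le C$ for all $\theta\in\Theta$ (cf.\ Remark \ref{rmk:exploration_policy}), and Proposition \ref{prop:existence_fb} together with the standard moment estimate for \eqref{eq:lc_sde_fb} gives $|J(\psi_\theta;\theta)|\le C$ for all $\theta\in\Theta$; since the number of exploration episodes among the first $N$ is precisely $\kappa(N)$ and $\kappa(N)^{1+r}\le\bar cN$, this part is bounded $\sP$-a.s.\ by $C\kappa(N)\le CN^{1/(1+r)}$.

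For the exploitation part I would group the episodes by their cycle index $k$ and treat ``early'' cycles $k<k_0$ and ``late'' cycles $k\ge k_0$ separately, where $k_0$ is an integer of order $1+\ln(\tfrac{1}{\delta})$ fixed below. For the early cycles one only uses that $\tilde{\bs\theta}_{m^e_k}=\rho(\hat{\bs\theta}^{\bs{\Psi},m^e_k},V^{\bs\theta,\bs{\Psi},m^e_k})$ takes values in the bounded set $\cK=\operatorname{range}(\rho)$ (Definition \ref{def: Truncate function}), so that Proposition \ref{prop:existence_fb} and moment estimates yield the crude bound $|J(\psi_{\tilde{\bs\theta}_{m^e_k}};\bs\theta)-J(\psi_{\bs\theta};\bs\theta)|\le C$ for all $\theta\in\Theta$ and all values of $\tilde{\bs\theta}_{m^e_k}\in\cK$; summing this over all episodes contained in cycles $1,\dots,k_0$, whose total number $\cC(k_0)$ is at most $Ck_0^{1+r}$, contributes at most $C(1+\ln(\tfrac{1}{\delta}))^{1+r}$.

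For the late cycles I would apply Lemma \ref{Lemma: hat theta bound high prob} together with a union bound over $k$: running it in cycle $k$ with confidence level $\delta_k=\tfrac{6}{\pi^2}\delta/k^2$ (so that $\sum_{k\ge1}\delta_k=\delta$) shows that, on an event of probability at least $1-\delta$, for every $k\ge k_0$ one has both $|\tilde{\bs\theta}_{m^e_k}-\bs\theta|\le\beta$ and $|\tilde{\bs\theta}_{m^e_k}-\bs\theta|^2\le C(m^e_k)^{-1/(1+r)}\bigl(\ln m^e_k+\ln(\tfrac{1}{\delta_k})\bigr)$; here $k_0$ is chosen, using $\underline{c}\,m^e_k\le k^{1+r}\le\bar c\,m^e_k$, so that the threshold $m^e_k\ge C(1+\ln(\tfrac{1}{\delta_k}))^{1+r}$ required by the lemma holds for all $k\ge k_0$, which forces $k_0$ to be of order $1+\ln(\tfrac{1}{\delta})$, while the spurious $\ln k$ coming from $\ln(\tfrac{1}{\delta_k})=\ln(\tfrac{1}{\delta})+O(\ln k)$ is absorbed into $\ln m^e_k\le C(1+\ln k)$. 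Feeding $|\tilde{\bs\theta}_{m^e_k}-\bs\theta|\le\beta$ into (H.\ref{Assump: Performance gap}) gives
\[
|J(\psi_{\tilde{\bs\theta}_{m^e_k}};\bs\theta)-J(\psi_{\bs\theta};\bs\theta)|\le L_\Theta|\tilde{\bs\theta}_{m^e_k}-\bs\theta|^{2r}\le C(m^e_k)^{-r/(1+r)}\bigl(\ln m^e_k+\ln(\tfrac{1}{\delta})\bigr)^r\le Ck^{-r}\bigl(\ln k+\ln(\tfrac{1}{\delta})\bigr)^r,
\]
and multiplying by the number $\mathfrak{m}(k)=\lfloor k^r\rfloor\le k^r$ of exploitation episodes in cycle $k$ shows that each late cycle contributes at most $C\bigl(\ln k+\ln(\tfrac{1}{\delta})\bigr)^r$. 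Summing over $k=k_0,\dots,\kappa(N)$, using $\kappa(N)\le CN^{1/(1+r)}$ and the subadditivity $(a+b)^r\le a^r+b^r$ for $r\in(0,1]$, bounds the late-cycle contribution by $CN^{1/(1+r)}\bigl((\ln N)^r+(\ln(\tfrac{1}{\delta}))^r\bigr)$.

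Collecting the three pieces and absorbing the bare term $CN^{1/(1+r)}$ into $CN^{1/(1+r)}(\ln N)^r$ (valid for $N\ge2$) and the additive constants into the remaining terms yields the asserted bound with probability at least $1-\delta$. The step I expect to be the main obstacle is the careful bookkeeping around the union bound for the late cycles: one must calibrate the per-cycle confidence levels $\delta_k$ so that the aggregate failure probability is $\delta$ while the induced $\ln(\tfrac{1}{\delta_k})$ terms do not degrade the rate — the saving observation being that the extra $\ln k$ factors are dominated by $\ln m^e_k$, which already appears in Lemma \ref{Lemma: hat theta bound high prob}, so the union bound is essentially free; the remaining work is the routine but fiddly translation between the cycle index $k$, the exploration-episode index $m^e_k$, and the total episode count $N$ through $\cC$, $\kappa$ and $\mathfrak{m}$.
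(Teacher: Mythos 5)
Your proposal is correct and follows essentially the same route as the paper's proof: the same decomposition into exploration episodes, early cycles (bounded crudely via boundedness of costs on $\operatorname{range}(\rho)$, contributing $C(\ln(\tfrac{1}{\delta}))^{1+r}$), and late cycles controlled by Lemma \ref{Lemma: hat theta bound high prob} combined with (H.\ref{Assump: Performance gap}) and a union bound with polynomially shrinking confidence levels, followed by the same cycle-by-cycle summation using $\mathfrak{m}(k)=\lfloor k^r\rfloor$ and $\underline{c}m\le\kappa(m)^{1+r}\le\bar{c}m$. The only (immaterial) difference is that you index the union bound by the cycle $k$ with $\delta_k=6\delta/(\pi^2k^2)$ whereas the paper indexes it by the episode $m$ with $\delta_m=6\delta/(\pi^2m^2)$.
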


\begin{proof}
Throughout this proof, 
we denote by  $C$  a generic constant independent of $m, \delta$ and $\om$,
and omit the dependence 
on ``PEGE" 
in the superscripts (e.g., $\bs{\Psi} = \bs{\Psi}^{PEGE}$).

By Lemma \ref{Lemma: hat theta bound high prob} and 
 (H.\ref{Assump: Performance gap}),
 there exists a constant $C_0\ge 0$ 
such that for all 
$\delta>0$ and $m \geq C_0\big(1+ \ln(\tfrac{1}{\delta}) \big)^{1+r}$, $m \in\sN\setminus \cE$,
\bb\l{eq: One Step regret}
\sP\Big(
 \big|
J(\psi_{\tilde{\bs{\theta}}_m };\bs\theta)
 - J(\psi_{\bs{\theta}}; \bs{\theta})
 \big|
   \leq 
C_0 m^{-\frac{r}{1+r}}( \ln m + \ln(\tfrac{1}{\delta}) )^r
\Big)\ge 1-\delta.
\ee
Observe that there exists $C\ge 0$ such that for all $\delta>0$, 
$m\ge  C\big(1+ \ln(\tfrac{1}{\delta}) \big)^{1+r}$
implies that 
$m\ge C_0\big(1+ \ln({1}/{\delta}_m) \big)^{1+r}$
with $\delta_m\coloneqq 6\delta/(\pi^2 m^2)$.
Hence, applying \eqref{eq: One Step regret} with 
$\delta_m$
for all sufficiently large $m$, 
  summing 
 the corresponding probabilities 
  over all $m$
  and using $m\ge C \kappa(m)^{1+r}$
  yield  that,
for all 
$\delta>0$, with probability at least
$1-\delta$,
we have for all  
$m\in \sN$ with 
{$\kappa(m) \ge  C\big(1+ \ln(\tfrac{1}{\delta}) \big)$},

\bb\l{eq: One Step regret_uniform}
 \big|
J(\psi_{\tilde{\bs{\theta}}_m };\bs\theta)
 - J(\psi_{\bs{\theta}}; \bs{\theta})
 \big|
   \leq 
C m^{-\frac{r}{1+r}}( \ln m + \ln(\tfrac{1}{\delta}) )^r.
\ee
On the event considered above,  for all $N\in \sN$, we consider the following decomposition
\begin{align}
&\sum_{m=1}^N \Big( J(\Psi_m; \bs{\theta}) - 
 J(\psi_{\bs{\theta}}; \bs{\theta})
\Big) \nonumber
\\
&\q
 = \sum_{m \in \sN\cap [1,N]\cap \cA_{\delta,e}^c }\Big( J(\Psi _m; \bs{\theta}) - 
 J(\psi_{\bs{\theta}}; \bs{\theta})
\Big)
 +  \sum_{m \in \sN\cap [1,N]\cap \cA_{\delta,e}}\Big( J(\Psi _m; \bs{\theta}) - 
 J(\psi_{\bs{\theta}}; \bs{\theta})
\Big),
 \label{eq: regret bound 2}
\end{align}
where the set 
$\cA_{\delta,e}$ is defined by 
{$\cA_{\delta,e}\coloneqq 
\{ m\in \sN\mid \kappa(m) \ge  C\big(1+ \ln(\tfrac{1}{\delta}) \big) , m \not \in \cE\}$}
(with the same constant $C$  as in \eqref{eq: One Step regret_uniform}).

We first  estimate the first term in \eqref{eq: regret bound 2}.
It is  shown in Lemma \ref{Lemma: hat theta bound high prob} that
$\bs{\Psi} \in \cU_C$ for some constant $C \geq 0$.
Standard moment estimates of the state dynamics yield that $\sE[\sup_t |X^{\bs{\theta},\bs{\Psi}, m}_t|^2]\le C$.
By (H.\ref{assum:lc}),
 Proposition \ref{prop:existence_fb}
Item \ref{item:f_psi} and 
(H.\ref{Assump: Exploration strategy}\ref{item:exploration_quadratic}),
$|f(t,x,\Psi_m(\om,t,x))|+|g(x)|\le C(1+|x|^2)$
for all $m\in \sN$, $(\om,t,x)\in \Om \t [0,T]\t \sR^d$. This estimate along with (H.\ref{assum:bound})
implies that 
$|J(\Psi_m (\omega, \cdot, \cdot), \bs{\theta}(\omega)) |\le C$
for all $m \in \sN$  and $\omega \in \Omega$. 
Moreover, by using the continuity of $V^\star:\Theta\to \sR$ and 
(H.\ref{assum:bound}), 
$|V^\star(\theta)|=|J(\psi_{{\theta}}; {\theta})|\le C$ for all $\theta\in \Theta$.
Hence for all $N\ge 2$,
\begin{align*}
\sum_{m \in \sN\cap [1,N]\cap \cA_{\delta,e}^c }\Big( J(\Psi_m; \bs{\theta}) - 
J(\psi_{\bs{\theta}}; \bs{\theta})
\Big)
 &\le C\Big(
 |\{m\in\sN\mid \kappa(m) \ge  C\big(1+ \ln(\tfrac{1}{\delta}) \big)\}|
 +\kappa(N)\Big)
 \\
 &\le C \big(  \big( \ln(\tfrac{1}{\delta}) \big)^{1+r}
 +N^{\frac{1}{1+r}}   \big).
\end{align*}

We then estimate the second term in \eqref{eq: regret bound 2} for the PEGE Algorithm 
(without Step \ref{optional}).
For all $m \in\sN\setminus \cE$, 
$J(\Psi_m; \bs{\theta})
=J(\psi_{\ol{\bs{\theta}}};\theta)$ 
with 
$\ol{\bs{\theta}} = \tilde{\bs{\theta}}_{m^e_{\kappa(m)}}$, where $m^e_{\kappa(m)}$ is the latest  exploration episode  before the $m$-th episode.
By \eqref{eq: One Step regret_uniform},
%
%
%
%
\begin{align}
& \sum_{m \in \sN\cap [1,N]\cap \cA_{\delta,e}}\Big( J(\Psi_m; \bs{\theta}) - 
 J(\psi_{\bs{\theta}}; \bs{\theta})
\Big)
 \le 
 C\sum_{m \in \sN\cap [1,N]\cap \cA_{\delta,e}}\Big( 
  (m^e_{\kappa(m)})^{-\frac{r}{1+r}}( \ln (m^e_{\kappa(m)}) + \ln(\tfrac{1}{\delta}) )^r
  \Big)  
 \nb \\
 &\qq \qq 
 \leq  C\sum_{k =1}^{\kappa(N)} \lf k^r \rf \Big(  
 (m_k^e)^{-\frac{r}{1+r}}( \ln (m_k^e) + \ln(\tfrac{1}{\delta}) )^r \Big)
 \nb \\
 &\qq \qquad \leq C\sum_{k =1}^{\kappa(N)}  \lf k^r \rf \Big(  
k^{-r}(1 + \ln (k) + \ln(\tfrac{1}{\delta}) )^r \Big) \leq C\sum_{k =1}^{\kappa(N)}  \Big(  
1 + \ln (k) + \ln(\tfrac{1}{\delta}) \Big)^r 
 \nb  \\
 &\qq \qquad \leq {C} \kappa(N) \big( (\ln \kappa(N))^r + \big( \ln(\tfrac{1}{\delta}) \big)^{r} \big)  \leq {C} N^{\frac{1}{1+r}} \big( (\ln N)^r + \big( \ln(\tfrac{1}{\delta}) \big)^{r} \big)
 \label{eq:regret_exploitatoin}
\end{align}
where the interchange between episodes  and cycles follows from the fact that $\underline{c} m \leq \kappa(m)^{1+r} \leq \bar{c} m$ for all $m$.
Combining these estimates with \eqref{eq: regret bound 2} leads to  the desired result. 
\end{proof}

\begin{proof}[Proof of Theorem \ref{Theorem: regret}]

Throughout this proof, 
we denote by  $C$  a generic constant independent of $m, \delta$ and omit the dependence 
on ``PEGE" 
in the superscripts (e.g., $\bs{\Psi} = \bs{\Psi}^{PEGE}$).

It has been shown in the proof of Theorem \ref{Theorem: expected regret} that 
$\bs{\Psi} \in \cU_C$ and
$|g(x)|+|f(t,x,\Psi_m(\om,t,x))|\le C(1+|x|^2)$
for all $m\in \sN$,  $(\om,t,x)\in \Om \t [0,T]\t \sR^d$.
Then by \eqref{eq:cost_m},  Propositions 
 \ref{prop:subguassian_subexponential}
and \ref{prop:X_subgaussian},
$\|\ell_m(\bs{\Psi},\bs{\theta})\|_{1,\cF_{m-1}}\le C$ for all $m\in \sN$,
where $\cF_{m-1}$ is defined as in 
Proposition \ref{prop:X_subgaussian}.
Observe that 
$\sE [\ell_m(\bs{\Psi},\bs{\theta})\mid \cF_{m-1}]=J({\Psi}_m; \bs{\theta})$ for all $m\in \sN$.
Hence, by Proposition \ref{prop: concentration inequality}, there exists $C\ge 0$ such that for all 
$N\in \sN$ and $\delta>0$,
$$\sP\bigg( \bigg|\sum_{n=1}^N \Big(\ell_n(\bs{ \Psi}, \bs{\theta})- J({\Psi}_n; \bs{\theta})\Big) \bigg| \geq N\eps(N,\delta)\bigg) \leq \delta,$$
with 
$\eps(N,\delta)\coloneqq \max \Big( \sqrt{\tfrac{\ln(2/\delta)}{CN} },
\tfrac{\ln(2/\delta)}{CN}\Big)
\le CN^{-1/2}(1+{\ln(\tfrac{1}{\delta})})
$.
Then for each $N\in \sN$ and $\delta>0$, applying the above inequality with $\delta_N=6 \delta/(\pi^2 N^2)$ and summing the corresponding probabilities for all $N\ge 2$,
there exists $C\ge 0$ such that for all 
$\delta>0$,
$$\sP\bigg( \bigg|\sum_{n=1}^N \Big(\ell_n(\bs{ \Psi}, {\theta})-J({\Psi}_n; \bs{\theta}) \Big) \bigg| \le
{C}N^{1/2}\big(\ln(N) + \ln(\tfrac{1}{\delta})\big),
\; \fa N\in \sN\cap[2,\infty)
\bigg) \ge 1- \delta.$$
Combining the above estimate with 
Theorem \ref{Theorem: expected regret}  and the fact that $r \in (0,1]$
yields that 
there exists a constant $C \geq 0$  such that  for all $\delta > 0$, 
  with probability at least $1-\delta$,
\bb\label{eq:freq_regret_PW}
{\cR}(N,  \bs{\Psi} ,  \bs{\theta}) 
 \leq C 
  \Big(
 N^{\frac{1}{1+r}} \big( (\ln N)^r + \big( \ln(\tfrac{1}{\delta}) \big)^{r} \big)
 +\big( \ln(\tfrac{1}{\delta}) \big)^{1+r}
 \Big), \q
\fa N\in \sN\cap [2,\infty).
\ee

It now remains to prove the expected regret bound in Theorem \ref{Theorem: regret}.
For each $\lambda>0$, applying 
 \eqref{eq:freq_regret_PW}
 with $\delta = \exp(-\lambda^{\frac{1}{r+1}})$
 gives that 
there exists a constant $C > 0$  such that  for all $\lambda>0$, 
$$
\sP\Big(
CN^{-\frac{1}{1+r}}{\cR}(N,  \bs{\Psi},  \bs{\theta}) -(\ln N)^r \ge \lambda, \q
\fa N\in \sN\cap [2,\infty)
\Big)\le  \exp(-\lambda^{\frac{1}{r+1}}).
$$
For each $N\in \sN$, let 
$D_N\coloneqq 
CN^{-\frac{1}{1+r}}{\cR}(N,  \bs{\Psi},  \bs{\theta}) -(\ln N)^r $.
Then for all $ N\in \sN\cap [2,\infty)$ 
and $\lambda>0$, 
$\sP(\max(D_N,0)>\lambda)=\sP(D_N\ge \lambda)
\le \exp(-\lambda^{\frac{1}{r+1}})$.
Hence for all $N\in \sN\cap [2,\infty)$,
$$
\mathbb{E}[ D_N] \leq \mathbb{E} \Big[ \max(D_N, 0) \Big]  = \int_0^\infty \sP\big( \max(D_N, 0) > \lambda \big) \, \d \lambda \leq \int_0^\infty \exp(-\lambda^{\frac{1}{r+1}}) \,\d \lambda < \infty,
$$
where the last inequality follows from 
$r\in (0,1]$. This finishes the proof of Theorem \ref{Theorem: regret}.
\end{proof}

\begin{proof}[Proof of Theorem \ref{Theorem: regret_self explore}]

The proof  follows from similar arguments  as that of 
Theorem \ref{Theorem: regret},
 and we only present the main steps here.
For notational simplicity, we  omit the dependence on ``PEGE'' (e.g., $\bs{\Psi} = \bs{\Psi}^{PEGE}$), 
and 
denote by $C \geq 0$ a generic constant independent of $m$, $\delta$ and $N$.

 By \eqref{eq: statistics} and \eqref{eq:concentration_Z},
there exists a constant $C\ge 0$ such that 
 for all $m\in \sN$ and $\delta\in (0,1)$,
 with probability  at least  $1-\delta$, 
 \begin{align*}
 \begin{split}
&\bigg|
G^{\bs{\theta}, \bs{\Psi}, m}
- V_0^{-1} -
\sum_{n=1}^m  \sE \Big[\int_0^T Z^{\bs \theta, \bs{\Psi}, n}_t
(Z^{\bs \theta, \bs{\Psi}, n}_t)^\top\, \d t
\Big| \cF_{n-1}\Big]
 \bigg| 
 \le  m \max \bigg( \sqrt{\frac{\ln(\tfrac{1}{\delta})}{Cm} },
\frac{\ln(\tfrac{1}{\delta})}{Cm} \bigg).
\end{split}
\end{align*}
Observe that for all $m\in \cE$, $\Psi_m=\psi^e$
and for all  $m\in \sN\setminus \cE$, 
$\Psi_m=\psi_{\bar{\bs{\theta}}_m}$,
where  
$
\ol{\bs{\theta}}_m \coloneqq \tilde{\bs{\theta}}_{m^e_{\kappa(m)}}$ with $\tilde{\bs{\theta}}_m$ given in \eqref{eq:project}
and $m^e_{\kappa(m)}$ is the latest  exploration episode  before the $m$th episode;
without loss of generality,
we also define 
 $\ol{\bs{\theta}}_m  =\rho({\hat{\theta}}_{0},V_0)$
 for $m\in \cE$.
Then by  the additional 
assumption on $\Lambda_{\min}(\theta, \psi_{\theta'})$,
  Lemma \ref{lemma:nondegenerate},
  and the fact that $\Psi_m$  depends on $(W^n)_{n=1}^{m-1}$ and $\bs{\theta}$ only through $\bar{\bs{\theta}}_m$, 
  there exists $\lambda_0>0$ such that 
for all $m\in \sN$ and $u\in \sR^{d+p}$, 
\begin{align*}
&u^\top \sE \bigg[\int_0^T Z^{\bs{\theta}, \bs{\Psi}, m}_t
(Z^{\bs{\theta}, \bs{\Psi}, m}_t)^\top\, \d t
\Big| \cF_{m-1}\bigg] u
= 
u^\top
\sE \bigg[\int_0^T Z^{\bs{\theta}, \bs{\Psi}, m}_t
(Z^{\bs{\theta}, \bs{\Psi}, m}_t)^\top\, \d t
\Big| 
\sigma\{\bs \theta, \bar{\bs{\theta}}_{m}\} \bigg] 
u
 \\
 &\qquad \ge 
 \sE \bigg[
\min\Big( \Lambda_{\min}(\psi_{\bar{\bs{\theta}}_{m}},\bs \theta), \Lambda_{\min}( \psi^e,\bs \theta) \Big)
|u|^2
\Big| \sigma\{\bs{\theta}, \bar{\bs{\theta}}_{m}\}
 \bigg] 
 \geq \lambda_0|u|^2.
\end{align*}
Hence  a similar argument as that for  \eqref{eq: min info bound} shows that 
 there exists $C\ge 0$ 
such that for all 
$\delta>0$ and $m \geq C\big(1+ \ln(\tfrac{1}{\delta}) \big)$, 
$\sP\Big(\lambda_{\min}(G^{\bs{\theta}, \bs{\Psi}, m})\geq m \lambda_0/2\Big)  \ge 1-\delta $.
Proceeding along the lines of 
the proof 
of Lemma \ref{Lemma: hat theta bound high prob} yields that 
 there exists $C\ge 0$ 
such that for all 
$\delta>0$ and $m \geq C\big(1+ \ln(\tfrac{1}{\delta}) \big)$, 
\begin{equation}
\label{eq: error of Greedy self-explore}
\sP\Big( \big| \tilde{\bs{\theta}}_m -\bs{\theta} \big|^2 \leq 
\min\{
{C}m^{-1} \big( \ln m + \ln(\tfrac{1}{\delta}) \big),
\beta^2\}
\Big) \geq 1-\delta,
\end{equation}
with   the constant $\beta$ in (H.\ref{Assump: Performance gap}). 
By considering $\delta_m = 6\delta/(\pi^2m^2)$ as in Theorem \ref{Theorem: expected regret} and using (H.\ref{Assump: Performance gap}), 
 there exists $C\ge 0$ 
such that for all 
$\delta>0$, 
with probability at $1-\delta$,
\begin{equation}
\label{eq: error of Greedy self-explore2}
  \big|
J(\psi_{\tilde{\bs{\theta}}_m};\bs{\theta})
 -  J(\psi_{\bs{\theta}}; \bs{\theta})
 \big| \leq {C}m^{-r} \big( \ln m + \ln(\tfrac{1}{\delta}) \big)^r, 
 \q 
 \textnormal{
 $\fa m\in \sN$ s.t.~$\kappa (m) \geq C + \log_2\big( 1+\ln(\tfrac{1}{\delta})\big)$,}
\end{equation}
where the condition regarding $\kappa$ follows from the inequality  $ 2^{\kappa(m)} \leq \bar{c} m$ for all $m\in \sN$. 

On the event considered above,  for all $N\in \sN$, we consider the decomposition as in \eqref{eq: regret bound 2};
\begin{align} \label{eq: regret bound self_explore2}
\begin{split}
&\sum_{m=1}^N \Big( J(\Psi_m; \bs{\theta}) - 
 J(\psi_{\bs{\theta}}; \bs{\theta})
\Big)
\\
&\q
 = \sum_{m \in \sN\cap [1,N]\cap \cA_{\delta,e}^c }\Big( J(\Psi_m; \bs{\theta}) - 
 J(\psi_{\bs{\theta}}; \bs{\theta})
\Big)
 +  \sum_{m \in [1,N]\cap \cA_{\delta,e}}\Big( J(\Psi_m; \bs{\theta}) - 
  J(\psi_{\bs{\theta}}; \bs{\theta})
\Big),
\end{split}
\end{align}
with the set 
{$\cA_{\delta,e}\coloneqq 
\{ m\in \sN\mid \kappa(m) \geq C + \log_2(1+\ln(\tfrac{1}{\delta}))
, m \not \in \cE\}$},
where $C$  is the same constant as  in \eqref{eq: error of Greedy self-explore2},
and  $\cE$ is the collection of explorations episodes.
The  first term can be estimated as in the proof of Theorem \ref{Theorem: expected regret}:
for all $N\in \sN\cap [2,\infty)$, 
\begin{align*}
\sum_{m \in \sN\cap [1,N]\cap \cA_{\delta,e}^c }\Big( J(\Psi_m; \bs{\theta}) - 
  J(\psi_{\bs{\theta}}; \bs{\theta})
\Big)
 &\le C\big(   \ln(\tfrac{1}{\delta})  +\ln N \big),
\end{align*}
where the inequality follows from $\underline{c}m \leq 2^{\kappa(m)} \leq \bar{c}m$ for all $m\in\sN$.
To estimate the second term  in \eqref{eq: regret bound self_explore2},
recall that for the $k$th cycle,  the exploration policy $\psi_{\tilde{\bs{\theta}}_{m^e_k}}$ 
is executed 
for  $2^k$ episodes. 
Hence, by \eqref{eq: error of Greedy self-explore2},
\begin{align*}
& \sum_{m \in [1,N]\cap \cA_{\delta,e}}\Big(  J(\Psi_m; \bs{\theta})
 -  J(\psi_{\bs{\theta}}; \bs{\theta}) \Big)  \leq  C\sum_{k =1}^{\kappa(N)} 2^k \Big(  
 (m_k^e)^{-r}\big( \ln (m_k^e) + \ln(\tfrac{1}{\delta}) \big)^r \Big) \\
 &\qq \qquad \leq C\sum_{k =1}^{\kappa(N)}2^k\Big(  
2^{-rk}(k + \ln(\tfrac{1}{\delta}) )^r \Big) \leq C\sum_{k =1}^{\kappa(N)} 2^{(1-r)k} \Big(  
\kappa(N) + \ln(\tfrac{1}{\delta}) \Big)^r 
 \\
  &\qq \qquad \leq \begin{cases}
  CN^{1-r}\big(  (\ln N)^r  + \big( \ln(\tfrac{1}{\delta}) \big)^{r} \big),
   & r \in (0,1),\\
  C (\ln N) \big(  \ln N  +   \ln(\tfrac{1}{\delta})   \big),
   & r =1,
 \end{cases} 
\end{align*}
where the last inequality follows from the facts that 
$\kappa(N)\le C\log_2(N)$ and 
if $r\in (0,1)$, then $\sum_{k =1}^{m} 2^{(1-r)k}\le C2^{(1-r)m} $  for all $m\in\sN$.

Finally, recall the random variables 
$(\bar{\bs{\theta}}_m)_{m\in\sN}$   defined
at the beginning of the proof.
Since 
$$
\sE[{\cR}(N,  \bs{\Psi},  \bs{\theta})]=
\sum_{m=1}^N\sE \Big[ \sE\big[ \ell_m(\bs{\Psi}, \bs{\theta})- 
 J(\psi_{\bs{\theta}}; \bs{\theta})
 \big|\sigma\{\bar{\bs{\theta}}_m, \bs{\theta}\}\big]\Big]
=
\sum_{m=1}^N\sE \big[ J(\Psi_m; \bs{\theta}) - 
 J(\psi_{\bs{\theta}}; \bs{\theta})
\big],$$
the desired regret bound in expectation follows from the above high probability regret bound and similar arguments as those in the proof of  Theorem \ref{Theorem: regret}
.
\end{proof}

\appendix

\section{Proof of Proposition \ref{prop: PEGE as a learning}}
\label{sec:learning_alg}

 \begin{proof}
 It is clear that for all $m\in \sN$ and $\om\in \Om$, 
 $\Psi_m(\om,\cdot)\in \cV_{C_m}$. 
 We now prove by induction that
 for all $m\in \sN\cup\{0\}$, 
  $(\hat{\bs{\theta}}^{\bs{\Psi}, n}, V^{\bs{\theta }, \bs{\Psi}, n})$, $0\le n\le m$,  are ${\cG}^{\bs{\Psi}}_{m}$-measurable and $\Psi_{m+1}$ is $(\cG^{\bs{\Psi}}_{m}\otimes\cB([0,T])\otimes \cB(\sR^d)  )/\cB(\sR^p)$-measurable,
  with the $\sigma$-algebra $\cG^{\bs{\Psi}}_{m}$ defined in Definition \ref{Def: Admissible RL}.
Note that the statement clearly holds for $m=0$, as 
$(\hat{\bs{\theta}}^{  \bs{\Psi}, 0}, V^{\bs{\theta }, \bs{\Psi}, 0})=(\hat{\theta}_0,V_0)$ are deterministic. 

Suppose that 
the induction hypothesis holds for some index $m-1\in \sN\cup\{0\}$, 
namely,
$(\hat{\bs{\theta}}^{  \bs{\Psi}, n}, V^{\bs{\theta }, \bs{\Psi}, n})$,
$0\le n\le m-1$,
are ${\cG}^{\bs{\Psi}}_{m-1}$-measurable and $\Psi_m$ is $(\cG^{\bs{\Psi}}_{m-1}\otimes\cB([0,T])\otimes \cB(\sR^d)  )/\cB(\sR^p)$-measurable. 
Then the process $Z^{\bs{\theta}, \bs{\Psi}, m}_t  \coloneqq 
\begin{psmallmatrix}
X^{\bs{\theta}, \bs{\Psi}, m}_t \\ \Psi_m\big(\cdot, t, X^{\bs{\theta}, \bs{\Psi}, m}_t\big) \end{psmallmatrix}$,
$t\in [0,T]$,  is progressively measurable with respect to the filtration $\sG^{X,\bs{\Psi},m}=(\cG^{X, \bs{\Psi}, m}_t )_{t\in [0,T]}$ defined by $\cG^{X, \bs{\Psi}, m}_t \coloneqq \cG^{\bs{\Psi}}_{m-1} \vee \sigma\big(X^{\bs{\theta}, \bs{\Psi}, m}_u \mid u \leq t \big)$,
which
along with
the measurability
of 
$V^{\bs{\theta }, \bs{\Psi}, m-1}$
and \eqref{eq: statistics}
implies that 
 $V^{\bs{\theta }, \bs{\Psi}, m}$
 and 
 $(V^{\bs{\theta }, \bs{\Psi}, m})^{-1}$
 are  ${\cG}^{\bs{\Psi}}_{m}$-measurable. 
 
 To prove the ${\cG}^{\bs{\Psi}}_{m}$-measurability of $\hat{\bs{\theta}}^{  \bs{\Psi}, m}$,
 we first establish  a semi-martingale representation of   the process $X^{\bs{\theta}, \bs{\Psi}, m}$ with respect to the filtration  $\sG^{X,\bs{\Psi},m}$.
 Let 
 $B^{\bs{\theta}, \bs{\Psi}, m}
 =(B^{\bs{\theta}, \bs{\Psi}, m}_t)_{t\in [0,T]} $ be  the process 
 such that 
\bb\label{eq:B}
 B^{\bs{\theta}, \bs{\Psi}, m}_t \coloneqq  X^{\bs{\theta }, \bs{\Psi}, m}_t -
x_0 -
\int_0^t 
 \sE\big[\bs{\theta} \big| \cG^{X, \bs{\Psi}, m}_s\big]\; Z^{\bs{\theta }, \bs{\Psi}, m}_s \d s,
\q \fa t\in [0,T].
 \ee
  Note that $B^{\bs{\theta}, \bs{\Psi}, m}$ is 
  $\sG^{X,\bs{\Psi},m}$-progressively measurable. 
  We now verify that $B^{\bs{\theta }, \bs{\Psi}, m}$ is in fact a   $\sG^{X,\bs{\Psi},m}$-Brownian motion.  
Since $Z^{\bs{\theta }, \bs{\Psi}, m}_t$ is $\cG^{\bs{\theta }, \bs{\Psi}, m}_t$-measurable,  it follows from \eqref{eq:X_m}, the tower property of conditional expectation and Fubini's theorem that for all $t \geq s$,
\begin{align*}
& \mathbb{E}[B^{\bs{\theta }, \bs{\Psi}, m}_t - B^{\bs{\theta }, \bs{\Psi}, m}_s| \cG^{X, \bs{\Psi}, m}_s] =
 \mathbb{E}\bigg[\int_s^t \d X^{\bs{\theta }, \bs{\Psi}, m}_u - \int_s^t \sE\big[\bs{\theta} \big| \cG^{X, \bs{\Psi}, m}_u \big]\; Z^{\bs{\theta }, \bs{\Psi}, m}_u \d u \Big| \cG^{X, \bs{\Psi}, m}_s\bigg] \\
&\qq =\mathbb{E}\bigg[\int_s^t \bs{\theta} Z^{\bs{\theta }, \bs{\Psi}, m}_u \d u - \int_s^t \sE\big[\bs{\theta} \big| \cG^{X, \bs{\Psi}, m}_u \big]\; Z^{\bs{\theta }, \bs{\Psi}, m}_u \d u + \int_s^t \d W^m_u \Big| \cG^{X, \bs{\Psi}, m}_s\bigg]\\
&\qq =\int_s^t \mathbb{E}\Big[ \bs{\theta} Z^{\bs{\theta }, \bs{\Psi}, m}_u  -  \sE\big[\bs{\theta} \big| \cG^{X, \bs{\Psi}, m}_u \big]\; Z^{\bs{\theta }, \bs{\Psi}, m}_u   \Big| \cG^{X, \bs{\Psi}, m}_s\Big] \d u \\
&\qq =\int_s^t \mathbb{E}\bigg[  \mathbb{E}\Big[ \bs{\theta} Z^{\bs{\theta }, \bs{\Psi}, m}_u  -  \sE\big[\bs{\theta} Z^{\bs{\theta }, \bs{\Psi}, m}_u \big| \cG^{X, \bs{\Psi}, m}_u \big]\Big|  \cG^{X, \bs{\Psi}, m}_u \Big] \bigg| \cG^{X, \bs{\Psi}, m}_s\bigg] \d u =0.
\end{align*}
 This shows that $B^{\bs{\theta }, \bs{\Psi}, m}$
 is a   $\sG^{X,\bs{\Psi},m}$-martingale. 
By \eqref{eq:B}, 
$B^{\bs{\theta }, \bs{\Psi}, m}_0=0$ and 
$B^{\bs{\theta }, \bs{\Psi}, m}$ has continuous sample paths, as  
$X^{\bs{\theta }, \bs{\Psi}, m}$ has continuous sample paths.
 Moreover, by \eqref{eq:X_m} and \eqref{eq:B},
 $B^{\bs{\theta }, \bs{\Psi}, m}$ has the same quadratic covariance as $W^m$.
 Hence, L\'{e}vy's characterisation shows that $B^{\bs{\theta }, \bs{\Psi}, m}$ is a $\sG^{X,\bs{\Psi},m}$-Brownian motion, which along with  \eqref{eq:B} implies that $X^{\bs{\theta }, \bs{\Psi}, m}$ admits the following semi-martingale representation
 with respect to the filtration $\sG^{X,\bs{\Psi},m}$:
\begin{equation}
\label{eq:equiv dynamic X}
\d X^{\bs{\theta }, \bs{\Psi}, m}_t =  \sE\big[\bs{\theta} \big| \cG^{X, \bs{\Psi}, m}_t  \big]\; Z^{\bs{\theta }, \bs{\Psi}, m}_t \d t  + \d B^{\bs{\theta }, \bs{\Psi}, m}_t,
\q t\in [0,T], \q X^{\bs{\theta }, \bs{\Psi}, m}_0=x_0.
\end{equation} 
This implies that  
 the  stochastic integral of a $\sG^{X,\Psi,m}$-progressively measurable process with respect to $X^{\bs{\theta}, \bs{\Psi}, m}$ is 
 $\sG^{X,\Psi,m}$-progressively measurable;
 note that
 for all $m\in \sN\cup\{0\}$, $\cG^{\bs{\Psi}}_{m} $ has been augmented with  $\sP$-null sets
 (see Definition \ref{Def: Admissible RL}). 
 Consequently, $\hat{\bs{\theta}}^{  \bs{\Psi}, m}$  in \eqref{eq: statistics}
 is $\cG^{\bs{\Psi}}_{m} $-measurable,
which along with
the fact that 
$\Psi_{m+1}$ depends only on 
$(\hat{\bs{\theta}}^{  \bs{\Psi}, n}, V^{\bs{\theta }, \bs{\Psi}, n})_{n=0}^m$,
the 
$\cG^{\bs{\Psi}}_{m} $-measurability of $(\hat{\bs{\theta}}^{  \bs{\Psi}, n}, V^{\bs{\theta }, \bs{\Psi}, n})_{n=0}^{m-1}$,
and 
the Borel measurability of $\psi_{m+1}$ 
implies the desired measurability of   $\Psi_{m+1}$.
This completes the induction step and finishes the proof.
 \end{proof} 
 
\bibliographystyle{siam}

\bibliography{lc_bayesian.bib}

\begin{thebibliography}{10}

\bibitem{abeille2018improved}
{\sc M.~Abeille and A.~Lazaric}, {\em Improved regret bounds for thompson
  sampling in linear quadratic control problems}, in International Conference
  on Machine Learning, PMLR, 2018, pp.~1--9.

\bibitem{allan2019parameter}
{\sc A.~L. Allan and S.~N. Cohen}, {\em Parameter uncertainty in the
  {Kalman--Bucy} filter}, SIAM Journal on Control and Optimization, 57 (2019),
  pp.~1646--1671.

\bibitem{allan2020pathwise}
\leavevmode\vrule height 2pt depth -1.6pt width 23pt, {\em Pathwise stochastic
  control with applications to robust filtering}, The Annals of Applied
  Probability, 30 (2020), pp.~2274--2310.

\bibitem{bain2008fundamentals}
{\sc A.~Bain and D.~Crisan}, {\em Fundamentals of stochastic filtering},
  vol.~60, Springer Science \& Business Media, 2008.

\bibitem{basei2021logarithmic}
{\sc M.~Basei, X.~Guo, A.~Hu, and Y.~Zhang}, {\em Logarithmic regret for
  episodic continuous-time linear-quadratic reinforcement learning over a
  finite-time horizon}, Available at SSRN 3848428,  (2021).

\bibitem{bensoussan2004stochastic}
{\sc A.~Bensoussan}, {\em Stochastic control of partially observable systems},
  Cambridge University Press, 2004.

\bibitem{bertsekas1995neuro}
{\sc D.~P. Bertsekas and J.~N. Tsitsiklis}, {\em Neuro-dynamic programming: an
  overview}, vol.~1, 1995.

\bibitem{cohen2015stochastic}
{\sc S.~N. Cohen and R.~J. Elliott}, {\em Stochastic calculus and
  applications}, vol.~2, Springer, 2015.

\bibitem{cohen2020asymptotic}
{\sc S.~N. Cohen and T.~Treetanthiploet}, {\em Asymptotic randomised control
  with applications to bandits}, arXiv preprint arXiv:2010.07252,  (2020).

\bibitem{davis1977linear}
{\sc M.~H. Davis}, {\em Linear estimation and stochastic control}, 1977.

\bibitem{dean2020sample}
{\sc S.~Dean, H.~Mania, N.~Matni, B.~Recht, and S.~Tu}, {\em On the sample
  complexity of the linear quadratic regulator}, Foundations of Computational
  Mathematics, 20 (2020), pp.~633--679.

\bibitem{djellout2004transportation}
{\sc H.~Djellout, A.~Guillin, and L.~Wu}, {\em Transportation cost-information
  inequalities and applications to random dynamical systems and diffusions},
  The Annals of Probability, 32 (2004), pp.~2702--2732.

\bibitem{duncan1999adaptive}
{\sc T.~E. Duncan, L.~Guo, and B.~Pasik-Duncan}, {\em {Adaptive continuous-time
  linear quadratic Gaussian control}}, IEEE Transactions on Automatic Control,
  44 (1999), pp.~1653--1662.

\bibitem{el1997backward}
{\sc N.~El~Karoui, S.~Peng, and M.~C. Quenez}, {\em Backward stochastic
  differential equations in finance}, Mathematical finance, 7 (1997),
  pp.~1--71.

\bibitem{fleming2006controlled}
{\sc W.~H. Fleming and H.~M. Soner}, {\em Controlled Markov Processes and
  Viscosity Solutions}, vol.~25, Springer Science \& Business Media, 2006.

\bibitem{guo2021reinforcement}
{\sc X.~Guo, A.~Hu, and Y.~Zhang}, {\em Reinforcement learning for
  linear-convex models with jumps via stability analysis of feedback controls},
  arXiv preprint arXiv:2104.09311,  (2021).

\bibitem{guo2020entropy}
{\sc X.~Guo, R.~Xu, and T.~Zariphopoulou}, {\em Entropy regularization for mean
  field games with learning}, arXiv preprint arXiv:2010.00145,  (2020).

\bibitem{he2021logarithmic}
{\sc J.~He, D.~Zhou, and Q.~Gu}, {\em Logarithmic regret for reinforcement
  learning with linear function approximation}, in International Conference on
  Machine Learning, PMLR, 2021, pp.~4171--4180.

\bibitem{ucrl2}
{\sc T.~Jaksch, R.~Ortner, and P.~Auer}, {\em Near-optimal regret bounds for
  reinforcement learning}, Journal of Machine Learning Research, 11 (2010).

\bibitem{kechris2012classical}
{\sc A.~Kechris}, {\em Classical descriptive set theory}, vol.~156, Springer
  Science \& Business Media, 2012.

\bibitem{keskin2018incomplete}
{\sc N.~B. Keskin and A.~Zeevi}, {\em On incomplete learning and
  certainty-equivalence control}, Operations Research, 66 (2018),
  pp.~1136--1167.

\bibitem{krylov2002introduction}
{\sc N.~V. Krylov}, {\em Introduction to the theory of random processes},
  vol.~43, American Mathematical Soc., 2002.

\bibitem{krylov2008controlled}
{\sc N.~V. Krylov}, {\em Controlled diffusion processes}, vol.~14, Springer
  Science \& Business Media, 2008.

\bibitem{lionnet2015time}
{\sc A.~Lionnet, G.~Dos~Reis, and L.~Szpruch}, {\em Time discretization of
  {FBSDE} with polynomial growth drivers and reaction--diffusion {PDEs}}, The
  Annals of Applied Probability, 25 (2015), pp.~2563--2625.

\bibitem{liptser1977statistics}
{\sc R.~S. Liptser and A.~N. Shiriaev}, {\em Statistics of random processes I:
  General theory}, vol.~394, Springer, 1977.

\bibitem{mania2019certainty}
{\sc H.~Mania, S.~Tu, and B.~Recht}, {\em {Certainty equivalent control of
  {LQR} is efficient}}, arXiv preprint arXiv:1902.07826,  (2019).

\bibitem{munos2000study}
{\sc R.~Munos}, {\em A study of reinforcement learning in the continuous case
  by the means of viscosity solutions}, Machine Learning, 40 (2000),
  pp.~265--299.

\bibitem{munos2006policy}
\leavevmode\vrule height 2pt depth -1.6pt width 23pt, {\em Policy gradient in
  continuous time}, Journal of Machine Learning Research, 7 (2006),
  pp.~771--791.

\bibitem{munos1998reinforcement}
{\sc R.~Munos and P.~Bourgine}, {\em Reinforcement learning for continuous
  stochastic control problems}, in Advances in Neural Information Processing
  Systems, 1998, pp.~1029--1035.

\bibitem{osband2013more}
{\sc I.~Osband, D.~Russo, and B.~Van~Roy}, {\em {(More) efficient reinforcement
  learning via posterior sampling}}, in Advances in Neural Information
  Processing Systems, 2013, pp.~3003--3011.

\bibitem{osband2014model}
{\sc I.~Osband and B.~Van~Roy}, {\em Model-based reinforcement learning and the
  eluder dimension}, arXiv preprint arXiv:1406.1853,  (2014).

\bibitem{pardoux1992backward}
{\sc E.~Pardoux and S.~Peng}, {\em Backward stochastic differential equations
  and quasilinear parabolic partial differential equations}, in Stochastic
  partial differential equations and their applications, Springer, 1992,
  pp.~200--217.

\bibitem{reisinger2021regularity}
{\sc C.~Reisinger and Y.~Zhang}, {\em Regularity and stability of feedback
  relaxed controls}, SIAM Journal on Control and Optimization, 59 (2021),
  pp.~3118--3151.

\bibitem{rudin2019stop}
{\sc C.~Rudin}, {\em Stop explaining black box machine learning models for high
  stakes decisions and use interpretable models instead}, Nature Machine
  Intelligence, 1 (2019), pp.~206--215.

\bibitem{rusmevichientong2010linearly}
{\sc P.~Rusmevichientong and J.~N. Tsitsiklis}, {\em Linearly parameterized
  bandits}, Mathematics of Operations Research, 35 (2010), pp.~395--411.

\bibitem{simchowitz2020naive}
{\sc M.~Simchowitz and D.~Foster}, {\em Naive exploration is optimal for online
  {LQR}}, in International Conference on Machine Learning, PMLR, 2020,
  pp.~8937--8948.

\bibitem{vsivska2020gradient}
{\sc D.~{\v{S}}i{\v{s}}ka and {\L}.~Szpruch}, {\em Gradient flows for
  regularized stochastic control problems}, arXiv preprint arXiv:2006.05956,
  (2020).

\bibitem{sutton1998introduction}
{\sc R.~S. Sutton, A.~G. Barto, et~al.}, {\em Introduction to reinforcement
  learning}, vol.~2, MIT press Cambridge, 1998.

\bibitem{vershynin2018high}
{\sc R.~Vershynin}, {\em High-dimensional probability: An introduction with
  applications in data science}, vol.~47, Cambridge university press, 2018.

\bibitem{wainwright2019high}
{\sc M.~J. Wainwright}, {\em High-dimensional statistics: A non-asymptotic
  viewpoint}, vol.~48, Cambridge University Press, 2019.

\bibitem{wang2020reinforcement}
{\sc H.~Wang, T.~Zariphopoulou, and X.~Y. Zhou}, {\em Reinforcement learning in
  continuous time and space: A stochastic control approach}, Journal of Machine
  Learning Research, 21 (2020), pp.~1--34.

\bibitem{yong1999stochastic}
{\sc J.~Yong and X.~Y. Zhou}, {\em Stochastic controls: {Hamiltonian} systems
  and {HJB} equations}, vol.~43, Springer Science \& Business Media, 1999.

\bibitem{zhang2017backward}
{\sc J.~Zhang}, {\em Backward Stochastic Differential Equations}, Springer,
  2017.

\end{thebibliography}
\newpage

 \end{document}